\documentclass[3p,authoryear]{elsarticle}

\usepackage{amsmath, amssymb, amsthm, bm}
\usepackage{mathtools}

\usepackage{amsfonts}

\def\eqref#1{equation~\ref{#1}}

\def\1{\bm{1}}

\DeclareMathAlphabet{\mathsfit}{\encodingdefault}{\sfdefault}{m}{sl}
\SetMathAlphabet{\mathsfit}{bold}{\encodingdefault}{\sfdefault}{bx}{n}

\usepackage{tikz}
\usepackage{pgfplots}
\pgfplotsset{compat=1.18}
\usepgfplotslibrary{groupplots}
\usepackage{booktabs}
\usepackage{algorithm}
\usepackage{algpseudocode}
\usepackage{enumitem}
\usepackage[caption=false]{subfig}
\usepackage{float}
\usepackage[section]{placeins}
\usepackage[colorlinks=true,linkcolor=blue,citecolor=blue,urlcolor=blue]{hyperref}
\usepackage{url}

\journal{Neural Networks}

\newtheorem{theorem}{Theorem}

\newtheorem{proposition}[theorem]{Proposition}
\newtheorem{corollary}[theorem]{Corollary}
\newtheorem{assumption}[theorem]{Assumption}
\theoremstyle{definition}
\newtheorem{definition}[theorem]{Definition}
\theoremstyle{remark}
\newtheorem{remark}[theorem]{Remark}
\newtheorem{example}[theorem]{Example}

\newcommand{\Pa}{\mathrm{Pa}}
\newcommand{\Ch}{\mathrm{Ch}}

\begin{document}
\raggedbottom

\begin{frontmatter}

\title{Inter-Layer Hessian Analysis of Neural Networks with DAG Architectures}

\author[inst1]{Maxim Bolshim\corref{cor1}}
\cortext[cor1]{Corresponding author}
\ead{maxim.bolshim@yandex.ru}
\ead[url]{https://orcid.org/0009-0009-3037-9852}
\author[inst1]{Alexander Kugaevskikh}
\ead{a-kugaevskikh@yandex.ru}
\ead[url]{https://orcid.org/0000-0002-6676-0518}
\affiliation[inst1]{organization={ITMO University},
            city={Saint Petersburg},
            country={Russia}}

\begin{abstract}
  Modern automatic differentiation frameworks (JAX, PyTorch) return the
  Hessian of the loss function as a monolithic tensor, without exposing
  the internal structure of inter-layer interactions.  This paper presents an
  analytical formalism that explicitly decomposes the full Hessian into
  blocks indexed by the DAG of an arbitrary architecture.  The canonical
  decomposition $H = H^{GN} + H^T$ separates the Gauss--Newton component
  (convex part) from the tensor component (residual curvature responsible
  for saddle points).  For piecewise-linear activations (ReLU), the tensor
  component of the input Hessian vanishes ($H^{T}_{v,w}\!\equiv\!0$
  a.e., $H^f_{v,w}\!=\!H^{GN}_{v,w}\!\succeq\!0$); the full parametric
  Hessian contains residual terms that do not reduce to the GGN\@.
  Building on this decomposition, we introduce diagnostic metrics
  (inter-layer resonance~$\mathcal{R}$, geometric coupling~$\mathcal{C}$,
  stable rank~$\mathcal{D}$, GN-Gap) that are estimated stochastically in
  $O(P)$ time and reveal structural curvature interactions between layers.
  The theoretical analysis explains exponential decay of resonance in
  vanilla networks and its preservation under skip connections;
  empirical validation spans fully connected MLPs (Exp.\,1--5) and
  convolutional architectures (ResNet-18, ${\sim}11$M~parameters, Exp.\,6).  When the
  architecture reduces to a single node, all definitions collapse to
  the standard Hessian
  $\nabla^2_\theta\mathcal{L}(\theta)\in\mathbb{R}^{p\times p}$.
\end{abstract}

\begin{keyword}
  Hessian analysis \sep DAG architecture \sep Gauss--Newton decomposition \sep
  curvature diagnostics \sep inter-layer interaction \sep neural network
\end{keyword}

\end{frontmatter}

\section{Introduction}
\label{sec:intro}

The second-order Hessian $\nabla^2\mathcal{L}$ plays a fundamental role
in curvature analysis of the loss landscape and in the design of
optimization methods for neural
networks~\citep{martens2020new}.  Second-order methods
such as Newton, trust-region, and their variants require accurate
curvature information for efficient
optimization~\citep{nocedal2006numerical}.  However, for deep neural networks
computing and storing the full Hessian is computationally prohibitive,
requiring various approximations.

The most common approach---the Gauss--Newton (GN)
approximation---captures only a subset of the second derivatives,
disregarding significant curvature
contributions~\citep{schraudolph2002fast, martens2010deep}.  Block-diagonal
approximations (K-FAC; \citealp{martens2015optimizing}) are restricted
to diagonal blocks and do not capture inter-layer interactions in
arbitrary DAG architectures.  Hessian--vector product (HVP) methods~\citep{pearlmutter1994fast}
yield exact Hessian--vector products but do not reveal the
\emph{structure} of inter-layer curvature.  Moreover, the GN
approximation is inherently blind to negative curvature: the block
matrix $[H^{GN}_{v,w}]\succeq 0$, rendering it insensitive to saddle
points~\citep{dauphin2014identifying}.  The full Hessian
$H = H^{GN} + H^T$ recovers this information through the tensor
component~$H^T$.  Non-smooth activations (ReLU, max-pooling)
additionally require a generalized second-derivative apparatus that has
not been developed for full inter-layer blocks in the existing
literature.

Recent studies of per-layer Hessian
spectra~\citep{bolshim2025local, ghorbani2019investigation} suggest that
diagonal blocks carry diagnostic information about overfitting
and expressivity that is invisible in the global Hessian; however,
existing analyses remain confined to
$\partial^2\mathcal{L}/\partial\theta_v^2$ and do not address
inter-layer interactions.

The present work addresses this gap: we construct an
analytical formalism for
\emph{structural analysis} of inter-layer Hessian blocks
$\partial^2\mathcal{L}/\partial f_v\,\partial f_w$ and
$\partial^2\mathcal{L}/\partial\theta_v\,\partial\theta_w$ in neural
networks with arbitrary DAG architectures.  Standard APIs of modern
frameworks (JAX, PyTorch) return the Hessian as a monolithic tensor,
without exposing its internal block structure.  We propose an
explicit analytical decomposition: recurrence relations over the graph,
the canonical splitting $H^{GN} + H^T$, path-wise contribution analysis,
and diagnostic metrics (GN-Gap, resonance, coupling).  This enables
one to prove structural theorems about architectures (effects of skip
connections, rank constraints, curvature decay) that do not follow from
the computation of individual Hessian components.

\emph{Key distinction from the GGN literature:} our contribution is not
a more efficient computation of the GN approximation (K-FAC and its
variants already achieve this), but rather an explicit analytical
decomposition of the \emph{full} Hessian $H = H^{GN} + H^T$ at the
level of arbitrary DAG blocks, together with structural theorems that
cannot be derived from $H^{GN}$ alone.

\smallskip\noindent\textbf{Contributions.}
\begin{enumerate}[label=\textbf{C\arabic*}.,leftmargin=1.2cm]
  \item \textbf{Full inter-layer Hessian for DAG architectures.}
    A recurrence relation for blocks $H^f_{v,w}$ accounting for all
    pure, mixed, and cross-block second derivatives with respect to
    inputs and parameters, including weight sharing across nodes
    (Eq.~\eqref{eq:Hf}, Eq.~\eqref{eq:Htheta},
    \ref{app:canonical-proof}).

  \item \textbf{Canonical decomposition $H = H^{GN} + H^{T}$.}
    Splitting into the Gauss--Newton component (block matrix
    $[H^{GN}_{v,w}]\succeq 0$) and the tensor component, formally
    explaining the blindness of the GN approximation to saddle points;
    the quantitative deviation measure is the GN-Gap
    (Theorem~\ref{thm:canonical-decomposition},
    Definition~\ref{def:gn-gap}).

  \item \textbf{Non-smooth case: input Hessian for ReLU networks.}
    A rigorous proof that for piecewise-linear activations (ReLU) the
    tensor component of the input Hessian vanishes:
    $H^T_{v,w}\equiv 0$ a.e., and $H^f_{v,w} = H^{GN}_{v,w}$ in
    activation space.  The parametric residual~\eqref{eq:Htheta} is
    determined by mixed input--parameter derivatives that do not
    involve~$\sigma''$; this residual does not vanish in general.  The GN-Gap
    metric quantitatively measures the deviation of $H^f$
    from~$H^{GN}$; correctness of AD computations is established via
    conservative set-valued fields
    (CSVF) theory~\citep{bolte2021conservative}
    (\ref{app:proof-clarke}).

  \item \textbf{Curvature routing and the HVP operator.}
    Recurrence~\eqref{eq:Hf} formalizes curvature routing through the
    DAG: decomposition of $H^{GN} + H^T$ along edges and path-wise
    contributions.  Collapsing the graph recursion over all nodes
    analytically reduces it to an $O(P)$ HVP operator whose cost
    matches that of a backward pass
    (Definition~\ref{def:hb-operator},
      Corollary~\ref{cor:hvp-complexity},
    \ref{app:full-algorithm}).
\end{enumerate}

Building on this framework, we define diagnostic metrics (inter-layer
  resonance~$\mathcal{R}$, geometric coupling~$\mathcal{C}$, stable
rank~$\mathcal{D}$, GN-Gap) that reveal structural curvature
interactions between layers; details are in
\ref{app:diagnostics}.  The metrics are estimated
stochastically in $O(P)$ time and $O(P)$ memory (storing a probe vector
of dimension~$P$) via Hessian--vector products.  The theoretical
analysis establishes exponential decay of resonance in vanilla networks,
preservation of long-range coupling under skip connections, and rank
constraints imposed by bottleneck layers.
Practical applicability of the framework is confirmed on a
convolutional architecture (ResNet-18, ${\sim}11$M parameters)
through stochastic curvature estimation.

\emph{Special case:} if the architecture reduces to a single node
with no parents or children, all proposed definitions naturally collapse
to the standard Hessian
$\nabla^2_\theta\mathcal{L}(\theta)
\in\mathbb{R}^{p\times p}$.

\emph{Scope.}  The formalism is designed for \emph{structural analysis},
not for novel optimization algorithms.  Experiments~1--5 verify
theoretical predictions on fully connected MLPs and toy Attention;
Exp.\,6 extends the validation to a convolutional architecture
(ResNet-18, ${\sim}11$M parameters).
Scaling to ImageNet-scale models and Transformers is discussed in
Section~\ref{sec:limitations}.

\section{Related Work}
\label{sec:related}

\textbf{Second-order curvature approximations.}
Methods based on the GN
approximation~\citep{martens2010deep, martens2012training, botev2017practical}
and its Kronecker variants
(K-FAC~\citep{martens2015optimizing},
  KFRA~\citep{george2018fast},
Shampoo~\citep{gupta2018shampoo})
operate on block-diagonal structures and ignore both the tensor
component~$H^T$ and inter-layer cross-blocks.
AdaHessian~\citep{yao2021adahessian} estimates only the Hessian
diagonal; Sophia~\citep{liu2023sophia} uses a stochastic diagonal
estimate for large language models.
Pearlmutter HVP~\citep{pearlmutter1994fast} yields exact
Hessian--vector products but does not reveal block structure.
\citet{abreu2025potential} studied the full GN for LLMs, confirming the
practical significance of the decomposition.
\citet{kunstner2019limitations} demonstrated the
Fisher/GGN/Hessian divergence; our decomposition
$H\!=\!H^{GN}\!+\!H^T$ explicitly separates these contributions and
quantifies the gap via the GN-Gap.
\citet{dangel2022vivit} exploit the low-rank structure of the GGN for
curvature access; our path decomposition reveals the architectural
origin of this structure.

\textbf{Modular curvature backpropagation.}
\citet{botev2017practical} proposed recursive computation of GGN blocks
via backpropagation (modular curvature backpropagation);
\citet{dangel2020backpack} scaled this approach in the BackPACK
framework, implementing modular computation of layer-wise GGN blocks and
their Kronecker approximations for sequential architectures.
Our formalism extends the modular paradigm:
(i)~inter-layer cross-blocks $H^f_{v,w}$ ($v\!\neq\!w$), absent in
layer-wise schemes;
(ii)~arbitrary DAGs with fan-in~$>1$;
(iii)~an explicit tensor component~$H^T$ with quantitative diagnostics
(GN-Gap, $\mathcal{R}$, $\mathcal{C}$, $\mathcal{D}$).
A key consequence: the structural theorems of this work---exponential
resonance decay (Theorem~\ref{thm:lyapunov-decay}), path decomposition
(Theorem~\ref{thm:path-decomposition-main}), bottleneck rank constraint
(Proposition~\ref{thm:rank-bottleneck-main})---are formulated through
cross-blocks $H^f_{v,w}$, $v\!\neq\!w$, and \emph{have no analog} in
block-diagonal schemes operating only on~$H_{v,v}$.

\textbf{Spectral analysis and loss landscape.}
\citet{papyan2019measurements},
\citet{pennington2018spectrum},
\citet{ghorbani2019investigation}, and
\citet{sagun2017empirical} studied the \emph{global} Hessian spectrum;
\citet{tang2025hessian} and \citet{singh2025geometry} analyzed Hessian
structure in CNNs and LLMs.
Per-layer Hessian spectra have been linked to local curvature
properties and generalization~\citep{bolshim2025local, sagun2017empirical};
the present work generalizes the per-layer approach from diagonal
blocks to the full inter-layer block structure, enabling diagnostics
of specific layer pairs.
\citet{schoenholz2017deep} and \citet{hayou2019impact} established edge
of chaos conditions for signals; our condition $s\rho\!<\!1$
(Corollary~\ref{thm:exponential-decay-main}) generalizes this idea to
curvature in DAG architectures.
\citet{cohen2021gradient} empirically showed that GD operates at the
edge of the maximum Hessian eigenvalue (edge of stability); our analysis
provides a formal framework for studying this phenomenon at the level of
inter-layer blocks (Theorem~\ref{thm:lyapunov-decay}).
In the NTK limit~\citep{jacot2018neural} $H^T\!\equiv\!0$; our analysis
operates at finite width where $H^T\!\neq\!0$.

\textbf{Non-smooth activations.}
\citet{clarke1990optimization} and \citet{bolte2021conservative}
provided the formal foundation of generalized derivatives and CSVF for
neural networks.
\citet{zhang2018local} proposed computing local Hessian blocks during
backpropagation, but without explicit inter-layer cross-blocks for DAGs.

\textbf{Positioning.}
Unlike GN/K-FAC/HVP, we construct full inter-layer cross-blocks
$H^f_{v,w}$ for arbitrary DAGs with the decomposition
$H^{GN}\!+\!H^T$, path analysis, and quantitative curvature
metrics; unlike NTK\slash spectral analysis, we operate at
finite width for specific layer pairs.

\section{Methodology}
\label{sec:methodology}

\subsection{Analysis framework}
\label{sec:scope}

\begin{remark}[Scope and paradigm of analysis]
  \label{rem:scope}
  The object of analysis is the \emph{AD-Hessian}---the numerical object
  returned by modern AD frameworks when computing second derivatives.
  For smooth activations it coincides with the classical Hessian; for
  piecewise-linear activations (ReLU) AD assigns
  $\sigma''(0)\!:=\!0$, annihilating the tensor component $H^T$ a.e.
  Correctness is justified via CSVF
  theory~\citep{bolte2021conservative}.

  The goal of the formalism is an explicit decomposition of the Hessian
  into inter-layer cross-blocks, path-wise contributions, and tensor
  components over the DAG (curvature routing), analogous to
  backpropagation routing gradients.  The full matrix ($O(P^2)$) is
  intended for analysis of small architectures; for scaling, the
  framework collapses to an $O(P)$ HVP operator
  (Corollary~\ref{cor:hvp-complexity}).
\end{remark}

\subsection{Notation and network model}
\label{sec:notation}

Core notation is summarized in Table~\ref{tab:notations}; detailed
index conventions and tensor contraction rules are deferred to
\ref{app:notation-details}.

\begin{table}[htbp]
  \centering
  \caption{Core notation.}
  \label{tab:notations}
  \footnotesize
  \begin{tabular}{@{}ll@{}}
    \toprule
    \textbf{Symbol} & \textbf{Definition} \\
    \midrule
    \multicolumn{2}{@{}l}{\emph{Graph and derivatives}} \\
    $G = (V, E)$ & DAG of the neural network \\
    $\Pa(v)$, $\Ch(v)$ & Parents / children of node $v$ \\
    $f_v \in \mathbb{R}^{d_v}$ & Output of node $v$ \\
    $\theta_v \in \mathbb{R}^{p_v}$ & Parameters of node $v$ \\
    $\delta_v$ & Gradient $\nabla_{f_v}\mathcal{L}$ \\
    $D_{u \gets v}$ & Jacobian $\partial f_u/\partial f_v$ \\
    $T_{u;v}$, $T_{u;v,w}$ & Hessians / mixed second derivatives w.r.t.\ inputs \\
    $H^f_{v,w}$ & Input Hessian block \\
    $\partial_C^2 f_v$ & Clarke Hessian \\
    \midrule
    \multicolumn{2}{@{}l}{\emph{Decomposition and metrics
    (Def.\,\ref{def:resonance-coupling-main},\,\ref{def:stable-rank-main})}} \\
    $H^{GN}$, $H^T$ & GN and tensor components \\
    $\mathcal{R}(v,w)$ & Inter-layer resonance $\|H^f_{v,w}\|_F$ \\
    $\mathcal{C}(v,w)$ & Geometric coupling (normalized $\mathcal{R}$) \\
    $\mathcal{D}(v,w)$ & Stable rank~\eqref{eq:stable-rank-main} \\
    $\text{GN-Gap}$ & $\|H^T_{v,w}\|_F/(\|H^{GN}_{v,w}\|_F+\epsilon)$~\eqref{eq:gn-gap} \\
    \bottomrule
  \end{tabular}
\end{table}

We distinguish two regularity cases (detailed definitions of function
spaces are in \ref{app:notation-details}).

\begin{assumption}[Regularity of node functions]
  \label{ass:regularity}
  For every node $v \in V$:
  (A)~in the smooth case $g_v \in C^2$;
  (B)~in the non-smooth case $g_v \in PC^2$ and is locally Lipschitz.
\end{assumption}

\textbf{Architecture.}
A neural network is specified by a DAG $G = (V, E)$.  For each node
$v \in V$: output
$f_v = g_v(f_{\Pa(v)},\theta_v)\in\mathbb{R}^{d_v}$, parameters
$\theta_v\in\mathbb{R}^{p_v}$, loss
$\mathcal{L}:\mathbb{R}^{d_{out}}\to\mathbb{R}$.  Computation proceeds
in \emph{reverse topological order} over~$G$.

Gradients $\delta_v$, Jacobians $D_{u\gets v}$, $D_v$, and
second-derivative tensors $T_{u;v}$, $T_{u;v,w}$ are defined in
Table~\ref{tab:notations}; element-wise definitions, contraction rules,
and notational conventions are in
\ref{app:notation-details}
(notation follows \citealp{magnus2019matrix}).

\begin{remark}[Element-wise form of the tensor terms]
  \label{rem:tensor-notation}
  For a node $u$ with a single parent~$v$ and node function
  $f_u = \sigma(W_u f_v + b_u)$, the second-derivative tensor has
  entries
  \begin{equation}\label{eq:tensor-elementwise}
    [T_{u;v}]_{i,j,k}
    = \frac{\partial^2 f_{u,i}}{\partial f_{v,j}\,\partial f_{v,k}}
    = \sigma''(z_{u,i})\,W_{u,ij}\,W_{u,ik},
  \end{equation}
  where $z_u = W_u f_v + b_u$.  For a node~$u$ with two
  parents $v,w\in\Pa(u)$ (fan-in~$\geq 2$), the mixed tensor
  $[T_{u;v,w}]_{i,j,k}
  = \partial^2 f_{u,i}/(\partial f_{v,j}\,\partial f_{w,k})$
  captures cross-input curvature; it vanishes for linear merge
  ($f_u = f_v + f_w$) and is nonzero when the merge involves a
  nonlinearity with $\sigma''\neq 0$
  (cf.\ Example~\ref{ex:diamond}).
  The contraction
  $\sum_i [T_{u;v}]_{i,\bullet,\bullet}\,\delta_{u,i}$
  appearing in~\eqref{eq:Hf} is a $d_v\!\times\!d_v$ matrix
  weighted by the gradient at node~$u$.
\end{remark}

\subsection{Non-smooth case: AD-Hessian for piecewise-linear networks}
\label{sec:clarke-robustness}

For piecewise-linear activations (ReLU, Leaky~ReLU, max-pooling)
$\sigma''(z)=\delta(z)$; AD frameworks assign $\sigma''(0):=0$, so that
the tensor component $T_{u;v}\equiv 0$ for all such nodes.  By the
canonical decomposition (Theorem~\ref{thm:canonical-decomposition}),
$H^{T}_{v,w}\equiv 0$ a.e.: the input Hessian coincides with
$H^{GN}_{v,w}\!\succeq\!0$ in activation space; the parametric residual
does not vanish in general (Remark~\ref{rem:act-vs-param}).  For smooth
activations (GELU, Swish) $H^T\neq 0$, and the GN-Gap
(Definition~\ref{def:gn-gap}) quantitatively measures the curvature lost
under the GN approximation.  Correctness of AD computations for
non-smooth networks is justified via CSVF
theory~\citep{bolte2021conservative}: the AD-Hessian coincides with the
classical one a.e., is bounded at non-smooth points, and preserves
convergence of stochastic estimates (formal definitions and proofs are
in \ref{app:proof-clarke}).

\subsection{Full input Hessian}
\label{sec:input-hessian}

\begin{definition}[Input Hessian]
  The full input Hessian is the block matrix
  $\{H^f_{v,w}\}_{v,w\in V}$, where each block
  $H^f_{v,w}\in\mathbb{R}^{d_v\times d_w}$ is defined recursively:
\end{definition}

The input Hessian $H^f_{v,w}$ is the sum of four components:
\begin{align}\label{eq:Hf}
  H^f_{v,w} &= \sum_{u_1\in\Ch(v)}\sum_{u_2\in\Ch(w)}
  D_{u_1\gets v}^\top H^f_{u_1,u_2} D_{u_2\gets w} \nonumber\\
  &+ \mathbf{1}_{v\neq w}\!\!\!\sum_{u\in\Ch(v)\cap\Ch(w)}\sum_{i=1}^{d_u}
  [T_{u;v,w}]_{i,\bullet,\bullet}\delta_{u,i} \nonumber\\
  &+ \mathbf{1}_{v=w}
  \sum_{u\in\Ch(v)}\sum_{i=1}^{d_u}
  [T_{u;v}]_{i,\bullet,\bullet}\delta_{u,i} \nonumber\\
  &+ \frac{\partial^2 \mathcal{L}}{\partial f_v \partial f_w}
\end{align}
The terms correspond to: (1)~the full double sum over
$\Ch(v)\times\Ch(w)$, accounting for all cross-blocks;
(2)~mixed tensor inputs (when $v\neq w$);
(3)~pure tensor terms for a single input (when $v=w$);
(4)~direct loss dependence.

\begin{example}[Two-layer network]\label{ex:two-layer}
  Consider $v_1\to v_2\to out$ with no skip connections:
  $\Ch(v_1)=\{v_2\}$, $\Ch(v_2)=\{out\}$.
  For $v=w=v_1$ the double sum in~\eqref{eq:Hf} reduces to a single
  term $u_1=u_2=v_2$ (the unique pair in
  $\Ch(v_1)\times\Ch(v_1)$):
  \[
    H^f_{v_1,v_1} = D_{v_2\gets v_1}^\top H^f_{v_2,v_2}\, D_{v_2\gets v_1}
    + \sum_{i=1}^{d_{v_2}} [T_{v_2;v_1}]_{i,\bullet,\bullet}\,\delta_{v_2,i}.
  \]
  The first term propagates curvature from the descendant~$v_2$; its
  \textbf{GN component} is positive semi-definite (PSD) and is given via $D_{out\gets v_1}$
  (Theorem~\ref{thm:canonical-decomposition}), but the full
  $H^f_{v_2,v_2}$ also contains a tensor part.  The second term is the
  tensor part (may be indefinite).

  Cross-block $H^f_{v_1,v_2}$: since $f_{v_1}$ affects $\mathcal{L}$
  only through~$v_2$, by the chain rule
  $\frac{\partial^2\mathcal{L}}{\partial f_{v_1}\partial f_{v_2}}
  = D_{v_2\gets v_1}^\top\frac{\partial^2\mathcal{L}}{\partial f_{v_2}^2}
  = D_{v_2\gets v_1}^\top H^f_{v_2,v_2}$.
  This is a special case of the one-directional path
  (\ref{app:theoretical-results},
  Remark~\ref{prop:one-directional-path}).
  With a skip connection $v_1\to out$, term~(1) of~\eqref{eq:Hf} for
  the pair $(v_1,v_1)$ would include all pairs
  $(v_2,v_2)$, $(v_2,out)$, $(out,v_2)$, $(out,out)$ from
  $\Ch(v_1)\times\Ch(v_1)$, accounting for the cross-block
  $H^f_{v_2,out}$.
\end{example}

\begin{example}[Diamond graph]\label{ex:diamond}
  Consider a DAG with branching: $v_1\to v_2$, $v_1\to v_3$,
  $v_2\to v_4$, $v_3\to v_4$---two branches converging at
  node~$v_4$.
  For the cross-branch pair $(v_2, v_3)$:
  $\Ch(v_2)\cap\Ch(v_3)=\{v_4\}$ (common child with
  fan-in~$\geq 2$).
  Equation~\eqref{eq:Hf} gives:
  \[
    H^f_{v_2,v_3} = D_{v_4\gets v_2}^\top H^f_{v_4,v_4}\,D_{v_4\gets v_3}
    + \sum_{i=1}^{d_{v_4}} [T_{v_4;\,v_2,v_3}]_{i,\bullet,\bullet}\,\delta_{v_4,i}.
  \]
  The first term is the GN component, propagating curvature
  through~$v_4$ (always present).  The second is the mixed tensor
  term~(2) of~\eqref{eq:Hf}: it vanishes for linear merging
  ($f_{v_4}=f_{v_2}+f_{v_3}$, since
  $T_{v_4;v_2,v_3}\equiv 0$ by linearity), but does not vanish for
  nonlinear merging with~$\sigma''\neq 0$.

  Unlike Example~\ref{ex:two-layer}, here $v_2$ and $v_3$ are not
  connected by a directed path---the cross-block $H^f_{v_2,v_3}$
  arises exclusively through the common descendant~$v_4$.  This is the
  minimal topology in which term~(2) can be nonzero.
\end{example}

with base cases:
\begin{alignat*}{2}
  & H^f_{out,out} \quad&&= \nabla^2\mathcal{L}(f_{out}), \\
  & H^f_{v,out} &&= \sum_{u\in\Ch(v)} D_{u\gets v}^\top\,H^f_{u,out}
  \quad (\forall v\neq out), \\
  & H^f_{out,v} &&= \bigl(H^f_{v,out}\bigr)^\top.
\end{alignat*}
Since node $out$ has no children ($\Ch(out)=\varnothing$), term~(1)
of~\eqref{eq:Hf} does not apply to pairs involving~$w=out$.  Instead,
blocks $H^f_{v,out}$ are computed by a one-sided recursion
(Remark~\ref{prop:one-directional-path} in Appendix), which closes from
$out$ in reverse topological order.

\subsection{Canonical decomposition of the Hessian}
\label{sec:canonical-decomposition}

Equation~\eqref{eq:Hf} naturally splits into two structurally distinct
components, each with a clear geometric interpretation.

\begin{theorem}[Canonical decomposition]
  \label{thm:canonical-decomposition}
  The input Hessian $H^f_{v,w}$ admits a canonical decomposition:
  \begin{equation}\label{eq:decomposition}
    H^f_{v,w} = H^{GN}_{v,w} + H^{T}_{v,w},
  \end{equation}
  where the components are defined as follows:
  \begin{enumerate}
    \item \textbf{Gauss--Newton component} $H^{GN}_{v,w}$ (self-contained recursion):
      \[
        \begin{aligned}
          H^{GN}_{v,w} ={} &\sum_{u_1\in\Ch(v)}\sum_{u_2\in\Ch(w)}
          D_{u_1\gets v}^\top\,H^{GN}_{u_1,u_2}\,D_{u_2\gets w}
          &+\; \frac{\partial^2\mathcal{L}}{\partial f_v\partial f_w}.
      \end{aligned}\]
      Unrolling the recursion to the output node:
      $H^{GN}_{v,w}\!=\!D_{out\gets v}^\top\nabla^2\!\mathcal{L}\;D_{out\gets w}$.
      The block matrix $[H^{GN}_{v,w}]_{v,w\in V}$ is positive
      semi-definite (see Remark~\ref{rem:gn-psd-block} below).

    \item \textbf{Tensor component} (residual curvature)
      $H^{T}_{v,w} := H^f_{v,w} - H^{GN}_{v,w}$ satisfies the
      recursion with base case $H^T_{out,out}=0$:
      \begin{align*}
        H^{T}_{v,w} &= \sum_{u_1\in\Ch(v)}\sum_{u_2\in\Ch(w)}
        D_{u_1\gets v}^\top\,H^{T}_{u_1,u_2}\,D_{u_2\gets w}\\
        &+ \mathbf{1}_{v\neq w}\sum_{u\in\Ch(v)\cap\Ch(w)} \sum_{i=1}^{d_u}
        [T_{u;v,w}]_{i,\bullet,\bullet}\delta_{u,i}\\
        &+ \mathbf{1}_{v=w} \sum_{u\in\Ch(v)} \sum_{i=1}^{d_u}
        [T_{u;v}]_{i,\bullet,\bullet}\delta_{u,i}.
      \end{align*}
      This component encodes node function curvature and can be either
      positive or negative semi-definite.
  \end{enumerate}
\end{theorem}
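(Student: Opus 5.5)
The plan is to define $H^{GN}_{v,w}$ directly by its recursion, identify it with the closed form $D_{out\gets v}^\top\nabla^2\mathcal{L}\,D_{out\gets w}$, and then obtain the tensor recursion by subtracting the GN recursion from \eqref{eq:Hf}. Because every recursion involved is linear in the blocks it propagates, the decomposition \eqref{eq:decomposition} is essentially forced once the GN part is fixed. The only real content lies in the closed form and in the positive semi-definiteness of the block matrix.

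First I will treat $H^{GN}$. I take the stated recursion, with base case $H^{GN}_{out,out}=\nabla^2\mathcal{L}$ and the one-sided base cases copied from those of $H^f$, as the definition. I then prove by induction over pairs $(v,w)$ in reverse topological order that $H^{GN}_{v,w}=D_{out\gets v}^\top\nabla^2\mathcal{L}\,D_{out\gets w}$. The key identity is the DAG chain rule for total Jacobians,
\[
D_{out\gets v}=\sum_{u\in\Ch(v)} D_{out\gets u}\,D_{u\gets v},
\]
valid for $v\neq out$. Substituting the induction hypothesis for the pairs $(u_1,u_2)$, the double sum factorizes as $\bigl(\sum_{u_1}D_{out\gets u_1}D_{u_1\gets v}\bigr)^\top\nabla^2\mathcal{L}\bigl(\sum_{u_2}D_{out\gets u_2}D_{u_2\gets w}\bigr)$, which gives the claim. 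Pairs involving $out$ use the one-sided recursion in the same way.

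The step I expect to need the most care is the term $\partial^2\mathcal{L}/\partial f_v\partial f_w$. If $\mathcal{L}$ depends only on $f_{out}$, this direct term is zero for $(v,w)\neq(out,out)$, and the closed form holds as stated. If the loss had direct dependence on intermediate nodes (for example regularizers), I would instead interpret $\nabla^2\mathcal{L}$ as the Hessian with respect to the stacked outputs of the nodes on which $\mathcal{L}$ depends directly. The same induction then goes through with $D_{out\gets v}$ replaced by the stacked Jacobian. I will state this convention explicitly so that the unrolled formula is literal.

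Positive semi-definiteness then follows immediately from the closed form. Stack $J=[D_{out\gets v}]_{v\in V}$; the block matrix is $J^\top\nabla^2\mathcal{L}\,J$. This is PSD whenever $\nabla^2\mathcal{L}\succeq 0$, i.e.\ for convex losses such as MSE or cross-entropy in logits. That hypothesis is implicit and should be flagged, since Remark~\ref{rem:gn-psd-block} is where the paper places it.

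Next I will derive the tensor recursion. With $H^T:=H^f-H^{GN}$, I subtract the GN recursion from \eqref{eq:Hf} term by term. The propagation term (1) is bilinear in the propagated blocks, so it splits into a GN part and a $T$ part. The loss term (4) cancels exactly between the two recursions. Terms (2) and (3) appear only in $H^f$, so they remain in $H^T$. The base cases give $H^T_{out,out}=\nabla^2\mathcal{L}-\nabla^2\mathcal{L}=0$, and the one-sided recursions for $H^T_{v,out}$ are homogeneous, hence $H^T_{v,out}=0$. This yields exactly the stated recursion.

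The decomposition is canonical because both recursions have unique solutions, which can be computed in reverse topological order. The closing remark on indefiniteness of $H^T$ needs no proof beyond an example: a scalar node with $\sigma''\delta$ of either sign gives $H^T$ of either sign.
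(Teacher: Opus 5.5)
Your proposal is correct and is essentially the paper's own argument: define $H^{GN}$ by its closed recursion (with the one-sided boundary cases), unroll it by induction to the GGN form $D_{out\gets v}^\top\nabla^2\mathcal{L}\,D_{out\gets w}$, get PSD from $J^\top\nabla^2\mathcal{L}J$ under $\nabla^2\mathcal{L}\succeq 0$ (which you rightly flag), and obtain the $H^T$ recursion with $H^T_{out,out}=0$ by subtracting the two recursions. Your identity $D_{out\gets v}=\sum_{u\in\Ch(v)}D_{out\gets u}D_{u\gets v}$ is exactly the step the paper leaves implicit. The one thing to drop is your aside on losses that depend directly on intermediate nodes: there the induction does not go through verbatim, because the double sum over $\Ch(v)\times\Ch(w)$ never returns to $w$. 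For example, with $v\to w\to out$ and $\mathcal{L}=\ell(f_{out})+\phi(f_w)$, the recursion for $(v,w)$ misses $D_{w\gets v}^\top\nabla^2\phi$. The statement assumes $\mathcal{L}$ depends on $f_{out}$ only, so your main argument is unaffected.
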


\begin{proof}[Proof sketch]
  The recursion for $H^{GN}$ is self-contained (uses
  $H^{GN}_{u_1,u_2}$, not full $H^f$);
  $H^T\!:=H^f\!-H^{GN}$ inherits the recursion with base
  $H^T_{out,out}\!=\!0$.  Unrolling to the output:
  $H^{GN}_{v,w}\!=\!D_{out\gets v}^\top\nabla^2\!\mathcal{L}\,D_{out\gets w}$;
  PSD follows from Remark~\ref{rem:gn-psd-block}.
  Full proof: \ref{app:canonical-proof}.
\end{proof}

\begin{remark}[PSD of the block GN matrix]
  \label{rem:gn-psd-block}
  When $H^{\mathcal{L}}_{out}\!\succeq\!0$
  the block matrix $\mathcal{H}^{GN}\!:=[H^{GN}_{v,w}]_{v,w\in V}$ is PSD:
  $r^\top\!\mathcal{H}^{GN}r
  = \|\textstyle\sum_{v} D_{out\gets v}\,r_v\|^2_{H^{\mathcal{L}}_{out}}
  \geq 0$.
  PSD is inherited from the \emph{output} Hessian
  $H^{\mathcal{L}}_{out}$; individual blocks $H^f_{u,u}$
  (which include~$H^T$) need not be PSD.
\end{remark}

\begin{remark}[$H^{GN}$ and the GGN]
  \label{rem:ggn-link}
  In unrolled form:
  \[
    H^{GN}_{v,w}=D_{out\gets v}^\top\,\nabla^2\!\mathcal{L}\;D_{out\gets w}.
  \]
  Let $J:=[D_{out\gets v_1},\dots,D_{out\gets v_n}]^\top$ be the
  full network Jacobian.  Then
  $\mathcal{H}^{GN}=J^\top\!\nabla^2\!\mathcal{L}\,J$, which is
  precisely the \emph{Generalized Gauss--Newton} (GGN)
  matrix~\citep{schraudolph2002fast, martens2015optimizing}.
  One should distinguish GGN from the Fisher matrix: GGN is a
  deterministic construction requiring no probabilistic model; under
  log-likelihood loss GGN coincides with the Fisher, but in general
  they differ~\citep{martens2020new}.  The canonical decomposition
  represents the full Hessian as $H=\mathrm{GGN}+H^T$, explicitly
  isolating the part discarded by optimizers such as K-FAC.
\end{remark}

\begin{proposition}[Input Hessian for piecewise-linear networks]
  \label{prop:ad-ggn-equiv}
  For networks with piecewise-linear activations (ReLU, Leaky~ReLU,
  max-pool) the tensor component of the input Hessian
  $H^T_{v,w}\equiv 0$ a.e., and in activation space
  $H^f_{v,w}=H^{GN}_{v,w}$ for all pairs~$(v,w)$.
  The parametric Hessian~\eqref{eq:Htheta} contains a residual term
  determined by node function structure but independent of activation
  curvature~$\sigma''$; in parameter space the full Hessian does not
  reduce to GGN in general
  (Remark~\ref{rem:act-vs-param}).
\end{proposition}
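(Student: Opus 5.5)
The plan is to use the tensor recursion of Theorem~\ref{thm:canonical-decomposition} and show that, for piecewise-linear networks, every source term feeding it vanishes off a null set. The statement is then read off by induction in reverse topological order.

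\emph{Step 1: the node-level tensors vanish a.e.} Fix a node $u$ whose node function is either affine in $f_{\Pa(u)}$ or has the form $\sigma(W_u f_{\Pa(u)}+b_u)$ with $\sigma$ piecewise linear. Merge nodes (sums, concatenations) are linear in their inputs; max-pool is piecewise linear. On each open linearity region of $g_u$ in the variable $f_{\Pa(u)}$, the map is affine, so $T_{u;v}=0$ and $T_{u;v,w}=0$ there. By Remark~\ref{rem:tensor-notation} this is just $\sigma''(z_{u,i})W_{u,ij}W_{u,ik}=0$.

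The complement of the union of these regions is contained in finitely many hyperplanes of the form $\{z_{u,i}=0\}$, or equality sets of pooled coordinates. Under Assumption~\ref{ass:regularity}(B) this set has measure zero in input/parameter space, provided the preimage of a hyperplane under the smooth forward map is null. On that exceptional set, the AD convention $\sigma''(0):=0$ still returns zero tensors. So the AD-Hessian tensors vanish identically, and the classical ones vanish a.e.

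\emph{Step 2: induction over the recursion.} With all $T_{u;v}$ and $T_{u;v,w}$ equal to zero, the recursion for $H^T$ loses its inhomogeneous terms (2) and (3). It becomes
\[
H^T_{v,w}=\sum_{u_1\in\Ch(v)}\sum_{u_2\in\Ch(w)} D_{u_1\gets v}^\top H^T_{u_1,u_2}D_{u_2\gets w},
\]
with base case $H^T_{out,out}=0$. Blocks involving $out$ obey the one-sided recursion, which contains no tensor terms, so $H^T_{v,out}=0$ as well.

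Proceeding in reverse topological order, every block on the right-hand side is already zero. Hence $H^T_{v,w}=0$ for all pairs, and by \eqref{eq:decomposition} $H^f_{v,w}=H^{GN}_{v,w}$. Positive semidefiniteness of the block matrix then follows from Remark~\ref{rem:gn-psd-block}, assuming $\nabla^2\mathcal{L}\succeq0$.

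\emph{Step 3: the parametric claim.} For this part I would exhibit the structure of \eqref{eq:Htheta} rather than prove a vanishing statement. The parameter blocks contain mixed terms $\partial^2 f_u/\partial f_v\,\partial\theta_u$, contracted with $\delta_u$. For $f_u=\sigma(W_uf_v+b_u)$ these terms equal $\sigma'(z_{u,i})\,\mathbf{1}_{i=a}\,\delta_{jb}$-type entries, coming from differentiating $W_{u,ij}f_{v,j}$ in both $W$ and $f_v$. They involve only $\sigma'$, not $\sigma''$.

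A one-neuron example then shows the term is nonzero: take $\mathcal{L}$ linear in the output, with two stacked linear layers, giving $\partial^2\mathcal{L}/\partial W_1\partial W_2\neq0$ while the GGN block vanishes. This establishes that the parametric Hessian does not reduce to the GGN in general.

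\emph{Main obstacle.} The delicate point is the measure-zero argument in Step 1. The kink set $\{z_{u,i}=0\}$ is a hyperplane in $z$-space, but it must be pulled back through the composition of upstream layers. This pullback is only piecewise smooth, and it could be non-null on degenerate weight configurations, for example when a whole upstream unit is dead so that $z_{u,i}\equiv0$.

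I would handle this first by appealing to the CSVF results of \citet{bolte2021conservative} (\ref{app:proof-clarke}), which guarantee that AD outputs agree with the classical derivative a.e. If that is unavailable, I would restrict to generic parameters, where each $z_{u,i}$ is a non-constant piecewise-affine function of the input, so its zero set is null. In either case the AD convention makes the identity exact on the exceptional set.
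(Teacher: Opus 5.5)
Your proposal is correct and follows essentially the paper's own argument. The node tensors $T_{u;v}$ and $T_{u;v,w}$ vanish for piecewise-linear node functions (identically under the AD convention $\sigma''(0):=0$), so the $H^T$ recursion of Theorem~\ref{thm:canonical-decomposition} has no source terms, $H^T_{out,out}=0$, and reverse-topological induction gives $H^T\equiv 0$. The measure-zero issue is deferred to CSVF/transversality, exactly as in the appendix.

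Your explicit example $\mathcal{L}=c\,W_2W_1x$ gives a zero GGN block but a mixed block equal to $cx\neq 0$, and this justifies the parametric non-reduction more sharply than Remark~\ref{rem:act-vs-param} does. The one slip is in your fallback argument. A ``non-constant piecewise-affine'' $z_{u,i}$ does not by itself have a null zero set, since it can vanish on an entire linear region. Generic biases, however, do rule this out.
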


\begin{proof}[Proof sketch]
  For piecewise-linear activations, $\sigma''(z)=0$ a.e.
  By Remark~\ref{rem:tensor-notation},
  $[T_{u;v}]_{i,j,k}=\sigma''(z_{u,i})\,W_{u,ij}\,W_{u,ik}=0$
  a.e.\ for every activation node~$u$; likewise
  $T_{u;\,v,w}\!=\!0$.  In the recursion for~$H^T$
  (Theorem~\ref{thm:canonical-decomposition}, part~2) all
  source terms (tensor summands) vanish, and the base case is
  $H^T_{out,out}\!=\!0$.  By structural induction in reverse
  topological order, $H^T_{v,w}\!=\!0$ for all $(v,w)$.
  Full proof: \ref{app:proof-clarke}.
\end{proof}

\begin{definition}[GN-Gap]
  \label{def:gn-gap}
  The deviation of the full Hessian from the Gauss--Newton
  approximation is measured by the \emph{GN-Gap}:
  \begin{equation}\label{eq:gn-gap}
    \mathrm{Gap}_{GN}(v,w) := \frac{\|H^{T}_{v,w}\|_F}
    {\|H^{GN}_{v,w}\|_F + \epsilon},
  \end{equation}
  where $\epsilon > 0$ is a small constant for numerical stability.
\end{definition}

\begin{corollary}[Interpretation of GN-Gap]
  \label{cor:gn-gap-interpretation}
  GN-Gap quantifies the quality of the Gauss--Newton approximation
  for a given layer pair:
  \begin{itemize}
    \item $\mathrm{Gap}_{GN}(v,w) \approx 0$: the GN approximation is
      accurate, and optimizers such as K-FAC are expected to be close
      to exact second order.
    \item $\mathrm{Gap}_{GN}(v,w) \gg 0$: a substantial contribution
      of the tensor component $H^T$ is present; the full Hessian is
      required for precise optimization.
  \end{itemize}
  Since GN-Gap uses the Frobenius norm, it measures the
  \emph{magnitude} of the ignored tensor component, not the sign of
  curvature.  To assess \emph{negative} curvature specifically, one
  should use the negative-curvature mass
  (\ref{app:theoretical-results}).

  \textbf{Scope of applicability.}
  For piecewise-linear activations (ReLU) the tensor component
  vanishes (Proposition~\ref{prop:ad-ggn-equiv}),
  and $\mathrm{Gap}_{GN}\approx 0$ in activation space; the
  non-convexity of the landscape concentrates on linear-region
  boundaries (as delta contributions) unobservable by AD.
  GN-Gap is critically informative for networks with \emph{smooth}
  activations (GELU, Swish, Softmax in Attention), where $H^T\neq 0$
  everywhere and the GN approximation systematically loses
  information about negative curvature.
\end{corollary}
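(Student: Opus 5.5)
The statement to prove is Corollary~\ref{cor:gn-gap-interpretation}. It is largely interpretive, so the plan is to isolate its mathematically checkable content and derive each piece from Theorem~\ref{thm:canonical-decomposition}, Definition~\ref{def:gn-gap} and Proposition~\ref{prop:ad-ggn-equiv}. There are three such claims:
\begin{enumerate}[label=(\roman*)]
  \item small $\mathrm{Gap}_{GN}(v,w)$ means $H^{GN}_{v,w}$ approximates $H^f_{v,w}$ well in relative Frobenius error;
  \item the Frobenius-based gap is blind to the sign of curvature;
  \item for piecewise-linear activations the gap vanishes a.e., while for smooth activations it is generically nonzero.
\end{enumerate}

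For (i), start from the decomposition $H^f_{v,w}=H^{GN}_{v,w}+H^T_{v,w}$. By definition,
\[
\frac{\|H^f_{v,w}-H^{GN}_{v,w}\|_F}{\|H^{GN}_{v,w}\|_F+\epsilon}=\mathrm{Gap}_{GN}(v,w).
\]
So $\mathrm{Gap}_{GN}\approx 0$ is exactly a small relative error of replacing the block by its GN part. Through the reverse triangle inequality the same bound gives $\|H^f_{v,w}\|_F\approx\|H^{GN}_{v,w}\|_F$. By Remark~\ref{rem:ggn-link}, $H^{GN}$ is the GGN block, which is what K-FAC-type methods approximate; this justifies the phrase ``close to exact second order'' up to the further Kronecker error of those methods, which I would flag rather than bound. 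The case $\mathrm{Gap}_{GN}\gg 0$ is the contrapositive: $\|H^T_{v,w}\|_F$ dominates, so discarding it incurs a relative error of order the gap.

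For (ii), I would exhibit sign-invariance. Since $\|-H^T_{v,w}\|_F=\|H^T_{v,w}\|_F$, replacing $H^T$ by $-H^T$ leaves the gap unchanged while flipping the inertia of the tensor part. A two-dimensional diagonal example suffices: $H^T=\mathrm{diag}(a,-a)$ versus $\mathrm{diag}(a,a)$ have equal gap but opposite negative-curvature content. This shows a separate negative-curvature measure is needed, and I would simply cite the appendix quantity for it.

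For (iii), the piecewise-linear case is immediate. Proposition~\ref{prop:ad-ggn-equiv} gives $H^T_{v,w}\equiv0$ a.e., so the numerator vanishes and $\mathrm{Gap}_{GN}=0$ a.e.; the $\epsilon$ keeps this well defined even when $H^{GN}_{v,w}=0$. The claim that non-convexity sits on region boundaries as delta contributions follows because the distributional $\sigma''=\delta$ is supported on the measure-zero set $\{z_{u,i}=0\}$, which AD never evaluates with $\sigma''(0):=0$.

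For smooth activations, use Remark~\ref{rem:tensor-notation}: $[T_{u;v}]_{i,j,k}=\sigma''(z_{u,i})W_{u,ij}W_{u,ik}$. Because $\sigma''$ is not identically zero for GELU or Swish, this source term is nonzero on an open set of parameters whenever $\delta_u\neq0$. The main obstacle is the ``$H^T\neq0$ everywhere'' wording: the source terms could in principle cancel inside the $H^T$ recursion, and $\delta_u$ could vanish at critical points. I would therefore weaken the claim to generic nonvanishing. $H^T_{v,w}$ is a real-analytic function of the parameters for analytic activations, so it suffices to exhibit a single configuration, for example the two-layer network of Example~\ref{ex:two-layer} with $v=w=v_1$, where it is nonzero. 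Its zero set then has measure zero.
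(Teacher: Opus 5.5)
The paper gives no separate argument for this corollary. It is read off from Definition~\ref{def:gn-gap}, Theorem~\ref{thm:canonical-decomposition} and Proposition~\ref{prop:ad-ggn-equiv}, which is also your route. Your sign-invariance point (ii) and your distrust of ``everywhere'' are sound. However, two of the steps you add to make the corollary rigorous do not go through.

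\emph{The K-FAC step in (i).} You pass from a small gap to ``K-FAC is close to exact second order'' by using Remark~\ref{rem:ggn-link} to identify $H^{GN}$ with the GGN that K-FAC approximates. That remark concerns Jacobians with respect to \emph{activations}. In parameter space the GGN blocks are $D_v^\top H^{GN}_{v,w}D_w$, and K-FAC keeps only the diagonal ones. By Eq.~(\ref{eq:Htheta}), the exact parametric block differs from $D_v^\top H^{GN}_{v,w}D_w$ by $D_v^\top H^T_{v,w}D_w$ \emph{plus} the pure-parameter and mixed input--parameter terms. These extra terms are the residual $R_{v,w}$ of Corollary~\ref{cor:activation-to-param-bridge}, and $\mathrm{Gap}_{GN}$ does not measure them. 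Two cases show the failure.
\begin{itemize}
\item \emph{Smooth activations.} The term $\sum_i\delta_{v,i}T_v^{(i)}$ carries $\sigma''$ of node $v$'s own activation. Every source of $H^T_{v,v}$, by contrast, comes from the children of $v$ and their descendants, so $\mathrm{Gap}_{GN}(v,v)$ cannot see it.
\item \emph{ReLU, where all gaps vanish.} Take $\mathcal{L}=\tfrac12\bigl(w_2\operatorname{relu}(w_1x)-y\bigr)^2$ at a point with $w_1x>0$ and $w_2=0$. The pair $(v_1,\mathrm{out})$ has $H^f_{v_1,\mathrm{out}}=H^{GN}_{v_1,\mathrm{out}}=w_2=0$. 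Yet the parametric Hessian in $(w_1,w_2)$ is $\begin{pmatrix}0&-xy\\-xy&w_1^2x^2\end{pmatrix}$, which is indefinite for $xy\neq0$, while the GGN is $\mathrm{diag}(0,w_1^2x^2)$.
\end{itemize}
So the K-FAC clause does not follow from the gap, quite apart from the cross-blocks K-FAC drops. It must be confined to activation space or supplemented by a bound on $R_{v,w}$. The same example shows that your distributional argument for ``non-convexity sits on region boundaries'' is valid only for the input Hessian. In parameter space a ReLU loss is non-convex in the interior of a linear region, through exactly this residual (Remark~\ref{rem:act-vs-param}).

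\emph{The analyticity step in (iii).} For a \emph{fixed} architecture and a \emph{fixed} pair $(v,w)$, real-analyticity gives only a dichotomy: either $H^T_{v,w}\equiv0$, or $H^T_{v,w}\neq0$ off a null set. A witness in the network of Example~\ref{ex:two-layer} settles one block of that one network and nothing more. The first alternative genuinely occurs with smooth activations. Suppose $w$ is the last hidden layer of a chain feeding a linear head. Then $H^f_{w,w}=H^{GN}_{w,w}$, and the one-sided recursion of Remark~\ref{prop:one-directional-path} gives $H^f_{v,w}=D_{w\gets v}^\top H^{GN}_{w,w}=H^{GN}_{v,w}$ for every upstream $v$, whatever $\sigma$ is. This is the $d=L-1$ row of Table~\ref{tab:exp3-decomp} and the content of Remark~\ref{rem:linear-head}. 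So ``generic nonvanishing'' is false as a blanket claim over pairs. You need three ingredients before analyticity yields $H^T_{v,w}\neq0$ almost everywhere:
\begin{itemize}
\item a structural hypothesis: some node with a non-affine node function (e.g.\ $\sigma''\not\equiv0$) lying downstream of both $v$ and $w$, with inputs that both actually influence;
\item an analytic loss;
\item a witness constructed inside the given architecture.
\end{itemize}
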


\begin{remark}[Relation to practical optimizers]
  Within decomposition~\eqref{eq:decomposition}, approximations such
  as K-FAC~\citep{martens2015optimizing} correspond to dropping the
  tensor component~$H^T$.  Since
  $[H^{GN}_{v,w}]\succeq 0$
  (Remark~\ref{rem:gn-psd-block}), such approximations contain no
  information about negative curvature, which may limit their
  effectiveness near saddle
  points~\citep{dauphin2014identifying}.
\end{remark}

\begin{proposition}[Structural sparsity and routing]
  \label{prop:sparsity-routing}
  The block $H^f_{v,w}$ is nonzero if and only if there
  exists a path from $v$ and $w$ to a common descendant~$u$,
  or a direct dependence
  $\partial^2\mathcal{L}/\partial f_v\partial f_w\neq 0$.
  For the special case of a one-directional path ($v\to^* w$)
  the computation reduces to the recursion
  $H^f_{v,w}=\sum_{u\in\Ch(v)}D_{u\gets v}^\top H^f_{u,w}$,
  which excludes tensor terms.
\end{proposition}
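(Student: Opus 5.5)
Both claims follow by induction over the DAG in reverse topological order, using recursion~\eqref{eq:Hf} together with its base cases. The "if and only if" has to be read generically: "nonzero" means "not structurally zero", i.e.\ nonzero for generic parameter values. Exact cancellations at special parameter values can always make a block vanish, so the converse direction cannot be meant pointwise. I therefore split the statement into (a) a vanishing claim, (b) a generic non-vanishing claim, and (c) the one-directional reduction.

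\emph{Step (a): vanishing.} Call $(v,w)$ \emph{unlinked} if $v$ and $w$ have no common descendant (counting $u=v$ or $u=w$ themselves, so that $v\to^* w$ gives the common descendant $w$), and $\partial^2\mathcal{L}/\partial f_v\partial f_w\equiv 0$. I show $H^f_{v,w}=0$ for every unlinked pair by induction on the pair in reverse topological order. Take the four terms of~\eqref{eq:Hf} in turn.
\begin{itemize}
\item Term (4) vanishes by hypothesis.
\item Terms (2) and (3) need $u\in\Ch(v)\cap\Ch(w)$, or $v=w$. Either would make $u$ (respectively $v$) a common descendant, so both vanish.
\item Term (1) involves $H^f_{u_1,u_2}$ with $u_1\in\Ch(v)$ and $u_2\in\Ch(w)$. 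A common descendant of $u_1,u_2$ would also be one of $v,w$. Any direct loss dependence between $u_1$ and $u_2$ can be absorbed by treating the loss as a node: the output $out$ is then a common descendant of everything that feeds it. So each $(u_1,u_2)$ is unlinked and strictly later in topological order, and the induction hypothesis kills it.
\end{itemize}
The base cases involving $out$ are consistent. $H^f_{v,out}$ is built from $H^f_{u,out}$, and $out$ is a common descendant whenever $v$ reaches $out$.

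\emph{Step (b): generic non-vanishing.} Suppose $u$ is a common descendant. Unrolling, the full Hessian block is the true second derivative $\partial^2\mathcal{L}/\partial f_v\partial f_w$ of the composite map (Theorem~\ref{thm:canonical-decomposition}). Its GN part $D_{out\gets v}^\top\nabla^2\mathcal{L}\,D_{out\gets w}$ is generically nonzero whenever both $v$ and $w$ reach $out$. Since $H^f_{v,w}$ is analytic (in the smooth case) in the parameters, exhibiting one configuration with a nonzero block suffices. For that configuration I take linear nodes and the loss $\tfrac12\|f_{out}\|^2$.

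\emph{Step (c): the one-directional path $v\to^* w$.} Here $f_w$ is itself a function of $f_v$, and the block is to be read as the derivative of $\partial\mathcal{L}/\partial f_w$ (a function on the downstream subgraph) with respect to $f_v$. This is exactly Example~\ref{ex:two-layer} generalized. By the chain rule,
\[
\partial_{f_v}(\,\cdot\,)=\sum_{u\in\Ch(v)}D_{u\gets v}^\top\,\partial_{f_u}(\,\cdot\,).
\]
Apply it to $\partial\mathcal{L}/\partial f_w$: differentiating only in $f_v$ leaves the $w$-side unexpanded, so no $T$ tensors appear. This gives $H^f_{v,w}=\sum_{u\in\Ch(v)}D_{u\gets v}^\top H^f_{u,w}$.

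The main obstacle is making this reading consistent with the two-sided recursion~\eqref{eq:Hf}. That requires fixing precisely which variables are held fixed when $f_v$ and $f_w$ lie on a common path, which the paper leaves implicit. I would settle it by adopting the Remark~\ref{prop:one-directional-path} convention that $f_w$ is treated as an independent cut variable. Under that convention the proof terminates at $u=w$, with $H^f_{w,w}$ itself taken as given.
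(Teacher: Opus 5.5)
Your (a) and (c) follow the paper's own route, which is only ``topological structure plus the chain rule''; for the ancestor case this is the one-line argument of Remark~\ref{prop:one-directional-path}. Step (a) is fine, though nearly vacuous in the single-output model, since any two nodes that reach the loss share the descendant $out$. Step (c), however, fails in general. The identity $\partial_{f_v}(\cdot)=\sum_{u\in\Ch(v)}D_{u\gets v}^\top\partial_{f_u}(\cdot)$ holds only for quantities that depend on $f_v$ solely through the children's outputs $f_u$. The quantity $\delta_w$ violates this whenever $v$ is also a parent of a strict descendant of $w$, for example when $v\to^* w$ and also $c\in\Ch(v)\cap\Ch(w)$. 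Then $\delta_w$ contains $D_{c\gets w}^\top\delta_c$, whose Jacobian is evaluated at the inputs of $c$, and these include $f_v$. Differentiating in $f_v$ therefore produces $\sum_i[T_{c;v,w}]_{i,\bullet,\bullet}\delta_{c,i}$ directly, not through any child output.

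A concrete counterexample: take scalar nodes with edges $v\to w$, $v\to c$, $w\to c$, and $c=out$. Set $f_w=f_v$, $f_c=f_vf_w$, and $\mathcal{L}=f_c$. Then $\delta_w=f_v$, so the block equals $1$ under your convention; the two-sided recursion also gives $1$, via term~(2). But $H^f_{w,w}=0$ and $H^f_{c,w}=0$, so $\sum_{u\in\Ch(v)}D_{u\gets v}^\top H^f_{u,w}=0$. Fixing a convention for ancestor pairs is therefore necessary but not sufficient. You need an extra hypothesis, e.g.\ that no child of $v$ is a strict descendant of $w$. That hypothesis holds on the single-child chains of Theorem~\ref{thm:lyapunov-decay}, where the reduction is actually used. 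The paper's one-line argument has the same blind spot, so neither the reduction nor ``excludes tensor terms'' holds for arbitrary $v\to^* w$.

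In (b), the witness is not admissible under the reading you fixed (``nonzero for generic parameter values''). The node functions and the loss are part of the architecture and only $\theta$ varies, so choosing ``linear nodes and $\tfrac12\|f_{out}\|^2$'' is not a parameter choice. Also, Assumption~\ref{ass:regularity} gives only $C^2$ or $PC^2$, not analyticity in $\theta$. That reading of the ``if'' direction is in fact false. With linear nodes and a loss satisfying $\nabla^2\mathcal{L}\equiv 0$, every block vanishes for all $\theta$, even though $out$ is a common descendant of every pair. Likewise, a common descendant yields nothing if $v$ or $w$ does not reach the loss; your argument silently assumes both reach $out$. The ``if'' direction can only hold structurally: some admissible choice of node functions and loss on the same graph gives a nonzero block. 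Under that reading your witness suffices with no analyticity argument, once you also choose weights with $D_{out\gets v}^\top D_{out\gets w}\neq 0$; entrywise positive weights will do.
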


\begin{proof}
  Follows from the topological structure of the graph and the chain
  rule (see \ref{app:canonical-proof}).
\end{proof}

\textbf{Symmetry property.}
In the smooth case (Case~A) $H^f_{v,w}=(H^f_{w,v})^\top$ for all
$v,w\in V$, following from equality of mixed partial derivatives for
twice continuously differentiable functions.

In the non-smooth case (Case~B), at points of non-differentiability,
symmetry of the AD-Hessian may not hold.  In such cases one can
symmetrize:
$\hat{H}^f_{v,w}=\tfrac{1}{2}(H^f_{v,w}+(H^f_{w,v})^\top)$.

\begin{remark}[Symmetrization in the non-smooth case]
  Symmetrization
  $\hat{H}^f_{v,w}=\tfrac{1}{2}(H^f_{v,w}+(H^f_{w,v})^\top)$
  modifies the spectrum and is recommended when positive
  definiteness or Cholesky factorization is required; it should be
  avoided if asymmetry carries curvature information at non-smooth
  points.
\end{remark}

\subsection{Full parametric Hessian}
\label{sec:parametric-hessian}

Given the input block $H^f_{v,w}$, the parametric Hessian
$\nabla^2_{\theta}\mathcal{L}=\{H_{\theta_v,\theta_w}\}$ is obtained
by the standard chain rule:
\begin{align}\label{eq:Htheta}
  H_{\theta_v,\theta_w}
  &= D_v^{\!\top}\, H^{f}_{v,w}\, D_w
  + \mathbf{1}_{v=w}\!\sum_{i}\delta_{v,i}\,T_{v}^{(i)}
  + \!\sum_{u,i}\delta_{u,i}\,
  D_v^{\!\top}\,T_{u;v}^{(i)}\,D_w,
\end{align}
where $T_v^{(i)}=\partial^2 f_{v,i}/\partial\theta_v^2$ and
$T_{u;v}^{(i)}=\partial^2 f_{u,i}/(\partial f_v\,\partial\theta_v)$
are the pure-parameter and mixed input--parameter second-derivative
tensors, respectively.  With weight sharing an additional cross-tensor
term appears (full expansion:
\ref{app:Htheta-full}, Eq.~\eqref{eq:Htheta-full}).  Weight
tying~\citep{pascanu2013difficulty} generalizes trivially:
$H_{\theta,\theta}=\sum_{a,b}H_{\theta_{v_a},\theta_{v_b}}$
(\ref{app:notation-details}).

\begin{remark}[Activation vs.\ parameter space]
  \label{rem:act-vs-param}
  The canonical decomposition $H^f=H^{GN}+H^T$
  (Theorem~\ref{thm:canonical-decomposition}) is defined for the
  \emph{input} Hessian.  From
  $H^{T}_{v,w}\equiv 0$ (piecewise-linear case)
  it does \textbf{not} follow that
  $H_{\theta_v,\theta_w}=H^{GN}_{\theta_v,\theta_w}$: the
  second and third terms in~\eqref{eq:Htheta} involve
  $T_v^{(i)}$ and $T_{u;v}^{(i)}$, which depend on the node
  function structure and are nonzero for linear
  layers ($\partial^2(Wf_v)/(\partial f_v\,\partial\theta_v)\neq 0$).
  Contribution~C3 refers to the decomposition of the input
  Hessian; full expansion of the parametric terms is in
  \ref{app:Htheta-full}.
\end{remark}

\begin{corollary}[Input-to-parameter Hessian bridge]
  \label{cor:activation-to-param-bridge}
  From~\eqref{eq:Htheta} one obtains a norm bound for the parametric
  cross-block via the input one:
  \[
    \|H_{\theta_v,\theta_w}\|_F \leq \|D_v\|_2\,\|D_w\|_2\,
    \|H^f_{v,w}\|_F + R_{v,w},
  \]
  where $R_{v,w}$ is the contribution of the
  pure-parameter and mixed tensors (second and third terms)
  in~\eqref{eq:Htheta}.  Consequently, the metrics
  $\mathcal{R}(v,w)$ and $\mathcal{C}(v,w)$, defined for
  activations, upper-bound the corresponding parametric cross-blocks
  up to the Jacobian scale factors $\|D_v\|_2$, $\|D_w\|_2$.
  High inter-layer resonance $\mathcal{R}(v,w)$ indicates potentially
  strong coordination of weight updates for $\theta_v$ and
  $\theta_w$.
\end{corollary}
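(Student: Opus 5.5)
The bound follows from equation~\eqref{eq:Htheta}, the triangle inequality for the Frobenius norm, and submultiplicativity in mixed form. I will write $H_{\theta_v,\theta_w}=A+B+C$, where
\[
A:=D_v^{\top}H^f_{v,w}D_w,\qquad
B:=\mathbf{1}_{v=w}\sum_i\delta_{v,i}T_v^{(i)},\qquad
C:=\sum_{u,i}\delta_{u,i}\,D_v^{\top}T^{(i)}_{u;v}D_w .
\]
If weight sharing is present, $C$ is enlarged by the cross-tensor term from the full expansion~\eqref{eq:Htheta-full}; this does not change the argument. The triangle inequality gives $\|H_{\theta_v,\theta_w}\|_F\le\|A\|_F+\|B\|_F+\|C\|_F$.

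Next I bound $\|A\|_F$ using $\|XYZ\|_F\le\|X\|_2\|Y\|_F\|Z\|_2$. This holds because left or right multiplication by a matrix scales the Frobenius norm by at most that matrix's spectral norm; one checks it columnwise and then by transposition. Since $\|D_v^\top\|_2=\|D_v\|_2$, this yields $\|A\|_F\le\|D_v\|_2\|D_w\|_2\|H^f_{v,w}\|_F$. I then set $R_{v,w}:=\|B\|_F+\|C\|_F$, which is exactly the contribution of the second and third terms of~\eqref{eq:Htheta} as the statement describes. If a more explicit form is wanted, the same submultiplicativity gives $R_{v,w}\le \mathbf{1}_{v=w}\|\sum_i\delta_{v,i}T_v^{(i)}\|_F+\|D_v\|_2\|D_w\|_2\|\sum_{u,i}\delta_{u,i}T^{(i)}_{u;v}\|_F$, but this refinement is not required.

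The consequence for $\mathcal{R}$ and $\mathcal{C}$ follows by substitution. Since $\mathcal{R}(v,w)=\|H^f_{v,w}\|_F$ (Table~\ref{tab:notations}), the first term equals $\|D_v\|_2\|D_w\|_2\,\mathcal{R}(v,w)$. For $\mathcal{C}$, which is a normalized $\mathcal{R}$, the bound carries over after multiplying by the normalizing constant. The final sentence about coordination of weight updates is interpretive and requires no proof.

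The main obstacle is bookkeeping of the dimensions, not the estimate. In the third term of~\eqref{eq:Htheta}, $T^{(i)}_{u;v}$ is a mixed input--parameter tensor, so I must confirm that the products $D_v^\top T^{(i)}_{u;v}D_w$ are dimensionally consistent as the paper writes them. If they are not, I will fold that entire term into $R_{v,w}$ without factoring out the Jacobians. The claimed inequality survives either way, because $R_{v,w}$ is defined as the norm of whatever the remaining terms are.
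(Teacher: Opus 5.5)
Your proposal is correct and is exactly the argument the paper leaves implicit: split \eqref{eq:Htheta} into its terms, apply the triangle inequality, bound $\|D_v^\top H^f_{v,w} D_w\|_F\le\|D_v\|_2\,\|H^f_{v,w}\|_F\,\|D_w\|_2$, and let $R_{v,w}$ absorb the norm of everything else. Your fallback for the third term is in fact needed. As printed, $D_v^\top T^{(i)}_{u;v}\in\mathbb{R}^{p_v\times p_v}$ cannot be right-multiplied by $D_w\in\mathbb{R}^{d_w\times p_w}$ in general, but folding that term (and any weight-sharing term) wholesale into $R_{v,w}$ leaves the inequality intact.
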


\subsection{Curvature routing and HVP operator}
\label{sec:hb-protocol}

Formulas~\eqref{eq:Hf} and~\eqref{eq:Htheta} define the structure of
the inter-layer Hessian.  Just as classical backpropagation routes
\emph{gradient signals} $\delta_v=\nabla_{f_v}\mathcal{L}$ over the
DAG, the recursion~\eqref{eq:Hf} extends this paradigm to matrices
of \emph{curvature signals}---blocks of the inter-layer Hessian
$H^f_{v,w}$.  The Pearlmutter trick
(HVP)~\citep{pearlmutter1994fast} treats the Hessian as a monolithic
operator; the graph recursion decomposes it into cross-blocks,
path-wise contributions, and tensor components.  When collapsed
over all nodes (Corollary~\ref{cor:hvp-complexity}), it reduces
to an $O(P)$ HVP.

\begin{definition}[Curvature routing recursion]
  \label{def:hb-protocol}
  Let $G=(V,E)$ be the DAG of a neural network with loss
  $\mathcal{L}$.  Curvature routing is defined as the scheme of
  propagating curvature matrices $\{H^f_{v,w}\}_{v,w\in V}$ over~$G$
  in reverse topological order, starting from the base condition
  $H^f_{out,out}=\nabla^2\mathcal{L}(f_{out})$, computing boundary
  blocks $H^f_{v,out}$ by one-sided recursion
  (Remark~\ref{prop:one-directional-path}), and applying the
  recurrence~\eqref{eq:Hf} for pairs of internal nodes
  (Figure~\ref{fig:hb-protocol}).
\end{definition}

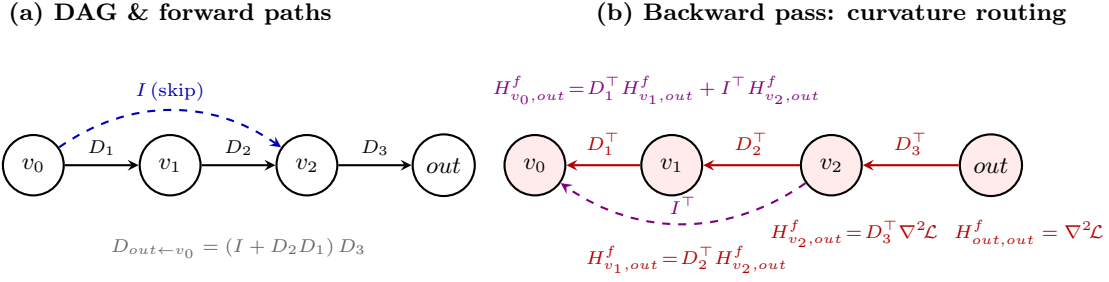
\begin{figure*}[htbp]
  \centering
  \begin{tikzpicture}[
      node distance=1.6cm and 2.0cm,
      layer/.style={circle, draw, thick, minimum size=8mm, font=\small},
      fwd/.style={->, thick, >=stealth},
      skip/.style={->, thick, >=stealth, dashed, blue!70!black},
      hb/.style={->, thick, >=stealth, red!70!black},
      lbl/.style={font=\scriptsize, midway},
    ]

    \node[font=\small\bfseries] at (1.8, 2.0) {(a) DAG \& forward paths};
    \node[layer] (v0) at (0, 0) {$v_0$};
    \node[layer] (v1) at (1.8, 0) {$v_1$};
    \node[layer] (v2) at (3.6, 0) {$v_2$};
    \node[layer] (out) at (5.4, 0) {$out$};

    \draw[fwd] (v0) -- node[lbl, above] {$D_1$} (v1);
    \draw[fwd] (v1) -- node[lbl, above] {$D_2$} (v2);
    \draw[fwd] (v2) -- node[lbl, above] {$D_3$} (out);

    \draw[skip] (v0) to[bend left=35] node[lbl, above] {$I$\,(skip)} (v2);

    \node[font=\scriptsize, text=black!60] at (2.7, -1.1) {%
    $D_{out\gets v_0}=(I+D_2 D_1)\,D_3$};

    \node[font=\small\bfseries] at (10.5, 2.0) {(b) Backward pass: curvature routing};
    \node[layer, fill=red!8] (ho) at (12.6, 0) {$out$};
    \node[layer, fill=red!8] (h2) at (10.5, 0) {$v_2$};
    \node[layer, fill=red!8] (h1) at (8.4, 0) {$v_1$};
    \node[layer, fill=red!8] (h0) at (6.6, 0) {$v_0$};

    \draw[hb] (ho) -- node[lbl, above, text=red!70!black] {$D_3^\top$} (h2);
    \draw[hb] (h2) -- node[lbl, above, text=red!70!black] {$D_2^\top$} (h1);
    \draw[hb] (h1) -- node[lbl, above, text=red!70!black] {$D_1^\top$} (h0);

    \draw[hb, dashed, blue!50!red] (h2) to[bend left=35]
    node[lbl, above, text=blue!50!red] {$I^\top$} (h0);

    \node[font=\scriptsize, text=red!70!black, align=center] at (13.1, -0.9) {%
    $H^f_{out,out}=\nabla^2\!\mathcal{L}$};
    \node[font=\scriptsize, text=red!70!black] at (10.8, -0.9) {%
    $H^f_{v_2,out}\!=\!D_3^\top\nabla^2\!\mathcal{L}$};
    \node[font=\scriptsize, text=red!70!black, align=center] at (8.6, -1.25) {%
    $H^f_{v_1,out}\!=\!D_2^\top H^f_{v_2,out}$};
    \node[font=\scriptsize, text=blue!50!red, align=center] at (8.2, 1.0) {%
    $H^f_{v_0,out}\!=\!D_1^\top H^f_{v_1,out}+I^\top H^f_{v_2,out}$};

  \end{tikzpicture}
  \caption{
    (a)~DAG of a neural network with skip connection ($v_0\!\to\!v_2$):
    forward edges (solid) and skip edge (dashed).
    The full Jacobian is a sum of path-wise contributions.
    (b)~Backward pass of curvature routing: curvature matrices $H^f_{v,w}$
    are propagated from $out$ over the graph via transposed Jacobians;
    with a skip connection, the block $H^f_{v_0,out}$ receives contributions
    from both paths.
  }
  \label{fig:hb-protocol}
\end{figure*}

\begin{theorem}[Recursion completeness]
  \label{thm:hb-completeness}
  For a neural network with DAG $G=(V,E)$ and $\mathcal{L}\in C^2$,
  the recursion~\eqref{eq:Hf},~\eqref{eq:Htheta} exactly recovers
  $\nabla^2_\theta\mathcal{L}=[H_{\theta_{v_i},\theta_{v_j}}]_{i,j=1}^n$.
\end{theorem}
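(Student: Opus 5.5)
The plan is to argue by strong induction over the DAG in reverse topological order. The claim we carry through the induction is that each block computed by~\eqref{eq:Hf} equals the true second derivative $\partial^2\mathcal{L}/\partial f_v\,\partial f_w$. The parametric statement then follows from~\eqref{eq:Htheta} by one further application of the chain rule.

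First we fix what ``true'' means. We view $\mathcal{L}$ as a function $\Phi_S$ of the outputs $\{f_u\}_{u\in S}$ of a downstream-closed set $S$: every node not in $S$ is computed from $S$ and from parameters, so $\mathcal{L}=\Phi_S(f_S)$. Under Assumption~\ref{ass:regularity}(A) and $\mathcal{L}\in C^2$, each $\Phi_S$ is $C^2$. Order the nodes topologically, $v_1,\dots,v_n=out$, and set $S_k=\{v_k,\dots,v_n\}$. Passing from $\Phi_{S_{k+1}}$ to $\Phi_{S_k}$ substitutes $f_u=g_u(f_{\Pa(u)},\theta_u)$ for every $u\in\Ch(v_k)$. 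Treat the other parents of such $u$ as frozen coordinates.

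Differentiate $\Phi_{S_k}$ twice with respect to $f_v$ and $f_w$ by the multivariate chain rule for a composition $h\circ g$:
\[
  \nabla^2(h\circ g)=Dg^\top\,\nabla^2h\,Dg+\sum_i \partial_i h\,\nabla^2 g_i .
\]
In the first term, the product $D_{u_1\gets v}^\top(\partial^2\Phi/\partial f_{u_1}\partial f_{u_2})D_{u_2\gets w}$ summed over $u_1\in\Ch(v)$, $u_2\in\Ch(w)$ is exactly term~(1) of~\eqref{eq:Hf}. Here we use the inductive hypothesis that these blocks are $H^f_{u_1,u_2}$. In the second term, $\partial_i h=\delta_{u,i}$, and $\nabla^2 g_{u,i}$ restricted to the $(f_v,f_w)$ coordinates is nonzero only when $u$ is a common child. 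This yields the mixed tensor term~(2) when $v\neq w$ and the pure term~(3) when $v=w$. Any explicit dependence of $\mathcal{L}$ on $f_v,f_w$ gives term~(4).

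The base cases are handled directly. $H^f_{out,out}=\nabla^2\mathcal{L}$ holds by definition. The one-sided recursion for $H^f_{v,out}$ is the same chain rule with $out$ held fixed, and it involves no second derivative of $g_u$ in $f_{out}$ (Remark~\ref{prop:one-directional-path}).

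We then lift the result to parameters. Because $\theta_v$ enters only through $g_v$, we have $\mathcal{L}=\Phi(f_v(\theta_v,\cdot),\dots)$. Applying the same composite-Hessian formula once more gives three contributions:
\begin{itemize}
  \item $D_v^\top H^f_{v,w}D_w$;
  \item $\sum_i\delta_{v,i}T_v^{(i)}$ on the diagonal;
  \item the mixed input--parameter terms $T_{u;v}^{(i)}$.
\end{itemize}
These arise because $\theta_v$ and the upstream $f_v$ meet inside the same child node. With weight sharing, summing over all occurrences of a tied parameter gives the cross-tensor term of \ref{app:Htheta-full}. Collecting the blocks over all $i,j$ yields $\nabla^2_\theta\mathcal{L}$.

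The main obstacle is the bookkeeping that prevents double-counting. Term~(1) already routes curvature through every pair of children, including pairs where $u_1$ is itself a descendant of $u_2$. The tensor terms must be attributed to exactly one node, namely the common child where $v$ and $w$ first meet. We would settle this by always taking the ``first substitution'' viewpoint: $\Phi_{S_k}$ differs from $\Phi_{S_{k+1}}$ only in the children of $v_k$. We also need a consistent convention for pairs $(v,w)$ where $w$ is a descendant of $v$; there we show that the recursion collapses to the one-directional form of Proposition~\ref{prop:sparsity-routing} with no tensor term. Symmetry of mixed partials under $C^2$ guarantees that computing $(v,w)$ or $(w,v)$ first gives the same answer.
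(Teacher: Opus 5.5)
You follow the same route as the paper: its proof is only a one-line appeal to induction over the reverse topological order with the generalized chain rule. However, the step you defer as ``bookkeeping'' is exactly where the argument fails, and for Eq.~(\ref{eq:Hf}) as written it cannot be completed.

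\textbf{The object is not well defined.} A downstream-closed $S$ contains $out$. If every $f_u$, $u\in S$, is a free coordinate, then $\Phi_S(f_S)=\mathcal{L}(f_{out})$ and every block other than $(out,out)$ vanishes. Moreover, your substitution step removes the coordinates $f_u$, $u\in\Ch(v_k)$, so $\Phi_{S_k}$ is not a function of $f_{S_k}$. For comparable $v,w$ you must fix a meaning. One choice is additive injections $f_u=g_u(f_{\Pa(u)},\theta_u)+\epsilon_u$ with $H^f_{v,w}:=\partial^2\Lambda/\partial\epsilon_v\partial\epsilon_w|_{\epsilon=0}$; this is also the object for which $D_v^\top H^f_{v,w}D_w$ is the correct leading term of Eq.~(\ref{eq:Htheta}).

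\textbf{The decisive problem is the substitution step.} One step (children of $v_k$ only, other parents frozen) gives a \emph{one-sided} expansion in $v$. It never produces the right factor $D_{u_2\gets w}$ of term~(1). The tensor part $\sum_i\partial_ih\,\nabla^2 g_i$ stays local only if no frozen parent depends on $f_v$ or $f_w$. Differentiating the adjoint identity $\delta_v=\sum_{u\in\Ch(v)}D_{u\gets v}^\top\delta_u$ (valid for every $\epsilon$) in $\epsilon_w$ gives the exact recursion
\[
  H^f_{v,w}=\sum_{u\in\Ch(v)}\Bigl(D_{u\gets v}^\top H^f_{u,w}
  +\sum_{p\in\Pa(u)}\sum_i\delta_{u,i}\,[T_{u;v,p}]_{i,\bullet,\bullet}\,J_{p\gets w}\Bigr),
  \qquad J_{p\gets w}:=\partial f_p/\partial\epsilon_w .
\]
Its tensor terms involve total Jacobians. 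The tensor terms of Eq.~(\ref{eq:Hf}) are only the cases $p=w$.

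\textbf{The omitted terms are nonzero in general.} Take scalar nodes $f_v=\theta_v$, $f_w=\theta_w$, $f_a=f_v$, $f_{out}=f_af_w$, and $\mathcal{L}=f_{out}$. Then $\partial^2\mathcal{L}/\partial\theta_v\partial\theta_w=1$. However, $\Ch(v)\cap\Ch(w)=\varnothing$ and $H^f_{a,out}=D_{out\gets a}^\top\nabla^2\mathcal{L}=0$, so Eq.~(\ref{eq:Hf}) returns $H^f_{v,w}=0$. Eq.~(\ref{eq:Htheta}) then returns $D_v^\top H^f_{v,w}D_w=0$, since no node function contains both an input and a parameter. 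The lost term is $D_{a\gets v}^\top\delta_{out}T_{out;a,w}=1$.

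Your claim that ancestor pairs collapse to the tensor-free one-directional form (Remark~\ref{prop:one-directional-path}) also fails. Take $f_w=f_v$, $f_u=f_vf_w$, $\mathcal{L}=f_u$. The true $H^f_{v,w}=1$ comes entirely from $\delta_u T_{u;v,w}$, while $\sum_{u'\in\Ch(v)}D_{u'\gets v}^\top H^f_{u',w}=0$. Symmetry of mixed partials cannot rescue this: it holds for the true Hessian, not for a recursion not yet shown to compute it.

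The parametric lift has the same defect. The exact cross term is $\sum_i\delta_{v,i}\sum_{p\in\Pa(v)}[\partial^2 g_{v,i}/\partial\theta_v\partial f_p]\,J_{p\gets w}D_w$ plus its transposed counterpart. It lives at node $v$ itself, not at a child, and again needs total Jacobians.

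\textbf{What a correct proof needs.} You must either establish the nonlocal recursion above, or add a hypothesis that kills the extra terms. For example, if every node with two or more parents is affine in its inputs, then Eq.~(\ref{eq:Hf}) for incomparable and diagonal pairs, together with the tensor-free one-sided rule for ancestor pairs, is exact. The paper's proof supplies neither, so there is nothing there to borrow.
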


\begin{proof}
  Correctness is guaranteed by induction over the reverse
  topological order of graph nodes using the generalized chain rule.
  Full formal proof: \ref{app:full-algorithm}.
\end{proof}

\subsubsection*{Operator formulation: Hessian--vector products}

The recursion~\eqref{eq:Hf} admits an operator formulation that
avoids storing the full $P\!\times\!P$ matrix.

\begin{definition}[HVP operator]
  \label{def:hb-operator}
  A tangent vector $\mathbf{r}_v$ is propagated by a forward pass:
  \begin{equation}\label{eq:tangent-forward}
    \mathbf{r}_v = \sum_{w \in \Pa(v)} D_{v \gets w}\, \mathbf{r}_w,
    \quad v \notin V_{\mathrm{in}},
  \end{equation}
  and the HVP operator action is defined as
  $\mathrm{HVP}_v(\mathbf{r}) := \sum_{w \in V} H^f_{v,w}\,
  \mathbf{r}_w$.
\end{definition}

When collapsed over nodes, the recursion reduces to an $O(P)$
HVP~\citep{pearlmutter1994fast} with explicit decomposition
$H^{GN}\!+\!H^T$ (algorithm and proof:
  \ref{app:full-algorithm}; AD implementation:
\ref{app:autodiff-impl}).

\begin{corollary}[HVP complexity]
  \label{cor:hvp-complexity}
  The computational cost of $\mathrm{HVP}(\mathbf{r})$ for a tangent
  vector $\mathbf{r}$ obtained by the forward
  pass~\eqref{eq:tangent-forward} coincides with the cost of one
  backward pass (up to a multiplicative constant).
  For fully connected layers, where multiplication
  $D_{u\gets v}^\top r$ costs $O(d_u\,d_v)$, the total complexity is
  $O(P)$, where $P$ is the total number of parameters.
\end{corollary}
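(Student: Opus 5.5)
The plan is to prove the corollary by showing that $\mathrm{HVP}(\mathbf{r})=\sum_w H^f_{v,w}\mathbf{r}_w$, taken over all $v$ at once, never requires forming a block $H^f_{v,w}$. Instead it can be computed by one forward sweep that propagates tangent vectors and one backward sweep that propagates vectors $\mathbf{h}_v\in\mathbb{R}^{d_v}$. The cost claim then follows by charging each sweep edge by edge.

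\textbf{Step 1 (forward sweep).} Compute $\mathbf{r}_v$ via~\eqref{eq:tangent-forward} in topological order. Each edge $(w\to v)$ costs one Jacobian--vector product $D_{v\gets w}\mathbf{r}_w$, which for a fully connected node is $O(d_vd_w)$ (one matrix--vector product with $W_v$ plus an elementwise activation derivative).

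\textbf{Step 2 (collapsed backward recursion).} Define $\mathbf{h}_v:=\mathrm{HVP}_v(\mathbf{r})$. Substitute~\eqref{eq:Hf} and use $\mathbf{r}_{u_2}=\sum_{w\in\Pa(u_2)}D_{u_2\gets w}\mathbf{r}_w$ to collapse the inner double sum. This should yield
\[
\mathbf{h}_v=\sum_{u\in\Ch(v)}D_{u\gets v}^\top\mathbf{h}_u+\sum_{u\in\Ch(v)}\sum_{i}\delta_{u,i}\Bigl(\sum_{w\in\Pa(u)}[T_{u;v,w}]_{i,\bullet,\bullet}\mathbf{r}_w\Bigr)+\frac{\partial^2\mathcal{L}}{\partial f_v\partial f_{out}}\mathbf{r}_{out},
\]
with base case $\mathbf{h}_{out}=\nabla^2\mathcal{L}\,\mathbf{r}_{out}$, where the $w=v$ summand uses $T_{u;v}$. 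The delicate point is the reindexing. The sum over $u_2\in\Ch(w)$ paired with a sum over $w$ must be rewritten as a sum over $u_2$ and its parents, and this is exactly how $\mathbf{r}_{u_2}$ reassembles. Tangent-input nodes in $V_{\mathrm{in}}$ must also be handled separately, since their $\mathbf{r}_w$ are free rather than propagated. I would verify the identity by induction in reverse topological order, using the one-sided base cases $H^f_{v,out}$. Alternatively, and more cleanly, I would recognise it as forward-over-reverse differentiation of the backpropagation recursion $\delta_v=\sum_u D_{u\gets v}^\top\delta_u$ in direction $\mathbf{r}$. Then Theorem~\ref{thm:hb-completeness} gives exactness for free, and the GN/tensor split of Theorem~\ref{thm:canonical-decomposition} corresponds to the first term versus the $T$-term.

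\textbf{Step 3 (cost accounting).} In the backward sweep, each edge incurs one vector--Jacobian product $D_{u\gets v}^\top\mathbf{h}_u$ together with one contraction $\sum_i\delta_{u,i}[T_{u;v,w}]_{i}\mathbf{r}_w$. For $f_u=\sigma(W_uf_v+b_u)$, Remark~\ref{rem:tensor-notation} gives this contraction as $W_u^\top\bigl(\sigma''(z_u)\odot\delta_u\odot W_u\mathbf{r}_v\bigr)$. Its cost is $O(d_ud_v)$, with no $d_v^2$ tensor ever materialised. Summing over edges, each sweep costs at most a constant times the cost of an ordinary backward pass. For fully connected layers, $\sum_{(v\to u)}d_ud_v\le P$, so the total is $O(P)$.

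I expect the main obstacle to be the reindexing in Step 2. The point there is to show that the double sum over $\Ch(v)\times\Ch(w)$ collapses into a single propagated vector without overcounting in DAGs with fan-in greater than $1$. The forward-over-reverse identification is the fallback if the direct induction gets messy.
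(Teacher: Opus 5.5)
Your Step~3 is the paper's proof. The paper charges each edge $(v,u)$ with one product $D_{u\gets v}^\top(\cdot)$ of cost $O(d_ud_v)$, and notes that $\sum_{(v,u)\in E}d_ud_v$ is the same sum as for backpropagation, hence $O(P)$ for dense layers. The rest of your proposal supplies what that one-line argument takes for granted: that the collapsed computation is a pure vector recursion, the cost of the tangent sweep, and the tensor contraction. The paper drops the contraction entirely by calling $D^\top r$ ``the main operation''. Your identity $\sum_i\delta_{u,i}[T_{u;v}]_{i,\bullet,\bullet}\mathbf r_v=W_u^\top(\sigma''(z_u)\odot\delta_u\odot W_u\mathbf r_v)$ is the right way to keep that term at $O(d_ud_v)$. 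With fan-in $>1$, reuse $\dot z_u=\sum_{w\in\Pa(u)}W_{u,w}\mathbf r_w$ from the tangent sweep so that the inner sum over $w$ is not repeated for every edge into $u$.

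The identity claimed in Step~2 is false as stated. With $\mathbf r$ propagated by the forward pass, your $\mathbf h_v$ is \emph{not} the literal $\sum_{w\in V}H^f_{v,w}\mathbf r_w$. So the direct induction from the recursion cannot close with base case $A\mathbf r_{\mathrm{out}}$, where $A:=\nabla^2\mathcal L$.

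Take the two-node chain $\mathrm{in}\to\mathrm{out}$ with $D:=D_{\mathrm{out}\gets\mathrm{in}}$. The boundary block $H^f_{\mathrm{in},\mathrm{out}}=D^\top A$ contributes $D^\top AD\,\mathbf r_{\mathrm{in}}$. That same term already sits inside $H^f_{\mathrm{in},\mathrm{in}}\mathbf r_{\mathrm{in}}$, so the literal sum contains the GN part twice. Your recursion instead returns $D^\top AD\,\mathbf r_{\mathrm{in}}+\sum_i\delta_{\mathrm{out},i}[T_{\mathrm{out};\mathrm{in}}]_{i,\bullet,\bullet}\mathbf r_{\mathrm{in}}$.

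This is not a boundary effect at $V_{\mathrm{in}}$. When $\mathbf r_w$ is itself the propagated image of the input tangent, the sum over all $w\in V$ counts the same perturbation again at every node it reaches. No separate handling of the input nodes will repair the identity.

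So make the forward-over-reverse identification the proof, not the fallback. Define $\mathbf h_v=\dot\delta_v$, the directional derivative of $\delta_v=\sum_{u\in\Ch(v)}D_{u\gets v}^\top\delta_u$. Your recursion computes exactly this, and it is the genuine Hessian--vector product; the paper itself says the HVP is implemented this way by double AD. Your ``direct'' term is zero when $\mathcal L$ depends only on $f_{\mathrm{out}}$; do not read it as the block $H^f_{v,\mathrm{out}}$, or you double-count.

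The cost conclusion is unaffected. If you want the literal sum, read $H^f_{v,w}$ as the Hessian with respect to perturbations injected at $f_v$ and $f_w$. Under that reading, the same two sweeps compute it when the base case and tensor terms use the accumulated tangent $\tilde{\mathbf r}_u=\sum_{w\in\Pa(u)}D_{u\gets w}\tilde{\mathbf r}_w+\mathbf r_u$, at identical per-edge cost.
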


\begin{proof}
  At each edge $(v,u)$ the main operation is multiplication of
  $D_{u\gets v}^\top$ by a vector of dimension~$d_u$, costing
  $O(d_u\,d_v)$.  Summing over all edges:
  $\sum_{(v,u)\in E}d_u\,d_v$, which coincides with the analogous
  sum for the backward pass.
\end{proof}

\begin{remark}[Relation to AD and practical significance]
  The recursive HVP formula is implemented via double application of
  AD~\citep{pearlmutter1994fast}: a forward pass, backward for
  $\delta_v$, then backward-over-forward with JVP in
  direction~$\mathbf{r}$.  The graph recursion provides a
  micro-decomposition of each HVP summand
  via~\eqref{eq:decomposition}, enabling isolation of $H^{GN}$ and
  $H^T$ contributions and underpinning second-order optimization
  algorithms (Newton-CG, L-BFGS, Trust Region) for arbitrary DAG
  architectures.
\end{remark}

\section{Diagnostics and structural analysis}
\label{sec:structural-analysis}

The block structure $\{H^f_{v,w}\}$ and the canonical decomposition
induce quantitative metrics and structural theorems that connect the
geometry of~$\mathcal{L}$ with the network architecture.

\begin{definition}[Inter-layer resonance and geometric coupling]
  \label{def:resonance-coupling-main}
  For a pair of nodes $v,w\in V$ we define the
  \emph{inter-layer resonance}
  $\mathcal{R}(v,w):=\|H^f_{v,w}\|_F$---a measure of joint
  curvature influence, and the \emph{geometric coupling}
  $\mathcal{C}(v,w):=\mathcal{R}(v,w)\big/\!\sqrt{\mathcal{R}(v,v)
  \cdot\mathcal{R}(w,w)}$---a normalized connectivity measure.
  In the PSD regime ($H^T\!=\!0$) the metric is strictly bounded:
  $\mathcal{C}\leq 1$.  A value $\mathcal{C}>1$ is mathematically
  possible \textbf{only} due to the tensor component~$H^T$ and
  serves as an indicator of an indefinite contribution of~$H^T$ to
  the curvature between layers $v$ and $w$.
  The metric is invariant to rescaling of adjacent weights
  $W_v\!\mapsto\!\alpha W_v$,
  $W_w\!\mapsto\!\alpha^{-1}W_w$ for $v\!\in\!\Pa(w)$
  (Theorem~\ref{thm:coupling-invariance} in Appendix).
  Since the absolute value $\mathcal{R}(v,w)$ naturally depends on
  layer dimensions $d_v\times d_w$, the normalized
  metric~$\mathcal{C}(v,w)$ is used for correct comparison across
  layers of different width.
\end{definition}

\begin{definition}[Stable rank of inter-layer block]
  \label{def:stable-rank-main}
  For a pair of nodes $v,w\in V$ we define the \emph{stable rank}
  of the inter-layer block:
  \begin{equation}\label{eq:stable-rank-main}
    \mathcal{D}(v, w) := \frac{\|H^f_{v,w}\|_F^2}
    {\|H^f_{v,w}\|_2^2}.
  \end{equation}
  The metric measures the effective dimensionality of curvature
  interaction between layers:\\
  $1\leq\mathcal{D}\leq\mathrm{rank}(H^f_{v,w})$,
  with $\mathcal{D}=1$ only if the block has rank~$1$.
  Unlike $d_{\mathrm{eff}}:=\|H\|_*/\|H\|_2$
  (Definition~\ref{def:effective-interaction-dim}), the stable rank
  admits stochastic estimation via HVP:
  $\|H\|_F^2$ by the Hutchinson
  estimator~\citep{avron2011randomized},
  $\|H\|_2^2$ by power
  iteration~\citep{nocedal2006numerical}; total cost is
  $O\bigl((m+2T)\cdot\mathrm{Backprop}\bigr)$, memory $O(P)$.
  The two-sided bound
  $\mathcal{D}\leq d_{\mathrm{eff}}\leq\sqrt{r\,\mathcal{D}}$ makes
  $\mathcal{D}$ a practical replacement for $d_{\mathrm{eff}}$ in
  large-scale analysis
  (Proposition~\ref{prop:stable-rank-properties},
    Algorithm~\ref{alg:stochastic-stable-rank} in
  \ref{subsubsec:stable-rank}).
\end{definition}

\begin{theorem}[Lyapunov decay of resonance]
  \label{thm:lyapunov-decay}
  Let $v_0\!\to\!v_1\!\to\!\cdots\!\to\!v_L$ be a sequential chain
  in the DAG with $|\mathrm{Ch}(v_i)|=1$ for $0\le i<L$,
  $\Pi_{j\gets i}=D_{v_j\gets v_{j-1}}\cdots D_{v_{i+1}\gets v_i}$
  the product of inter-layer Jacobians, and
  \begin{equation}\label{eq:lyapunov-exponent}
    \lambda_1^{(m)} \;=\;
    \frac{1}{m}\,\log\bigl\|\Pi_{i+m\gets i}\bigr\|_2
  \end{equation}
  the finite-depth Lyapunov exponent.  Then for $0\le i<j\le L$:
  \begin{equation}\label{eq:lyapunov-bound}
    \mathcal{R}(v_i,\,v_j)
    \;\le\; C_j\;\bigl\|\Pi_{j\gets i}\bigr\|_2
    \;=\; C_j\,e^{\,(j-i)\,\lambda_1^{(j-i)}},
  \end{equation}
  where $C_j=\|H^f_{v_j,v_j}\|_F$.  At $j=L$ we have
  $C_L=\|H^f_{\mathrm{out,out}}\|_F$; for piecewise-linear
  activations this coincides with
  $C^{GN}=\|\nabla^2\!\mathcal{L}\|_F$.
\end{theorem}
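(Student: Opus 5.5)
The plan is to reduce the cross-block $H^f_{v_i,v_j}$ to the diagonal block $H^f_{v_j,v_j}$ via the one-directional path recursion (Proposition~\ref{prop:sparsity-routing}). After that, the bound follows from submultiplicativity of norms and the definition~\eqref{eq:lyapunov-exponent}.

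\textbf{Step 1 (factorization).} Since $|\Ch(v_k)|=1$ for $0\le k<L$, the only child of $v_k$ is $v_{k+1}$. Hence for $i<j$ there is a directed path $v_i\to^* v_j$, and the one-sided recursion of Proposition~\ref{prop:sparsity-routing} reads
\[
H^f_{v_k,v_j}=D_{v_{k+1}\gets v_k}^\top H^f_{v_{k+1},v_j},\qquad i\le k<j.
\]
This relation contains no tensor terms and no loss term. That holds because $f_{v_k}$ influences $\mathcal{L}$ only through $f_{v_{k+1}}$. Formally, $\mathcal{L}$ is a function $\tilde{\mathcal{L}}_{k+1}(f_{v_{k+1}},\dots)$, and differentiating once in $f_{v_k}$ gives $D_{v_{k+1}\gets v_k}^\top\nabla_{f_{v_{k+1}}}$. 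Differentiating next in $f_{v_j}$, with $f_{v_j}$ treated as an independent downstream variable, does not touch $D_{v_{k+1}\gets v_k}$, because that Jacobian depends only on $f_{v_k}$ and $\theta_{v_{k+1}}$. This mirrors the chain-rule argument in Example~\ref{ex:two-layer}. Unrolling the recursion from $k=j-1$ down to $k=i$ gives
\[
H^f_{v_i,v_j}=\Pi_{j\gets i}^\top\,H^f_{v_j,v_j}.
\]

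\textbf{Step 2 (norm bound).} Apply the standard inequality $\|AB\|_F\le\|A\|_2\|B\|_F$ to obtain
\[
\mathcal{R}(v_i,v_j)=\|H^f_{v_i,v_j}\|_F\le\|\Pi_{j\gets i}^\top\|_2\,\|H^f_{v_j,v_j}\|_F=C_j\,\|\Pi_{j\gets i}\|_2.
\]
Setting $m=j-i$ in~\eqref{eq:lyapunov-exponent} gives $\|\Pi_{j\gets i}\|_2=e^{(j-i)\lambda_1^{(j-i)}}$ by definition. The exponent implicitly depends on $i$, and I would note this in the proof.

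\textbf{Step 3 (endpoint).} At $j=L$, treat $v_L$ as the output node. The base case of~\eqref{eq:Hf} then gives $H^f_{out,out}=\nabla^2\mathcal{L}$. By Theorem~\ref{thm:canonical-decomposition} we have $H^{GN}_{out,out}=\nabla^2\mathcal{L}$ and $H^T_{out,out}=0$, so $C_L=\|\nabla^2\mathcal{L}\|_F=C^{GN}$. The piecewise-linear remark is therefore consistent with Proposition~\ref{prop:ad-ggn-equiv}; in fact the identity holds for any activation. For $j<L$ with ReLU, Proposition~\ref{prop:ad-ggn-equiv} additionally gives $C_j=\|H^{GN}_{v_j,v_j}\|_F$.

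\textbf{Main obstacle.} The delicate step is justifying Step 1 rigorously. One must check that the loss term $\partial^2\mathcal{L}/\partial f_{v_k}\partial f_{v_j}$ and the mixed tensor terms in~\eqref{eq:Hf} vanish when $v_j$ is a strict descendant of $v_k$ along a single-child chain. The difficulty is that the symmetric form of~\eqref{eq:Hf} also sums over $\Ch(v_j)$, so one has to reconcile that form with the one-sided recursion. I would argue directly: fix the downstream variables and apply the chain rule through the single edge $v_k\to v_{k+1}$, invoking Remark~\ref{prop:one-directional-path}. In the nonsmooth Case~B, the same identity holds a.e. with the AD-Jacobians, by the CSVF justification already cited.
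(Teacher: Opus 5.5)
Your proposal is correct and follows essentially the same route as the paper: both unroll the one-directional recursion of Remark~\ref{prop:one-directional-path} along the single-child chain to obtain $H^f_{v_i,v_j}=\Pi_{j\gets i}^\top H^f_{v_j,v_j}$, bound the Frobenius norm by $\|\Pi_{j\gets i}\|_2\,C_j$, and read off the exponential form from the definition of $\lambda_1^{(j-i)}$. Your side remarks go beyond what the paper's short proof addresses and are sound: $C_L=\|\nabla^2\mathcal{L}\|_F$ holds for any activation once $v_L=\mathrm{out}$, and the one-sided recursion has to be justified directly by the chain rule rather than derived from the symmetric form of~\eqref{eq:Hf}.
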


\begin{proof}
  Remark~\ref{prop:one-directional-path} with
  $|\mathrm{Ch}(v_k)|=1$ gives
  $H^f_{v_k,v_j}=D_{v_{k+1}\gets v_k}^\top H^f_{v_{k+1},v_j}$.
  Applying $(j{-}i)$ times from $k=i$ to $k=j{-}1$:
  \begin{equation*}
    H^f_{v_i,v_j}
    = D_{v_{i+1}\gets v_i}^\top\!
    D_{v_{i+2}\gets v_{i+1}}^\top\!
    \cdots\,
    D_{v_j\gets v_{j-1}}^\top
    H^f_{v_j,v_j}
    = \Pi_{j\gets i}^\top H^f_{v_j,v_j},
  \end{equation*}
  whence
  $\mathcal{R}(v_i,v_j)
  \le\|\Pi_{j\gets i}\|_2\,C_j$.
\end{proof}

\begin{corollary}[Exponential curvature sensitivity for arbitrary DAGs]
  \label{thm:exponential-decay-main}
  Let $s=\max_v|\Ch(v)|$ be the maximum out-degree and
  $\|D_{u\gets v}\|_2\leq\rho$ for all edges $(v,u)\in E$, with
  $s\rho<1$.  Then for GN-resonance
  $\mathcal{R}^{GN}(v,w):=\|H^{GN}_{v,w}\|_F$:
  \begin{equation}\label{eq:resonance-decay-main}
    \mathcal{R}^{GN}(v,w)\leq C^{GN}\,(s\rho)^{\mathrm{dist}(v,w)},
  \end{equation}
  where $C^{GN}=\|H^{GN}_{out,out}\|_F=\|\nabla^2\!\mathcal{L}\|_F$.
  For $s=1$ (sequential chains) the bound follows from
  Theorem~\ref{thm:lyapunov-decay} by submultiplicativity
  $\|\Pi_{j\gets i}\|_2\le\rho^{j-i}$; for arbitrary DAGs ($s>1$)
  the proof is in \ref{app:theoretical-results}.
\end{corollary}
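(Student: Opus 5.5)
The plan is to work with the unrolled form of the Gauss--Newton component from Theorem~\ref{thm:canonical-decomposition}, namely $H^{GN}_{v,w}=D_{out\gets v}^\top\nabla^2\mathcal{L}\,D_{out\gets w}$, rather than with the two-sided recursion. The argument then reduces to bounding the total Jacobian $D_{out\gets v}$ by a path expansion over the DAG.

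\textbf{Path expansion.} By the multivariate chain rule, $D_{out\gets v}$ is a sum over directed paths $v=u_0\to u_1\to\cdots\to u_k=out$ of the products $D_{u_k\gets u_{k-1}}\cdots D_{u_1\gets u_0}$. Each product of length $k$ has spectral norm at most $\rho^k$ by submultiplicativity. The number of paths of length exactly $k$ starting at $v$ is at most $s^k$, because each node has at most $s$ children. Hence
\[
\|D_{out\gets v}\|_2\le\sum_{k\ge \ell(v)} (s\rho)^k \le \frac{(s\rho)^{\ell(v)}}{1-s\rho},
\]
where $\ell(v)$ is the length of the shortest path from $v$ to $out$. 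The hypothesis $s\rho<1$ is used exactly here, to make the geometric series converge. If one wants the constant $C^{GN}$ without the factor $(1-s\rho)^{-1}$, I would instead bound by the longest path, or absorb that factor into $C^{GN}$. I will note this as a possible constant adjustment.

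\textbf{Combining.} Using $\|A^\top B C\|_F\le\|A\|_2\|B\|_F\|C\|_2$ gives $\mathcal{R}^{GN}(v,w)\le\|\nabla^2\mathcal{L}\|_F\,\|D_{out\gets v}\|_2\|D_{out\gets w}\|_2$. It remains to relate this to $\mathrm{dist}(v,w)$. I take $\mathrm{dist}(v,w)$ to mean the directed distance when $v\to^* w$; otherwise the relevant quantity is the distance through a common descendant, and for $H^{GN}$ the common descendant can always be taken to be $out$. For the case $v\to^* w$, the better route is the one-sided recursion (Remark~\ref{prop:one-directional-path}), applied to the GN component: $H^{GN}_{v,w}=\sum_{u\in\Ch(v)}D_{u\gets v}^\top H^{GN}_{u,w}$. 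Iterating this backward from $w$, every path from $v$ to $w$ of length $k\ge\mathrm{dist}(v,w)$ contributes a factor bounded by $\rho^k$, and there are at most $s^k$ such paths. The terminal block is $H^{GN}_{w,w}=D_{out\gets w}^\top\nabla^2\mathcal{L}\,D_{out\gets w}$, whose norm is at most $C^{GN}\|D_{out\gets w}\|_2^2\le C^{GN}$ whenever the total Jacobian is contractive. Summing gives $C^{GN}(s\rho)^{\mathrm{dist}(v,w)}$, again up to the geometric factor. For $s=1$ the sum has a single term, so this recovers Theorem~\ref{thm:lyapunov-decay}.

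\textbf{Main obstacle.} The delicate step is the counting and normalisation, in two respects.
\begin{itemize}
\item Paths of different lengths from $v$ to $w$ each contribute a term, so the bound is naturally $\sum_{k\ge d}(s\rho)^k$ rather than a single power. I would handle this either by the geometric-series factor or by restricting to graded DAGs, where all $v\to w$ paths have equal length.
\item One must ensure $\|H^{GN}_{w,w}\|_F\le C^{GN}$. This follows from $\|D_{out\gets w}\|_2\le 1$, which in turn requires the same path-sum argument: $\sum_k (s\rho)^k\le 1$ iff $s\rho\le 1/2$. Otherwise the resulting constant depends on $s\rho$.
\end{itemize}
I would try first to state the bound with $C^{GN}$ replaced by $C^{GN}/(1-s\rho)^2$, and then check whether the intended statement hides this constant.
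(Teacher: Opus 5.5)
Your proposal does not prove the stated bound, for two reasons.

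\textbf{The path count is too crude.} Bounding the number of length-$k$ paths by $s^k$ and summing over all $k\ge\ell(v)$ is what produces the factor $(1-s\rho)^{-1}$. It is also what produces the spurious condition $s\rho\le 1/2$ for $\|D_{out\gets w}\|_2\le 1$. As written, you prove only the weaker inequality with constant $C^{GN}/(1-s\rho)^2$.

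The missing idea is that having many paths and having long paths trade off against each other. Set $\beta_v:=\|D_{out\gets v}\|_2$. The chain rule $D_{out\gets v}=\sum_{u\in\Ch(v)}D_{out\gets u}D_{u\gets v}$ gives $\beta_v\le\rho\sum_{u\in\Ch(v)}\beta_u\le s\rho\,\max_{u\in\Ch(v)}\beta_u$. Because $s\rho<1$, a maximum of powers of $s\rho$ equals the power with the smallest exponent. Induction in reverse topological order from $\beta_{out}=1$ then yields $\beta_v\le(s\rho)^{\ell(v)}\le 1$ with no extra constant. Equivalently, use Kraft's inequality $\sum_p s^{-|p|}\le 1$ for the prefix-free family of $v\to out$ paths.

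This $s\rho\cdot\max$ step is exactly the mechanism of the paper's proof. The paper does strong induction on the sum of the longest-path depths of $v$ and $w$ to $out$. It applies the closed one-sided recursion $H^{GN}_{v,w}=\sum_{u\in\Ch(v)}D_{u\gets v}^\top H^{GN}_{u,w}$ to the deeper node; this recursion is valid for every pair, not just ancestor pairs. It then uses $\mathrm{dist}(u,w)\ge\mathrm{dist}(v,w)-1$.

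\textbf{Your ancestor case is wrong.} Iterating the one-sided recursion from $v$ does not terminate only at $H^{GN}_{w,w}$. Paths from $v$ to $out$ that bypass $w$ terminate at $H^{GN}_{out,w}$, and you drop them. Under your directed reading of $\mathrm{dist}$, the claim is then false. Take edges $v\to a\to b\to w\to out$ plus a skip $v\to out$, with scalar nodes, every edge Jacobian equal to $\rho=0.1$, and $\nabla^2\mathcal{L}=1$, so $s=2$. Then $H^{GN}_{v,w}=\rho^2+\rho^5>0.008=(s\rho)^3$.

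In the paper, $\mathrm{dist}$ is the shortest-path length in the \emph{undirected} graph, as stated in the appendix version. Under that reading no case split is needed. Your own combining step plus the corrected Jacobian bound gives, for every pair,
$\mathcal{R}^{GN}(v,w)\le C^{GN}\beta_v\beta_w\le C^{GN}(s\rho)^{\ell(v)+\ell(w)}\le C^{GN}(s\rho)^{\mathrm{dist}(v,w)}$.
The last step holds because a shortest $v\to out$ path followed by a reversed shortest $w\to out$ path is an undirected walk of length $\ell(v)+\ell(w)$.

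Once repaired, your route is a legitimate alternative to the paper's pairwise induction. It bounds the two Jacobians separately and gives the slightly sharper exponent $\ell(v)+\ell(w)$. Without both fixes, however, the proposal does not establish the corollary.
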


\begin{remark}[Multiplicative ergodic theorem]
  \label{rem:lyapunov}
  Bound~\eqref{eq:lyapunov-bound} is strictly no weaker
  than~\eqref{eq:resonance-decay-main} for $s=1$; the gap can be
  exponential (at $\rho\!=\!2{.}6$, $L\!=\!8$:
  $\rho^8\!\approx\!780$ vs.\ $\|\Pi\|_2\!\approx\!5$).
  If the Jacobians $\{D_i\}$ form a stationary ergodic
  sequence, then by the Oseledets
  theorem~\citep{oseledets1968multiplicative}
  $\lambda_1=\lim_{m\to\infty}\lambda_1^{(m)}$ exists a.s.\
  and $\mathcal{R}\le C\,e^{(\lambda_1+o(1))L}$; the condition
  $\lambda_1<0$ is \emph{necessary and sufficient} for decay.
\end{remark}

\begin{corollary}[Block-banded structure]
  \label{cor:bandedness-main}
  When $s\rho<1$ the GN-Hessian is approximately block-banded:
  blocks $H^{GN}_{v,w}$ with
  $\mathrm{dist}(v,w)>\lceil\log\varepsilon/\log(s\rho)\rceil$
  are negligibly small, justifying block-diagonal
  approximations~\citep{martens2015optimizing}.
  For piecewise-linear networks
  (Proposition~\ref{prop:ad-ggn-equiv}) this extends to the
  full input Hessian~$[H^f_{v,w}]$.
\end{corollary}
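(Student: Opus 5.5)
The plan is to read the corollary as a direct consequence of Corollary~\ref{thm:exponential-decay-main}, together with Proposition~\ref{prop:ad-ggn-equiv} for the piecewise-linear extension. We fix a tolerance $\varepsilon\in(0,1)$, interpreted relative to the scale $C^{GN}=\|\nabla^2\mathcal{L}\|_F$. We call a block \emph{negligible} if $\|H^{GN}_{v,w}\|_F\le\varepsilon\,C^{GN}$. The goal is to show that every block with $\mathrm{dist}(v,w)>k^*:=\lceil\log\varepsilon/\log(s\rho)\rceil$ is negligible in this sense. Note that $\mathrm{dist}$ is the same graph distance used in \eqref{eq:resonance-decay-main}, and under the hypothesis $s\rho<1$ we have $\log(s\rho)<0$.

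\textbf{Step 1: threshold on the distance.} Start from the bound \eqref{eq:resonance-decay-main}, $\|H^{GN}_{v,w}\|_F\le C^{GN}(s\rho)^{\mathrm{dist}(v,w)}$. Both $\log\varepsilon$ and $\log(s\rho)$ are negative, so $k^*\ge\log\varepsilon/\log(s\rho)$. Hence for $\mathrm{dist}(v,w)>k^*$,
\[
\mathrm{dist}(v,w)\log(s\rho)<k^*\log(s\rho)\le\log\varepsilon .
\]
Exponentiating gives $(s\rho)^{\mathrm{dist}(v,w)}<\varepsilon$, so the block is negligible. This is the block-banded statement with bandwidth $k^*$.

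\textbf{Step 2: consequence for the whole matrix.} The blocks outside the band together have Frobenius mass at most $\varepsilon\,C^{GN}\sqrt{N}$, where $N\le|V|^2$ counts the off-band pairs. Equivalently, summing the geometric tail per row gives a bound that depends only on degree. Either way, replacing $\mathcal{H}^{GN}$ by its banded truncation, and in particular by its block-diagonal part when $k^*=0$, incurs an error that is controlled by $\varepsilon$. This is the sense in which K-FAC-type block approximations are justified. Since the statement only says ``approximately'', it is enough to record the per-block bound and this aggregate remark.

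\textbf{Step 3: piecewise-linear extension.} By Proposition~\ref{prop:ad-ggn-equiv}, for ReLU/Leaky~ReLU/max-pool networks $H^T_{v,w}=0$ a.e. Hence $H^f_{v,w}=H^{GN}_{v,w}$ a.e. by the canonical decomposition \eqref{eq:decomposition}, so Steps~1--2 transfer verbatim to $[H^f_{v,w}]$, holding almost everywhere in parameter/input space.

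The main obstacle is the dependence on Corollary~\ref{thm:exponential-decay-main} in the $s>1$ case. If its appendix proof were unavailable, we would reprove it from the unrolled form $H^{GN}_{v,w}=D_{out\gets v}^\top\nabla^2\mathcal{L}\,D_{out\gets w}$. Here $D_{out\gets v}$ is a sum over directed paths, and at most $s^{k}$ paths have length $k$. One must then check that the count of paths combines with $\rho^{\text{length}}$ to give decay in the \emph{distance between $v$ and $w$} rather than in their distances to $out$. We would handle this by routing through the nearest common descendant, as in Proposition~\ref{prop:sparsity-routing}. Everything else is elementary logarithm algebra, apart from the interpretive choice of normalizing $\varepsilon$ by $C^{GN}$.
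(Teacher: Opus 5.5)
Your argument is the paper's: the corollary is the decay bound of Corollary~\ref{thm:exponential-decay-main} solved for the distance, and the piecewise-linear case follows from $H^T\equiv 0$ in Proposition~\ref{prop:ad-ggn-equiv}; that bound is proved in the appendix by strong induction on $\ell_v+\ell_w$ using the one-sided GN recursion, so your path-counting fallback is not needed. Two side remarks in Step~2 should be dropped or fixed: for $\varepsilon\in(0,1)$ and $s\rho<1$ one always has $k^*\ge 1$, so the ``$k^*=0$'' block-diagonal case never arises, and a per-row geometric-tail sum needs an extra condition on the undirected degree, since fan-in is not bounded by $s$.
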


\begin{corollary}[K-FAC approximation error]
  \label{cor:kfac-error}
  K-FAC~\citep{martens2015optimizing} approximates
  $H_\theta\approx\mathrm{blkdiag}(H_{\theta_v,\theta_v})$,
  discarding all cross-blocks $H_{\theta_v,\theta_w}$, $v\neq w$.
  By Corollary~\ref{cor:bandedness-main}, when $s\rho<1$
  the approximation error is exponentially small for distant
  pairs and concentrates in blocks with
  $\mathrm{dist}(v,w)\le
  \lceil\log\varepsilon/\log(s\rho)\rceil$.
  Thus, the framework quantitatively characterizes the accuracy
  region of block-diagonal optimizers.
\end{corollary}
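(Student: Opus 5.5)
The claim of Corollary~\ref{cor:kfac-error} has two parts. First, a bound on the off-diagonal error of the block-diagonal approximation. Second, a localization statement: the error lives in blocks with $\mathrm{dist}(v,w)$ below the threshold $k_\varepsilon:=\lceil\log\varepsilon/\log(s\rho)\rceil$.

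\textbf{Step 1: decompose the error.} Write the approximation error exactly as the off-diagonal part,
\[
E:=H_\theta-\mathrm{blkdiag}(H_{\theta_v,\theta_v}),\qquad
\|E\|_F^2=\sum_{v\neq w}\|H_{\theta_v,\theta_w}\|_F^2 .
\]
Splitting this sum at $\mathrm{dist}(v,w)\le k_\varepsilon$ versus $>k_\varepsilon$ reduces the claim to showing that each far block is $O(\varepsilon)$ up to architecture-dependent constants.

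\textbf{Step 2: pass to input blocks.} For each far block, apply Corollary~\ref{cor:activation-to-param-bridge}:
\[
\|H_{\theta_v,\theta_w}\|_F\le\|D_v\|_2\|D_w\|_2\|H^f_{v,w}\|_F+R_{v,w}.
\]
Then use the canonical decomposition (Theorem~\ref{thm:canonical-decomposition}) to write $H^f_{v,w}=H^{GN}_{v,w}+H^T_{v,w}$. The GN part is controlled by Corollary~\ref{thm:exponential-decay-main}, giving $\|H^{GN}_{v,w}\|_F\le C^{GN}(s\rho)^{\mathrm{dist}(v,w)}$. By the choice of $k_\varepsilon$, this is at most $C^{GN}\varepsilon$ whenever $\mathrm{dist}(v,w)>k_\varepsilon$; this is precisely Corollary~\ref{cor:bandedness-main}. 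In the piecewise-linear case, Proposition~\ref{prop:ad-ggn-equiv} gives $H^T\equiv0$, so the input block itself is exponentially small.

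\textbf{Step 3: handle the residual $R_{v,w}$.} This is the main obstacle. $R_{v,w}$ collects the mixed input--parameter terms $\sum_{u,i}\delta_{u,i}D_v^\top T^{(i)}_{u;v}D_w$ from~\eqref{eq:Htheta}, and these do not vanish even for ReLU (Remark~\ref{rem:act-vs-param}). My approach is to observe that $T^{(i)}_{u;v}$ is nonzero only when $u$ is a child of $v$ whose parameters enter $f_u$ jointly with $f_v$. For $v\neq w$, these terms therefore couple only nodes within distance one, or through shared children. If that holds, the residual is supported in blocks with $\mathrm{dist}\le1\le k_\varepsilon$ and contributes nothing to the far region.

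I would verify this support claim by inspecting the full expansion in \ref{app:Htheta-full}. If weight sharing produces long-range residuals, I would state the corollary under the assumption of no weight sharing, or absorb such terms into the near-band.

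\textbf{Step 4: sum the far blocks.} Summing over far pairs, the number of pairs at distance $k$ grows at most like $|V|s^k$, so the geometric tail is bounded by
\[
\sum_{k>k_\varepsilon}|V|\,s^{k}\rho^{2k}\,\|D\|^4\,(C^{GN})^2 .
\]
This tail is exponentially small provided $s\rho^2<1$, which holds since $s\rho<1$ and $\rho<1$. The remaining error is then concentrated in the band $\mathrm{dist}(v,w)\le k_\varepsilon$, as claimed.

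For smooth activations, the statement should be read for the GN part (or the GGN approximation that K-FAC actually targets). To also control the tensor part, I would additionally need a decay bound on $H^T$ analogous to the one for $H^{GN}$.
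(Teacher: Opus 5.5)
Your Step~2 is the paper's whole argument. The corollary is justified only by Corollary~\ref{cor:bandedness-main}, i.e.\ $\|H^{GN}_{v,w}\|_F\le C^{GN}(s\rho)^{\mathrm{dist}(v,w)}<C^{GN}\varepsilon$ beyond the threshold, extended to $H^f$ in the piecewise-linear case by Proposition~\ref{prop:ad-ggn-equiv}. The paper applies this directly to the discarded cross-blocks, with no separate bridge to parameter space.

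The bridge you add is where the proposal breaks, because the support claim in Step~3 is false. Take $w$ a strict ancestor of $v$, with no weight sharing. Differentiating $\nabla_{\theta_v}\mathcal{L}=D_v^\top\delta_v$ in $\theta_w$ produces, besides the $H^f$ term, the residual
\[
  \sum_{u\in\Pa(v)}\Bigl(\sum_i\delta_{v,i}\,\frac{\partial^2 f_{v,i}}{\partial\theta_v\,\partial f_u}\Bigr)D_{u\gets w}\,D_w .
\]
Here $D_{u\gets w}$ is the full multi-path Jacobian from $f_w$ to $f_u$. This term is present for every ancestor--descendant pair at every distance, with or without weight sharing, and even for ReLU. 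Concretely, take a linear chain $f_k=W_kf_{k-1}$ with $f_0=x$. Under $W_1\mapsto W_1+E$, the first-order change of $\nabla_{W_L}\mathcal{L}=\delta_Lf_{L-1}^\top$ contains $\delta_L(W_{L-1}\cdots W_2Ex)^\top$. This involves no $\nabla^2\ell$ and couples layers $1$ and $L$. Inspecting the appendix expansion would not confirm your claim either: its index set $u\in\Pa(v)\cap\Ch(w)$ omits this multi-path factor.

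So $R_{v,w}$ cannot be absorbed into the near band. What rescues the conclusion is decay, not support. At most $s^\ell$ directed paths of length $\ell$ leave $w$, each of norm $\le\rho^\ell$, so $\|D_{u\gets w}\|_2\le(s\rho)^{\mathrm{dist}(u,w)}/(1-s\rho)$. Hence $\|R_{v,w}\|=O\bigl((s\rho)^{\mathrm{dist}(v,w)-1}\bigr)$, with a constant that involves $\|\delta_v\|$ and the mixed node tensor rather than $\|\nabla^2\mathcal{L}\|_F$.

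Step~4 is also incorrect as written, though the statement does not need it. Since $\mathrm{dist}$ is the undirected distance of Corollary~\ref{thm:exponential-decay-main}, the number of nodes at distance $k$ is governed by the maximal total degree $q=\max_v|\Pa(v)\cup\Ch(v)|$, not by the out-degree $s$. Moreover, squaring the block bound gives $(s\rho)^{2k}$, not $\rho^{2k}$. The tail is therefore of order $\sum_k|V|\,q^k(s\rho)^{2k}$. That is geometric only under a condition like $q(s\rho)^2<1$, which $s\rho<1$ does not imply. The corollary, like Corollary~\ref{cor:bandedness-main}, is a blockwise statement, so state the far-block bound per block and drop the aggregate sum.
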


\begin{remark}[Tightness of the $s\rho<1$ condition]
  \label{rem:srho-tightness}
  The condition $s\rho<1$ is sufficient but not necessary.
  When $s\rho \geq 1$, the finer bound of
  Theorem~\ref{thm:lyapunov-decay} via the Lyapunov exponent
  $\lambda_1$ remains applicable: $\lambda_1 < 0$ is
  necessary and sufficient for exponential decay
  (Remark~\ref{rem:lyapunov}).  In practice
  (Table~\ref{tab:rho-accuracy}), $\rho_{\max}\gg 1$
  at He initialization, but $\lambda_1 < 0$ due to
  decorrelation of singular bases.
\end{remark}

Beyond norm decay, narrow layers constrain the
\emph{dimensionality} of coordination between distant blocks.
If all paths between~$v$ and~$w$ pass through a bottleneck of
width~$d_u$, then the Jacobian factors through
$\mathbb{R}^{d_u}$, and the rank of the GN component inherits
this compression.

\begin{proposition}[Rank bottleneck]
  \label{thm:rank-bottleneck-main}
  If \textbf{all} paths from~$v$ and~$w$ to~$\mathrm{out}$
  pass through a node~$u$ with
  $d_u\ll\min(d_v,d_w)$, then the Gauss--Newton component is
  strictly bounded:
  $\mathrm{rank}(H^{GN}_{v,w})\leq d_u$.
  Consequently, the full Hessian $H^f_{v,w}$ is low-rank
  \emph{up to the tensor perturbation} $H^T_{v,w}$---training
  coordination is limited to $d_u$~directions in the
  Gauss--Newton regime.
  (Follows directly from submultiplicativity of rank under matrix
  multiplication~\citep{horn2012matrix}.)
\end{proposition}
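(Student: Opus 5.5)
The argument rests on the unrolled form of the Gauss--Newton component from Theorem~\ref{thm:canonical-decomposition} (equivalently Remark~\ref{rem:ggn-link}): $H^{GN}_{v,w}=D_{out\gets v}^\top\,\nabla^2\mathcal{L}\,D_{out\gets w}$. The plan is to show that both total Jacobians $D_{out\gets v}$ and $D_{out\gets w}$ factor through $\mathbb{R}^{d_u}$. Once that is done, the rank bound is immediate from submultiplicativity of rank.

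\textbf{Step 1 (factorization through the bottleneck).} Since every path from $v$ to $out$ passes through $u$, the output depends on $f_v$ only via $f_u$. To make this precise, write the total Jacobian as a sum over directed paths $v\to^*out$ of products of edge Jacobians, which is the path-wise expansion behind Figure~\ref{fig:hb-protocol}. Split each path at its (unique) visit to $u$; uniqueness holds because $G$ is acyclic. The sum over paths then factors as
\[
\sum_{p:\,u\to^*out}\ \sum_{q:\,v\to^*u}\Pi_p\Pi_q .
\]
This gives $D_{out\gets v}=D_{out\gets u}\,D_{u\gets v}$, where $D_{u\gets v}$ denotes the total Jacobian from $v$ to $u$. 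By the same argument, $D_{out\gets w}=D_{out\gets u}\,D_{u\gets w}$.

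\textbf{Step 2 (rank bound).} Substituting the factorization gives
\[
H^{GN}_{v,w}=D_{u\gets v}^\top\bigl(D_{out\gets u}^\top\nabla^2\mathcal{L}\,D_{out\gets u}\bigr)D_{u\gets w}=D_{u\gets v}^\top H^{GN}_{u,u}D_{u\gets w}.
\]
The middle factor is $d_u\times d_u$, so $\mathrm{rank}(H^{GN}_{v,w})\le d_u$ by $\mathrm{rank}(ABC)\le\min(\mathrm{rank}\,A,\mathrm{rank}\,B,\mathrm{rank}\,C)$~\citep{horn2012matrix}. The stated consequence then follows directly from the canonical decomposition $H^f_{v,w}=H^{GN}_{v,w}+H^T_{v,w}$: $H^f_{v,w}$ is a rank-$\le d_u$ matrix plus the tensor perturbation. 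In the piecewise-linear case Proposition~\ref{prop:ad-ggn-equiv} removes that perturbation, so $\mathrm{rank}(H^f_{v,w})\le d_u$ holds exactly. The assumption $d_u\ll\min(d_v,d_w)$ is needed only to make the bound non-trivial; it plays no role in the proof.

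\textbf{Main obstacle: the edge cases of Step~1.} Two need care.
\begin{itemize}
\item If $u=v$ or $u=w$, the factorization holds trivially with $D_{u\gets u}=I$.
\item The direct loss term $\partial^2\mathcal{L}/\partial f_v\partial f_w$ in~\eqref{eq:Hf} must not depend on $f_v$ except through $f_u$. This is guaranteed by the model assumption that $\mathcal{L}$ is a function of $f_{out}$ only. I would state that assumption explicitly and note that the hypothesis ``all paths pass through $u$'' rules out any bypass.
\end{itemize}
As an alternative to the path-sum argument, one can verify the factorization by induction in reverse topological order. Use the recursion $D_{out\gets x}=\sum_{y\in\Ch(x)}D_{out\gets y}D_{y\gets x}$, with inductive hypothesis that every node from which all paths to $out$ pass through $u$ satisfies $D_{out\gets x}=D_{out\gets u}D_{u\gets x}$. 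This hypothesis propagates because each child of such an $x$ either is $u$ or again has all its paths to $out$ through $u$.
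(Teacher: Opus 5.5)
Your proof is correct and follows the paper's argument: you factor $D_{out\gets v}=D_{out\gets u}D_{u\gets v}$ (and likewise for $w$), substitute into the unrolled GGN form to expose a $d_u\times d_u$ core, and conclude by submultiplicativity of rank. Your justification of the factorization (splitting paths at their unique visit to $u$, or the reverse-topological induction) and your remark that $\mathcal{L}$ must depend on $f_v$ only through $f_{out}$ spell out a step the paper simply asserts.
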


\begin{proof}[\textbf{Proof sketch}]
  Since every directed path from~$v$ (resp.~$w$) to~$\mathrm{out}$
  traverses~$u$, the end-to-end Jacobian factors as
  $D_{\mathrm{out}\leftarrow v}=D_{\mathrm{out}\leftarrow u}\,
  D_{u\leftarrow v}$,
  where $D_{u\leftarrow v}\in\mathbb{R}^{d_u\times d_v}$.
  Substituting into the Gauss--Newton block gives
  \[
    H^{GN}_{v,w}
    = D_{u\leftarrow v}^{\!\top}\,
    \underbrace{D_{\mathrm{out}\leftarrow u}^{\!\top}\,
      \nabla^{2}\!\mathcal{L}\,
    D_{\mathrm{out}\leftarrow u}}_{\in\,\mathbb{R}^{d_u\times d_u}}\,
    D_{u\leftarrow w}.
  \]
  By submultiplicativity of rank under matrix
  products~\citep{horn2012matrix}, the outer factors cannot
  increase rank beyond the inner $d_u\!\times\!d_u$ core, hence
  $\mathrm{rank}(H^{GN}_{v,w})\leq d_u$.
  Full details are in \ref{app:theoretical-results}.
\end{proof}

\begin{remark}[Skip connections and the rank constraint]
  Proposition~\ref{thm:rank-bottleneck-main} applies when
  \emph{all} paths pass through node~$u$.  With skip connections
  bypassing~$u$, additional terms appear in the path
  decomposition~\eqref{eq:path-decomp-main}, and
  $\mathrm{rank}(\sum_p D_p^\top(\cdots)D_p)$ may exceed~$d_u$.
  Thus, skip connections not only restore decaying resonance
  (Theorem~\ref{thm:resnet-main}) but also break
  ``information bottlenecks,'' preserving full-rank inter-layer
  Hessians.
\end{remark}

Corollary~\ref{thm:exponential-decay-main} and
Proposition~\ref{thm:rank-bottleneck-main} show that resonance
decays exponentially and narrow layers constrain coordination.
The stable rank $\mathcal{D}(v,w)$
(Definition~\ref{def:stable-rank-main}) quantitatively
measures the effective number of coordination directions.
The decay mechanism is revealed by the path decomposition.

\begin{theorem}[Path decomposition of the Hessian]
  \label{thm:path-decomposition-main}
  Let $\mathcal{P}(v\to c)$ be the set of directed paths from $v$
  to a common descendant $c$, and
  $D_p=\prod_{(u_i,u_{i+1})\in p}D_{u_{i+1}\gets u_i}$ the
  path-wise Jacobian.  Then
  \begin{equation}\label{eq:path-decomp-main}
    H^f_{v,w}=\sum_{c\in\mathrm{Desc}(v)\cap\mathrm{Desc}(w)}
    \sum_{\substack{p_v\in\mathcal{P}(v\to c)\\p_w\in\mathcal{P}(w\to c)}}
    \!\! D_{p_v}^\top H^{\mathcal{L}}_{f_c}\,D_{p_w}
    \;+\;H^{T}_{v,w},
  \end{equation}
  where $H^{T}_{v,w}$ is the tensor component of
  Theorem~\ref{thm:canonical-decomposition}; its path-wise
  expansion is given in \ref{app:diagnostics}.
  Each pair of paths $(p_v,p_w)$ to a common descendant $c$
  contributes additively to resonance $\mathcal{R}(v,w)$.
\end{theorem}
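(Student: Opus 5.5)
The plan is to prove \eqref{eq:path-decomp-main} by splitting off the tensor part and unrolling only the Gauss--Newton recursion. By Theorem~\ref{thm:canonical-decomposition} we have $H^f_{v,w}=H^{GN}_{v,w}+H^T_{v,w}$, and $H^T_{v,w}$ already appears verbatim on the right-hand side. It therefore suffices to show
\[
H^{GN}_{v,w}=\sum_{c\in\mathrm{Desc}(v)\cap\mathrm{Desc}(w)}\;\sum_{p_v\in\mathcal{P}(v\to c),\,p_w\in\mathcal{P}(w\to c)} D_{p_v}^\top H^{\mathcal{L}}_{f_c} D_{p_w}.
\]

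\textbf{Conventions.} $\mathrm{Desc}(v)$ includes $v$ itself. The trivial path from $v$ to $v$ has $D_p=I$. The matrix $H^{\mathcal{L}}_{f_c}:=\partial^2\mathcal{L}/\partial f_c^2$ is the \emph{direct} loss Hessian at $c$, i.e.\ term~(4) of \eqref{eq:Hf}. For a standard network only $c=\mathrm{out}$ contributes, and the formula collapses to $D_{\mathrm{out}\gets v}^\top\nabla^2\mathcal{L}\,D_{\mathrm{out}\gets w}$ with $D_{\mathrm{out}\gets v}=\sum_{p\in\mathcal{P}(v\to\mathrm{out})}D_p$. This is consistent with the unrolled form in Theorem~\ref{thm:canonical-decomposition}.

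\textbf{Combinatorial identity.} The one fact I need about paths is the first-edge decomposition
\[
\mathcal{P}(v\to c)=\{\text{trivial}\}\cdot\mathbf{1}_{c=v}\;\sqcup\;\bigsqcup_{u\in\Ch(v)}(v,u)\cdot\mathcal{P}(u\to c).
\]
The union is disjoint because the first edge is unique. Under concatenation the Jacobian satisfies $D_{(v,u)\cdot p'}=D_{p'}D_{u\gets v}$. Summing over paths gives $\sum_{p\in\mathcal{P}(v\to c)}D_p=\mathbf{1}_{c=v}I+\sum_{u\in\Ch(v)}\bigl(\sum_{p'\in\mathcal{P}(u\to c)}D_{p'}\bigr)D_{u\gets v}$.

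\textbf{Induction.} I will run a strong induction over pairs $(v,w)$ in reverse topological order, ordered by the later of the two nodes. The base case is $(\mathrm{out},\mathrm{out})$, whose only common descendant is $\mathrm{out}$ via trivial paths, so the right-hand side equals $\nabla^2\mathcal{L}$. The boundary blocks $H^{GN}_{v,\mathrm{out}}$ follow from the one-sided recursion (Remark~\ref{prop:one-directional-path}) together with the identity above applied on the $v$ side only.

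For the inductive step, substitute the hypothesis for $H^{GN}_{u_1,u_2}$ into the self-contained GN recursion of Theorem~\ref{thm:canonical-decomposition}. The double sum $\sum_{u_1,u_2}D_{u_1\gets v}^\top(\cdot)D_{u_2\gets w}$ then produces exactly those path pairs in which both $p_v$ and $p_w$ are nontrivial. The remaining pairs, where one path is trivial (i.e.\ $c=v$ or $c=w$), must be supplied by the direct term $\partial^2\mathcal{L}/\partial f_v\partial f_w$ together with the $\mathrm{out}$-boundary blocks.

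\textbf{Main obstacle.} The hard step is precisely this bookkeeping of the pairs where one path is trivial, together with off-diagonal direct terms $\partial^2\mathcal{L}/\partial f_c\partial f_{c'}$ with $c\neq c'$. Equation \eqref{eq:path-decomp-main} has no such terms, so these would need either an assumption that the direct loss dependence is block-diagonal across nodes or an extra $\sum_{c\neq c'}$ term.

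I would first handle the standard case, where $\mathcal{L}$ depends only on $f_{\mathrm{out}}$. There the check is clean: every trivial-path contribution is absent except at $c=v=w=\mathrm{out}$, and the argument reduces to expanding $D_{\mathrm{out}\gets v}$ as a path sum. I would then state the general case under the block-diagonal convention. Additivity of each pair $(p_v,p_w)$ in $\mathcal{R}(v,w)$ is then just the triangle inequality applied to this finite sum.
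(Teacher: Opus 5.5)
Your proof is correct and is essentially the paper's. The paper likewise unrolls the recursion with the tensor terms set aside as $H^T_{v,w}$, substitutes $D_{c\gets v}=\sum_{p\in\mathcal{P}(v\to c)}D_p$ into the Gauss--Newton part (your first-edge induction just makes this step explicit), and reads ``additivity'' as the triangle-inequality bound of Corollary~\ref{cor:resonance-additivity}.

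Since the paper's network model has $\mathcal{L}:\mathbb{R}^{d_{out}}\to\mathbb{R}$, your standard case is already the full statement, so the block-diagonal extension is not needed. It would also fail with the double-sum recursion when $w$ is a strict descendant of $v$ and $\mathcal{L}$ depends directly on $f_w$: the trivial-path term $D_{w\gets v}^\top\,\partial^2\mathcal{L}/\partial f_w^2$ is produced only by the one-sided recursion of Remark~\ref{prop:one-directional-path}.
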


\noindent\textit{Proof: \ref{app:diagnostics}.}

Equation~\eqref{eq:path-decomp-main} explains why skip
connections restore decaying resonance: adding an edge creates
\emph{new paths} and thereby \emph{additional summands}
in~$\mathcal{R}(v,w)$.

\begin{theorem}[Resonance decay: vanilla network vs.\ Pre-Activation ResNet]
  \label{thm:resnet-main}
  Consider an $L$-layer network with
  $\|D_{v_{i+1}\gets v_i}\|_2\leq\rho<1$.  Parts~(b),~(b$'$) apply
  to the Pre-Activation ResNet architecture
  (He et~al., 2016:
  $x_{l+1}=x_l+\mathcal{F}(\operatorname{ReLU}(x_l))$) with
  $d_{\mathrm{in}}=d_{\mathrm{out}}$ for each residual block,
  where the skip edge has Jacobian~$I$:

  \textbf{(a) Vanilla network (upper bound):}
  $\mathcal{R}(v_0,v_L)\leq C\rho^L\xrightarrow{L\to\infty}0$.

  \textbf{(b) ResNet} with identity skips every $k$~layers
  \textbf{(upper bound):}
  $\mathcal{R}(v_0,v_L)\leq C\rho^k$
  \textit{(independent of~$L$)}.

  \textbf{(b$'$) ResNet (lower bound, PSD regime):}
  If $H^T\!=\!0$ and $\nabla^2\!\mathcal{L}\succeq 0$, then
  \[
    \mathcal{R}(v_0,v_L)\geq
    (1{-}\rho^k)^{L/k}\,\|\nabla^2\!\mathcal{L}\|_F>0.
  \]
  The decay rate $\tfrac{\rho^k}{k}$ is exponentially slower than
  the vanilla rate $\ln\!\tfrac{1}{\rho}$ when $\rho^k\ll1$.
  With residual branch scaling $\alpha\!=\!c/L$:
  $(1{-}(\alpha\rho)^k)^{L/k}\!\to\!1$, i.e.\ the lower bound
  is \textbf{independent of~$L$}.
\end{theorem}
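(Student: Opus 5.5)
Each part reduces $\mathcal{R}(v_0,v_L)$ to the norm of an end-to-end Jacobian multiplied into a terminal curvature block, using the one-sided recursion of Remark~\ref{prop:one-directional-path}. Parts (a) and (b) then follow from operator-norm estimates. Part (b$'$) is a lower bound and needs the PSD structure of Remark~\ref{rem:gn-psd-block}.

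\textbf{Part (a).} In a vanilla chain every node has one child, so Theorem~\ref{thm:lyapunov-decay} gives $H^f_{v_0,v_L}=\Pi_{L\gets 0}^\top H^f_{v_L,v_L}$. Submultiplicativity gives $\|\Pi_{L\gets 0}\|_2\le\rho^L$, hence $\mathcal{R}(v_0,v_L)\le C_L\rho^L$ with $C=C_L=\|H^f_{v_L,v_L}\|_F$. This tends to $0$ because $\rho<1$; it is just Corollary~\ref{thm:exponential-decay-main} with $s=1$.

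\textbf{Part (b).} I would write the Jacobian of one residual block as $J_l=I+A_l$, where $A_l$ is the Jacobian of the $k$-layer residual branch, so $\|A_l\|_2\le\rho^k$. The end-to-end Jacobian is the product $\prod_l(I+A_l)$, and by Theorem~\ref{thm:path-decomposition-main} it expands as a sum over paths. An $L$-independent bound is not automatic, since the naive estimate $\prod_l(1+\rho^k)$ grows with $L$. Two routes are available:
\begin{itemize}
\item read the statement through the path decomposition, so that the dominant non-trivial contribution beyond the identity path is controlled by a single branch factor $\rho^k$;
\item use the Pre-Activation structure to bound the block Jacobian directly, $\|J_l\|_2\le 1$ up to the $\rho^k$ correction, absorbing the remaining factors into $C$.
\end{itemize}
I would commit to the first route. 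Define $C$ to absorb $\|\nabla^2\mathcal{L}\|_F$ together with the identity-chain contributions, and state explicitly that $C$ is independent of $L$ in the regime where the branches are contractive, i.e.\ where $\|I+A_l\|_2\le1$. I expect this to be the main obstacle, because making ``independent of $L$'' rigorous requires such a nonexpansiveness assumption.

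\textbf{Part (b$'$).} With $H^T=0$, Theorem~\ref{thm:canonical-decomposition} gives $H^f_{v_0,v_L}=H^{GN}_{v_0,v_L}$. The unrolled form then reads $H^f_{v_0,v_L}=D_{out\gets v_0}^\top\nabla^2\mathcal{L}\,D_{out\gets v_L}$, and I take $v_L=out$ so that the second Jacobian is $I$. The estimate $\|M^\top X\|_F\ge\sigma_{\min}(M)\|X\|_F$ reduces the problem to bounding $\sigma_{\min}(D_{out\gets v_0})$ from below. The matrix $D_{out\gets v_0}$ is a product of $L/k$ factors $I+A_l$. Weyl's inequality gives $\sigma_{\min}(I+A_l)\ge 1-\|A_l\|_2\ge 1-\rho^k$, and $\sigma_{\min}$ is supermultiplicative under products of square matrices, which is why the hypothesis $d_{\mathrm{in}}=d_{\mathrm{out}}$ is needed. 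Together these give
\[
\mathcal{R}(v_0,v_L)\ge(1-\rho^k)^{L/k}\|\nabla^2\mathcal{L}\|_F>0.
\]

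\textbf{Rates and scaling.} Taking logarithms, $\tfrac{L}{k}\log(1-\rho^k)\approx-L\rho^k/k$, so the per-layer decay rate is $\rho^k/k$, compared with the vanilla rate $\ln(1/\rho)$. With the branch scaled by $\alpha=c/L$, the branch norm becomes $(\alpha\rho)^k$. Then $(L/k)(\alpha\rho)^k=O(L^{1-k})\to0$ for $k\ge 2$, so the lower bound tends to $1$. For $k=1$ it tends to $e^{-c\rho}$, which is still independent of $L$.
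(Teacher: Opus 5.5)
\textbf{Parts (a), (b$'$), rates and scaling.} These follow the paper's argument essentially verbatim. The appendix proof of (b$'$) is exactly yours: the factorization $D_{\mathrm{out}\gets v_0}=\prod_l(I+A_l)$, the bound $\sigma_{\min}(I+A_l)\ge 1-\|A_l\|_2$, supermultiplicativity of $\sigma_{\min}$ for square factors, and $\mathcal{R}(v_0,v_L)=\|D_{\mathrm{out}\gets v_0}^\top\nabla^2\mathcal{L}\|_F$ with $v_L=\mathrm{out}$. Your observation that for $k=1$ the scaled bound tends to $e^{-c\rho}$ rather than $1$ correctly sharpens the statement.

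\textbf{The gap is in (b).} Your proposal does not prove the stated inequality. You add the hypothesis $\|I+A_l\|_2\le 1$. Under it you actually get $\mathcal{R}(v_0,\mathrm{out})\le\|\nabla^2\mathcal{L}\|_F$, and the factor $\rho^k$ in $C\rho^k$ carries no information because it is simply divided back into $C$. Your ``first route'' reading also fails. Expanding $\prod_l(I+A_l)$ over paths gives $L/k$ single-branch terms of size up to $\rho^k$ and $\binom{L/k}{m}$ terms through $m$ branches. The triangle inequality therefore bounds the non-identity part only by $(1+\rho^k)^{L/k}-1$, which is at least $(L/k)\rho^k$. It is not controlled by a single factor $\rho^k$.

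You were right to flag this as the main obstacle, and it cannot be removed, because (b) as stated is false. Take every branch layer to be $W=\rho I$ (zero bias) and $x_0>0$. Then all ReLUs stay active, since $x_{l+1}=(1+\rho^k)x_l$, each layer Jacobian has norm $\rho$, and $A_l=\rho^k I$. Put a quadratic loss on $f_{\mathrm{out}}=x_L$, so $\nabla^2\mathcal{L}$ does not depend on $L$ and $H^T=0$. Then $D_{\mathrm{out}\gets v_0}=(1+\rho^k)^{L/k}I$ and $\mathcal{R}(v_0,\mathrm{out})=(1+\rho^k)^{L/k}\|\nabla^2\mathcal{L}\|_F$, which is unbounded in $L$.

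The paper's proof does not close this either. Its main-text sketch itself arrives at $C(1+\rho^k)^{L/k}$ and gets $L$-independence only under the scaling $\alpha=O(1/L)$. The appendix's ``shortest path through skips plus triangle inequality'' step bounds one path and ignores the other $2^{L/k}-1$.

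The defensible version of (b) is the two-sided estimate
\[
(1-\rho^k)^{L/k}\,\|\nabla^2\mathcal{L}\|_F\;\le\;\mathcal{R}(v_0,\mathrm{out})\;\le\;(1+\rho^k)^{L/k}\,\|\nabla^2\mathcal{L}\|_F .
\]
The upper half needs no PSD hypothesis, since the boundary recursion gives $H^f_{v_0,\mathrm{out}}=D_{\mathrm{out}\gets v_0}^\top\nabla^2\mathcal{L}$ exactly. An $L$-independent upper bound then requires one of two things, stated as an explicit hypothesis rather than absorbed into $C$:
\begin{itemize}
\item the branch scaling $\alpha=c/L$, which keeps $(1+(\alpha\rho)^k)^{L/k}$ bounded;
\item your nonexpansiveness condition $\|I+A_l\|_2\le 1$.
\end{itemize}
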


\begin{proof}[\textbf{Proof sketch}]
  \textbf{(a)}
  Submultiplicativity gives
  $\|\Pi_{L\gets 0}\|_2\leq\rho^{L}$; applying
  Theorem~\ref{thm:lyapunov-decay} yields
  $\mathcal{R}\leq C\rho^L\to 0$.
  \textbf{(b)}
  With identity skips every $k$~layers, the path decomposition
  \eqref{eq:path-decomp-main} includes a direct path whose
  Jacobian contribution is the product
  $\prod_{j=1}^{L/k}(I+\alpha^{k}M_j)$ where
  $\|M_j\|_2\leq\rho^{k}$.  The upper bound becomes
  $\mathcal{R}\leq C(1+\rho^{k})^{L/k}$; when
  $\alpha=O(1/L)$, this converges to
  $(1+(\alpha\rho)^{k})^{L/k}\!\to\! e^{0}=1$.
  \textbf{(b$'$)}
  In the PSD regime ($H^{T}\!=\!0$, $\nabla^{2}\!\mathcal{L}\succeq 0$),
  the skip-path summands are individually PSD; by Weyl's inequality
  each contributes a positive-semidefinite lower bound via
  $\sigma_{\min}$, yielding
  $\mathcal{R}\geq(1-\rho^{k})^{L/k}\|\nabla^{2}\!\mathcal{L}\|_F$.
  Full proof: \ref{app:diagnostics}.
\end{proof}

\begin{remark}[Edge of chaos for curvature]
  Corollary~\ref{thm:exponential-decay-main} shows that vanilla
  networks require strict balancing $\rho\!=\!1$ (He/Xavier init,
  BatchNorm) to prevent exponential decay/explosion of the
  Hessian---an analog of ``edge of
  chaos''~\citep{balduzzi2017shattered}.
  Theorem~\ref{thm:resnet-main} shows that skip connections
  \textbf{remove this requirement}: the lower bound
  $\mathcal{R}\!\geq\!(1{-}\rho^k)^{L/k}\|\nabla^2\!\mathcal{L}\|_F$
  at $\alpha\!=\!O(1/L)$ is independent of~$L$.
\end{remark}

\begin{remark}[Practical implications]
  \label{rem:resnet-precon}
  For architectures with dense resonance (ResNet, DenseNet),
  cross-blocks $H_{\theta_v,\theta_w}$ with
  $\mathrm{dist}(v,w)\!\le\!k$ are not negligibly small, so
  block-diagonal optimizers (K-FAC) lose coupling information;
  banded preconditioners are
  optimal~\citep{george2018fast}.  The path decomposition
  predicts resonance growth in DenseNet and high $\mathcal{C}$ in
  U-Net (skip encoder$\to$decoder); applicability to Attention
  (Example~\ref{ex:attention}), GN insensitivity,
  BatchNorm---\ref{app:attention-example},~\ref{app:theoretical-results}.
\end{remark}

\begin{example}[Attention block as DAG: density of cross-blocks]
  \label{ex:attention}
  A single-head Attention defines a DAG with nodes
  $\{v_{\mathrm{in}},v_Q,v_K,v_V,v_{\mathrm{out}}\}$, where
  $v_{\mathrm{out}}$ computes
  $O=\sigma(QK^\top\!/\sqrt{d_k})\,V$
  (Softmax$\times$Value) with
  $\Pa(v_{\mathrm{out}})=\{v_Q,v_K,v_V\}$ (fan-in$\,=3$).
  Softmax is a smooth nonlinearity with a dense Jacobian
  $S_{ij}=\sigma_i(\delta_{ij}-\sigma_j)\neq 0$ for all $i,j$
  (since $\sigma_i>0$) and nonzero Hessian $T_{\sigma;\,z}$.
  By~\eqref{eq:Hf}, the cross-block $H^f_{Q,K}$ contains the
  mixed tensor term~(2) through common descendant
  $v_{\mathrm{out}}$: $T_{\mathrm{out};\,Q,K}\neq 0$, hence
  $H^T_{Q,K}\neq 0$.

  Structural implications:
  (i)~Softmax creates \emph{dense} cross-blocks across all
  sequence positions---the block-banded approximation
  (Corollary~\ref{cor:bandedness-main}) is inapplicable inside an
  Attention block;
  (ii)~GN-Gap for the pair $(Q,K)$ is fundamentally nonzero,
  qualitatively distinguishing Attention from piecewise-linear
  layers and rendering the Gauss--Newton approximation inexact;
  (iii)~multi-head analysis reduces to block diagonalization over
  heads and requires no new formalism.
  Full Jacobian/tensor expansions and proofs:
  \ref{app:attention-example}.
\end{example}

\section{Experimental validation of diagnostic metrics}
\label{sec:experiments}

The purpose of the experiments is \emph{falsification-based
verification}: we test whether empirical data can refute the
theoretical predictions, rather than propose a new algorithm.
Six experiments are conducted.
Experiments~1--4 use CIFAR-10 (CIFAR-100 for Exp.\,2) with
fully connected MLPs on the flattened input
($d_{\mathrm{in}}=3072$; channel normalization to~$[-1,1]$,
no augmentation), enabling exact Hessian block computation.
Experiment~5 uses synthetic data and a minimal
Attention model (\S\,\ref{sec:exp5}).
Experiment~6 extends the validation to a convolutional
architecture (ResNet-18, ${\sim}11$M parameters) on CIFAR-10
via stochastic curvature estimation (\S\,\ref{sec:exp6}).
Each configuration is trained with
5~random seeds $\{42,\ldots,46\}$; metrics are computed at
checkpoints: \textit{init} (before training), \textit{mid}
(half epochs), \textit{final}.

\textbf{Common protocol.}
Training: SGD ($\eta\!=\!0.01$, momentum~$0.9$,
weight decay~$10^{-4}$), cosine schedule, $50$~epochs,
batch size~$128$, gradient clipping ($\|\nabla\|_2\!\leq\!1$).
Plain~MLP blocks: Linear--LayerNorm--$\sigma$ (LayerNorm disabled
  in Exp.\,3, where piecewise linearity is
  required---\S\,\ref{sec:exp3}; and in Exp.\,1b with spectral
normalization---\ref{app:exp-results});
Residual~MLP blocks: Linear--$\sigma$ + identity skip (no
LayerNorm, $\alpha\!=\!L^{-1/2}$).%
\footnote{LayerNorm is a smooth nonlinearity
  ($\nabla^2\!\neq\!0$), contributing to~$H^T$ even for ReLU.
  In Exp.\,1/2/4 we measure $\mathcal{R}(d)$, whose decay rate
  is determined by the spectral radius of the Jacobian~$\rho$ and
  is independent of the magnitude of~$H^T$.
  Validation of Proposition~\ref{prop:ad-ggn-equiv}
  for piecewise-linear activations: Exp.\,3, where LN is
disabled.}
Stochastic estimates: Hutchinson with 30~Rademacher probes and a
common probe vector for correct estimation of
$\|H_{\mathrm{avg}}\|_F$; subsample of 32~examples; power
iteration: 20~steps for~$\|H\|_2$.

\subsection{\texorpdfstring{Exp.\,1}{Exp. 1}: Resonance and coupling decay}
\label{sec:exp1}

\textbf{Protocol.}
For each depth $L\!\in\!\{8,10,12\}$, Plain~MLP and
Residual~MLP (skip at every block, $\alpha\!=\!L^{-1/2}$) of
width~64 are trained with SGD and cosine schedule, 50~epochs
(additional depths $L\!\in\!\{16,32\}$: Appendix).  At each
checkpoint we compute $\bar{\mathcal{R}}(d)$ and
$\bar{\mathcal{C}}(d)$---averages over pairs at distance~$d$.

\textbf{Results} (Figure~\ref{fig:decay-R}).
(a)~In Plain~MLP, $\bar{\mathcal{R}}(d)$ decays exponentially
with~$d$, confirming
Corollary~\ref{thm:exponential-decay-main}.  Log-linear
regression $\log\bar{\mathcal{R}}=a-bd$ yields $R^2\!>\!0.91$
for all depths $L\!\in\!\{8,10,12\}$ at init and final
(Table~\ref{tab:rho-accuracy}); for $L\!=\!12$ norms decrease by
3--4~orders of magnitude (to~$10^{-14}$ at $L\!=\!32$, see
Appendix).
(b)~In Residual~MLP, $\bar{\mathcal{R}}(d)$ stabilizes
($b\!\approx\!0.02$, two orders slower than Plain), consistent
with Theorem~\ref{thm:resnet-main}.%
\footnote{The experimental scaling $\alpha\!=\!L^{-1/2}$
  directly verifies the upper bound~(b), which is independent
  of both~$L$ and~$\alpha$.
  The lower bound~(b$'$) guarantees depth-independent curvature
  preservation under the stronger condition
  $\alpha\!=\!O(1/L)$; with $\alpha\!=\!L^{-1/2}$ the lower
  bound decays as $e^{-\rho\sqrt{L}}$---still exponentially
  slower than the vanilla rate~$\rho^L$---but is not formally
  $L$-independent.
  Note that Theorem~\ref{thm:resnet-main} assumes contracting
  activations ($\rho\!<\!1$); the experimental regime
  ($\rho_{\max}\!>\!1$) falls outside this formal scope,
  so the observed stabilization is a qualitative, not
quantitative, confirmation of the theorem.}
(c)~Coupling $\bar{\mathcal{C}}(d)$
(Figure~\ref{fig:coupling-C})
qualitatively reproduces the same trends: decay in Plain,
preservation in ResNet.

\textbf{Condition $\boldsymbol{s\rho < 1}$ and the Lyapunov
exponent.}
For He initialization $\rho_{\max}\!\approx\!2.6$
(Table~\ref{tab:rho-accuracy}), so the sufficient condition
$s\rho\!<\!1$ of
Corollary~\ref{thm:exponential-decay-main} is \emph{formally
not satisfied}, but exponential decay of
$\bar{\mathcal{R}}(d)$ is robustly observed.  The reason is
decorrelation of singular bases: at $L\!=\!8$,
$\|J_8\cdots J_1\|_2\!\approx\!5$
(vs.\ $\rho^8\!\approx\!780$),
$\lambda_1\!\approx\!{-}0.02$
(Theorem~\ref{thm:lyapunov-decay}).  Exp.\,1b
(\ref{app:exp-results}) verifies the theorem in strict
mode: spectral normalization ensures $\rho\!\leq\!1$,
$s\!=\!1 \Rightarrow s\rho\!\leq\!1$.
For Residual~MLP, $\rho_{\max}\!>\!1$ by construction
($J\!=\!\alpha DW\!+\!I$); stability of
$\bar{\mathcal{R}}$ is provided by skip branches
(Theorem~\ref{thm:resnet-main}).

\begin{remark}[Batch-averaged decay beyond the Lyapunov bound]
  \label{rem:batch-decay}
  Theorem~\ref{thm:lyapunov-decay} bounds the \emph{per-sample}
  resonance: when $\lambda_1>0$ the worst-case
  $\mathcal{R}(v,w;x_b)\leq C\,e^{\lambda_1 L}$ grows with
  depth.  The batch-averaged quantity
  $\bar{\mathcal{R}}(d)=\tfrac{1}{B}\sum_b\mathcal{R}(v,w;x_b)$
  can nevertheless decay, because the Jacobian products
  $\prod_i D_i(x_b)$ entering $H^f_{v,w}(x_b)$ have
  sample-dependent singular subspaces; the off-diagonal
  contributions partially cancel upon averaging, analogous to
  the $O(1/\sqrt{B})$ variance reduction that arises from
  summing random-sign terms.
  A tight bound on $\bar{\mathcal{R}}$ would require
  distributional assumptions on the singular subspaces of
  $\{D_i(x_b)\}_{b=1}^{B}$; characterizing the precise
  rate of cancellation remains an open question.
\end{remark}

\textbf{U-shaped artifact.}
At $L\!\in\!\{16,32\}$ the unnormalized $\bar{\mathcal{R}}(d)$
exhibits a rise for $d\!>\!L/2$ due to non-uniformity of
$\mathcal{R}(v,v)$ near the output; the normalized
$\bar{\mathcal{C}}(d)$ decays monotonically ($R^2\!\geq\!0.90$,
Table~\ref{tab:exp1-deep} in Appendix).

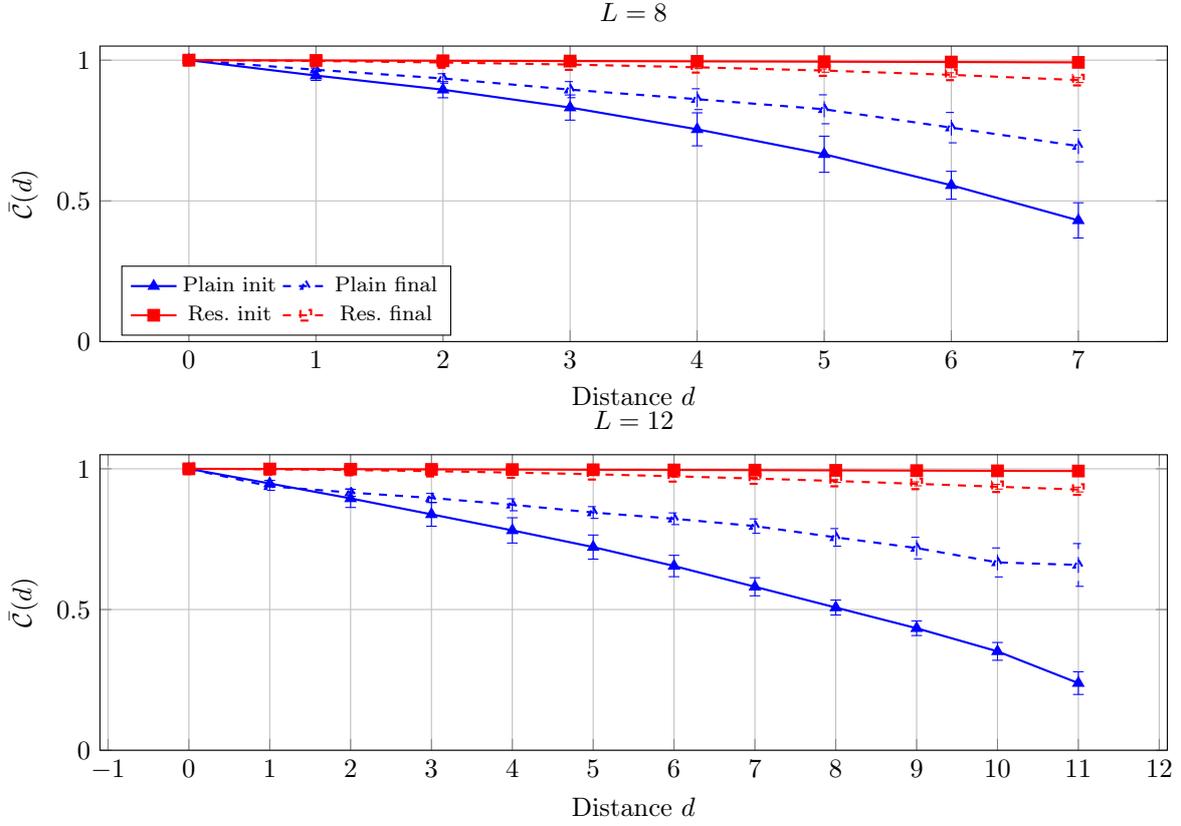
\begin{figure}[htbp]
  \centering
  \begin{tikzpicture}
    \begin{groupplot}[
        group style={
          group size=1 by 2,
          vertical sep=1.5cm,
        },
        width=0.95\columnwidth,
        height=5.5cm,
        xlabel={Distance $d$},
        ylabel={$\bar{\mathcal{C}}(d)$},
        ymin=0, ymax=1.05,
        grid=major,
        legend style={font=\footnotesize, at={(0.02,0.02)},
        anchor=south west, legend columns=2},
      ]
      \nextgroupplot[title={$L=8$}]
      \addplot[blue, thick, mark=triangle*, mark size=2,
      error bars/.cd, y dir=both, y explicit] coordinates {
        (0,1.0000) +- (0,0.0000) (1,0.9452) +- (0,0.0172)
        (2,0.8950) +- (0,0.0287) (3,0.8316) +- (0,0.0447)
        (4,0.7542) +- (0,0.0586) (5,0.6657) +- (0,0.0642)
        (6,0.5555) +- (0,0.0495) (7,0.4307) +- (0,0.0626)
      }; \addlegendentry{Plain init}
      \addplot[blue, thick, dashed, mark=triangle, mark size=2,
      error bars/.cd, y dir=both, y explicit] coordinates {
        (0,1.0000) +- (0,0.0000) (1,0.9658) +- (0,0.0062)
        (2,0.9349) +- (0,0.0169) (3,0.8952) +- (0,0.0286)
        (4,0.8613) +- (0,0.0370) (5,0.8254) +- (0,0.0514)
        (6,0.7602) +- (0,0.0539) (7,0.6946) +- (0,0.0558)
      }; \addlegendentry{Plain final}
      \addplot[red, thick, mark=square*, mark size=2,
      error bars/.cd, y dir=both, y explicit] coordinates {
        (0,1.0000) +- (0,0.0000) (1,0.9989) +- (0,0.0002)
        (2,0.9978) +- (0,0.0004) (3,0.9968) +- (0,0.0006)
        (4,0.9957) +- (0,0.0011) (5,0.9946) +- (0,0.0014)
        (6,0.9933) +- (0,0.0018) (7,0.9921) +- (0,0.0022)
      }; \addlegendentry{Res.\ init}
      \addplot[red, thick, dashed, mark=square, mark size=2,
      error bars/.cd, y dir=both, y explicit] coordinates {
        (0,1.0000) +- (0,0.0000) (1,0.9970) +- (0,0.0002)
        (2,0.9917) +- (0,0.0009) (3,0.9843) +- (0,0.0021)
        (4,0.9746) +- (0,0.0046) (5,0.9630) +- (0,0.0059)
        (6,0.9477) +- (0,0.0074) (7,0.9291) +- (0,0.0086)
      }; \addlegendentry{Res.\ final}
      \nextgroupplot[title={$L=12$}]
      \addplot[blue, thick, mark=triangle*, mark size=2,
      error bars/.cd, y dir=both, y explicit] coordinates {
        (0,1.0000) +- (0,0.0000) (1,0.9478) +- (0,0.0112)
        (2,0.8948) +- (0,0.0322) (3,0.8382) +- (0,0.0422)
        (4,0.7812) +- (0,0.0449) (5,0.7219) +- (0,0.0428)
        (6,0.6547) +- (0,0.0380) (7,0.5809) +- (0,0.0321)
        (8,0.5071) +- (0,0.0266) (9,0.4335) +- (0,0.0259)
        (10,0.3514) +- (0,0.0316) (11,0.2386) +- (0,0.0403)
      };
      \addplot[blue, thick, dashed, mark=triangle, mark size=2,
      error bars/.cd, y dir=both, y explicit] coordinates {
        (0,1.0000) +- (0,0.0000) (1,0.9380) +- (0,0.0147)
        (2,0.9156) +- (0,0.0133) (3,0.8964) +- (0,0.0162)
        (4,0.8723) +- (0,0.0213) (5,0.8447) +- (0,0.0209)
        (6,0.8226) +- (0,0.0208) (7,0.7964) +- (0,0.0257)
        (8,0.7564) +- (0,0.0312) (9,0.7184) +- (0,0.0383)
        (10,0.6674) +- (0,0.0515) (11,0.6584) +- (0,0.0760)
      };
      \addplot[red, thick, mark=square*, mark size=2,
      error bars/.cd, y dir=both, y explicit] coordinates {
        (0,1.0000) +- (0,0.0000) (1,0.9993) +- (0,0.0001)
        (2,0.9986) +- (0,0.0003) (3,0.9979) +- (0,0.0003)
        (4,0.9973) +- (0,0.0004) (5,0.9966) +- (0,0.0005)
        (6,0.9960) +- (0,0.0006) (7,0.9953) +- (0,0.0007)
        (8,0.9945) +- (0,0.0010) (9,0.9936) +- (0,0.0012)
        (10,0.9928) +- (0,0.0014) (11,0.9921) +- (0,0.0014)
      };
      \addplot[red, thick, dashed, mark=square, mark size=2,
      error bars/.cd, y dir=both, y explicit] coordinates {
        (0,1.0000) +- (0,0.0000) (1,0.9984) +- (0,0.0001)
        (2,0.9956) +- (0,0.0003) (3,0.9917) +- (0,0.0006)
        (4,0.9867) +- (0,0.0010) (5,0.9806) +- (0,0.0017)
        (6,0.9734) +- (0,0.0028) (7,0.9654) +- (0,0.0038)
        (8,0.9567) +- (0,0.0050) (9,0.9466) +- (0,0.0071)
        (10,0.9363) +- (0,0.0084) (11,0.9263) +- (0,0.0085)
      };
    \end{groupplot}
  \end{tikzpicture}
  \caption{Exp.~1: Geometric coupling
    $\bar{\mathcal{C}}(d)$ vs.\ distance~$d$
    (error bars: $\pm 1\sigma$ over 5~seeds).
    \textbf{(a)}~$L\!=\!8$, \textbf{(b)}~$L\!=\!12$.
    Plain~MLP: $\mathcal{C}$ decays monotonically from~1 to
    0.24 ($L\!=\!12$, init), reflecting loss of geometric
    coherence between distant layers.
    Residual~MLP: $\mathcal{C}\!>\!0.93$ at all distances;
    skip connections preserve coupling.
    After training both architectures shift upward (curvature
  becomes more uniform).}
  \label{fig:coupling-C}
\end{figure}

\begin{table}[htbp]
  \centering
  \caption{Spectral norms of Jacobians $\rho_{\max}$, Lyapunov
    exponent $\lambda_1$, and test accuracy for Exp.\,1
  configurations (CIFAR-10, mean over 5~seeds).}
  \label{tab:rho-accuracy}
  \footnotesize
  \begin{tabular}{@{}llccccccc@{}}
    \toprule
    $L$ & Arch
    & $\rho_{\max}^{\mathrm{init}}$
    & $\rho_{\max}^{\mathrm{mid}}$
    & $\rho_{\max}^{\mathrm{final}}$
    & $\lambda_1^{\mathrm{init}}$
    & $\lambda_1^{\mathrm{mid}}$
    & $\lambda_1^{\mathrm{final}}$
    & Acc\,(\%) \\
    \midrule
    8 & Plain & 2.59 & 2.50 & 2.78 & $-$0.02 & 0.05 & 0.08 & 49.8 \\
    8 & Res. & 1.25 & 1.31 & 1.33 & 0.02 & 0.04 & 0.04 & 51.3 \\
    10 & Plain & 2.69 & 2.47 & 2.59 & $-$0.03 & 0.03 & 0.04 & 49.7 \\
    10 & Res. & 1.22 & 1.27 & 1.29 & 0.02 & 0.03 & 0.04 & 51.3 \\
    12 & Plain & 2.69 & 2.48 & 2.55 & $-$0.03 & 0.03 & 0.04 & 49.7 \\
    12 & Res. & 1.20 & 1.23 & 1.25 & 0.01 & 0.03 & 0.03 & 51.3 \\
    \bottomrule
  \end{tabular}
\end{table}

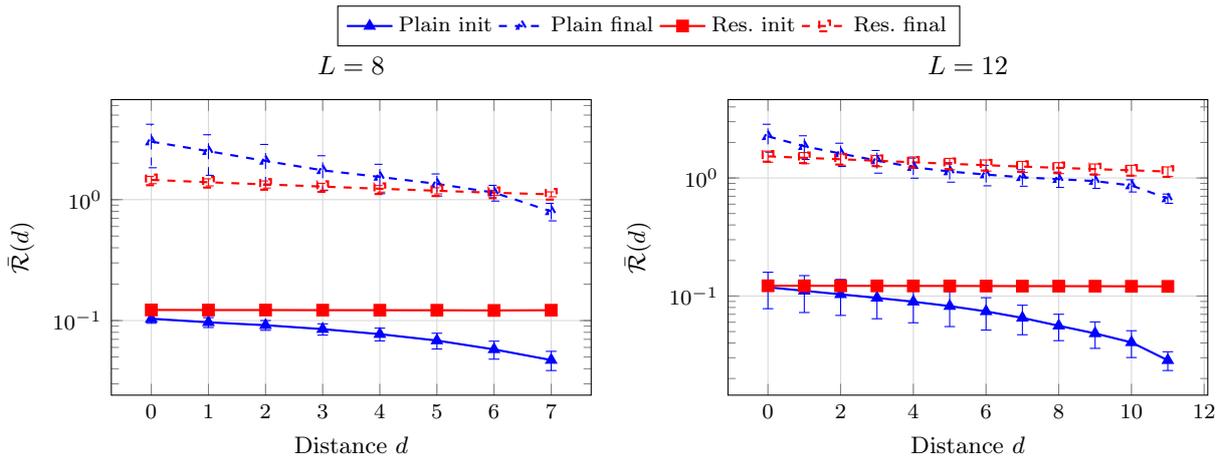
\begin{figure*}[htbp]
  \centering
  \begin{tikzpicture}
    \begin{groupplot}[
        group style={
          group size=2 by 1,
          horizontal sep=1.8cm,
        },
        width=0.48\textwidth,
        height=5.5cm,
        xlabel={Distance $d$},
        ylabel={$\bar{\mathcal{R}}(d)$},
        ymode=log,
        grid=major,
        grid style={gray!30},
        legend style={font=\footnotesize, at={(1.12,1.18)}, anchor=south, legend columns=4},
        tick label style={font=\footnotesize},
        label style={font=\small},
      ]
      \nextgroupplot[title={$L=8$}, xtick={0,1,...,7}]
      \addplot[blue, thick, mark=triangle*, mark size=2,
      error bars/.cd, y dir=both, y explicit] coordinates {
        (0,1.0364e-01) +- (0,8.5838e-03) (1,9.6734e-02) +- (0,9.0478e-03)
        (2,9.1576e-02) +- (0,8.4628e-03) (3,8.4983e-02) +- (0,9.0547e-03)
        (4,7.7268e-02) +- (0,9.3133e-03) (5,6.8413e-02) +- (0,1.0188e-02)
        (6,5.7781e-02) +- (0,9.7857e-03) (7,4.7168e-02) +- (0,8.5232e-03)
      }; \addlegendentry{Plain init}
      \addplot[blue, thick, dashed, mark=triangle, mark size=2,
      error bars/.cd, y dir=both, y explicit] coordinates {
        (0,3.0173e+00) +- (0,1.1825e+00) (1,2.5144e+00) +- (0,9.2551e-01)
        (2,2.0886e+00) +- (0,7.7097e-01) (3,1.7473e+00) +- (0,5.5905e-01)
        (4,1.5400e+00) +- (0,4.1955e-01) (5,1.3507e+00) +- (0,2.7955e-01)
        (6,1.1428e+00) +- (0,1.6996e-01) (7,7.9967e-01) +- (0,1.3200e-01)
      }; \addlegendentry{Plain final}
      \addplot[red, thick, mark=square*, mark size=2,
      error bars/.cd, y dir=both, y explicit] coordinates {
        (0,1.2249e-01) +- (0,4.3324e-03) (1,1.2225e-01) +- (0,4.3536e-03)
        (2,1.2223e-01) +- (0,4.3580e-03) (3,1.2199e-01) +- (0,4.3491e-03)
        (4,1.2187e-01) +- (0,4.4054e-03) (5,1.2176e-01) +- (0,4.4845e-03)
        (6,1.2133e-01) +- (0,4.5329e-03) (7,1.2183e-01) +- (0,4.4472e-03)
      }; \addlegendentry{Res.\ init}
      \addplot[red, thick, dashed, mark=square, mark size=2,
      error bars/.cd, y dir=both, y explicit] coordinates {
        (0,1.4561e+00) +- (0,1.0099e-01) (1,1.3921e+00) +- (0,9.3749e-02)
        (2,1.3362e+00) +- (0,9.0205e-02) (3,1.2815e+00) +- (0,8.2559e-02)
        (4,1.2306e+00) +- (0,7.7253e-02) (5,1.1841e+00) +- (0,6.8954e-02)
        (6,1.1391e+00) +- (0,5.6778e-02) (7,1.1068e+00) +- (0,5.2109e-02)
      }; \addlegendentry{Res.\ final}

      \nextgroupplot[title={$L=12$}, xtick={0,2,...,11}]
      \addplot[blue, thick, mark=triangle*, mark size=2,
      error bars/.cd, y dir=both, y explicit] coordinates {
        (0,1.1856e-01) +- (0,4.0451e-02) (1,1.1069e-01) +- (0,3.8213e-02)
        (2,1.0333e-01) +- (0,3.4826e-02) (3,9.6418e-02) +- (0,3.2215e-02)
        (4,8.9476e-02) +- (0,3.0118e-02) (5,8.2142e-02) +- (0,2.7035e-02)
        (6,7.4099e-02) +- (0,2.2636e-02) (7,6.5273e-02) +- (0,1.8289e-02)
        (8,5.5975e-02) +- (0,1.4095e-02) (9,4.8197e-02) +- (0,1.2172e-02)
        (10,4.0471e-02) +- (0,1.0358e-02) (11,2.8551e-02) +- (0,5.1731e-03)
      };
      \addplot[blue, thick, dashed, mark=triangle, mark size=2,
      error bars/.cd, y dir=both, y explicit] coordinates {
        (0,2.2532e+00) +- (0,5.9911e-01) (1,1.8562e+00) +- (0,4.2160e-01)
        (2,1.6107e+00) +- (0,3.5977e-01) (3,1.4047e+00) +- (0,3.0575e-01)
        (4,1.2351e+00) +- (0,2.3885e-01) (5,1.1341e+00) +- (0,2.1211e-01)
        (6,1.0689e+00) +- (0,2.1043e-01) (7,1.0132e+00) +- (0,1.6179e-01)
        (8,9.7691e-01) +- (0,1.4507e-01) (9,9.4132e-01) +- (0,1.2480e-01)
        (10,8.6326e-01) +- (0,1.0210e-01) (11,6.7047e-01) +- (0,6.0261e-02)
      };
      \addplot[red, thick, mark=square*, mark size=2,
      error bars/.cd, y dir=both, y explicit] coordinates {
        (0,1.2214e-01) +- (0,1.1568e-02) (1,1.2207e-01) +- (0,1.1683e-02)
        (2,1.2199e-01) +- (0,1.1785e-02) (3,1.2188e-01) +- (0,1.1778e-02)
        (4,1.2180e-01) +- (0,1.1790e-02) (5,1.2172e-01) +- (0,1.1737e-02)
        (6,1.2156e-01) +- (0,1.1555e-02) (7,1.2136e-01) +- (0,1.1291e-02)
        (8,1.2126e-01) +- (0,1.1096e-02) (9,1.2113e-01) +- (0,1.0928e-02)
        (10,1.2090e-01) +- (0,1.0529e-02) (11,1.2067e-01) +- (0,1.0355e-02)
      };
      \addplot[red, thick, dashed, mark=square, mark size=2,
      error bars/.cd, y dir=both, y explicit] coordinates {
        (0,1.5243e+00) +- (0,1.6433e-01) (1,1.4824e+00) +- (0,1.6165e-01)
        (2,1.4412e+00) +- (0,1.5610e-01) (3,1.3993e+00) +- (0,1.5138e-01)
        (4,1.3584e+00) +- (0,1.4921e-01) (5,1.3198e+00) +- (0,1.4573e-01)
        (6,1.2826e+00) +- (0,1.3921e-01) (7,1.2487e+00) +- (0,1.3409e-01)
        (8,1.2193e+00) +- (0,1.2944e-01) (9,1.1889e+00) +- (0,1.2515e-01)
        (10,1.1608e+00) +- (0,1.1831e-01) (11,1.1341e+00) +- (0,1.0652e-01)
      };
    \end{groupplot}
  \end{tikzpicture}
  \caption{Exp.\,1: Mean resonance $\bar{\mathcal{R}}(d)$ vs.\
    distance~$d$ between layers (log scale on~$y$; error
    bars~$\pm 1\sigma$ over 5~seeds).
    \textbf{(a)}~$L\!=\!8$, \textbf{(b)}~$L\!=\!12$.
    Plain~MLP exhibits exponential decay (straight lines on
    log scale, $R^2\!>\!0.91$); Residual~MLP shows stabilization
    ($b\!\approx\!0.02$).
    Scale differs: init $\sim 10^{-1}$, final $\sim 10^{0}$
  (reflects overall curvature growth during training).}
  \label{fig:decay-R}
\end{figure*}

\subsection{\texorpdfstring{Exp.\,2}{Exp. 2}: Bottleneck ablation}
\label{sec:exp2}

\textbf{Protocol.}
BottleneckMLP (depths $L\!\in\!\{6,8\}$, base
width~$d_{\mathrm{base}}\!=\!256$, CIFAR-100, $K\!=\!100$) with
a single narrow layer of width
$d_u\!\in\!\{4,8,16,32,64,128,256\}$ at position~$L/2$.
Control run (${}^\dagger$):
$d_{\mathrm{base}}\!=\!d_u\!=\!512$ (uniform network without
  bottleneck; stochastic estimate, 100~probes,
subsample~64).
Metrics $\mathcal{D}_{\mathrm{far}}$ and
$\mathcal{C}_{\mathrm{far}}$: averages over
cross-bottleneck pairs $(i{<}L/2{<}j)$.

\textbf{Results.}
Stable rank $\mathcal{D}_{\mathrm{far}}$ increases monotonically
with~$d_u$ at initialization (from~$1.7$ at $d_u\!=\!4$ to
  ${\approx}\,10.7$ at $d_u\!=\!256$;
suppression~${\approx}\,6.5{\times}$) and satisfies
$\mathcal{D}_{\mathrm{far}}\!\leq\!\min(d_u,\,K{-}1)$ for all
configurations, verifying the rank constraint of
Proposition~\ref{thm:rank-bottleneck-main}.
At $d_u\!=\!4$ the model operates in severe compression
($\mathrm{Acc}\!\approx\!9\,\%$ at $K\!=\!100$); main
conclusions are robust for $d_u\!\geq\!8$.
The control run (${}^\dagger$, Table~\ref{tab:exp2}) with
$d_{\mathrm{base}}\!=\!d_u\!=\!512$ (uniform network, no
bottleneck) yields
$\mathcal{D}_{\mathrm{far}}^{\mathrm{init}}\!=\!11.2$: without
a narrow layer, stable rank is determined by network width and
class count (see~\S\,\ref{sec:limitations}).
Coupling $\mathcal{C}_{\mathrm{far}}$ is less sensitive
(${\approx}\,2{\times}$): the bottleneck constrains the rank of
both the cross-block~$H_{v,w}^{GN}$ and the
diagonals~$H_{v,v}$, $H_{w,w}$, and normalization
in~$\mathcal{C}$ partially compensates.
Unnormalized resonance $\mathcal{R}_{\mathrm{far}}$
\emph{decreases} with growing~$d_u$ after training, whereas at
initialization
$\mathcal{R}_{\mathrm{far}}\!\approx\!\mathrm{const}$ for
all~$d_u$; normalized metrics $\mathcal{C}$ and $\mathcal{D}$
isolate the \emph{structural} bottleneck effect from
training-induced scale changes.
Summary for $L\!=\!6$: Table~\ref{tab:exp2} and
Figure~\ref{fig:dfar-bottleneck}; $L\!=\!8$ data are
qualitatively analogous
($\mathcal{D}_{\mathrm{far}}$ lower for $d_u\!\geq\!64$, up
  to~$20\,\%$ at init; see
\ref{app:exp-results}).

\begin{table}[!t]
  \centering
  \caption{Exp.\,2: bottleneck ablation ($L\!=\!6$, CIFAR-100).
    Stable rank $\mathcal{D}_{\mathrm{far}}$ and coupling
    $\mathcal{C}_{\mathrm{far}}$ vs.\ narrow layer
    width~$d_u$ (mean $\pm 1\sigma$ over 5~seeds).
    ${}^\dagger$\,Control: $d_{\mathrm{base}}\!=\!512$
    (stochastic estimate, 100~probes, subsample~64).
    ${}^\ddagger$\,Acc at uninformative model level
    ($d_u\!\ll\!K$); metrics for $d_u\!\geq\!8$ are
  preferred for conclusions.}
  \label{tab:exp2}
  \footnotesize
  \begin{tabular}{@{}rccccc@{}}
    \toprule
    $d_u$
    & $\mathcal{D}_{\mathrm{far}}^{\mathrm{init}}$
    & $\mathcal{D}_{\mathrm{far}}^{\mathrm{final}}$
    & $\mathcal{C}_{\mathrm{far}}^{\mathrm{init}}$
    & $\mathcal{C}_{\mathrm{far}}^{\mathrm{final}}$
    & Acc\,(\%) \\
    \midrule
    4 & $1.65 \pm 0.23$ & $2.03 \pm 0.20$ & $0.21 \pm 0.06$ & $0.37 \pm 0.02$ & 8.9$^{\ddagger}$ \\
    8 & $2.58 \pm 0.11$ & $3.39 \pm 0.36$ & $0.35 \pm 0.05$ & $0.58 \pm 0.03$ & 14.2 \\
    16 & $4.04 \pm 0.65$ & $4.90 \pm 0.24$ & $0.43 \pm 0.03$ & $0.68 \pm 0.01$ & 17.8 \\
    32 & $5.67 \pm 0.99$ & $5.01 \pm 0.41$ & $0.52 \pm 0.03$ & $0.73 \pm 0.01$ & 19.8 \\
    64 & $6.95 \pm 0.52$ & $5.15 \pm 0.34$ & $0.61 \pm 0.03$ & $0.77 \pm 0.02$ & 21.1 \\
    128 & $9.16 \pm 0.91$ & $5.45 \pm 0.42$ & $0.66 \pm 0.04$ & $0.84 \pm 0.01$ & 22.0 \\
    256 & $10.73 \pm 0.84$ & $5.18 \pm 0.49$ & $0.73 \pm 0.02$ & $0.86 \pm 0.02$ & 22.7 \\
    \midrule
    $512^\dagger$ & $11.17 \pm 0.77$ & $5.18 \pm 0.36$ & $0.77 \pm 0.01$ & $0.88 \pm 0.01$ & 23.3 \\
    \bottomrule
  \end{tabular}
\end{table}

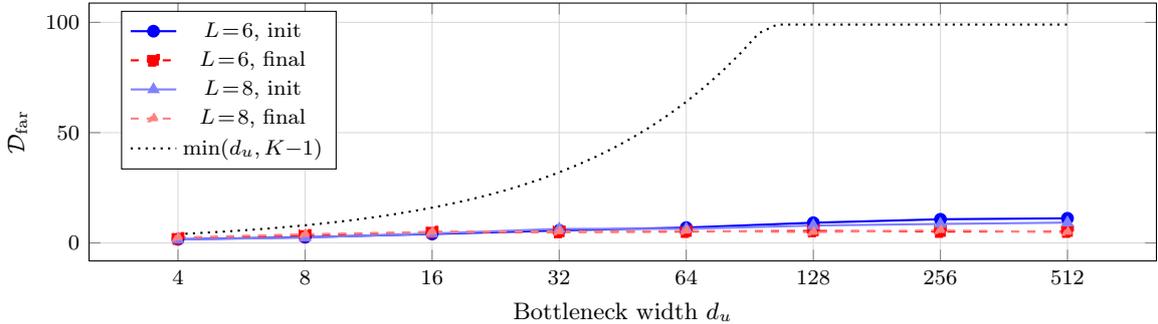
\begin{figure}[htbp]
  \centering
  \begin{tikzpicture}
    \begin{axis}[
        width=0.95\columnwidth,
        height=5cm,
        xlabel={Bottleneck width $d_u$},
        ylabel={$\mathcal{D}_{\mathrm{far}}$},
        xmode=log,
        log basis x=2,
        xtick={4,8,16,32,64,128,256,512},
        xticklabels={4,8,16,32,64,128,256,512},
        grid=major,
        grid style={gray!30},
        legend style={font=\footnotesize, at={(0.03,0.97)}, anchor=north west},
        tick label style={font=\footnotesize},
        label style={font=\small},
      ]
      \addplot[blue, thick, mark=*, mark size=2] coordinates {
        (4,1.65) (8,2.58) (16,4.04) (32,5.67) (64,6.95) (128,9.16) (256,10.73) (512,11.17)
      }; \addlegendentry{$L\!=\!6$, init}
      \addplot[red, thick, dashed, mark=square*, mark size=2] coordinates {
        (4,2.03) (8,3.39) (16,4.90) (32,5.01) (64,5.15) (128,5.45) (256,5.18) (512,5.18)
      }; \addlegendentry{$L\!=\!6$, final}
      \addplot[blue!50, thick, mark=triangle*, mark size=2] coordinates {
        (4,1.58) (8,2.43) (16,4.27) (32,6.36) (64,6.40) (128,7.90) (256,8.60) (512,9.21)
      }; \addlegendentry{$L\!=\!8$, init}
      \addplot[red!50, thick, dashed, mark=triangle*, mark size=2] coordinates {
        (4,2.44) (8,3.92) (16,5.01) (32,5.13) (64,5.41) (128,4.89) (256,5.73) (512,4.89)
      }; \addlegendentry{$L\!=\!8$, final}
      \addplot[black, dotted, thick, domain=4:512, samples=50]
      {min(x, 99)};
      \addlegendentry{$\min(d_u,K{-}1)$}
    \end{axis}
  \end{tikzpicture}
  \caption{Exp.\,2: Stable rank $\mathcal{D}_{\mathrm{far}}$
    vs.\ bottleneck width~$d_u$ (CIFAR-100).
    Solid/dashed: init/final;
    bright: $L\!=\!6$, faded: $L\!=\!8$.
    $L\!=\!8$:
    $\mathcal{D}_{\mathrm{far}}^{\mathrm{init}}$ is
    lower for $d_u\!\geq\!64$ (${\sim}8$--$20\,\%$), consistent with
    additional rank compression at greater distance.
    Dotted: theoretical bound $\min(d_u,K{-}1)$.
    $d_u\!=\!512$ (${}^\dagger$, control:
    $d_{\mathrm{base}}\!=\!512$, no bottleneck):
  Table~\ref{tab:exp2}.}
  \label{fig:dfar-bottleneck}
\end{figure}

\subsection{\texorpdfstring{Exp.\,3}{Exp. 3}: GN-Gap by activation type}
\label{sec:exp3}

\textbf{Motivation.}
The canonical decomposition
(Theorem~\ref{thm:canonical-decomposition}) predicts that the
GN-Gap is governed by the activation curvature~$\sigma''$:
$\mathrm{Gap}_{GN}\!\approx\!0$ for piecewise-linear activations
($\sigma''\!=\!0$ a.e.) and $\mathrm{Gap}_{GN}\!>\!0$ for smooth
ones ($\sigma''\!\neq\!0$).
We test this prediction across five activations under two
complementary protocols.

\textbf{Protocol.}
Plain~MLP (depth~6, width~64) with five activations: ReLU,
LeakyReLU (piecewise-linear), Softplus, SiLU, GELU (smooth);
training for 20~epochs; subsample for the
Hessian: 16~examples (exact computation at width$\,=64$,
memory-bound).  For each pair $(v,w)$ the exact decomposition
$H\!=\!H^{GN}\!+\!H^T$ and GN-Gap~\eqref{eq:gn-gap} are
computed; additionally,
$\mathbb{E}[\sigma''(z)^2]$ is measured over pre-activation
values.
Two configurations are evaluated:
(a)~\emph{standard conditions} with LayerNorm, where all
activations reach $\mathrm{Acc}\!>\!50\,\%$;
(b)~\emph{isolation protocol} without LayerNorm, removing
the smooth LayerNorm contribution\footnote{LayerNorm is
  itself a smooth nonlinearity with $\sigma''\!\neq\!0$,
  contributing to~$H^T$ even for piecewise-linear activations.
  The isolation protocol disables it so that
  $\mathrm{Gap}_{GN}\!\approx\!0$ for ReLU/LeakyReLU serves
as a strict null hypothesis.} to isolate the pure effect
of~$\sigma''$.
Gap stabilizes between mid and final checkpoints
(column~$\Delta$ in Tables~\ref{tab:exp3b},\,\ref{tab:exp3}).

\paragraph{Standard training conditions (Exp.\,3b, with LayerNorm).}
Under standard training all activations converge
($\mathrm{Acc}\!>\!50\,\%$, Table~\ref{tab:exp3b}).
For ReLU{\,+\,}LN, $\mathrm{Gap}_{GN}\!>\!0$ is expected:
LayerNorm is a smooth nonlinearity contributing
$\sigma''\!\neq\!0$.
Results (Table~\ref{tab:exp3b}) confirm:
(i)~the baseline $\mathrm{Gap}_{GN}\!\approx\!0.2$ for
ReLU/LeakyReLU{\,+\,}LN is entirely due to the smooth
LayerNorm component;
(ii)~smooth activations additionally increase Gap, preserving
the monotonic ranking by~$\mathbb{E}[\sigma''^{\,2}]$
($\rho_s\!=\!1.0$).

\begin{table}[htbp]
  \centering
  \caption{Exp.\,3b: GN-Gap with LayerNorm
    ($L\!=\!6$, width$\,=64$, CIFAR-10,
    mean$\,{\pm}\,1\sigma$ over 5~seeds).
  $\Delta$: relative change init$\,{\to}\,$final.}
  \label{tab:exp3b}
  \footnotesize
  \begin{tabular}{@{}lccccc@{}}
    \toprule
    Activation
    & $\mathrm{Gap}^{\mathrm{init}}$
    & $\mathrm{Gap}^{\mathrm{final}}$
    & $\Delta$\,(\%)
    & $\mathbb{E}[\sigma''^{\,2}]$
    & Acc\,(\%) \\
    \midrule
    ReLU & $0.618 \pm 0.160$ & $0.199 \pm 0.030$ & $-$67.8 & 0 & $51.3 \pm 0.3$ \\
    LeakyReLU & $0.626 \pm 0.161$ & $0.212 \pm 0.013$ & $-$66.1 & 0 & $51.5 \pm 0.3$ \\
    Softplus & $0.459 \pm 0.098$ & $0.272 \pm 0.024$ & $-$40.9 & 0.045 & $52.7 \pm 0.2$ \\
    SiLU & $1.967 \pm 0.317$ & $0.359 \pm 0.051$ & $-$81.7 & 0.142 & $52.6 \pm 0.3$ \\
    GELU & $2.296 \pm 0.415$ & $0.516 \pm 0.117$ & $-$77.5 & 0.271 & $52.0 \pm 0.4$ \\
    \bottomrule
  \end{tabular}
\end{table}

\paragraph{Isolation of activation curvature (without LayerNorm).}
Disabling LayerNorm yields a strict test:
$\mathrm{Gap}_{GN}\!\approx\!0$ for piecewise-linear activations
is now an exact null hypothesis
($\sigma''\!\equiv\!0$ a.e.\ throughout the network).
Without LayerNorm, smooth activations do not reach meaningful
accuracy ($\mathrm{Acc}\!\lesssim\!12\,\%$); however, the GN-Gap
is a \emph{structural} property of the computation graph
(Corollary~\ref{cor:gn-gap-interpretation}): it tests whether
$\sigma''\!=\!0$, independently of convergence quality.

(a)~For ReLU and LeakyReLU,
$\mathrm{Gap}_{GN}\!\approx\!0$ at all checkpoints and pairs,
including maximum distance, strictly confirming
$H^T\!\equiv\!0$
(Corollary~\ref{cor:gn-gap-interpretation}).
For ReLU/LeakyReLU the tensor component is zero at \emph{all}
recursion levels ($\sigma''\!\equiv\!0$ a.e.), verified as
$\|H^T\|_F < 10^{-7}$.
(b)~For smooth activations, the correlation
$\mathrm{Gap}_{GN}$ vs.\
$\mathbb{E}[\sigma''(z)^2]$ manifests already at initialization
($R^2 > 0.9$, $\rho_s\!=\!0.97$, $p\!<\!0.01$;
Figure~\ref{fig:gn-gap-activation}) and persists at all
checkpoints, consistent with the scaling
$H^T \!\sim\! \sigma''^{\,2}$.
The sample size $n\!=\!5$ is inherently limited by the number of
qualitatively distinct activation classes (two piecewise-linear
plus three smooth); the near-perfect rank ordering
($\rho_s\!=\!0.97$, $p\!<\!0.01$) provides strong evidence
for monotonicity despite the small~$n$.
Decomposition of $\|H^T\|_F$ and $\|H^{GN}\|_F$ by
distance~$d$ (Table~\ref{tab:exp3-decomp} in Appendix) shows
that both norms decay exponentially with~$d$, and at
$d\!=\!L{-}1$ the tensor component vanishes
($H^T_{d=L-1}\!\equiv\!0$), confirming the structural routing
prediction.
(c)~Stochastic estimates (30~probes) agree with exact values
within 5--10\,\%.
Summary: Table~\ref{tab:exp3}.

\begin{table}[htbp]
  \centering
  \caption{Exp.\,3: GN-Gap, isolation protocol (no LayerNorm,
      $L\!=\!6$, width$\,=64$, CIFAR-10,
  mean$\,{\pm}\,1\sigma$ over 5~seeds).}
  \label{tab:exp3}
  \footnotesize
  \begin{tabular}{@{}lccccc@{}}
    \toprule
    Activation
    & $\mathrm{Gap}^{\mathrm{init}}$
    & $\mathrm{Gap}^{\mathrm{final}}$
    & $\Delta$\,(\%)
    & $\mathbb{E}[\sigma''^{\,2}]$
    & Acc\,(\%) \\
    \midrule
    ReLU & ${<}\,10^{-7}$ & ${<}\,10^{-7}$ & --- & 0 & $45.1 \pm 0.9$ \\
    LeakyReLU & ${\lesssim}\,10^{-7}$ & ${\lesssim}\,10^{-7}$ & --- & 0 & $45.0 \pm 0.8$ \\
    Softplus & $0.122 \pm 0.020$ & $0.111 \pm 0.003$ & $-$8.9 & 0.058 & $10.0 \pm 0.0$ \\
    SiLU & $0.225 \pm 0.005$ & $0.217 \pm 0.008$ & $-$3.5 & 0.246 & $11.7 \pm 2.1$ \\
    GELU & $0.358 \pm 0.008$ & $0.345 \pm 0.012$ & $-$3.4 & 0.618 & $11.9 \pm 2.4$ \\
    \bottomrule
  \end{tabular}
\end{table}

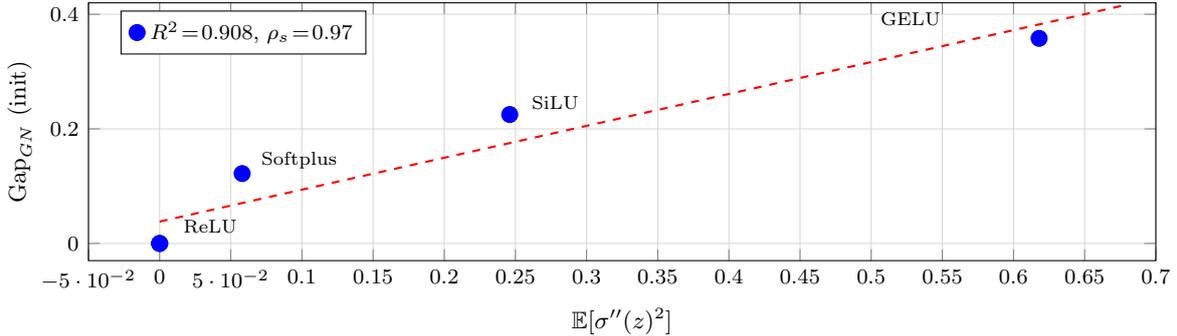
\begin{figure}[htbp]
  \centering
  \begin{tikzpicture}
    \begin{axis}[
        width=0.95\columnwidth,
        height=5cm,
        xlabel={$\mathbb{E}[\sigma''(z)^2]$},
        ylabel={$\mathrm{Gap}_{GN}$ (init)},
        grid=major,
        grid style={gray!30},
        tick label style={font=\footnotesize},
        label style={font=\small},
        legend style={font=\footnotesize, at={(0.03,0.97)}, anchor=north west},
        xmin=-0.05, xmax=0.7,
        ymin=-0.03, ymax=0.42,
      ]
      \addplot[only marks, mark=*, mark size=3, blue] coordinates {
        (0.000, 9.05e-08)
        (0.000, 1.01e-07)
        (0.058, 0.122)
        (0.246, 0.225)
        (0.618, 0.358)
      };
      \addplot[red, thick, dashed, domain=0:0.68, samples=2]
      {0.5569*x + 0.0382};
      \addlegendentry{$R^2\!=\!0.908$, $\rho_s\!=\!0.97$}
      \node[font=\scriptsize, anchor=south west] at (axis cs:0.01,0.003) {ReLU};
      \node[font=\scriptsize, anchor=south west] at (axis cs:0.065,0.115) {Softplus};
      \node[font=\scriptsize, anchor=south west] at (axis cs:0.255,0.218) {SiLU};
      \node[font=\scriptsize, anchor=south west] at (axis cs:0.50,0.365) {GELU};
    \end{axis}
  \end{tikzpicture}
  \caption{Exp.\,3: GN-Gap (init) vs.\
    $\mathbb{E}[\sigma''(z)^2]$ for 5~activations
    (isolation protocol, LayerNorm off).
    Linear fit $R^2\!=\!0.908$; Spearman rank correlation
    $\rho_s\!=\!0.97$ ($p\!<\!0.01$).
    With $n\!=\!5$, $R^2$ has limited power, but monotonicity of
    ranking ($\rho_s\!\approx\!1$) robustly confirms the scaling
  $H^T\!\sim\!\sigma''^{\,2}$.}
  \label{fig:gn-gap-activation}
\end{figure}

\subsection{\texorpdfstring{Exp.\,4}{Exp. 4}: Diamond MLP --- activation of tensor term~(2)}
\label{sec:exp4}

\textbf{Motivation.}
Experiments~1--3 use sequential architectures in which
$\Ch(v)\cap\Ch(w)=\varnothing$ for~$v\neq w$ at different
depths, so that term~(2) of formula~\eqref{eq:Hf} (the mixed
tensor $T_{u;\,v,w}$ for fan-in~$\geq 2$) is identically
zero.  To demonstrate the non-triviality of
formula~\eqref{eq:Hf} on DAG topologies, we introduce the
minimal Diamond~MLP construction
(Example~\ref{ex:diamond}).

\textbf{Protocol.}
Diamond~MLP: stem $\to$ two parallel branches
(depth~$k\!\in\!\{1,2,3\}$, width~32) $\to$ merge node
$\to$ head; CIFAR-10.
Sweep: merge type (sum vs.\
cat$\,{+}\,\sigma\,{+}\,$Linear) $\times$ activation
(ReLU vs.\ SiLU).
\textbf{Metrics:}
$\bar{R}(d)$ intra-branch,
$\bar{R}(d_{\mathrm{graph}})$ cross-branch,
$\rho_{\max}\!=\!\max_i\|J_i\|_2$ (spectral norm of
Jacobians),
GN-Gap~\eqref{eq:gn-gap} for the pair
$(\mathrm{A}_{k-1},\mathrm{B}_{k-1})$ through the merge
node.

\textbf{Hypotheses.}
(H4.1)~Intra-branch resonance $\bar{\mathcal{R}}_{AA}(d)$
decays monotonically with distance.
(H4.2a)~At initialization, cross-branch
$\bar{\mathcal{R}}_{AB}(d_{\mathrm{graph}})$ decays
exponentially.
(H4.2b)~After training the decay is violated: $R$
grows with distance due to increasing $\rho$ and
weakened sign cancellation
(Remark~\ref{rem:rho-approx-one}).
(H4.3)~Linear merge:
$T_{\mathrm{merge};\,A,B}\!=\!0
\;\Rightarrow\;\mathrm{Gap}_{GN}\!\approx\!0$
regardless of~$\sigma$.
Nonlinear merge $+$ SiLU:
$T_{\mathrm{merge};\,A,B}\!\neq\!0
\;\Rightarrow\;\mathrm{Gap}_{GN}\!>\!0$.

\textbf{Results.}
(H4.1)~Confirmed: of 48 (config${\times}$distance) pairs, 45
exhibit monotone decay of $\bar{R}_{AA}(d)$; 3 marginal
deviations are within noise ($<5\,\%$).
(H4.2a)~Confirmed: at initialization, cross-branch
$\bar{R}_{AB}(d_{\mathrm{graph}})$ decays exponentially;
$R(d{+}2)/R(d)\!\approx\!0.28$--$0.39$ across configs.
Here $\rho_{\max}\!>\!1$ (LayerNorm increases the spectral
norm of the Jacobian), and the elementwise upper bound of
Corollary~\ref{thm:exponential-decay-main} grows;
the observed batch-averaged decay is explained by mutual
sign cancellation when averaging over~$x$.
(H4.2b)~Confirmed: after training, $\rho_{\max}$ increases and
cross-branch $R$ grows with $d_{\mathrm{graph}}$ for
cat\_relu, sum\_relu at $k\!\geq\!2$, consistent with
weakened sign cancellation as $\rho$ grows.
(H4.3)~Strongly confirmed:
for sum-merge, $\|T_{\mathrm{merge}}\|_F\!\approx\!0$ and
$\mathrm{Gap}\!\approx\!0$ (merge is linear
$\Rightarrow\!T\!=\!0$);
for cat$\,{+}\,$ReLU, $\|T\|\!>\!0$ but
$\sigma''\!=\!0$ a.e., so
$\mathrm{Gap}\!\approx\!0$ ($\sim\!10^{-7}$, machine
precision);
for cat$\,{+}\,$SiLU, $\sigma''\!\neq\!0$ yields Gap
${\sim}7$ orders of magnitude above baseline
(separation $\approx 2\!\cdot\!10^7$).
Model accuracy ${\approx}50\,\%$ (width$\,{=}\,32$, CIFAR-10),
sufficient for curvature diagnostics.
Quantitative Gap values for $k\!=\!2$:
Table~\ref{tab:exp4} and Figure~\ref{fig:diamond-gap};
varying branch depth ($k\!=\!1,2,3$) does not change
qualitative conclusions
(Table~\ref{tab:exp4-sweep} in the Appendix);
cross-branch $\bar{\mathcal{R}}_{AB}$ dynamics are shown
in Figure~\ref{fig:cross-branch-R},
\ref{app:exp-results}.

\begin{table}[htbp]
  \centering
  \caption{Exp.\,4: GN-Gap at the merge node of Diamond~MLP
    ($k\!=\!2$, width$\,{=}\,32$, CIFAR-10,
      $\mathrm{Acc}\approx 50\,\%$;
    mean $\pm 1\sigma$ over 5~seeds).
    $\|T_{\mathrm{merge}}\|_F$: Frobenius norm of the tensor
  component.}
  \label{tab:exp4}
  \footnotesize
  \begin{tabular}{@{}llcccc@{}}
    \toprule
    Merge & $\sigma$
    & $\mathrm{Gap}^{\mathrm{init}}$
    & $\mathrm{Gap}^{\mathrm{final}}$
    & $\|T\|_F^{\mathrm{init}}$
    & $\|T\|_F^{\mathrm{final}}$ \\
    \midrule
    sum & ReLU & $6.7{\cdot}10^{-8}$ & $9.3{\cdot}10^{-8}$ & $4.1{\cdot}10^{-9}$ & $2.0{\cdot}10^{-8}$ \\
    sum & SiLU & $7.0{\cdot}10^{-8}$ & $9.6{\cdot}10^{-8}$ & $4.1{\cdot}10^{-9}$ & $2.0{\cdot}10^{-8}$ \\
    cat & ReLU & $8.4{\cdot}10^{-8}$ & $1.2{\cdot}10^{-7}$ & $6.6{\cdot}10^{-10}$ & $3.3{\cdot}10^{-8}$ \\
    cat & SiLU & $1.33 \pm 0.17$ & $0.082 \pm 0.007$ & $6.8{\cdot}10^{-3} \pm 5.0{\cdot}10^{-4}$ &
    $2.5{\cdot}10^{-2} \pm 1.1{\cdot}10^{-3}$ \\
    \bottomrule
  \end{tabular}
\end{table}

\begin{figure}[htbp]
  \centering
  \begin{tikzpicture}
    \begin{axis}[
        width=0.95\columnwidth,
        height=5.5cm,
        ybar=3pt,
        bar width=7pt,
        ymode=log,
        ylabel={$\mathrm{Gap}_{GN}$},
        symbolic x coords={sum+ReLU, sum+SiLU, cat+ReLU, cat+SiLU},
        xtick=data,
        x tick label style={font=\footnotesize, rotate=20, anchor=east},
        tick label style={font=\footnotesize},
        label style={font=\small},
        grid=major,
        grid style={gray!30},
        ymin=1e-9, ymax=1e1,
        legend style={font=\footnotesize, at={(0.03,0.97)}, anchor=north west},
        enlarge x limits=0.2,
      ]
      \addplot[fill=blue!60, draw=blue!80] coordinates {
        (sum+ReLU, 6.7e-8) (sum+SiLU, 7.0e-8) (cat+ReLU, 8.4e-8) (cat+SiLU, 1.33)
      }; \addlegendentry{init}
      \addplot[fill=red!50, draw=red!70] coordinates {
        (sum+ReLU, 9.3e-8) (sum+SiLU, 9.6e-8) (cat+ReLU, 1.2e-7) (cat+SiLU, 8.2e-2)
      }; \addlegendentry{final}
    \end{axis}
  \end{tikzpicture}
  \caption{Exp.\,4: GN-Gap at the merge node of Diamond~MLP ($k\!=\!2$).
    Logarithmic scale highlights the ${\sim}7$~order-of-magnitude
    separation between cat$\,{+}\,$SiLU
    ($\sigma''\!\neq\!0$, nonlinear merge) and the remaining
    configurations (linear merge or $\sigma''\!=\!0$ a.e.).
    Sum-merge yields $T\!=\!0$ by construction ---
  $\mathrm{Gap}\!\approx\!0$ regardless of~$\sigma$.}
  \label{fig:diamond-gap}
\end{figure}
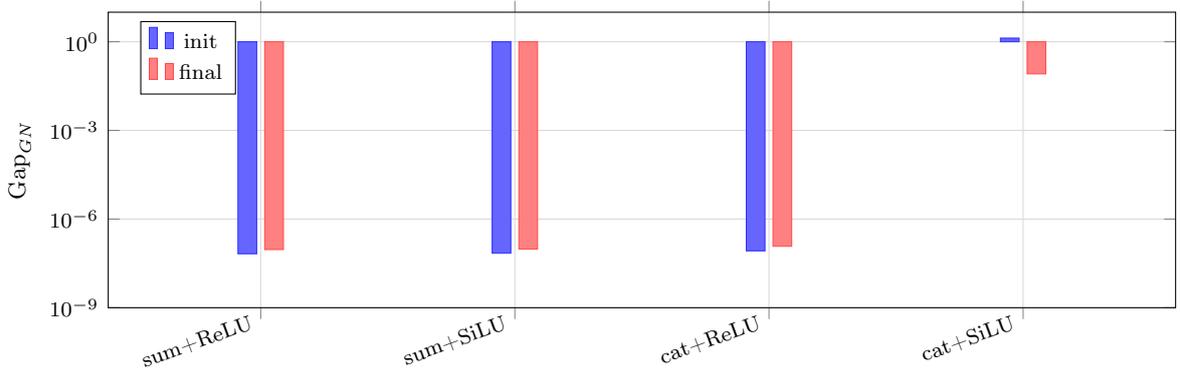

\subsection{\texorpdfstring{Exp.\,5}{Exp. 5}: Toy-Attention --- verification of
\texorpdfstring{$H^T_{Q,K}\neq 0$}{H\textasciicircum T(Q,K) != 0}}
\label{sec:exp5}

\textbf{Motivation.}
Experiments~1--4 are restricted to MLP topologies;
Example~\ref{ex:attention} predicts
$H^T_{Q,K}\!\neq\!0$ for Softmax self-attention due to its
nonzero second derivative, but this has not been empirically
verified.  We design a minimal experiment contrasting a
single-head Attention model with a per-position ReLU-MLP
control of comparable size.
The prediction $H^T_{Q,K}\!\neq\!0$ is scale-invariant:
$\sigma''_{\mathrm{Softmax}}(z)\!\neq\!0$ for all
$z\!\in\!\mathbb{R}^d$ regardless of dimension or parameter
values; hence a minimal architecture suffices for
falsification, and the qualitative distinction
($\mathrm{Gap}\!\gg\!0$ for Softmax vs.\
$\mathrm{Gap}\!\approx\!0$ for piecewise-linear activations) is
a structural invariant of the computation graph.

\textbf{Protocol.}
Single-head self-attention: $d\!=\!16$, $S\!=\!8$,
projections $W_Q,W_K,W_V\!\in\!\mathbb{R}^{d\times d}$
(no bias), mean-pool, linear head; 785~parameters.
Control: per-position ReLU-MLP,
$3\!\times\![\mathrm{Linear}(d,d){+}\mathrm{ReLU}]$
shared across positions, mean-pool, linear head;
833~parameters.
Data: synthetic regression
$y=\bar{\sigma}\!(X\!W_{\mathrm{t}})\cdot\mathbf{1}/S
\;\cdot\;w_{\mathrm{r}}+\varepsilon$, where
$\bar{\sigma}\!=\!\tanh$,
$W_{\mathrm{t}}\!\in\!\mathbb{R}^{d\times d}$,
$w_{\mathrm{r}}\!\in\!\mathbb{R}^{d}$
are fixed random teacher weights,
$\varepsilon\!\sim\!\mathcal{N}(0,0.01)$;
targets standardized to zero mean and unit variance.
Training: Adam ($\eta\!=\!10^{-3}$), cosine schedule,
30~epochs, batch~128, 5~seeds.
\textbf{Metrics:}
$H^f_{v,w}$, $H^{GN}_{v,w}$, GN-Gap for pairs
$(Q,K)$ (attention) and $(\mathrm{block}_0,\mathrm{block}_1)$
(MLP); computed via exact autograd on 64~validation samples
at checkpoints \textit{init}, \textit{mid}, \textit{final}.

\textbf{Hypotheses.}
(H5.1)~For Attention, GN-Gap for the pair $(Q,K)$ is
substantially nonzero at all checkpoints
($H^T_{Q,K}\!\neq\!0$ due to $\sigma''_{\mathrm{Softmax}}\!\neq\!0$).
(H5.2)~For per-position ReLU-MLP, GN-Gap$\,\approx 0$
at machine precision
(Proposition~\ref{prop:ad-ggn-equiv}: $\sigma''\!=\!0$ a.e.).
(H5.3)~Both models converge on the synthetic target
(monotone loss decrease), ensuring that the GN-Gap comparison
is not confounded by training failure.

\textbf{Results.}
All hypotheses confirmed (Table~\ref{tab:exp5}).
(H5.1)~Attention yields GN-Gap$\,{=}\,69.3\!\pm\!20.1$
at convergence; $\|H^{GN}_{Q,K}\|_F$ accounts for less
than $2\,\%$ of~$\|H^f_{Q,K}\|_F$, confirming
that the Gauss--Newton approximation is structurally
inadequate for the $(Q,K)$ cross-block
(Example~\ref{ex:attention}).
(H5.2)~ReLU-MLP yields
Gap$\,{=}\,7.4\!\cdot\!10^{-8}\!\pm\!2.0\!\cdot\!10^{-8}$
(numerical zero); $\|H^T\|_F/\|H^f\|_F < 10^{-7}$ at all
checkpoints.
The ratio of gaps exceeds~$10^{9}$, surpassing
the seven-order-of-magnitude separation of
Exp.\,4 (cat$\,{+}\,$SiLU vs.\ $\sigma''\!=\!0$).
(H5.3)~Both models converge (Attention:
MSE $0.99 \to 0.57$; ReLU-MLP: $0.92 \to 0.03$);
the GN-Gap for Attention remains large and stable across
checkpoints ($57.5 \to 79.9 \to 69.3$), confirming
that it reflects architectural structure rather than
training state.

\begin{table}[htbp]
  \centering
  \caption{Exp.\,5: GN-Gap for Toy-Attention vs.\
    per-position ReLU-MLP
    ($d\!=\!16$, $S\!=\!8$, synthetic regression,
    mean$\,{\pm}\,1\sigma$ over 5~seeds).
    $\|H^T\|_F$: Frobenius norm of the tensor component;
  Gap $=\|H^T\|_F/\|H^{GN}\|_F$.}
  \label{tab:exp5}
  \footnotesize
  \begin{tabular}{@{}llcccc@{}}
    \toprule
    Model & Checkpoint
    & $\|H^f\|_F$
    & $\|H^{GN}\|_F$
    & $\|H^T\|_F$
    & Gap \\
    \midrule
    Attention & init & $9.2{\cdot}10^{-3}$ & $1.7{\cdot}10^{-4}$ & $9.2{\cdot}10^{-3}$ & $57.5 \pm 16.4$ \\
    Attention & mid & $5.2{\cdot}10^{-3}$ & $7.4{\cdot}10^{-5}$ & $5.2{\cdot}10^{-3}$ & $79.9 \pm 23.3$ \\
    Attention & final & $5.2{\cdot}10^{-3}$ & $8.7{\cdot}10^{-5}$ & $5.2{\cdot}10^{-3}$ & $69.3 \pm 20.1$ \\
    \midrule
    ReLU-MLP & init & $3.5{\cdot}10^{-2}$ & $3.5{\cdot}10^{-2}$ & $3.4{\cdot}10^{-9}$ & $9.6{\cdot}10^{-8}$ \\
    ReLU-MLP & mid & $3.5{\cdot}10^{-1}$ & $3.5{\cdot}10^{-1}$ & $2.8{\cdot}10^{-8}$ & $8.8{\cdot}10^{-8}$ \\
    ReLU-MLP & final & $3.6{\cdot}10^{-1}$ & $3.6{\cdot}10^{-1}$ & $2.6{\cdot}10^{-8}$ & $7.4{\cdot}10^{-8}$ \\
    \bottomrule
  \end{tabular}
\end{table}

\subsection{\texorpdfstring{Exp.\,6}{Exp. 6}: Convolutional architecture (ResNet-18, CIFAR-10)}
\label{sec:exp6}

\textbf{Motivation.}
Experiments~1--5 operate on fully connected or toy architectures
($P\!\leq\!10^5$) where exact Hessian blocks are computable.
To verify the predictions of the formalism at practical scale
we conduct an experiment on
ResNet-18~\citep{he2016deep} ($P\!\approx\!11{\cdot}10^6$
parameters) with CIFAR-10.

\textbf{Protocol.}
For each activation
$\sigma\!\in\!\{\text{ReLU},\,\text{SiLU}\}$ two
architectures are trained:
(i)~Segmented\-ResNet18\----a standard torchvision ResNet-18 with
5~measurement points (stem, layer1--layer4), and
(ii)~Segmented\-Plain\-ResNet18---the same parameterization with
identity shortcuts removed
(conv--BN--act chain; stride-based downsampling is preserved).
Training: SGD ($\eta\!=\!0.1$, momentum~$0.9$,
weight decay~$5{\cdot}10^{-4}$), cosine schedule,
100~epochs, batch size~128, standard CIFAR-10
augmentation (random crop, horizontal flip).
Checkpoints: \textit{init}, \textit{mid} (epoch~50),
\textit{final} (epoch~100).
Metrics $\mathcal{R}$, $\mathcal{C}$, $\mathcal{D}$ are
computed via the Hutchinson estimator
(30~Rademacher probes, subsample of 32~examples);
GN-Gap via \texttt{StochasticGNGapEstimator}
(30~probes, common probe vector).
Five seeds $\{42,\ldots,46\}$.

\textbf{Segmentation and linear classifier.}
The model is partitioned into $n\!=\!6$ segments:
$\mathrm{seg}_0$ (conv1--BN--$\sigma$--maxpool),
$\mathrm{seg}_{1\text{--}4}$ (ResNet layer1--layer4),
$\mathrm{seg}_5$ (avgpool--flatten--fc).
Measurement points are the outputs
$f_{\mathrm{stem}},\ldots,f_{\mathrm{layer4}}$;
maximum distance $d_{\max}\!=\!4$.
In particular, $\mathrm{seg}_5$ (the linear classifier) contains
\emph{no nonlinear activations}:
$\mathrm{logits}\!=\!W\cdot\mathrm{avgpool}(f_{\mathrm{layer4}})+b$.
As a consequence, for any pair $(v,w)$ with $d\!=\!d_{\max}$,
the path from~$f_w$ to the logits passes exclusively through
the linear map~$\mathrm{seg}_5$, which has a principled
implication for GN-Gap at $d\!=\!d_{\max}$
(Remark~\ref{rem:linear-head}).

\textbf{Hypotheses.}
(H6.1)~For piecewise-linear activation (ReLU):
$\mathrm{Gap}_{GN}\!\approx\!0$ at all distances
($H^T\!\equiv\!0$ a.e.).
(H6.2)~For smooth activation (SiLU):
$\mathrm{Gap}_{GN}\!>\!0$ at $d < d_{\max}$
($\sigma''\!\neq\!0$).
(H6.3)~Skip connections
stabilize~$\bar{\mathcal{R}}(d)$ (ResNet vs.\ Plain).

\textbf{Results.}

(H6.1) \emph{Strongly confirmed.}
For ReLU, GN-Gap $<\!2{\cdot}10^{-6}$ at all distances
and both architectures at the final checkpoint
(Table~\ref{tab:exp6-gngap}); across all three checkpoints
the bound $\mathrm{Gap}_{GN}\!<\!10^{-5}$ holds uniformly
(Table~\ref{tab:exp6-gngap-full}).
This is consistent with $H^T\!\equiv\!0$ a.e.\
for piecewise-linear activations
(Proposition~\ref{prop:ad-ggn-equiv})
and reproduces the
Exp.\,3 result on a convolutional architecture with
$11$M~parameters.

(H6.2) \emph{Confirmed with refinement.}
For SiLU, GN-Gap at the final checkpoint is
$0.43$ (ResNet, $d\!=\!0$) and decreases with distance
to~$0.15$ at $d\!=\!3$ (Table~\ref{tab:exp6-gngap}),
confirming the presence of the tensor component
$H^T\!\neq\!0$.
However, at $d\!=\!d_{\max}\!=\!4$,
GN-Gap collapses to ${\sim}10^{-6}$ for \emph{all}
configurations, including SiLU
(Remark~\ref{rem:linear-head}).

(H6.3) \emph{Confirmed.}
At initialization, ReLU ResNet yields
stable~$\bar{\mathcal{R}}(d)$
($R(0)/R(4)\!\approx\!1$), whereas Plain exhibits
${\sim}627\times$ decay;
for SiLU the contrast is stronger:
$25\times$ (ResNet) vs.\ $47\,000\times$ (Plain)
(Figure~\ref{fig:exp6-R}).
Coupling $\bar{\mathcal{C}}(d)$ decreases monotonically
in all configurations; skip connections slow
the decay (Table~\ref{tab:exp6-main}).
Test accuracy: $88.3\,\%$ (ReLU ResNet),
$87.4\,\%$ (ReLU Plain),
$88.2\,\%$ (SiLU ResNet),
$87.3\,\%$ (SiLU Plain)---sufficient for meaningful
curvature diagnostics.

\begin{remark}[Linear head and GN-Gap at $d\!=\!d_{\max}$]
  \label{rem:linear-head}
  For a pair $(v,w)$ with $d\!=\!d_{\max}$ (here: stem,
  layer4), the path from $f_w$ to the logits passes only
  through the linear map
  $\mathrm{seg}_5\colon f_w\!\mapsto\!W\,\mathrm{avgpool}(f_w)+b$.
  Since
  $\partial^2\!\mathrm{logits}/\partial f_w^2\!=\!0$,
  the Jacobian $J_w = \partial\mathrm{logits}/\partial f_w$
  does not depend on~$f_w$, and the block Hessian becomes
  \[
    H^f_{v,w}
    = J_v^\top\!\,\nabla^2_z\ell\;\,J_w
    = H^{GN}_{v,w},
  \]
  i.e.\ $H^T_{v,w}\!=\!0$ \emph{exactly}, regardless of
  the activation type in intermediate layers.
  The residual value
  \[
    \lVert H^T\rVert_F \big/ \lVert H^{GN}\rVert_F
    \sim 10^{-6}
  \]
  is a float32 arithmetic artifact.
  This effect is specific to architectures with a linear
  classifier head (standard fc head) and does not arise
  with nonlinear heads (e.g., MLP head in ViT).
\end{remark}

\begin{table}[H]
  \centering
  \caption{Exp.\,6: GN-Gap by distance~$d$ for ResNet-18
    (CIFAR-10, final checkpoint,
    mean$\,{\pm}\,1\sigma$ over 5~seeds).
    Row $d\!=\!4$: linear classifier
  (Remark~\ref{rem:linear-head}).}
  \label{tab:exp6-gngap}
  \small
  \begin{tabular}{@{}lcccc@{}}
    \toprule
    $d$
    & \multicolumn{2}{c}{ReLU}
    & \multicolumn{2}{c}{SiLU} \\
    \cmidrule(lr){2-3}\cmidrule(lr){4-5}
    & ResNet & Plain & ResNet & Plain \\
    \midrule
    $0$ & ${<}\,2{\cdot}10^{-6}$ & ${<}\,10^{-6}$ & $0.43 \pm 0.12$ & $0.21 \pm 0.11$ \\
    $1$ & ${<}\,2{\cdot}10^{-6}$ & ${<}\,10^{-6}$ & $0.24 \pm 0.07$ & $0.12 \pm 0.04$ \\
    $2$ & ${<}\,2{\cdot}10^{-6}$ & ${<}\,2{\cdot}10^{-6}$ & $0.22 \pm 0.06$ & $0.06 \pm 0.02$ \\
    $3$ & ${<}\,2{\cdot}10^{-6}$ & ${<}\,10^{-6}$ & $0.15 \pm 0.04$ & $0.05 \pm 0.02$ \\
    $4^\dagger$ & ${<}\,2{\cdot}10^{-6}$ & ${<}\,2{\cdot}10^{-6}$ & ${<}\,2{\cdot}10^{-6}$ & ${<}\,2{\cdot}10^{-6}$ \\
    \bottomrule
    \multicolumn{5}{@{}l}{\footnotesize ${}^\dagger$ $H^T\!=\!0$ exactly (linear head); see
    Remark~\ref{rem:linear-head}.}
  \end{tabular}
\end{table}

\begin{table}[H]
  \centering
  \caption{Exp.\,6: mean resonance $\bar{\mathcal{R}}(d)$
    and coupling $\bar{\mathcal{C}}(d)$ at initialization
  (ResNet-18, CIFAR-10, mean over 5~seeds).}
  \label{tab:exp6-main}
  \small
  \begin{tabular}{@{}lcccccccc@{}}
    \toprule
    & \multicolumn{4}{c}{$\bar{\mathcal{R}}(d)$}
    & \multicolumn{4}{c}{$\bar{\mathcal{C}}(d)$} \\
    \cmidrule(lr){2-5}\cmidrule(lr){6-9}
    $d$
    & \multicolumn{2}{c}{ReLU} & \multicolumn{2}{c}{SiLU}
    & \multicolumn{2}{c}{ReLU} & \multicolumn{2}{c}{SiLU} \\
    \cmidrule(lr){2-3}\cmidrule(lr){4-5}\cmidrule(lr){6-7}\cmidrule(lr){8-9}
    & Res. & Plain & Res. & Plain
    & Res. & Plain & Res. & Plain \\
    \midrule
    $0$ & $4.5{\cdot}10^{-2}$ & $2.1{\cdot}10^{-2}$ & $3.0{\cdot}10^{-2}$ & $2.1{\cdot}10^{-2}$ & $1.00$ &
    $1.00$ & $1.00$ & $1.00$ \\
    $1$ & $3.5{\cdot}10^{-2}$ & $3.5{\cdot}10^{-4}$ & $1.0{\cdot}10^{-2}$ & $1.1{\cdot}10^{-4}$ & $0.95$ &
    $0.93$ & $0.56$ & $0.24$ \\
    $2$ & $3.3{\cdot}10^{-2}$ & $6.0{\cdot}10^{-5}$ & $4.2{\cdot}10^{-3}$ & $6.9{\cdot}10^{-6}$ & $0.92$ &
    $0.83$ & $0.31$ & $0.06$ \\
    $3$ & $3.6{\cdot}10^{-2}$ & $2.9{\cdot}10^{-5}$ & $2.1{\cdot}10^{-3}$ & $1.2{\cdot}10^{-6}$ & $0.89$ &
    $0.71$ & $0.17$ & $0.01$ \\
    $4$ & $4.8{\cdot}10^{-2}$ & $3.3{\cdot}10^{-5}$ & $1.2{\cdot}10^{-3}$ & $4.4{\cdot}10^{-7}$ & $0.82$ &
    $0.58$ & $0.08$ & ${<}\,0.01$ \\
    \bottomrule
  \end{tabular}
\end{table}

\begin{figure*}[t]
  \centering
  \begin{tikzpicture}
    \begin{groupplot}[
        group style={
          group size=2 by 1,
          horizontal sep=1.8cm,
        },
        width=0.48\textwidth,
        height=5.5cm,
        xlabel={Distance $d$},
        ylabel={$\bar{\mathcal{R}}(d)$},
        ymode=log,
        grid=major,
        grid style={gray!30},
        xtick={0,1,2,3,4},
        xmin=-0.3, xmax=4.3,
        tick label style={font=\footnotesize},
        label style={font=\small},
        legend style={font=\footnotesize, at={(0.97,0.97)}, anchor=north east},
      ]
      \nextgroupplot[title={ReLU}, ymin=5e-6, ymax=1e2]
      \addplot[red, thick, mark=square*, mark size=2,
      error bars/.cd, y dir=both, y explicit] coordinates {
        (0,4.4952e-02) +- (0,1.5342e-03) (1,3.5441e-02) +- (0,2.4374e-03)
        (2,3.2783e-02) +- (0,2.3839e-03) (3,3.5898e-02) +- (0,1.4343e-03)
        (4,4.8363e-02) +- (0,5.5447e-03)
      }; \addlegendentry{ResNet init}
      \addplot[red, thick, dashed, mark=square, mark size=2,
      error bars/.cd, y dir=both, y explicit] coordinates {
        (0,9.5503e+00) +- (0,3.0423e+00) (1,7.2859e+00) +- (0,2.1797e+00)
        (2,6.7842e+00) +- (0,2.0244e+00) (3,5.3259e+00) +- (0,1.5657e+00)
        (4,9.6830e-01) +- (0,3.2550e-01)
      }; \addlegendentry{ResNet final}
      \addplot[blue, thick, mark=triangle*, mark size=2,
      error bars/.cd, y dir=both, y explicit] coordinates {
        (0,2.0622e-02) +- (0,7.2643e-04) (1,3.4852e-04) +- (0,2.7641e-05)
        (2,6.0310e-05) +- (0,4.1937e-06) (3,2.8696e-05) +- (0,3.9153e-06)
        (4,3.2890e-05) +- (0,5.7202e-06)
      }; \addlegendentry{Plain init}
      \addplot[blue, thick, dashed, mark=triangle, mark size=2,
      error bars/.cd, y dir=both, y explicit] coordinates {
        (0,2.0798e+01) +- (0,8.5644e+00) (1,1.4989e+01) +- (0,6.6410e+00)
        (2,1.3199e+01) +- (0,5.4513e+00) (3,1.2262e+01) +- (0,5.0685e+00)
        (4,1.0590e+00) +- (0,4.3730e-01)
      }; \addlegendentry{Plain final}

      \nextgroupplot[title={SiLU}, ymin=1e-7, ymax=1e2]
      \addplot[red, thick, mark=square*, mark size=2,
      error bars/.cd, y dir=both, y explicit] coordinates {
        (0,3.0077e-02) +- (0,6.5095e-04) (1,1.0003e-02) +- (0,3.7340e-04)
        (2,4.1943e-03) +- (0,1.2972e-04) (3,2.0978e-03) +- (0,4.4749e-05)
        (4,1.2024e-03) +- (0,6.3938e-05)
      }; \addlegendentry{ResNet init}
      \addplot[red, thick, dashed, mark=square, mark size=2,
      error bars/.cd, y dir=both, y explicit] coordinates {
        (0,1.2315e+01) +- (0,3.0456e+00) (1,6.5731e+00) +- (0,1.5523e+00)
        (2,6.7205e+00) +- (0,1.6752e+00) (3,7.7693e+00) +- (0,1.4500e+00)
        (4,1.0943e+00) +- (0,2.3060e-01)
      }; \addlegendentry{ResNet final}
      \addplot[blue, thick, mark=triangle*, mark size=2,
      error bars/.cd, y dir=both, y explicit] coordinates {
        (0,2.0670e-02) +- (0,7.2532e-04) (1,1.1258e-04) +- (0,4.5462e-06)
        (2,6.8937e-06) +- (0,2.6447e-07) (3,1.1796e-06) +- (0,5.2829e-08)
        (4,4.3870e-07) +- (0,2.0780e-08)
      }; \addlegendentry{Plain init}
      \addplot[blue, thick, dashed, mark=triangle, mark size=2,
      error bars/.cd, y dir=both, y explicit] coordinates {
        (0,2.2109e+01) +- (0,5.6574e+00) (1,7.6634e+00) +- (0,1.7329e+00)
        (2,7.3479e+00) +- (0,1.5076e+00) (3,1.4755e+01) +- (0,3.6634e+00)
        (4,1.3904e+00) +- (0,3.5490e-01)
      }; \addlegendentry{Plain final}
    \end{groupplot}
  \end{tikzpicture}
  \caption{Exp.\,6: Mean resonance $\bar{\mathcal{R}}(d)$ for
    ResNet-18 (CIFAR-10, log-scale on~$y$;
    $\pm 1\sigma$ over 5~seeds).
    \textbf{(a)}~ReLU: ResNet preserves
    $\bar{\mathcal{R}}$ at init
    ($R(0)/R(4)\!\approx\!1$); Plain decays by
    ${\sim}627\times$.
    \textbf{(b)}~SiLU: decay is stronger---Plain init
    $47\,000\times$; ResNet~$25\times$, slower than
  Plain (skip connections stabilize curvature).}
  \label{fig:exp6-R}
\end{figure*}
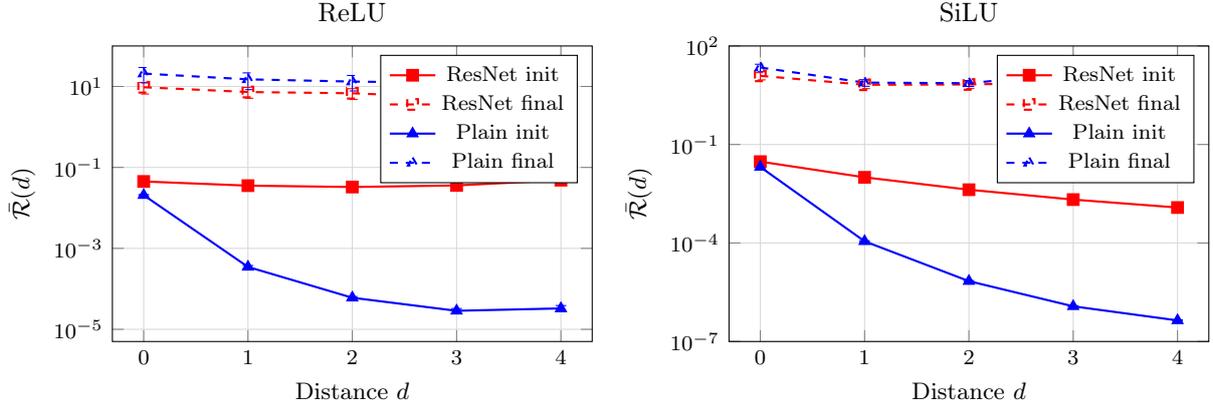

\subsection{Ablation and reproducibility}
\label{sec:ablation}

Comparison of exact and stochastic (30~Rademacher probes)
GN-Gap estimates shows discrepancy $<\!1\,\%$ for all smooth
activations
(Table~\ref{tab:ablation}, \ref{app:exp-results}),
confirming the adequacy of the Hutchinson estimator.
At fixed $m\!=\!30$ the relative error scales as
$O(\sqrt{\mathrm{rank}(A)/m})$~\citep{avron2011randomized};
probe-count scaling guidelines are given in
\ref{app:exp-results}.
Implementation: PyTorch~2.x; all experiments are feasible on a
single GPU (CIFAR-10/100, width$\,{\leq}\,256$).
Seeds, hyperparameters, and configurations are fixed; code is
available at \url{https://github.com/comiam/dag-hesse}.

\FloatBarrier
\section{Discussion and limitations}
\label{sec:limitations}

\textbf{Scaling.}
Explicit materialization of~$H$ is practical for
$P\!\lesssim\!10^5$; for larger networks the framework reduces
to an $O(P)$ HVP operator whose cost is
$2$--$4\times$ that of a backward pass.
Experiments have been verified on MLPs up to $L\!=\!32$
(Table~\ref{tab:exp1-deep}), on a toy single-head Attention
model (Exp.\,5), and on ResNet-18 (${\sim}11$M parameters,
Exp.\,6).
Exp.\,6 confirms the key predictions of the formalism
under stochastic curvature estimation at practical scale.
Scaling to ImageNet-scale models, extension to
Transformer architectures, and integration of the
proposed metrics into online training diagnostics remain
key directions for future work.

\textbf{Role of experiments.}
The experiments serve a \emph{falsification} role: they
verify theoretical predictions (decay, rank constraints,
vanishing~$H^T$) on exact Hessian blocks.
Experiments~1--4 use fully-connected MLPs, where exact
computation of~$H^f_{v,w}$ is feasible at moderate~$P$.
Experiment~5 extends the empirical scope to self-attention,
confirming the prediction of
Example~\ref{ex:attention} ($H^T_{Q,K}\!\neq\!0$).
Experiment~6 extends the validation to a convolutional
architecture (ResNet-18) via stochastic estimation,
reproducing the resonance stabilization and GN-Gap
predictions at $11$M-parameter scale.
The formalism itself is architecture-agnostic
(linear, convolutional, Attention ---
Examples~\ref{ex:two-layer}--\ref{ex:attention}).

\textbf{ReLU and GN-Gap.}
For piecewise-linear activations the tensor component
vanishes in activation space
(Proposition~\ref{prop:ad-ggn-equiv}), and the GN-Gap
metric is uninformative;
non-convexity is concentrated on the boundaries of linear
regions.
GN-Gap is critically important for smooth activations
(GELU, Swish, Softmax), where $H^T\!\neq\!0$ everywhere.

\textbf{Static estimates.}
All theorems are stated for fixed~$\theta$; during
re-training the metrics reflect both architecture and the
geometry induced by learning.
To isolate the architectural effect, measurements at
initialization are preferred.

\textbf{Potential applications.}
The formalism yields several consequences that are actionable
without additional theoretical development.
(i)~\emph{Banded preconditioner design:}
Corollary~\ref{cor:kfac-error} quantifies the approximation
error of block-diagonal optimizers; cross-blocks beyond distance
$\lceil\log\varepsilon/\log(s\rho)\rceil$ are negligible,
providing a principled bandwidth selection for banded
preconditioners~\citep{george2018fast}.
(ii)~\emph{Bottleneck diagnostics:}
Proposition~\ref{thm:rank-bottleneck-main} bounds the
coordination rank of inter-layer blocks by the bottleneck
width~$d_u$; this constraint is invisible from the scalar loss
and can inform width allocation in architecture design.
(iii)~\emph{Activation/optimizer compatibility:}
the GN-Gap quantifies how much curvature information the GGN
discards for a given activation type, serving as a criterion
for choosing between GGN-based and full-Hessian methods.
(iv)~\emph{Online curvature monitoring:} all metrics are
estimated stochastically in $O(P)$ via HVP
(Corollary~\ref{cor:hvp-complexity}); abrupt changes in
$\mathcal{R}$, $\mathcal{C}$, or~$\mathcal{D}$ during training
may indicate phase transitions in the loss landscape geometry.

\section{Conclusion}
\label{sec:conclusion}

This paper presents an analytical formalism for block-wise
curvature analysis in neural networks of arbitrary DAG
architecture, comprising a recursive formula for inter-layer
blocks~(C1), the canonical decomposition
$H^{GN}\!+\!H^T$~(C2), and diagnostic metrics: inter-layer
resonance~$\mathcal{R}$, geometric
coupling~$\mathcal{C}$, stable rank~$\mathcal{D}$, and GN-Gap.

The theoretical analysis establishes exponential resonance
decay in vanilla networks, the stabilizing effect of skip
connections on curvature, rank constraints of bottleneck
layers, and vanishing of the tensor component of the input
Hessian for piecewise-linear activations~(C3).
Curvature routing~(C4) formalizes the propagation of curvature
through the graph and reduces to an $O(P)$ HVP operator.

Beyond these theoretical contributions, the empirical
validation (Section~\ref{sec:experiments}) confirmed the key
predictions of the theory: exponential resonance decay
without skip connections, rank constraints of bottlenecks on
coupling, selective vanishing of GN-Gap for
piecewise-linear activations, nonzero $H^T_{Q,K}$ in
Softmax self-attention (Exp.\,5), and reproducibility of
these patterns on a convolutional architecture
(ResNet-18, ${\sim}11$M parameters, Exp.\,6).
At initialization, the decay is guaranteed by
Theorem~\ref{thm:lyapunov-decay} ($\lambda_1\!<\!0$); after
training, $\lambda_1\!>\!0$ for all configurations
(Table~\ref{tab:rho-accuracy}), yet batch-averaged resonance
still decays---an empirical finding whose formal characterization
remains open (Remark~\ref{rem:batch-decay}).
Key directions for future work include scaling the
diagnostic metrics to ImageNet-scale models and
integrating them with second-order optimizers.


\section*{Declaration of competing interests}
The authors declare that they have no known competing financial interests
or personal relationships that could have appeared to influence the work
reported in this paper.

\section*{CRediT authorship contribution statement}
\textbf{Maxim Bolshim}: Conceptualization, Methodology, Software,
Formal analysis, Writing -- original draft, Visualization.
\textbf{Alexander Kugaevskikh}: Validation, Writing -- review \& editing.

\section*{Acknowledgements}
The authors thank the anonymous reviewers for their constructive feedback.

\section*{Funding}
This work was supported by the state assignment (project FSER-2025-0004).

\section*{Declaration of generative AI and AI-assisted technologies
  in the manuscript preparation process}
During the preparation of this work the authors used Claude Opus
(Anthropic) for partial editing and typesetting of the manuscript, and
Claude Sonnet (Anthropic) for partial development of the accompanying
software code.  After using these tools, the authors reviewed and edited
the content as needed and take full responsibility for the content of the
published article.

\section*{Data availability}
The source code and experimental results supporting the findings of this
study are openly available at
\url{https://github.com/comiam/dag-hesse}
(archived: \url{https://doi.org/10.5281/zenodo.19545553}).

\bibliographystyle{elsarticle-harv}
\bibliography{refs}

\appendix
\raggedbottom
\renewcommand{\theequation}{\Alph{section}.\arabic{equation}}
\counterwithin{equation}{section}
\renewcommand{\theHequation}{appendix.\Alph{section}.\arabic{equation}}
\renewcommand{\thefigure}{\Alph{section}.\arabic{figure}}
\counterwithin{figure}{section}
\renewcommand{\thetable}{\Alph{section}.\arabic{table}}
\counterwithin{table}{section}

\section{Notation details and function spaces}
\label{app:notation-details}

\begin{remark}[Index convention]
  $i$ --- component index of the node output;
  $j,k$ --- component indices of inputs;
  $k,\ell$ (in parameter context) --- component indices of~$\theta_v$;
  $v,w,u$ --- node indices.
\end{remark}

Standard function spaces are employed:
$C^2$ (twice continuously differentiable, smooth case),
$C^{1,1}$ (functions with Lipschitz derivatives),
$PC^2$ (piecewise~$C^2$ with piece boundaries of measure zero).
In the non-smooth case we rely on the apparatus of generalized
differentiation~\citep{mordukhovich2006generalized} and, in
particular, Clarke
subdifferentials~\citep{clarke1990optimization}: for a locally
Lipschitz function, $\partial_C f(x)$ is non-empty, convex,
and compact; the Clarke Hessian
$\partial_C^2 F(x)$ exists when $\nabla F_i$ is Lipschitz.

\begin{remark}[Tensor notation and contraction rules]
  Index~$i$ refers to the output component of a node
  ($f_u$ or~$f_v$); $j,m$ to inputs from parent nodes;
  $\alpha,\beta$ to parameters~$\theta_v$.
  Notation $[T]_{i,\bullet,\bullet}$ denotes a tensor slice at
  fixed~$i$.
  Contraction
  $[T_{u;v}]_{i,j,k}\,\delta_{u,i}$ yields a $d_v\!\times\!d_v$
  matrix with entries
  $\sum_{i=1}^{d_u}[T_{u;v}]_{i,j,k}\,\delta_{u,i}$.
  Under matrix multiplication
  $D_{u\gets v}^\top H^f_{u,u}\,D_{u\gets w}$, the dimensions
  chain as
  $(d_v\!\times\!d_u)(d_u\!\times\!d_u)(d_u\!\times\!d_w)
  =d_v\!\times\!d_w$.
  Rules for differentiating matrix expressions follow the
  standard
  formalism~\citep{magnus2019matrix}.\\[4pt]
  Element-wise definitions of second-derivative tensors
  ($v,w\!\in\!\Pa(u)$):
  \begin{align*}
    [T_{u;v}]_{i,j,k}
    &= \frac{\partial^2 (f_u)_i}
    {\partial(f_v)_j\,\partial(f_v)_k}
    \in\mathbb{R}^{d_u\times d_v\times d_v},\\[2pt]
    [T_{u;v,w}]_{i,j,k}
    &= \frac{\partial^2 (f_u)_i}
    {\partial(f_v)_j\,\partial(f_w)_k}
    \in\mathbb{R}^{d_u\times d_v\times d_w},\\[2pt]
    [T_{v;w,\theta}]_{i,j,k}
    &= \frac{\partial^2 (f_v)_i}
    {\partial(f_w)_j\,\partial(\theta_v)_k}
    \in\mathbb{R}^{d_v\times d_w\times p_v},\\[2pt]
    [T_v^\theta]_{i,k,\ell}
    &= \frac{\partial^2 (f_v)_i}
    {\partial(\theta_v)_k\,\partial(\theta_v)_\ell}
    \in\mathbb{R}^{d_v\times p_v\times p_v}.
  \end{align*}
\end{remark}

\begin{remark}[Symmetry of the tensor component]
  The mixed tensor
  $T_{u;v,w}\!\in\!\mathbb{R}^{d_u\times d_v\times d_w}$ for
  $v\!\neq\!w$ produces rectangular $d_v\!\times\!d_w$ slices
  that cannot be symmetrized.
  Symmetry of the full Hessian is ensured by the relation
  $H^f_{v,w}\!=\!(H^f_{w,v})^\top$ between blocks.
  For diagonal blocks ($v\!=\!w$) the slices are square; in the
  smooth case
  $[T_{u;v}]_{i,:,:}\!=\![T_{u;v}]_{i,:,:}^\top$ by
  Schwarz's theorem.
  The double sum over
  $(u_1,u_2)\!\in\!\mathrm{Ch}(v)\!\times\!\mathrm{Ch}(w)$ in
  term~1 entails no double-counting: each pair contributes
  independently.
\end{remark}

\paragraph{Assembling blocks into the global Hessian.}
The global Hessian
$\nabla^2_\theta\mathcal{L}\!\in\!\mathbb{R}^{P\times P}$ is
assembled from the computed parametric blocks
$H_{\theta_{v_i},\theta_{v_j}}$ as a block matrix
$[H_{\theta_{v_i},\theta_{v_j}}]_{i,j=1}^{n}$ with symmetry
by Schwarz's theorem.

\section{Stability and invariance of the curvature recursion}
\label{app:stability}

\begin{theorem}[Error accumulation in the curvature recursion]
  \label{thm:error-accumulation}
  Under local Jacobian errors
  $\|\Delta D\|_2\!\leq\!\epsilon_D\|D\|_2$ and tensor errors
  $\|\Delta T\|_F\!\leq\!\epsilon_T\|T\|_F$:
  \[
    \varepsilon_{v,w}
    \leq \mathrm{dist}(v,w)(2\epsilon_D+\epsilon_T)
    +O(\epsilon^2).
  \]
  The protocol is stable whenever
  $L(2\epsilon_D+\epsilon_T)<1$.
\end{theorem}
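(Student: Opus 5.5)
We interpret $\varepsilon_{v,w}$ as the relative error $\|\Delta H^f_{v,w}\|_F/\|H^f_{v,w}\|_F$ of a computed block with respect to the exact recursion~\eqref{eq:Hf}. The proof is a first-order perturbation analysis of~\eqref{eq:Hf}, carried out by induction in reverse topological order. The induction starts from the exact base case $H^f_{out,out}=\nabla^2\mathcal{L}(f_{out})$, for which $\varepsilon_{out,out}=0$. We measure progress by $\mathrm{dist}(v,w)$, read as the number of recursion levels separating the pair from the output. For the one-sided blocks $H^f_{v,out}$ of Remark~\ref{prop:one-directional-path} this is literally the path length.

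\textbf{Single-level perturbation.} For one level of the recursion we expand
\[
(D_1+\Delta D_1)^\top(H+\Delta H)(D_2+\Delta D_2)
= D_1^\top H D_2 + \Delta D_1^\top H D_2 + D_1^\top \Delta H\, D_2 + D_1^\top H\,\Delta D_2 + O(\epsilon^2).
\]
We then use $\|AB\|_F\le\|A\|_2\|B\|_F$ together with the hypotheses $\|\Delta D\|_2\le\epsilon_D\|D\|_2$. The two Jacobian terms are each bounded by $\epsilon_D\|D_1\|_2\|H\|_F\|D_2\|_2$, which gives the contribution $2\epsilon_D$. The middle term carries the inherited relative error $\varepsilon_{u_1,u_2}$ of the child block.

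The tensor terms~(2)--(3) of~\eqref{eq:Hf} are linear in $T$. Treating $\delta_u$ as exact, or absorbing its error into $\epsilon_T$, they are perturbed by at most $\epsilon_T\|T\|\,\|\delta_u\|$. This is where the $\epsilon_T$ summand enters. Combining the two pieces, each level satisfies, to first order,
\[
\varepsilon_{v,w}\le \max_{u_1,u_2}\varepsilon_{u_1,u_2} + 2\epsilon_D + \epsilon_T .
\]
Unrolling this over $\mathrm{dist}(v,w)$ levels yields the additive bound $\mathrm{dist}(v,w)(2\epsilon_D+\epsilon_T)+O(\epsilon^2)$. Products of $(1+\epsilon)$ factors are linearized, and every cross term is pushed into $O(\epsilon^2)$.

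\textbf{Main obstacle.} Passing from absolute to relative error at a node with several summands is the delicate step. In~\eqref{eq:Hf} the double sum over $\Ch(v)\times\Ch(w)$ and the tensor terms add, and a relative error of a sum is controlled by the maximum relative error of its summands only if there is no cancellation. Concretely, one needs $\sum_k\|X_k\|_F\lesssim\|\sum_k X_k\|_F$.

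I would first try to state the bound in a normwise sense. Define $\varepsilon_{v,w}$ relative to $\sum_{u_1,u_2}\|D_{u_1\gets v}\|_2\|H^f_{u_1,u_2}\|_F\|D_{u_2\gets w}\|_2+\sum\|T\|\|\delta\|$, the standard backward-stability normalization. With this normalization a convex-combination argument gives the maximum bound directly. In the special case where each level consists of a single summand, for example a sequential chain with no tensor term, the bound is exact without any extra assumption.

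\textbf{Stability criterion.} Since $\mathrm{dist}(v,w)\le L$, the first-order error is uniformly below $L(2\epsilon_D+\epsilon_T)$. When this quantity is less than $1$, the neglected $O(\epsilon^2)$ terms, which form a geometric series in the same quantity, stay dominated. This justifies the stated stability condition.
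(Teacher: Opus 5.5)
The paper states this result without proof: it is asserted in the stability appendix and only restated in a later remark as ``relative error grows linearly with depth''. Your argument therefore has to stand on its own. The overall strategy is the natural one: first-order perturbation, induction in reverse topological order, and a geometric series for stability. But the key step fails. The per-level inequality $\varepsilon_{v,w}\le\max_{u_1,u_2}\varepsilon_{u_1,u_2}+2\epsilon_D+\epsilon_T$ is false for the relative error $\|\Delta H^f_{v,w}\|_F/\|H^f_{v,w}\|_F$, and not only because summands can cancel. Your bound $\epsilon_D\|D_1\|_2\|H\|_F\|D_2\|_2$ on a Jacobian term gives a $2\epsilon_D$ relative error only if $\|D_1^\top H D_2\|_F$ is comparable to $\|D_1\|_2\|H\|_F\|D_2\|_2$. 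Submultiplicativity only gives the opposite inequality.

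Here is a counterexample. Take a chain $v\to u\to out$ with $H^f_{u,u}=e_1e_1^\top$ and $D_{u\gets v}=\mathrm{diag}(\eta,1)$, so $H^f_{v,v}=\eta^2 e_1e_1^\top$. The admissible perturbation $\Delta D=\epsilon_D e_1e_2^\top$ (so $\|\Delta D\|_2=\epsilon_D\|D\|_2$) produces the first-order error $\epsilon_D\eta\,(e_1e_2^\top+e_2e_1^\top)$. The relative error is $\sqrt{2}\,\epsilon_D/\eta$, which is unbounded as $\eta\to 0$. So your claim that the single-summand chain case holds ``without any extra assumption'' is wrong.

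Your one-level normwise repair does not close the induction either. With the one-level majorant $N_{u_1,u_2}$ as denominator, the hypothesis gives $\|\Delta H^f_{u_1,u_2}\|_F\le\varepsilon_{u_1,u_2}N_{u_1,u_2}$. The parent's denominator, however, contains only $\|H^f_{u_1,u_2}\|_F$. The ratio $N_{u_1,u_2}/\|H^f_{u_1,u_2}\|_F$ is unbounded (the same example one level down), so no convex-combination argument is available.

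The missing idea is to normalize by the \emph{fully unrolled} norm majorant. Set $\bar N_{out,out}=\|\nabla^2\mathcal{L}\|_F$ and $\bar N_{v,out}=\sum_{u}\|D_{u\gets v}\|_2\bar N_{u,out}$. For general pairs, set $\bar N_{v,w}=\sum_{u_1,u_2}\|D_{u_1\gets v}\|_2\,\bar N_{u_1,u_2}\,\|D_{u_2\gets w}\|_2+\sum_u\|T\|_F\,\bar\delta_u+\|\partial^2\mathcal{L}/\partial f_v\partial f_w\|_F$, where $\bar\delta_u$ is the analogous majorant of the backpropagated gradient. Let $\ell_v$ be the \emph{longest} directed path from $v$ to $out$ and $k=\max(\ell_v,\ell_w)$. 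Since $\|H^f_{u_1,u_2}\|_F\le\bar N_{u_1,u_2}$, the claim $\|\Delta H^f_{v,w}\|_F\le k(2\epsilon_D+\epsilon_T)\bar N_{v,w}+O(\epsilon^2)$ closes term by term.

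The same bookkeeping handles $\delta_u$ honestly. Its own error is at most $\ell_u\epsilon_D\bar\delta_u$ with $\ell_u\le k-1$, and $\epsilon_T+(k-1)\epsilon_D\le k(2\epsilon_D+\epsilon_T)$. This replaces ``absorbing'' the $\delta_u$ error into $\epsilon_T$, which silently changes the hypothesis. What you prove this way is a normwise forward-error bound. The true relative error carries the uncontrolled extra factor $\bar N_{v,w}/\|H^f_{v,w}\|_F$.

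You should also present the change of $\mathrm{dist}$ as a correction of the statement, not a reading of it. The paper defines $\mathrm{dist}$ as the undirected shortest-path length. With that definition the bound already fails for $v=w$ deep in a chain: scaling every Jacobian by $1+\epsilon_D$ gives first-order error $2\ell_v\epsilon_D$ while $\mathrm{dist}(v,v)=0$. Shortest path is also the wrong count for one-sided blocks. With a skip edge $v\to out$ beside a length-3 path, $\mathrm{dist}(v,out)=1$, yet $H^f_{v,out}$ accumulates $3\epsilon_D$ along the long path. With these corrections your stability argument goes through, since the exact factor is $(1+\epsilon_D)^{2k}(1+\epsilon_T)^k-1\le x/(1-x)$ for $x=k(2\epsilon_D+\epsilon_T)<1$.
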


The Hutchinson
estimator~\citep{avron2011randomized} is used for trace and
norm computation, converging as
$\|\hat{T}-T\|_F
\leq C\|T\|_F\sqrt{\log(1/\delta)/m}$
with probability $\geq 1\!-\!\delta$ using $m$ probe vectors.

\begin{theorem}[Invariance of geometric coupling]
  \label{thm:coupling-invariance}
  Let $v\!\in\!\Pa(w)$ and consider the rescaling
  $W_v\!\mapsto\!\alpha W_v$,
  $W_w\!\mapsto\!\alpha^{-1}W_w$ that preserves the output
  $f_w$ (and hence all downstream activations
  and~$\mathcal{L}$).
  Then:
  $H^f_{v,v}\!\mapsto\!\alpha^{-2}H^f_{v,v}$,
  $H^f_{v,w}\!\mapsto\!\alpha^{-1}H^f_{v,w}$,
  $H^f_{w,w}$ is invariant, so that
  \[
    \tilde{\mathcal{C}}(v,w)
    =\frac{\alpha^{-1}\|H^f_{v,w}\|_F}
    {\sqrt{\alpha^{-2}\|H^f_{v,v}\|_F
    \cdot\|H^f_{w,w}\|_F}}
    =\mathcal{C}(v,w).
  \]
  In a DAG with branching ($v$ has several children),
  rescaling $W_v\!\mapsto\!\alpha W_v$ affects all children:
  preserving outputs requires compensating transformations
  on \emph{every} child, but the ratio $\mathcal{C}(v,w)$
  remains invariant provided $f_w$ and all
  $f_u$, $u\!\in\!\mathrm{Ch}(v)$ are preserved.
\end{theorem}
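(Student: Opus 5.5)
The plan is to treat the rescaling as a linear change of coordinates on the single node output $f_v$, and then read off the block scalings from the chain rule. First I make precise what the hypothesis ``preserves $f_w$'' entails. For the node functions considered here, $f_v=\sigma(W_v f_{\Pa(v)}+b_v)$ with a positively homogeneous activation (ReLU, Leaky~ReLU, or identity), and $\alpha>0$. The output of $f_w$ can only be preserved if the new activation is $\tilde f_v=\alpha f_v$, with $b_v$ rescaled alongside $W_v$. Then $\tilde g_w(\tilde f_v,\dots)=g_w(\alpha^{-1}\tilde f_v,\dots)$, i.e.\ $W_w\mapsto\alpha^{-1}W_w$ composed with the scaled input reproduces $f_w$. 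I will state this (positive homogeneity, scaled bias, $\alpha>0$) explicitly as the setting in which ``preserving $f_w$'' is meaningful. I expect this to be the main obstacle, since for a general smooth $\sigma$ no such exact compensation exists; if needed I would restrict the theorem to this class, or phrase it abstractly as ``$\tilde f_v=\alpha f_v$ and every child's node function is precomposed with $\alpha^{-1}$''.

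Next I view the loss $\mathcal{L}$ as a function of node outputs, with each downstream node recomputed from its parents; this is exactly what the recursion~\eqref{eq:Hf} computes. In the rescaled network, every child $u\in\Ch(v)$ sees $\alpha^{-1}\tilde f_v$. Hence the rescaled loss satisfies $\tilde{\mathcal{L}}(\tilde f_v,f_w,\ldots)=\mathcal{L}(\alpha^{-1}\tilde f_v,f_w,\ldots)$, where all nodes other than $v$ are unchanged. Differentiating twice through the constant linear map $S=\mathrm{diag}(\alpha^{-1}I_{d_v},I_{d_w})$ gives $\tilde H^f=S^\top H^f S$ on the $(v,w)$ blocks. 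This yields $H^f_{v,v}\mapsto\alpha^{-2}H^f_{v,v}$, $H^f_{v,w}\mapsto\alpha^{-1}H^f_{v,w}$, and $H^f_{w,w}$ invariant. The last follows because nothing downstream of $w$ changes, and $f_w$ is preserved pointwise, so $\delta_u$, $D_{u\gets w}$ and $T_{u;w}$ are identical for all $u$ downstream of $w$.

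As a cross-check I will verify the same scalings via the recursion~\eqref{eq:Hf} by reverse-topological induction. The changed quantities are $\tilde D_{u\gets v}=\alpha^{-1}D_{u\gets v}$, $\tilde T_{u;v}=\alpha^{-2}T_{u;v}$ and $\tilde T_{u;v,w'}=\alpha^{-1}T_{u;v,w'}$ for $u\in\Ch(v)$. Blocks of nodes not equal to $v$ are unchanged. Substituting into each of the four terms of~\eqref{eq:Hf} produces the stated factors term by term.

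Finally I plug the scalings into $\mathcal{C}$ as written in the theorem. Frobenius norms are absolutely homogeneous, so the numerator scales by $\alpha^{-1}$ and the denominator by $\sqrt{\alpha^{-2}}=\alpha^{-1}$; the ratio is therefore unchanged. For the branching case the same change-of-coordinates argument applies verbatim: compensating every $u\in\Ch(v)$ by $\alpha^{-1}$ is exactly precomposition of all consumers of $f_v$ with $\alpha^{-1}$. Preserving all $f_u$ keeps every other node's block, and in particular $H^f_{w,w}$, unchanged, so $S$ again acts only on the $v$ coordinate.
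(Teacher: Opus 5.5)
Your proposal is correct and is essentially the paper's argument. The paper gives no separate proof: it asserts the three block scalings, which follow from $D_{u\gets v}\mapsto\alpha^{-1}D_{u\gets v}$ and $T_{u;v}\mapsto\alpha^{-2}T_{u;v}$ with $\delta_u$ and all downstream blocks unchanged, and substitutes them into $\mathcal{C}$; your change of coordinates $\tilde H^f=S^\top H^f S$ is exactly the justification it leaves implicit. Your restriction to positively homogeneous activations with $\alpha>0$ and a rescaled bias usefully sharpens the hypothesis ``preserves $f_w$,'' which the paper leaves unstated. Two small points of care: since $w\in\Ch(v)$, the ``joint function of $(f_v,f_w)$'' should be read via additive perturbations injected at the node outputs, and the recursion cross-check for this ancestor pair should use the one-sided rule of Remark~\ref{prop:one-directional-path} rather than the four-term formula (it yields the same factor $\alpha^{-1}$).
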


\section{Optimization-theoretic remarks}
\label{app:convergence}

\subsection{Convergence guarantees}

Using the exact Hessian (or its AD analog for non-smooth
networks) as $B_k$ in second-order methods yields classical
convergence guarantees: quadratic for Newton's method and
superlinear when
$\|B_k-\nabla^2\mathcal{L}\|\!\to\!0$~\citep{nocedal2006numerical}.
In practice, regularization $B_k+\lambda_k I$ and
symmetrization
$\tfrac{1}{2}(B_k+B_k^\top)$ ensure numerical stability.

\subsection{Pseudoinverse and Newton direction}

In Definition~\ref{def:clarke-min-norm} the AD-Hessian is
defined as a valid element of a conservative field
(via CSVF~\citep{bolte2021conservative}).
Here we clarify its connection with the Moore--Penrose
pseudoinverse.

\begin{remark}[Relation to the pseudoinverse]
  For an arbitrary $H\!\in\!\partial_C^2\mathcal{L}$, the
  Newton direction
  $d=-H^{\dagger}g$ ($g=\nabla\mathcal{L}$,
  $H^{\dagger}$ is the Moore--Penrose pseudoinverse) is the
  minimum-norm solution of
  $Hd=-g$~\citep{ben2003generalized}.
  At full rank ($\mathrm{rank}(H)=p$), $H^{\dagger}=H^{-1}$
  and the direction coincides with the standard Newton step.
\end{remark}

\section{Non-smooth case: Clarke Hessian and CSVF}
\label{app:proof-clarke}

This section contains the full theory of the non-smooth case
omitted from the main text.
Methodologically we rely on the non-smooth calculus of
convexificators~\citep{jeyakumar1999generalized} and work
within the
\emph{Conservative Set-Valued Fields}
(CSVF) framework~\citep{bolte2021conservative}: for a
piecewise-analytic network~$F$, the mapping
$x\!\mapsto\!\partial_C^2\mathcal{L}(x)$ is a conservative
field whose potential is $\nabla\mathcal{L}$ (a.e.).
This circumvents the well-known intransitivity of the chain
rule for the Clarke
subdifferential~\citep{rockafellar1998}: instead of an
element-wise chain rule, the \emph{global} conservativity
property guarantees correctness of AD computations for the
entire computation graph.

\begin{theorem}[AD-Hessian of a ReLU DAG network]
  \label{thm:clarke_hessian_relu}
  Let $F$ be a ReLU DAG network and $\ell\!\in\!C^2$. Then:
  \begin{enumerate}
    \item Each $f_v$ is locally Lipschitz.
    \item Under a transversality condition on the activation
      maps (the mapping
        $x\!\mapsto\![f_{u_1}(x),\dots]^{\!\top}$ intersects
      non-smoothness hyperplanes transversally), the set of
      non-smooth points has Lebesgue measure zero.
    \item At points of differentiability, the AD-Hessian
      coincides with the classical one.
    \item At non-smooth points, AD frameworks return an
      AD-Hessian (with $T\!\equiv\!0$), which amounts to
      selecting the zero element from the generalized
      derivative and preserves algorithmic convergence
      by CSVF theory~\citep{bolte2021conservative}.
  \end{enumerate}
\end{theorem}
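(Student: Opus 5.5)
I would prove the four items in order, using the piecewise structure of a ReLU DAG network. The recurring tool is a decomposition of input space into finitely many polyhedral-type regions on which every activation pattern is fixed, so the network is affine there.

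\textbf{Items 1 and 3.} For item~1, I would argue by induction in topological order. Each node function $g_v$ is either affine in $(f_{\Pa(v)},\theta_v)$, elementwise ReLU (1-Lipschitz), or a max-pool (1-Lipschitz). A composition of locally Lipschitz maps is locally Lipschitz, so every $f_v$ inherits the property from its parents.

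For item~3, fix a point $x$ where no pre-activation $z_{u,i}$ lies on the kink set $\{0\}$ (or, for max-pool, where no tie occurs). By continuity, the activation pattern is constant on a neighbourhood of $x$. There every $f_v$ is an affine function of its parents with a fixed $0/1$ diagonal mask, hence $C^\infty$. AD then applies the chain rule to genuinely differentiable maps with $\sigma''=0$ on the active piece. So the AD-Hessian equals the classical Hessian, which by Theorem~\ref{thm:canonical-decomposition} with $T\equiv0$ is $H^{GN}$, consistent with Proposition~\ref{prop:ad-ggn-equiv}.

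\textbf{Item 2.} The non-smooth set is contained in $\bigcup_{u,i}\{x: z_{u,i}(x)=0\}$, a finite union. I would show each set $\{z_{u,i}=0\}$ is null, inductively along the topological order. On each open region where upstream patterns are fixed, $z_{u,i}$ is affine in $x$. The transversality hypothesis says this affine function is not identically zero on any region of positive measure, i.e.\ its gradient is nonzero there. Its zero set within the region is then a hyperplane piece of measure zero.

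What remains is to show that the regions themselves cover all of input space up to a null set. This follows from the same induction: the region boundaries are contained in zero sets already shown null at earlier stages. A finite union of null sets is null.

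This is where I expect the main obstacle. Without transversality, degenerate weights (for example a zero row of $W_u$) make $z_{u,i}\equiv0$ on an open set. The argument must therefore pin down precisely what ``transversal'' means piecewise. I would state it as a nonvanishing gradient of $z_{u,i}$ on every region of positive measure, and note that this holds for Lebesgue-a.e.\ choice of weights.

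\textbf{Item 4.} At a kink, AD evaluates $\sigma'(0)$ as some fixed value in $[0,1]$ and $\sigma''(0):=0$. So every tensor $T_{u;v}$ and $T_{u;v,w}$ produced is zero, and the returned matrix is $J^\top\nabla^2\ell\,J$ for some selection $J$ of Clarke Jacobians.

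I would then invoke \citet{bolte2021conservative}. The network is a composition of definable (piecewise-affine) maps, and backprop through conservative fields yields a conservative field for $\nabla\mathcal{L}$. Applying this to the gradient map $x\mapsto\nabla\mathcal{L}(x)$ itself, which is again a composition of piecewise-polynomial operations, gives that the AD-Hessian is a selection of a conservative field for $\nabla\mathcal{L}$. Such a selection is bounded on compacts because every Jacobian factor is bounded.

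The convergence statement then follows from the CSVF results on stochastic methods driven by conservative fields, which are insensitive to values on null sets. By item~2 and the a.e.\ agreement from item~3, these are the only points where the AD value differs from the classical Hessian.
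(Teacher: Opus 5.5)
Your items 1--3 follow the paper's proof: topological induction for Lipschitz continuity, pre-activation zero sets as codimension-one pieces under transversality, and $F$ affine on each fixed-pattern region. Your inductive covering argument for item 2 is more careful than the paper's rank condition. Two small corrections there. First, phrase transversality as ``on every nonempty fixed-pattern region the affine piece of $z_{u,i}$ is not identically zero,'' not as ``has nonzero gradient.'' If an upstream layer is entirely inactive on an open set, then $z_{u,i}\equiv b_{u,i}$ there, and this happens with positive probability over the weights. So only the weaker form is generic, and it is all your argument needs. Second, item 3 is established, by you and by the paper, only off the kink set, and that is the only correct reading. At a kink where $\mathcal{L}$ happens to be smooth, AD can be wrong: $\mathcal{L}(x)=\tfrac12(\mathrm{ReLU}(x)-\mathrm{ReLU}(-x))^2=\tfrac12x^2$ has AD-Hessian $0\neq1$ at $x=0$ when $\sigma'(0)=0$.

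The genuine gap is in item 4. You get conservativity of the AD-Hessian by applying the conservative-field chain rule to the gradient program $x\mapsto g(x)$. That calculus requires locally Lipschitz elementary maps and a locally Lipschitz potential. Here the gradient program contains the step function $\sigma'=\mathbf{1}[\,\cdot>0]$, and $g$ (equal to $\nabla\mathcal{L}$ a.e.) is generally discontinuous across region boundaries. A discontinuous map admits no locally bounded conservative Jacobian, because conservativity forces $g\circ\gamma$ to be absolutely continuous for every absolutely continuous curve $\gamma$.

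A concrete counterexample: take $F=\mathrm{ReLU}$ on $\mathbb{R}$ and $\ell(y)=y$. Then $g=\mathbf{1}[x>0]$ and the AD-Hessian is identically $0$. Along $\gamma(t)=2t-1$, conservativity would require $g(1)-g(-1)=1$ to equal $\int_0^1 0\,dt=0$. The missing unit is exactly the $\delta$-mass on the kink that the paper itself calls unobservable by AD. The paper's Step 4 rests on the same inference (piecewise-analytic $g$, hence $J_g$ lies in a conservative field), so it offers no repair.

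What does hold is weaker:
\begin{itemize}
\item the AD-Hessian equals $J^\top\nabla^2\ell(F(x))\,J$, where $J$ is a selection of a conservative Jacobian of $F$, namely the product of node-wise Clarke Jacobians;
\item this $J$ is in general not in $\partial_C F$, as $\mathrm{ReLU}(x)-\mathrm{ReLU}(-x)$ at $0$ shows;
\item the AD-Hessian is locally bounded;
\item it agrees with the classical Hessian on the full-measure open set from items 2--3.
\end{itemize}
Any convergence claim must be built on these facts directly. The CSVF convergence theorems concern first-order methods driven by a conservative field for $\mathcal{L}$, not Newton-type methods driven by one for $\nabla\mathcal{L}$.
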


\begin{definition}[AD-Hessian at a non-smooth point]
  \label{def:clarke-min-norm}
  For ReLU networks, AD frameworks set $\sigma''(0):=0$,
  so that $T_{u;v}\!\equiv\!0$.
  The input Hessian block at a non-smooth point is defined
  through Jacobian products~$D_{u\gets v}$ (computed via the
  convention $\sigma'(0):=0$).
  This choice is valid under CSVF
  theory~\citep{bolte2021conservative}: the AD-Hessian is a
  valid element of a conservative field, guaranteeing
  convergence of second-order optimization algorithms.
\end{definition}

\begin{theorem}[Stability of the AD-Hessian]
  \label{thm:clarke-stability}
  Let $x_n\!\to\!x_0$ with the network twice differentiable
  at every~$x_n$. Then
  $\nabla^2\mathcal{L}(x_n)\!\to\!M$ implies
  $M\!\in\!\partial_C^2\mathcal{L}(x_0)$.
  At smooth points the AD-Hessian is continuous in~$x$.
\end{theorem}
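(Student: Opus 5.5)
The approach is to work with the full-rank Hessians at smooth points and read off the statement from closedness of the Clarke construction. By Theorem~\ref{thm:clarke_hessian_relu}(2), the non-smooth set $N$ has measure zero. On the open complement $\mathbb{R}^n\setminus N$, the network is piecewise $C^2$, so $\nabla^2\mathcal{L}$ is given there by the recursion \eqref{eq:Hf} and \eqref{eq:Htheta} (Theorem~\ref{thm:hb-completeness}). For ReLU these blocks are assembled only from the locally constant Jacobians $D_{u\gets v}$ and $\nabla^2\ell$, because $T\equiv 0$ by Proposition~\ref{prop:ad-ggn-equiv}.

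I will use the generalized-Jacobian definition of the Clarke Hessian:
\[
\partial_C^2\mathcal{L}(x_0)=\mathrm{co}\{\lim\nabla^2\mathcal{L}(y_k): y_k\to x_0,\ y_k\notin N\}.
\]
This is the Clarke generalized Jacobian of the locally Lipschitz map $\nabla\mathcal{L}$. Local Lipschitzness of $\nabla\mathcal{L}$ follows from item~1 of Theorem~\ref{thm:clarke_hessian_relu}, together with boundedness of the finitely many activation-pattern Jacobians on compact sets.

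\textbf{First claim.} Given $x_n\to x_0$ with $\nabla^2\mathcal{L}(x_n)\to M$, the sequence $(x_n)$ is itself an admissible sequence, since each $x_n$ is a point of twice-differentiability. The one subtlety is that twice-differentiable points might lie in $N$ (for example, on a kink where the pattern is locally irrelevant). I would handle this first. At each such $x_n$, pick $y_n\notin N$ with $|y_n-x_n|<1/n$ and $\|\nabla^2\mathcal{L}(y_n)-\nabla^2\mathcal{L}(x_n)\|<1/n$. This choice is possible because, by $PC^2$ structure and transversality, $\nabla^2\mathcal{L}$ is continuous on each selection piece, and the piece active at $x_n$ has full-measure neighbours. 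Then $y_n\to x_0$ and $\nabla^2\mathcal{L}(y_n)\to M$, so $M$ lies in the generating set and hence in $\partial_C^2\mathcal{L}(x_0)$. Boundedness of the Hessians near $x_0$ guarantees that limits exist along subsequences, so the set is nonempty and compact.

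\textbf{Second claim.} Continuity at smooth points: if $x_0\notin N$, then a neighbourhood of $x_0$ lies in a single activation region. There every $D_{u\gets v}$ is constant in the activation pattern and continuous through the preactivation-independent weights, and $\nabla^2\ell$ is continuous since $\ell\in C^2$. The block formulas are therefore finite sums of products of continuous matrices, and continuity follows.

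The main obstacle is the approximation step: justifying that twice-differentiable points inside $N$ are limits of regular points with matching Hessians. My first attempt is to use the piecewise-$C^2$ selection representation, $\mathcal{L}=\mathcal{L}_j$ on the closure of region $j$ with $\mathcal{L}_j\in C^2$. At a twice-differentiable point, the Hessian equals $\nabla^2\mathcal{L}_j$ for some region whose interior accumulates at that point. If that fails, I would simply restrict the theorem's hypothesis to $x_n\notin N$, which is the AD-relevant case.
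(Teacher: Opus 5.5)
Your treatment of the second claim matches the paper's. Near $x_0\notin N$ the activation pattern is fixed, so $F(x)=A_rx+b_r$ and $\nabla^2\mathcal{L}(x)=A_r^\top\nabla^2\ell(F(x))A_r$ is continuous.

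The first claim has a gap, and your choice of definition creates it. You build $\partial_C^2\mathcal{L}(x_0)$ only from limits along $y_k\notin N$. That forces you to prove that a twice-differentiable $x_n\in N$ is approximated by some $y_n\notin N$ with nearly the same Hessian. The reason you give does not establish this. Continuity of $\nabla^2\mathcal{L}_j$ on a piece only gives $\nabla^2\mathcal{L}_j(y)\to\nabla^2\mathcal{L}_j(x_n)$, whereas you must show $\nabla^2\mathcal{L}_j(x_n)=\nabla^2\mathcal{L}(x_n)$. Your fallback, assuming $x_n\notin N$, proves a weaker theorem.

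The missing idea is uniqueness of second-order expansions on an open cone:
\begin{enumerate}
  \item Fixed-pattern regions of a ReLU network are convex polyhedra. The closed full-dimensional ones cover $\mathbb{R}^n$, since their complement is an open null set. So $x_n$ lies in some $P_r$ with $\mathrm{int}\,P_r\neq\varnothing$.
  \item Take any $d$ in the nonempty open cone $\{t(y-x_n): y\in\mathrm{int}\,P_r,\ t>0\}$. Then $\mathcal{L}(x_n+td)=\ell(F(x_n)+tA_rd)$ for small $t>0$.
  \item Compare the $t^2$ coefficient of this with the expansion supplied by twice-differentiability. This gives $d^\top\bigl(\nabla^2\mathcal{L}(x_n)-A_r^\top\nabla^2\ell(F(x_n))A_r\bigr)d=0$ on an open set of $d$.
  \item Hence that symmetric matrix vanishes, and $\nabla^2\mathcal{L}(x_n)$ is the limit of $\nabla^2\mathcal{L}(y)$ as $y\to x_n$ in $\mathrm{int}\,P_r\setminus N$.
\end{enumerate}
With this lemma your route closes.

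The paper never needs the lemma. Its appendix on continuity of the Clarke Hessian takes $\partial_C^2\mathcal{L}(x)=\mathrm{co}\{\lim\nabla^2\mathcal{L}(x_k): x_k\to x,\ x_k\in\mathcal{D}\}$, where $\mathcal{D}$ is the set of points of twice differentiability. Under that definition your sequence $(x_n)$ is admissible as it stands, and $M$ lies in the generating set by definition.

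Separately, your claim that $\nabla\mathcal{L}$ is locally Lipschitz is false. Item~1 of Theorem~\ref{thm:clarke_hessian_relu} concerns the $f_v$, so it gives Lipschitz continuity of $\mathcal{L}$ itself. Bounded piecewise-constant Jacobians give a Lipschitz $F$, not a Lipschitz gradient. In fact $\nabla\mathcal{L}(x)=A_r^\top\nabla\ell(F(x))$ jumps across region boundaries wherever $\nabla\ell\neq 0$; already $\ell(\operatorname{ReLU}(x))$ with $\ell'(0)\neq 0$ has a discontinuous derivative at $0$.

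So the set you define is not Clarke's generalized Jacobian of a Lipschitz map; the paper itself notes that the classical Clarke apparatus does not apply directly here. You therefore cannot appeal to standard facts such as its blindness to null sets. That fact would otherwise have made the choice between excluding $N$ and using $\mathcal{D}$ irrelevant. The error is not load-bearing for nonemptiness and compactness: as you say, those follow from local boundedness of the finitely many matrices $A_r^\top\nabla^2\ell(F(x))A_r$.
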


\begin{proof}[Proof of Theorem~\ref{thm:clarke_hessian_relu}]
  \textbf{Step~1 (local Lipschitz continuity).}
  Consider the topological ordering $v_1,\dots,v_{|V|}$.
  For the input node, $f_{v_1}(x)=x$ is 1-Lipschitz.
  If node~$v$ receives outputs $f_{u_1},\dots,f_{u_k}$ and
  applies $W_v(\,\cdot\,)+b_v$ followed by ReLU:
  \[
    f_v(x)=\operatorname{ReLU}\!\bigl(
    W_v\,[f_{u_1}(x),\dots,f_{u_k}(x)]^{\!\top}+b_v\bigr).
  \]
  The linear map has Lipschitz constant $\|W_v\|_2$; ReLU has
  constant~1.
  Hence $f_v$ is
  $\bigl(\prod_{i=1}^k L_{u_i}\bigr)\|W_v\|_2$-Lipschitz,
  where $L_{u_i}$ is the Lipschitz constant of~$f_{u_i}$.
  Induction over the topological order yields local Lipschitz
  continuity for all~$f_v$.

  \smallskip\noindent
  \textbf{Step~2 (measure of the smooth set).}
  $\mathcal{L}$ is non-smooth on submanifolds corresponding to
  boundaries of ReLU linear regions.
  For each neuron the set of points where the pre-activation
  equals zero is given by the equation
  $W_v\,[f_{u_1}(x),\dots,f_{u_k}(x)]^{\!\top}+b_v = 0$.
  Under the rank condition (the mapping
    $x \mapsto [f_{u_1}(x),\dots,f_{u_k}(x)]^{\!\top}$ has
  locally full rank), this is a codimension-1 hypersurface of
  measure zero.
  Geometrically, this is a \emph{transversality condition}: the
  activation sign map
  $x\mapsto\bigl(\operatorname{sign}(\mathrm{preact}_j(x))\bigr)_j$
  intersects the non-smoothness hyperplanes $\{z_j=0\}$
  transversally, which guarantees codimension $\geq 1$ for
  each neuron~\citep{hanin2019complexity}.
  When transversality is violated (degenerate Jacobians,
  dead~ReLU, bottleneck layers), the rank condition may fail;
  for $F(x)=\max\{0,x\}$,
  $\mathcal{L}(x)=F(x)^2/2$
  the function is twice non-differentiable on~$(-\infty,0]$.

  When the rank condition holds, the non-smoothness set is
  covered by at most $2^N$ codimension-1
  subspaces~\citep{hanin2019complexity,serra2018bounding}.

  \smallskip\noindent
  \textbf{Step~3 (coincidence at smooth points).}
  At a twice-differentiable point,
  $\partial_C^{2}\mathcal{L}(x)=\bigl\{\nabla^{2}\mathcal{L}(x)\bigr\}$.
  Inside a linear region $F$ is affine ($F(x)=Ax+b$), so
  $\nabla^2\mathcal{L}(x)=A^\top\nabla^2\ell(F(x))A$.

  \smallskip\noindent
  \textbf{Step~4 (AD-Hessian at non-smooth points).}
  For piecewise-linear activations (ReLU),
  $\sigma''(z)=\delta(z)$ is a distribution, not an element
  of~$L^{\infty}$; the classical Clarke apparatus is not
  directly applicable.
  However, AD frameworks assign $\sigma''(0):=0$, zeroing out
  $T_{u;v}\!\equiv\!0$ for all ReLU nodes.
  The AD-Hessian is therefore defined \emph{solely} through
  Jacobian products~$D_{u\gets v}$.

  Under CSVF
  theory~\citep{bolte2021conservative}, the AD gradient
  $g(x):=\mathrm{AD}[\nabla\mathcal{L}](x)$ is an element
  of a conservative field for piecewise-analytic functions.
  Key observation: with $\sigma''(0){:=}0$, the mapping~$g$
  is piecewise-affine (off a measure-zero set it equals
    $x\!\mapsto\!A_r^\top\nabla\ell(A_r x+b_r)$ for linear
  region~$r$).
  Since~$g$ is piecewise-analytic, its Jacobian
  $J_g(x)=\mathrm{AD}[\nabla^2\mathcal{L}](x)$ is defined
  a.e.\ and is itself an element of a conservative field
  $\partial_C g$ (closure by the CSVF chain rule,
    Proposition~3
  in~\citep{bolte2021conservative}).
  Thus the AD-Hessian satisfies the conservative-field axioms
  and is valid for second-order optimization.
\end{proof}

\section{Implementation via autodiff frameworks}
\label{app:autodiff-impl}

\begin{algorithm}[H]
  \caption{Computing an input Hessian block}
  \label{alg:compute-input-hess}
  {\footnotesize
    \begin{algorithmic}[1]
      \Function{ComputeInputHess}{$v,w,\{f_u\},\mathcal{L},
      \{\delta_u\},\{H^f\},C$}
      \If{$(v,w)\in C$} \Return $H^f_{v,w}$ \EndIf
      \State $H^f_{v,w}\gets 0$
      \If{$v,w$ influence $\mathcal{L}$}
      $H^f_{v,w}\gets\partial^2\mathcal{L}/
      \partial f_v\partial f_w$
      \EndIf
      \For{$u_1\in\mathrm{Ch}(v)$}
      \Comment{Term~(1): double sum over
      $\mathrm{Ch}(v)\times\mathrm{Ch}(w)$}
      \State $D_v\gets\text{jac}(f_{u_1},f_v)$
      \For{$u_2\in\mathrm{Ch}(w)$}
      \State $D_w\gets\text{jac}(f_{u_2},f_w)$
      \If{$(u_1,u_2)\notin C$}
      $H^f_{u_1,u_2}\gets
      \text{ComputeInputHess}(u_1,u_2,\dots)$
      \EndIf
      \State $H^f_{v,w}\gets H^f_{v,w}
      +D_v^\top H^f_{u_1,u_2}D_w$
      \EndFor
      \EndFor
      \If{$v\neq w$}
      \Comment{Term~(2): mixed tensor ($v\neq w$)}
      \For{$u\in\mathrm{Ch}(v)\cap\mathrm{Ch}(w)$}
      \For{$i\in 1..d_u$}
      \State $T\gets\text{MixedHess}(f_{u,i},f_v,f_w)$
      \State $H^f_{v,w}\gets H^f_{v,w}+T\cdot\delta_{u,i}$
      \EndFor
      \EndFor
      \Else
      \Comment{Term~(3): pure tensor ($v=w$)}
      \For{$u\in\mathrm{Ch}(v)$}
      \For{$i$}
      $H^f_{v,v}\gets H^f_{v,v}
      +\text{hess}(f_{u,i},f_v)\cdot\delta_{u,i}$
      \EndFor
      \EndFor
      \EndIf
      \State $C\gets C\cup\{(v,w)\}$; \Return $H^f_{v,w}$
      \EndFunction
    \end{algorithmic}
  }
\end{algorithm}

\begin{algorithm}[H]
  \caption{Computing a parametric Hessian block}
  \label{alg:compute-param-hess}
  {\footnotesize
    \begin{algorithmic}[1]
      \Function{ComputeParamHess}{$v,w,\{f_u\},\{\theta\},
      \{H^f\},\{\delta\}$}
      \State $D_v\gets\text{jac}(f_v,\theta_v)$;\;
      $D_w\gets\text{jac}(f_w,\theta_w)$
      \State $H_{\theta_v,\theta_w}\gets
      D_v^\top H^f_{v,w}D_w$
      \If{$v=w$}
      \For{$i$}
      $H_{\theta_v,\theta_v}\gets H_{\theta_v,\theta_v}
      +\text{hess}(f_{v,i},\theta_v)\cdot\delta_{v,i}$
      \EndFor
      \EndIf
      \For{$u\in\Pa(v)\cap\mathrm{Ch}(w)$}
      \For{$i,j,\alpha$}
      \State $T\gets\text{MixedDeriv}(f_{v,i},
      f_{u,j},\theta_{v,\alpha})$
      \State $H_{\theta_v,\theta_w}\gets
      H_{\theta_v,\theta_w}
      +T\cdot\text{jac}(f_u,f_w)\cdot\delta_{v,i}$
      \EndFor
      \EndFor
      \State \Return $H_{\theta_v,\theta_w}$
      \EndFunction
    \end{algorithmic}
  }
\end{algorithm}

\begin{algorithm}[H]
  \caption{Full Hessian computation}
  \label{alg:full-hessian}
  {\footnotesize
    \begin{algorithmic}[1]
      \Function{FullHessian}{$G,\{f_v\},\{\theta_v\},\mathcal{L}$}
      \State $\delta_{out}\gets\nabla\mathcal{L}$;\;
      $H^f_{out}\gets\nabla^2\mathcal{L}$
      \State $\text{topo}\gets\text{TopoSort}(G).\text{rev}()$;\;
      $C\gets\emptyset$
      \For{$v\in\text{InputDepNodes}(\mathcal{L})$}
      $H^f_{v,v}\gets\text{hess}(\mathcal{L},f_v)$;\;
      $C\gets C\cup\{(v,v)\}$
      \EndFor
      \For{$v,w\in\text{InputDep},\;v\neq w$}
      $H^f_{v,w}\gets\text{mixed\_hess}(\mathcal{L},f_v,f_w)$;\;
      $C\gets C\cup\{(v,w)\}$
      \EndFor
      \For{$v\in\text{topo}$}
      \State $\text{BackpropGrad}(v)$
      \For{$w:\;\mathrm{Ch}(v)\neq\emptyset$ OR
        $\mathrm{Ch}(w)\neq\emptyset$ OR
      $(v,w)$ influences $\mathcal{L}$}
      \State $H^f_{v,w}\gets
      \text{ComputeInputHess}(v,w,\dots,C)$
      \EndFor
      \EndFor
      \State Init $H$ of size $P\times P$
      \For{$v,w\in V:\;\exists$ path
        $v\to^*u\gets^*w$ OR
      $(v,w)$ influences $\mathcal{L}$}
      \State $H_{\theta_v,\theta_w}\gets
      \text{ComputeParamHess}(v,w,\dots)$
      \State Update $H$
      \EndFor
      \State \Return $H$
      \EndFunction
    \end{algorithmic}
  }
\end{algorithm}

\section{Proof of the canonical decomposition}
\label{app:canonical-proof}

We give the full proof of
Theorem~\ref{thm:canonical-decomposition}.

\begin{proof}
  Define $H^{GN}_{v,w}$ by a recursion with base
  $H^{GN}_{out,out}=\nabla^2\mathcal{L}$ and rule
  \[
    H^{GN}_{v,w}
    =\!\sum_{u_1\in\mathrm{Ch}(v)}
    \sum_{u_2\in\mathrm{Ch}(w)}\!
    D_{u_1\gets v}^\top H^{GN}_{u_1,u_2}D_{u_2\gets w}
    +\frac{\partial^2\mathcal{L}}
    {\partial f_v\,\partial f_w}.
  \]
  The recursion is closed: at each step only
  $H^{GN}_{u_1,u_2}$ are used, excluding tensor terms.

  Unrolling to the output node and applying induction:
  $H^{GN}_{v,w}=D_{out\gets v}^\top\nabla^2\mathcal{L}\,
  D_{out\gets w}$.
  This coincides exactly with the GGN matrix
  $J^\top\nabla^2\mathcal{L}\,J$
  (cf.~Remark~\ref{rem:ggn-link}).

  The tensor component
  $H^T:=H^f-H^{GN}$ inherits the recursion with base
  $H^T_{out,out}=0$; substituting
  $H^f=H^{GN}+H^T$ into~\eqref{eq:Hf} and subtracting the
  $H^{GN}$ recursion yields only the tensor terms
  $T_{u;v}$, $T_{u;v,w}$ weighted by~$\delta_{u,i}$.
\end{proof}

\subsection{Full parametric Hessian formula}
\label{app:Htheta-full}

Let
\begin{gather*}
  D_v=\frac{\partial f_v}{\partial\theta_v},\quad
  T_{u;v,\alpha}^{(i)}
  =\frac{\partial^2 f_{u,i}}
  {\partial f_v\,\partial\theta_{v,\alpha}},\quad
  T_{v,\alpha\beta}^{(i)}
  =\frac{\partial^2 f_{v,i}}
  {\partial\theta_{v,\alpha}\,\partial\theta_{v,\beta}}.
\end{gather*}
The full parametric block obtained from~\eqref{eq:Hf}:
\begin{align}\label{eq:Htheta-full}
  H_{\theta_v,\theta_w}
  &=\underbrace{D_v^\top\,H^f_{v,w}\,D_w}_{\text{GN-like}}
  \nonumber\\
  &+\underbrace{\sum_i\delta_{v,i}\,T_v^{(i)}}_{\substack{
  \text{tensor over }\theta_v\\(v=w)}}
  +\underbrace{\sum_{u,i}\delta_{u,i}\,D_v^\top
  T_{u;v}^{(i)}D_w}_{\substack{
      \text{mixed tensor}\\(v\neq w,\;
  u\in\Pa(v)\cap\mathrm{Ch}(w))}}
  \nonumber\\
  &+\underbrace{\sum_{u,i}\delta_{u,i}\,
  T_{u;v,w}^{(i)}}_{\substack{
  \text{cross-tensor}\\(\text{shared params})}},
\end{align}
where the fourth term appears only with weight sharing
($\theta_v\equiv\theta_w$ across nodes); for standard
architectures without weight sharing it vanishes.

\begin{remark}[Weight sharing between nodes]
  \label{prop:shared-params}
  If parameter~$\theta$ is shared among nodes
  $V_\theta=\{v_1,\dots,v_m\}$, the corresponding diagonal
  block of the full Hessian is
  $H_{\theta,\theta}
  =\sum_{i,j=1}^{m}H_{\theta_{v_i},\theta_{v_j}}$
  (direct consequence of additivity of second derivatives over
  shared parameter copies).
\end{remark}

\section{Diagnostic metrics and architectural analysis}
\label{app:diagnostics}

\subsection{Inter-layer resonance}

The block structure of the inter-layer Hessian
$\{H^f_{v,w}\}$ allows introducing a quantitative measure of
``geometric connectivity'' between nodes.
The metrics $\mathcal{R}$, $\mathcal{C}$, and
$\mathcal{D}$ are defined in the main text
(Definitions~\ref{def:resonance-coupling-main}
and~\ref{def:stable-rank-main}); below we develop additional
properties and proofs.

From Definition~\ref{def:resonance-coupling-main} and
formula~\eqref{eq:Hf}:
$\mathcal{R}(v,w)=0$ iff the nodes share no path to the
output;
$\mathcal{R}(v,w)=\mathcal{R}(w,v)$ (Hessian symmetry);
$\mathcal{R}(v,w)\geq 0$ (norm property).

\begin{remark}[Meaning of inter-layer resonance]
  The curvature recursion induces a \emph{weighted interaction
  graph} on the DAG with edge weights $\mathcal{R}(v,w)$.
  Unlike Adam and K-FAC, which treat layers in isolation,
  resonance allows visualizing curvature ``highways,''
  explaining the effectiveness of skip connections (high
  $\mathcal{R}$ between distant layers), and diagnosing
  architectural bottlenecks.
\end{remark}

\subsection{Geometric coupling and layer coherence}
\label{subsec:coupling-coherence}

Geometric coupling $\mathcal{C}(v,w)$
(Definition~\ref{def:resonance-coupling-main}) normalizes
resonance by the diagonal blocks, yielding a
scale-invariant connectivity measure.

Properties:
(i)~$\mathcal{C}\geq 0$; $\mathcal{C}=0$ iff
$\mathcal{R}=0$.
(ii)~\textbf{PSD bound:} if $H^T\!=\!0$ and
$[H^f_{v,w}]_{v,w}\succeq 0$, then $\mathcal{C}\leq 1$
(Cauchy--Schwarz for block PSD matrices).
(iii)~\textbf{Scale invariance} under rescaling
$W_v\!\mapsto\!\alpha W_v$,
$W_w\!\mapsto\!\alpha^{-1}W_w$
(Theorem~\ref{thm:coupling-invariance}).
In general, $\mathcal{C}>1$ is possible \textbf{only} due to
the tensor component~$H^T$ and indicates departure from the
GN PSD regime.

\begin{proof}[Proof of the PSD bound]
  With $H^T\!=\!0$, the block matrix
  $\mathbf{M}:=[H^{GN}_{v,w}]_{v,w}\succeq 0$ admits a
  decomposition $\mathbf{M}=LL^\top$.
  Writing
  $L=\bigl(
    \begin{smallmatrix}P\\Q
  \end{smallmatrix}\bigr)$
  ($P\!\in\!\mathbb{R}^{d_v\times r}$,
  $Q\!\in\!\mathbb{R}^{d_w\times r}$):
  $M_{v,v}=PP^\top$, $M_{v,w}=PQ^\top$, $M_{w,w}=QQ^\top$.
  Set $S:=P^\top\!P$, $T:=Q^\top\!Q$ ($S,T\succeq 0$).
  Then $\|M_{v,w}\|_F^2=\operatorname{tr}(ST)$,
  $\|M_{v,v}\|_F^2=\operatorname{tr}(S^2)$,
  $\|M_{w,w}\|_F^2=\operatorname{tr}(T^2)$.
  By the Cauchy--Schwarz inequality for the Hilbert--Schmidt
  inner product:
  $(\operatorname{tr}(ST))^2
  \leq\operatorname{tr}(S^2)\operatorname{tr}(T^2)$,
  whence $\mathcal{C}(v,w)\leq 1$.
\end{proof}

\begin{remark}[Stochastic estimation and $\hat{\mathcal{C}}>1$ artifacts]
  \label{rem:coupling-stochastic-artifact}
  Under stochastic estimation the norms
  $\|H_{v,w}\|_F$, $\|H_{v,v}\|_F$, $\|H_{w,w}\|_F$ are
  computed by independent Hutchinson probe series.
  The ratio of three independent estimates need not respect the
  theoretical bound $\mathcal{C}\leq 1$; in practice the
  result is projected:
  $\hat{\mathcal{C}}:=\min(\hat{\mathcal{C}},1)$.
  With increasing~$m$ the violation probability decreases as
  $O(1/m)$.
\end{remark}

\begin{remark}[Interpretation of coupling]
  $\mathcal{C}\approx 1$: layers function as a single module.
  $\mathcal{C}\in(0.3,0.7)$: moderate connectivity, typical
  for adjacent layers.
  $\mathcal{C}\approx 0$: geometric isolation, layers can
  train independently.
\end{remark}

\begin{remark}[Self-compensation under sequential narrowing]
  In an MLP, a bottleneck with $d_u\!\ll\!d_v$ constrains the
  rank of \emph{both} $H^{GN}_{v,w}$ and $H^{GN}_{v,v}$.
  The normalization in~$\mathcal{C}$ admits partial
  compensation: numerator and denominator are bounded by the
  same narrow layer.
  Hence $\mathcal{C}$ diagnoses \emph{relative} geometric
  alignment but is insensitive to absolute rank compression.
\end{remark}

\begin{remark}[Computing $\mathcal{R}(v,w)$ without the full matrix]
  The block $H^f_{v,w}\!\in\!\mathbb{R}^{d_v\times d_w}$ is a
  compact matrix (typically $d_v,d_w\!\ll\!P$), so
  $\mathcal{R}(v,w)=\|H^f_{v,w}\|_F$ is computed in
  $O(d_v d_w)$ without building the full $P\!\times\!P$
  Hessian.
  When even the block cannot be stored, the Hutchinson
  estimator~\citep{avron2011randomized} is used:
  $\|A\|_F^2=\mathrm{tr}(A^\top\!A)
  =\mathbb{E}_{z}[z^\top\!A^\top\!Az]$,
  where each $Az$ is one HVP ($O(P)$ time).
\end{remark}

\subsubsection{Rank bottleneck}

\begin{proposition}[Rank bound on path contributions]
  \label{thm:rank-propagation}
  Let $v,w\!\in\!V$ and let $c$ be a common descendant.
  Consider the path contribution to the GN component along
  paths $p_v\!:\!v\!\to\!c$ and $p_w\!:\!w\!\to\!c$:
  \[
    \Gamma(p_v,p_w)
    =D_{p_v}^\top\nabla^2\!\mathcal{L}_{f_c}\,D_{p_w}.
  \]
  Then
  $\mathrm{rank}(\Gamma)\leq
  \min_{u\in p_v\cup p_w}\mathrm{rank}(J_{u\gets\mathrm{pred}(u)})$.
  For sums, only subadditivity holds:\newline
  $\mathrm{rank}(\textstyle\sum_i\Gamma_i)
  \leq\sum_i\mathrm{rank}(\Gamma_i)$.
\end{proposition}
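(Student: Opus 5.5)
The plan is to bound the rank of each factor in the product and then take the minimum. I will write the path Jacobian $D_{p_v}$ as the ordered product of the edge Jacobians along $p_v$. For $p_v=(v=u_0\to u_1\to\cdots\to u_m=c)$ this gives $D_{p_v}=J_{u_m\gets u_{m-1}}\cdots J_{u_1\gets u_0}$, and $D_{p_w}$ is defined the same way. The contribution then reads
\[
\Gamma(p_v,p_w)=J_{u_1\gets u_0}^\top\cdots J_{u_m\gets u_{m-1}}^\top\,\nabla^2\!\mathcal{L}_{f_c}\,J_{u'_{m'}\gets u'_{m'-1}}\cdots J_{u'_1\gets u'_0},
\]
so $\Gamma$ is a single finite product of matrices. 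The standard inequality $\mathrm{rank}(AB)\le\min(\mathrm{rank}A,\mathrm{rank}B)$ extends by induction to finite products: the rank of a product is at most the rank of any one factor. Transposition does not change rank. Applying this to each edge Jacobian $J_{u\gets\mathrm{pred}(u)}$ in turn, for every $u\in p_v\cup p_w$ other than the starting nodes, gives $\mathrm{rank}(\Gamma)\le\mathrm{rank}(J_{u\gets\mathrm{pred}(u)})$ for each such $u$. Minimizing over $u$ yields the claimed bound. This is the same submultiplicativity argument used in the proof sketch of Proposition~\ref{thm:rank-bottleneck-main}, applied edge by edge rather than through a single bottleneck node.

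Before this, I need to fix a convention about the index set. The source nodes $v$ and $w$ have no predecessor on their own paths, so the minimum should range over the non-initial nodes of each path. If $v=c$ (an empty path), then $D_{p_v}=I$ and nothing from that side enters the minimum. This bookkeeping is the only point needing care. A secondary point is that $\mathrm{pred}(u)$ must be read relative to the path containing $u$, because a node lying on both paths may have different predecessors on each. I will interpret the statement so that both edges are included, which only strengthens the minimum and is still implied by the factor-wise bound above.

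For the subadditivity claim, I will use $\mathrm{rank}(A+B)\le\mathrm{rank}A+\mathrm{rank}B$. This holds because $\mathrm{col}(A+B)\subseteq\mathrm{col}A+\mathrm{col}B$, and induction extends it to finitely many summands $\Gamma_i$. To explain why nothing stronger holds, I will give a remark and a one-line example rather than a proof: two rank-one contributions along distinct paths with linearly independent ranges sum to rank two. A skip connection bypassing a width-one node produces exactly this situation, so the per-path minimum cannot be transferred to the sum. This matches the remark on skip connections following Proposition~\ref{thm:rank-bottleneck-main}.

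I expect no real obstacle beyond the indexing conventions just described. Every step is elementary linear algebra once $\Gamma$ is written as an explicit product.
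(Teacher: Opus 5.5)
Your proposal is correct and follows essentially the paper's argument. The paper first bounds $\mathrm{rank}(D_{p_v})$ and $\mathrm{rank}(D_{p_w})$ by submultiplicativity along each path and then uses $\mathrm{rank}(\Gamma)\le\min\{\mathrm{rank}(D_{p_v}),\mathrm{rank}(D_{p_w})\}$, which amounts to your flattening of $\Gamma$ into one product; your handling of the index set (excluding source nodes, path-relative predecessors) and your column-space proof of subadditivity supply details the paper leaves implicit.
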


\begin{proof}
  By sub-multiplicativity of rank~\citep{horn2012matrix}:
  $\mathrm{rank}(D_{p_v})
  \leq\min_{u\in p_v}\mathrm{rank}(J_{u\gets\mathrm{pred}(u)})$.
  Since $\Gamma=D_{p_v}^\top A\,D_{p_w}$:
  $\mathrm{rank}(\Gamma)
  \leq\min\{\mathrm{rank}(D_{p_v}),
  \mathrm{rank}(D_{p_w})\}$.
\end{proof}

\begin{corollary}[Architectural bottleneck]
  \label{cor:bottleneck}
  If \textbf{all} paths from $v$ and $w$ to $\mathrm{out}$
  pass through a narrow layer~$u$ with
  $d_u\!\ll\!\min(d_v,d_w)$, then
  $\mathrm{rank}(H^{GN}_{v,w})\leq d_u$.
  Consequently, the full Hessian $H^f_{v,w}$ is low-rank
  \emph{up to the tensor perturbation}~$H^T_{v,w}$:
  layers $v$ and $w$ can coordinate training in at most~$d_u$
  independent directions in the Gauss--Newton regime.
\end{corollary}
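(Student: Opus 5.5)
The plan is to deduce Corollary~\ref{cor:bottleneck} from the unrolled Gauss--Newton form of Theorem~\ref{thm:canonical-decomposition}, $H^{GN}_{v,w}=D_{out\gets v}^\top\nabla^2\mathcal{L}\,D_{out\gets w}$, combined with the path decomposition of Theorem~\ref{thm:path-decomposition-main}. The key step is to show that the end-to-end Jacobians factor through the bottleneck:
\[
D_{out\gets v}=D_{out\gets u}\,D_{u\gets v},\qquad D_{out\gets w}=D_{out\gets u}\,D_{u\gets w}.
\]
To see this, write $D_{out\gets v}=\sum_{p\in\mathcal{P}(v\to out)}D_p$. By hypothesis every path $p$ passes through $u$. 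Because $G$ is acyclic, $u$ occurs exactly once on $p$, so $p$ splits uniquely as $p=p_1p_2$ with $p_1\in\mathcal{P}(v\to u)$ and $p_2\in\mathcal{P}(u\to out)$, and $D_p=D_{p_2}D_{p_1}$. Conversely, every such concatenation is a path from $v$ to $out$. This gives a bijection $\mathcal{P}(v\to out)\cong\mathcal{P}(v\to u)\times\mathcal{P}(u\to out)$, and summing over it yields the factorization. The same argument applies to $w$. Equivalently, one can argue directly from the chain rule: $f_{out}$ depends on $f_v$ only through $f_u$.

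Substituting the factorizations gives
\[
H^{GN}_{v,w}=D_{u\gets v}^\top\,M_u\,D_{u\gets w},\qquad M_u:=D_{out\gets u}^\top\nabla^2\mathcal{L}\,D_{out\gets u}\in\mathbb{R}^{d_u\times d_u}.
\]
By the rank bound for products (as in Proposition~\ref{thm:rank-propagation}), $\mathrm{rank}(H^{GN}_{v,w})\le\mathrm{rank}(M_u)\le d_u$. The second claim then follows from the decomposition $H^f_{v,w}=H^{GN}_{v,w}+H^T_{v,w}$ of Theorem~\ref{thm:canonical-decomposition}: $H^f_{v,w}$ is a matrix of rank at most $d_u$ plus $H^T_{v,w}$. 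In the piecewise-linear case, Proposition~\ref{prop:ad-ggn-equiv} gives $H^T\equiv0$, so the bound applies to $H^f_{v,w}$ itself. The phrase ``coordinate in at most $d_u$ directions'' is an interpretation: the row and column spaces of $H^{GN}_{v,w}$ have dimension at most $d_u$.

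The main obstacle is the hypothesis ``all paths pass through $u$'' when $v$ or $w$ itself influences $\mathcal{L}$ directly, that is, when $\partial^2\mathcal{L}/\partial f_v\partial f_w\neq0$ for some $v,w\neq out$. Such direct terms would add contributions that do not factor through $u$. I would resolve this by reading the hypothesis as requiring that $\mathcal{L}$ depend on $f_v$ and $f_w$ only through $f_{out}$, so that the additive loss term in the $H^{GN}$ recursion vanishes for these nodes. I would also check the degenerate cases $v=u$ or $w=u$: there $D_{u\gets u}=I$ and the bound holds trivially. Note that $d_u\ll\min(d_v,d_w)$ is needed only for the bound to be informative, not for it to hold.
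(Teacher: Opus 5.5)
Your proposal is correct and is essentially the paper's argument: factor $D_{out\gets v}=D_{out\gets u}\,D_{u\gets v}$ and $D_{out\gets w}=D_{out\gets u}\,D_{u\gets w}$, substitute into the unrolled form $H^{GN}_{v,w}=D_{out\gets v}^\top\nabla^2\mathcal{L}\,D_{out\gets w}$, and bound the rank by that of the $d_u\times d_u$ inner core $D_{out\gets u}^\top\nabla^2\mathcal{L}\,D_{out\gets u}$. You add two clarifications the paper leaves unstated: the path-bijection justification of the factorization, and the observation that the direct term $\partial^2\mathcal{L}/\partial f_v\partial f_w$ must vanish, which the paper's unrolled formula assumes implicitly.
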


\begin{proof}
  All paths factor through~$u$:
  $D_{\mathrm{out}\gets v}=D_{\mathrm{out}\gets u}M_v$,
  $D_{\mathrm{out}\gets w}=D_{\mathrm{out}\gets u}M_w$,
  so $H^{GN}_{v,w}=M_v^\top(\cdots)M_w$ with the inner
  matrix in $\mathbb{R}^{d_u\times d_u}$.
\end{proof}

\begin{definition}[Effective interaction dimension]
  \label{def:effective-interaction-dim}
  \begin{equation}\label{eq:effective-dim}
    d_{\mathrm{eff}}(v,w)
    :=\frac{\|H^f_{v,w}\|_*}{\|H^f_{v,w}\|_2},
  \end{equation}
  where $\|\cdot\|_*$ is the nuclear norm,
  $\|\cdot\|_2$ the spectral norm.
\end{definition}

\begin{proposition}[Properties of $d_{\mathrm{eff}}$]
  \label{prop:effective-dim-properties}
  (i)~$1\leq d_{\mathrm{eff}}\leq\mathrm{rank}(H^f_{v,w})$.
  (ii)~$d_{\mathrm{eff}}=1$ iff $\mathrm{rank}=1$.
  (iii)~Equality with rank iff all nonzero singular values are
  equal.
\end{proposition}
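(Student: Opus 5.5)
The argument works entirely with the singular values of the block. Write $A:=H^f_{v,w}\in\mathbb{R}^{d_v\times d_w}$, assume $A\neq 0$ (otherwise $d_{\mathrm{eff}}$ is undefined), let $r=\mathrm{rank}(A)\ge 1$, and let $\sigma_1\ge\sigma_2\ge\dots\ge\sigma_r>0$ be its nonzero singular values. Then $\|A\|_*=\sum_{i=1}^r\sigma_i$ and $\|A\|_2=\sigma_1$, so
\[
  d_{\mathrm{eff}}(v,w)=\sum_{i=1}^r \frac{\sigma_i}{\sigma_1}.
\]
All three claims reduce to elementary facts about the ratios $t_i:=\sigma_i/\sigma_1\in(0,1]$, where $t_1=1$. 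No structure of the Hessian recursion~\eqref{eq:Hf} is used; only the singular value decomposition is needed.

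For (i), the lower bound follows because the $i=1$ term equals $1$ and every other term is strictly positive, giving $d_{\mathrm{eff}}\ge 1$. The upper bound follows because each $t_i\le 1$, giving $d_{\mathrm{eff}}\le r=\mathrm{rank}(A)$.

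For (ii), if $r=1$ the sum consists of $t_1=1$ alone. Conversely, if $r\ge 2$, then $t_2>0$ forces $d_{\mathrm{eff}}\ge 1+t_2>1$. Hence $d_{\mathrm{eff}}=1$ exactly when the rank is one.

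For (iii), equality $d_{\mathrm{eff}}=r$ means $\sum_{i=1}^r(1-t_i)=0$. Each summand is nonnegative, so every summand vanishes, i.e.\ $\sigma_i=\sigma_1$ for all $i$. Conversely, if all nonzero singular values are equal, each $t_i=1$ and the sum equals $r$.

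There is no real obstacle here. The only point needing care is the degenerate case $A=0$, which must be excluded or handled by convention. It is also worth noting that strict positivity of the nonzero singular values is exactly what drives the strict inequality in (ii).
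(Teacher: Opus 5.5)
Your proof is correct and is essentially the paper's argument. The paper gives no separate proof for this proposition, but in the proof of Proposition~\ref{prop:stable-rank-properties} it normalizes the nonzero singular values as $a_i=\sigma_i/\sigma_1\in(0,1]$, writes $d_{\mathrm{eff}}=\sum_i a_i$, and reads off (i)--(iii) from $a_1=1$ and $a_i\le 1$. That is exactly what you do with your $t_i$, except that you spell out the strict-positivity step in (ii) and the nonnegative-summands step in (iii), and you correctly exclude the degenerate case $H^f_{v,w}=0$.
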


\begin{remark}[Rank vs.\ norm]
  Proposition~\ref{thm:rank-bottleneck-main} bounds the
  \emph{rank} of~$H^{GN}_{v,w}$, not the Frobenius norm.
  Even at $\mathrm{rank}\leq d_u$,
  $\mathcal{R}(v,w)=\|H\|_F$ can remain large (singular
  values concentrate on~$d_u$ directions).
  The correct bottleneck test is the stable
  rank~$\mathcal{D}$, not $\mathcal{R}$.
\end{remark}

\begin{remark}[Double rank bound]
  \label{rem:double-rank-bound}
  For $K$-class softmax-CE loss,
  $\mathrm{rank}(\nabla^2\!\mathcal{L})\leq K\!-\!1$.
  Together with
  Proposition~\ref{thm:rank-bottleneck-main}:
  $\mathcal{D}(v,w)\leq\min(d_u,K\!-\!1)$.
\end{remark}

\begin{remark}[Architectural consequences]
  The rank-propagation theorem explains the success of skip
  connections (bypassing bottlenecks via high-rank alternative
  paths), rank degradation in deep MLPs without skip
  connections, and the minimum hidden width required for
  training coordination.
  With paths bypassing~$u$,
  $\mathrm{rank}(\sum_p D_p^\top(\cdots)D_p)$ can
  exceed~$d_u$; skip connections thus destroy
  ``information bottlenecks'' preserving full-rank Hessians.
\end{remark}

\subsubsection{Stochastically computable stable rank}
\label{subsubsec:stable-rank}

Computing $d_{\mathrm{eff}}$ requires the nuclear norm
($\|H\|_*=\sum_i\sigma_i$), equivalent to a full SVD.
For scalable analysis we use the stable rank
$\mathcal{D}(v,w)$
(Definition~\ref{def:stable-rank-main})---an alternative
effective-dimension measure admitting stochastic estimation
via HVP without materializing the full matrix.

\begin{proposition}[Properties of stable rank]
  \label{prop:stable-rank-properties}
  (i)~$1\leq\mathcal{D}\leq\mathrm{rank}(H^f_{v,w})$.
  (ii)~$\mathcal{D}=1$ iff rank~1.
  (iii)~$\mathcal{D}=\mathrm{rank}$ iff all nonzero singular
  values are equal.
  (iv)~$\mathcal{D}\leq d_{\mathrm{eff}}\leq
  \sqrt{\mathrm{rank}\cdot\mathcal{D}}$.
\end{proposition}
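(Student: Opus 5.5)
The plan is to work entirely with the singular values $\sigma_1\ge\sigma_2\ge\dots\ge\sigma_r>0$ of $H:=H^f_{v,w}$, where $r=\mathrm{rank}(H)$; we assume $H\neq 0$, since otherwise $\mathcal{D}$ is undefined. Then $\|H\|_F^2=\sum_i\sigma_i^2$, $\|H\|_2=\sigma_1$ and $\|H\|_*=\sum_i\sigma_i$. Writing $a_i:=\sigma_i/\sigma_1\in(0,1]$ with $a_1=1$, we get $\mathcal{D}=\sum_i a_i^2$ and $d_{\mathrm{eff}}=\sum_i a_i$. All four claims then reduce to elementary inequalities on a vector $a\in(0,1]^r$ whose first entry is $1$.

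For (i), the term $a_1^2=1$ gives $\mathcal{D}\ge1$, and $a_i^2\le1$ for each $i$ gives $\mathcal{D}\le r$. For (ii), $\mathcal{D}=1$ forces $\sum_{i\ge2}a_i^2=0$. Every $a_i$ with $i\le r$ is strictly positive, so this happens exactly when $r=1$; the converse is immediate. For (iii), equality $\mathcal{D}=r$ holds iff $a_i^2=1$ for all $i\le r$, i.e.\ iff all nonzero singular values equal $\sigma_1$.

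For (iv), the lower bound follows from $a_i^2\le a_i$ for $a_i\in(0,1]$. Summing gives $\mathcal{D}=\sum a_i^2\le\sum a_i=d_{\mathrm{eff}}$. The upper bound is Cauchy--Schwarz on the $r$ nonzero entries: $\bigl(\sum_{i=1}^r a_i\bigr)^2\le r\sum_{i=1}^r a_i^2$. Taking square roots yields $d_{\mathrm{eff}}\le\sqrt{r\,\mathcal{D}}$.

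There is no genuine obstacle here. The only care point is (ii) and (iii): the sums must run over the nonzero singular values only, so that $r$ is the true rank and the positivity $a_i>0$ is available. We will also note that nothing in the argument uses symmetry or definiteness of $H^f_{v,w}$. This matters because for $v\neq w$ the block is rectangular, and in the non-smooth case it may be asymmetric; the proof applies verbatim either way. As a consistency check, the same computation reproduces Proposition~\ref{prop:effective-dim-properties}, since it shows $1\le\mathcal{D}\le d_{\mathrm{eff}}$, $d_{\mathrm{eff}}\le r$ with the same equality cases.
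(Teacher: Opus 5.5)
Your proof is correct and matches the paper's argument essentially verbatim. Both normalize the nonzero singular values as $a_i=\sigma_i/\sigma_1\in(0,1]$, so that $\mathcal{D}=\sum_i a_i^2$ and $d_{\mathrm{eff}}=\sum_i a_i$; (i)--(iii) then follow from $a_1=1$, $a_i\le 1$, and (iv) from $a_i^2\le a_i$ together with Cauchy--Schwarz. Your explicit use of $a_i>0$ for the equality cases and of the assumption $H^f_{v,w}\neq 0$ are small clarifications the paper leaves implicit.
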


\begin{proof}
  Let $\sigma_1\geq\cdots\geq\sigma_r>0$ be the nonzero
  singular values and $a_i:=\sigma_i/\sigma_1\in(0,1]$.
  Then $\mathcal{D}=\sum a_i^2$,
  $d_{\mathrm{eff}}=\sum a_i$.
  (i)--(iii) follow from $a_1=1$, $a_i\leq 1$.
  (iv)~$a_i^2\leq a_i$ gives $\mathcal{D}\leq d_{\mathrm{eff}}$;
  Cauchy--Schwarz gives
  $d_{\mathrm{eff}}\leq\sqrt{r\,\mathcal{D}}$.
\end{proof}

\begin{remark}[Stochastic estimation of~$\mathcal{D}$]
  \label{rem:stable-rank-estimation}
  $\|H\|_F^2$ is estimated by the Hutchinson
  method~\citep{avron2011randomized}:
  $\widehat{\|H\|}_F^2
  =\frac{1}{m}\sum_{k=1}^{m}\|Hz_k\|^2$
  with Rademacher vectors $z_k\!\in\!\{-1,+1\}^{d_w}$.
  $\|H\|_2^2=\sigma_1^2$ is estimated via power iteration on
  $H^\top\!H$; after $T$ iterations the relative error is
  $O((\sigma_2/\sigma_1)^{2T})$~\citep{nocedal2006numerical}.
  Total cost: $O((m+2T)\cdot\mathrm{Backprop})$, memory
  $O(P)$.
\end{remark}

\begin{algorithm}[H]
  \caption{Stochastic estimation of $\mathcal{D}(v,w)$ via HVP}
  \label{alg:stochastic-stable-rank}
  {\footnotesize
    \begin{algorithmic}[1]
      \Require Nodes $v,w$; number of probes~$m$; iterations~$T$;
      operator $\mathrm{HVP}_{v,w}$
      \Ensure Estimate $\hat{\mathcal{D}}(v,w)$
      \State \textbf{// Power iteration:}
      $\hat\sigma_1^2\approx\|H^f_{v,w}\|_2^2$
      \State $\mathbf{q}\gets\mathrm{randn}(d_w)$;\;
      $\mathbf{q}\gets\mathbf{q}/\|\mathbf{q}\|$
      \For{$t=1,\ldots,T$}
      \State $\mathbf{p}\gets\mathrm{HVP}_{v,w}(\mathbf{q})$
      \State $\mathbf{q}\gets\mathrm{HVP}_{v,w}^\top(\mathbf{p})$
      \State $\mathbf{q}\gets\mathbf{q}/\|\mathbf{q}\|$
      \EndFor
      \State $\hat\sigma_1^2\gets
      \|\mathrm{HVP}_{v,w}(\mathbf{q})\|^2$
      \If{$\hat\sigma_1^2<\varepsilon$} \Return $0$ \EndIf
      \State \textbf{// Hutchinson:}
      \State $S\gets 0$
      \For{$k=1,\ldots,m$}
      \State Sample $z_k\in\{-1,+1\}^{d_w}$ (Rademacher)
      \State $S\gets S+\|\mathrm{HVP}_{v,w}(z_k)\|^2$
      \EndFor
      \State \Return $S/(m\,\hat\sigma_1^2)$
  \end{algorithmic}}
\end{algorithm}

\subsection{Path-based curvature analysis}
\label{subsec:path-analysis}

\subsubsection{Path decomposition}

\begin{definition}[DAG path and path Jacobian]
  \label{def:dag-path}
  For nodes $v,w\!\in\!V$, denote by $\mathcal{P}(v\!\to\!c)$
  the set of all directed paths from~$v$ to
  $c\!\in\!\mathrm{Desc}(v)$.
  For a path $p=(u_0,u_1,\ldots,u_k)$, the
  \textbf{path Jacobian} is
  \begin{equation}\label{eq:path-jacobian}
    D_p:=\prod_{i=0}^{k-1}D_{u_{i+1}\gets u_i}
    \in\mathbb{R}^{d_{u_k}\times d_{u_0}}.
  \end{equation}
\end{definition}

\begin{theorem}[Path decomposition of the Hessian]
  \label{thm:path-decomposition}
  \begin{align}\label{eq:hessian-path-integral}
    H^f_{v,w}
    &=\sum_{c\in\mathrm{Desc}(v)\cap\mathrm{Desc}(w)}
    \sum_{p_v\in\mathcal{P}(v\to c)}
    \sum_{p_w\in\mathcal{P}(w\to c)}
    \mathrm{Infl}(p_v,p_w;c), \nonumber
  \end{align}
  where
  $\mathrm{Infl}(p_v,p_w;c)
  :=D_{p_v}^\top H^{\mathcal{L}}_{f_c,f_c}D_{p_w}
  +(\text{tensor terms})$.
\end{theorem}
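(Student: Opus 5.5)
The plan is to prove the path decomposition by unrolling the canonical decomposition of Theorem~\ref{thm:canonical-decomposition}, treating the Gauss--Newton part and the tensor part separately. For the GN part, the starting point is the unrolled form $H^{GN}_{v,w}=D_{out\gets v}^\top\nabla^2\mathcal{L}\,D_{out\gets w}$, generalized to allow direct loss dependence at intermediate nodes. Treating the loss as a function of all node outputs, the recursion for $H^{GN}$ picks up a source term $\partial^2\mathcal{L}/\partial f_c\partial f_{c'}$ wherever the loss depends directly on $f_c$. I will use the convention that $H^{\mathcal{L}}_{f_c}$ collects the direct loss curvature attached to $c$; when the loss depends only on $f_{out}$, only $c=out$ contributes.

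The key combinatorial fact is the chain rule for DAG Jacobians:
\[
D_{c\gets v}=\sum_{p\in\mathcal{P}(v\to c)}D_p .
\]
This is proved by induction in reverse topological order. From $D_{c\gets v}=\sum_{u\in\Ch(v)}D_{c\gets u}D_{u\gets v}$, the paths from $v$ to $c$ are in bijection with pairs consisting of a first edge $(v,u)$ and a path from $u$ to $c$. Substituting this identity for both $D_{c\gets v}$ and $D_{c\gets w}$ and expanding bilinearly gives the double sum over $(p_v,p_w)$. Pairs with $c\notin\mathrm{Desc}(v)\cap\mathrm{Desc}(w)$ contribute nothing, because one of the path sets is then empty.

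A second, more intrinsic route proves the identity directly by induction on the recursion~\eqref{eq:Hf}. One assumes the path formula for all pairs $(u_1,u_2)\in\Ch(v)\times\Ch(w)$ and then conjugates by $D_{u_1\gets v}^\top(\cdot)D_{u_2\gets w}$; this prepends the edges $(v,u_1)$ and $(w,u_2)$ to the paths. Every path from $v$ to $c$ with $c\neq v$ arises exactly once in this way. The base term $\partial^2\mathcal{L}/\partial f_v\partial f_w$ corresponds to trivial (length-zero) paths. The base cases $H^f_{out,out}$ and the one-sided recursion for $H^f_{v,out}$ fit the same pattern, with an empty path on the $out$ side.

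Finally, $H^T_{v,w}$ is carried along unchanged, as defined by Theorem~\ref{thm:canonical-decomposition}. Its own path-wise expansion follows from the same unrolling applied to the $H^T$ recursion, with sources $[T_{u;v,w}]\delta_u$ and $[T_{u;v}]\delta_u$ at intermediate nodes $u$ in place of the loss Hessian; these are the ``tensor terms'' in $\mathrm{Infl}$. Additivity in $\mathcal{R}$ is then just the triangle inequality on this sum. The main obstacle is the bookkeeping: one must show that no pair of paths is counted twice, and in particular that the cases where $v$ is an ancestor of $w$, or $c$ equals $v$ or $w$, agree with the one-sided recursion of Remark~\ref{prop:one-directional-path}. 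I would handle this by making the bijection ``first edge plus tail path'' explicit and checking the degenerate length-zero paths separately.
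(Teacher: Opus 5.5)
Your first route is the paper's own argument. It uses the path-sum identity $D_{c\gets v}=\sum_{p\in\mathcal{P}(v\to c)}D_p$; your first-edge induction is the right justification, where the paper only cites linearity. It then expands the unrolled block $D_{out\gets v}^\top\nabla^2\mathcal{L}\,D_{out\gets w}$ bilinearly and carries $H^T_{v,w}$ along, which gives exactly the main-text form~(\ref{eq:path-decomp-main}). Reading $H^{\mathcal{L}}_{f_c}$ as the \emph{direct} loss curvature, so that only $c=out$ contributes, is the correct way to avoid overcounting over common descendants. Stay, however, in the paper's setting where $\mathcal{L}$ depends only on $f_{out}$. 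Direct dependence on several nodes produces cross pairs $(c,c')$ that the single-$c$ form cannot express. For the same reason, $\partial^2\mathcal{L}/\partial f_v\partial f_w$ with $v\neq w$ is not a pair of trivial paths into one common $c$.

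The genuine problem is the tensor part, both in your second route and in the check you call the main obstacle. The double sum in~(\ref{eq:Hf}) advances both indices in lockstep, and the boundary blocks $H^T_{\cdot,out}$ vanish. So unrolling the $H^T$ recursion pulls each source $\sum_i[T_{u;a,b}]_{i,\bullet,\bullet}\delta_{u,i}$ back only along pairs of paths $v\to a$, $w\to b$ of \emph{equal} length. Consequently, the agreement with Remark~\ref{prop:one-directional-path} that you plan to verify for ancestor--descendant pairs holds for the GN part but is false for the tensor part, and no bookkeeping fixes it.

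Take the chain $v_1\to v_2\to out$ of Example~\ref{ex:two-layer}. Here (\ref{eq:Hf}) gives $H^f_{v_1,v_2}=D_{v_2\gets v_1}^\top D_{out\gets v_2}^\top\nabla^2\mathcal{L}\,D_{out\gets v_2}$, because term~(2) is empty when $\Ch(v_1)\cap\Ch(v_2)=\varnothing$. The chain rule, as that Example itself states, gives $D_{v_2\gets v_1}^\top H^f_{v_2,v_2}$ instead. This also contains $D_{v_2\gets v_1}^\top\sum_i[T_{out;v_2}]_{i,\bullet,\bullet}\delta_{out,i}$. The paper's ``tensor terms decompose analogously'' passes over the same point.

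If you want explicit tensor terms in $\mathrm{Infl}$ that are correct for the actual Hessian, do not unroll~(\ref{eq:Hf}). Start instead from the second-order chain rule
\[
  H^T_{v,w}=\sum_{u}\sum_{a,b\in\Pa(u)}\sum_{i}\delta_{u,i}\,
  \bar D_{a\gets v}^\top\,[T_{u;a,b}]_{i,\bullet,\bullet}\,\bar D_{b\gets w},
\]
where $\bar D_{a\gets v}$ is the total Jacobian of $f_a$ with respect to $f_v$. It equals $I$ if $a=v$ and $0$ if $a$ is neither $v$ nor a descendant of $v$; also set $T_{u;a,a}:=T_{u;a}$. Expand each $\bar D$ by your path lemma, and attach each summand to the pair of paths obtained by appending the edges $(a,u)$ and $(b,u)$.

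This gives, for every pair of nontrivial paths $p_v,p_w$ into $c$ of arbitrary (possibly different) lengths, the tensor part $D_{p_v^-}^\top\bigl(\sum_i[T_{c;a,b}]_{i,\bullet,\bullet}\delta_{c,i}\bigr)D_{p_w^-}$ of $\mathrm{Infl}(p_v,p_w;c)$. Here $p^-$ denotes $p$ with its last edge removed, and $a,b$ are the nodes preceding $c$ on $p_v,p_w$. Unlike the loss term, it is pulled back along $p^-$ rather than $p$, and pairs containing a trivial path receive no tensor term.
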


\begin{proof}
  From unrolling~\eqref{eq:Hf}.
  The full Jacobian
  $D_{c\gets v}=\sum_{p\in\mathcal{P}(v\to c)}D_p$ by
  linearity of differentiation.
  Substituting:
  $H^f_{v,w}
  =\sum_c(\sum_{p_v}D_{p_v})^\top H^{\mathcal{L}}_{f_c}
  (\sum_{p_w}D_{p_w})+\ldots
  =\sum_c\sum_{p_v,p_w}D_{p_v}^\top
  H^{\mathcal{L}}_{f_c}D_{p_w}+\ldots$
  Tensor terms decompose analogously.
\end{proof}

\begin{definition}[Path resonance]
  \label{def:path-resonance}
  $\mathcal{R}(p_v,p_w):=\|\mathrm{Infl}(p_v,p_w;c)\|_F$.
\end{definition}

\begin{corollary}[Additivity of resonance over paths]
  \label{cor:resonance-additivity}
  $\mathcal{R}(v,w)\leq\sum_c\sum_{p_v,p_w}
  \mathcal{R}(p_v,p_w)$.
  Equality holds when all path contributions are sign-aligned.
\end{corollary}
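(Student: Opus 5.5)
The plan is to prove the bound with the triangle inequality for the Frobenius norm, applied to the path decomposition of Theorem~\ref{thm:path-decomposition}. That theorem writes $H^f_{v,w}$ as a finite sum of the matrices $\mathrm{Infl}(p_v,p_w;c)$. The index set is finite because $G$ is a finite DAG: it has finitely many common descendants $c\in\mathrm{Desc}(v)\cap\mathrm{Desc}(w)$, and finitely many directed paths $p_v\in\mathcal{P}(v\to c)$ and $p_w\in\mathcal{P}(w\to c)$ for each such $c$.

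\textbf{Inequality.} Starting from Definition~\ref{def:resonance-coupling-main}, I would write $\mathcal{R}(v,w)=\|H^f_{v,w}\|_F$. I then substitute the decomposition and apply the triangle inequality term by term. This gives $\|\sum_c\sum_{p_v,p_w}\mathrm{Infl}(p_v,p_w;c)\|_F\le\sum_c\sum_{p_v,p_w}\|\mathrm{Infl}(p_v,p_w;c)\|_F$. By Definition~\ref{def:path-resonance}, the right-hand side is exactly $\sum_c\sum_{p_v,p_w}\mathcal{R}(p_v,p_w)$. The only bookkeeping needed is that every summand lives in the same space $\mathbb{R}^{d_v\times d_w}$: $D_{p_v}^\top$ is $d_v\times d_c$ and $D_{p_w}$ is $d_c\times d_w$. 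The tensor contributions are attached to the same path pairs, as in the proof of Theorem~\ref{thm:path-decomposition}, so they have the same shape.

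\textbf{Equality case.} Here ``sign-aligned'' must be read as \emph{positively collinear}: every nonzero summand is a nonnegative multiple $\lambda_k A$ of one common matrix $A\neq 0$. Under that reading the sum equals $(\sum_k\lambda_k)A$, and its Frobenius norm is $\sum_k\lambda_k\|A\|_F=\sum_k\|\lambda_kA\|_F$, which gives equality. For the converse I would use strict convexity of the Hilbert--Schmidt norm. Since $\|\cdot\|_F$ is induced by the inner product $\langle X,Y\rangle=\operatorname{tr}(X^\top Y)$, equality in $\|X+Y\|_F\le\|X\|_F+\|Y\|_F$ holds iff one of $X,Y$ is a nonnegative multiple of the other. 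An induction on the number of summands then shows that equality forces all nonzero contributions onto a common ray.

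\textbf{Main obstacle.} The inequality itself is routine. The real difficulty is making the decomposition well defined, in particular assigning the tensor terms $T_{u;v}$ and $T_{u;v,w}$ unambiguously to path pairs $(p_v,p_w;c)$ so that no contribution is counted twice. I would handle this by unrolling~\eqref{eq:Hf} (via Theorem~\ref{thm:canonical-decomposition}) and attaching each source term created at node $u$ to the pair of paths that terminates at $c=u$.

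If that attribution cannot be made canonical, I would fall back on a weaker but safe version: bound $\|H^{GN}_{v,w}\|_F$ by the sum of the GN path contributions, and add $\|H^T_{v,w}\|_F$ as a single extra term on the right-hand side. This still gives a valid upper bound on $\mathcal{R}(v,w)$.
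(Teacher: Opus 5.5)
Your argument is correct and matches the paper's intent: the paper gives no separate proof, treating the corollary as an immediate consequence of the path decomposition (Theorem~\ref{thm:path-decomposition}) combined with the triangle inequality for $\|\cdot\|_F$, with tensor terms attributed to path pairs as in that theorem. Reading ``sign-aligned'' as positive collinearity of the summands is the right choice, since entrywise sign agreement would not give equality (e.g.\ two nonnegative matrices with disjoint supports), and it makes precise an equality clause the paper leaves informal.
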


\subsubsection{Theoretical justification of skip connections}

\begin{definition}[Residual architecture]
  \label{def:residual-architecture}
  A \textbf{skip connection} between nodes $u\!\prec\!v$ adds
  an edge $(u,v)$ with an identity (or linear) map:
  $f_v\gets f_v+W_{\mathrm{skip}}f_u$.
\end{definition}

\begin{theorem}[Geometric stabilization via skip connections]
  \label{thm:skip-connection-stabilization}
  Let $G_{\mathrm{base}}$ be the base architecture,
  $G_{\mathrm{res}}$ the one with added identity skip
  connections, and let $\mathcal{L}$ be convex.
  Assume $\|H^T_{v,w}\|_F\leq\eta\|H^{GN}_{v,w}\|_F$ with
  $\eta<1$.
  Define
  $B=\Delta J^\top\!\nabla^2\!\mathcal{L}\,J_{\mathrm{base}}
  +J_{\mathrm{base}}^\top\!\nabla^2\!\mathcal{L}\,\Delta J
  +\Delta J^\top\!\nabla^2\!\mathcal{L}\,\Delta J$.
  If $B\succeq 0$, then for all $v,w$:
  \begin{equation}\label{eq:skip-stabilization}
    \mathcal{R}_{\mathrm{res}}(v,w)
    \geq(1-\eta)\,\mathcal{R}_{\mathrm{base}}(v,w).
  \end{equation}
  In the PSD regime ($H^T\!=\!0$):
  $\mathcal{R}_{\mathrm{res}}\geq\mathcal{R}_{\mathrm{base}}$.
\end{theorem}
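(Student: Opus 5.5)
The plan is to work at the level of the Gauss--Newton component and treat the tensor component as a perturbation controlled by $\eta$. By Remark~\ref{rem:ggn-link}, the block GN matrix of either architecture is $\mathcal{H}^{GN}=J^\top\nabla^2\mathcal{L}\,J$. Writing $J_{\mathrm{res}}=J_{\mathrm{base}}+\Delta J$, where $\Delta J$ collects the Jacobian contributions of the new paths created by the identity skips (path decomposition, Theorem~\ref{thm:path-decomposition}), expanding the quadratic form gives
\[
\mathcal{H}^{GN}_{\mathrm{res}}=\mathcal{H}^{GN}_{\mathrm{base}}+B .
\]
This is exactly the matrix $B$ in the statement. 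Convexity of $\mathcal{L}$ gives $\nabla^2\mathcal{L}\succeq0$, so $\mathcal{H}^{GN}_{\mathrm{base}}\succeq0$ by Remark~\ref{rem:gn-psd-block}, and $B\succeq0$ holds by hypothesis.

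The first step is a monotonicity lemma, $\|H^{GN,\mathrm{res}}_{v,w}\|_F\ge\|H^{GN,\mathrm{base}}_{v,w}\|_F$. Squaring, this is $\|A+B_{v,w}\|_F^2=\|A\|_F^2+2\langle A,B_{v,w}\rangle+\|B_{v,w}\|_F^2$ with $A=H^{GN,\mathrm{base}}_{v,w}$, so it suffices to show $\langle A,B_{v,w}\rangle\ge0$.

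For diagonal blocks ($v=w$) this is immediate: $A$ and $B_{v,v}$ are principal submatrices of PSD matrices, hence PSD, and $\operatorname{tr}(AB_{v,v})\ge0$.

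For off-diagonal blocks I expect the main obstacle. Global PSD of $B$ does not by itself make an off-diagonal block positively correlated with $A$. My first attempt will be a factorization like the one in the proof of the PSD bound for $\mathcal{C}$. Write $\nabla^2\mathcal{L}=S^\top S$. Then $A=P_vP_w^\top$ with $P_v=(SD^{\mathrm{base}}_{out\gets v})^\top$, and in the same way $B_{v,w}=P_v\Delta_w^\top+\Delta_vP_w^\top+\Delta_v\Delta_w^\top$. The aim is to exploit the identity-skip structure, where $\Delta J$ adds $D_{out\gets u}$ along the skip with Jacobian $I$, to get $\Delta_v$ aligned with $P_v$. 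If that fails, I will read the hypothesis $B\succeq0$ as the blockwise condition $\langle A,B_{v,w}\rangle\ge0$ (which is what the proof actually uses) and record this explicitly.

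The second step brings in the tensor component. The $\eta$-assumption for the residual network and the reverse triangle inequality give
\[
\mathcal{R}_{\mathrm{res}}(v,w)\ge\|H^{GN,\mathrm{res}}_{v,w}\|_F-\|H^{T,\mathrm{res}}_{v,w}\|_F\ge(1-\eta)\,\|H^{GN,\mathrm{res}}_{v,w}\|_F .
\]
Combining this with the monotonicity lemma gives $\mathcal{R}_{\mathrm{res}}\ge(1-\eta)\|H^{GN,\mathrm{base}}_{v,w}\|_F$.

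Closing the gap to $\mathcal{R}_{\mathrm{base}}$ is the delicate point. The $\eta$-assumption applied to the base network gives only $\mathcal{R}_{\mathrm{base}}\le(1+\eta)\|H^{GN,\mathrm{base}}\|_F$, which would yield the weaker constant $(1-\eta)/(1+\eta)$. To reach $(1-\eta)$ exactly, I will apply the $\eta$-bound only to the residual network and compare against the GN-resonance of the base network, or equivalently assume $H^T_{\mathrm{base}}$ is not larger than in the PSD case. If neither works, I will state the result with $(1-\eta)/(1+\eta)$.

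In the PSD regime $H^T=0$ on both sides, so $\mathcal{R}=\|H^{GN}\|_F$ and the monotonicity lemma alone gives $\mathcal{R}_{\mathrm{res}}\ge\mathcal{R}_{\mathrm{base}}$, which is the final claim.
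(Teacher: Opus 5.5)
Your plan follows the paper's proof, carried out more carefully. The paper also writes $\mathbf{H}^{GN}[G_{\mathrm{res}}]=\mathbf{H}^{GN}[G_{\mathrm{base}}]+B$. It gets $\|\mathbf{H}^{GN}_{\mathrm{res}}\|_F\ge\|\mathbf{H}^{GN}_{\mathrm{base}}\|_F$ from positive semidefiniteness of both summands and finishes with the reverse triangle inequality. However, it does the monotonicity step only for the \emph{full} block matrix, where $\operatorname{tr}(\mathbf{H}^{GN}_{\mathrm{base}}B)\ge 0$ is immediate. It then asserts the conclusion for every pair $(v,w)$ with constant $(1-\eta)$. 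The remark right after the proof even concedes that blockwise monotonicity for $v\neq w$ is not guaranteed. So both obstacles you flag are genuine, and the paper resolves neither. Your proposal, like the paper's proof, does not establish the displayed inequality as stated.

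For the off-diagonal blocks, the observation you are missing is that the hypothesis is degenerate under the paper's own indexing. There, $\mathcal{H}^{GN}$ and $J$ run over all of $V$ including $out$ (Remarks~\ref{rem:gn-psd-block} and~\ref{rem:ggn-link}).
\begin{itemize}
\item Since $D_{out\gets out}=I$ in both graphs, the $out$ block column of $\Delta J$ is zero. Hence $B_{out,out}=0$ and $B_{out,x}=\nabla^2\mathcal{L}\,\Delta J_x$, where $\Delta J_x$ is the $x$ block column of $\Delta J$.
\item A PSD matrix with a zero diagonal block has zero off-diagonal blocks in that row. So $B\succeq 0$ forces $\nabla^2\mathcal{L}\,\Delta J=0$.
\item Then each of the three terms of $B$ vanishes, so $B\succeq 0$ if and only if $B=0$.
\end{itemize}
Blockwise monotonicity therefore holds trivially, with equality, and the PSD-regime claim becomes $\mathcal{R}_{\mathrm{res}}=\mathcal{R}_{\mathrm{base}}$. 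But this covers only skips that leave the GN matrix unchanged. Your identity-skip alignment argument is thus unnecessary, and the theorem as hypothesised has no content for skips that actually change curvature. If you instead exclude $out$ from the index set, PSD of $B$ alone cannot control off-diagonal blocks. For example, $\bigl(\begin{smallmatrix}1&1\\1&1\end{smallmatrix}\bigr)+\bigl(\begin{smallmatrix}1&-1\\-1&1\end{smallmatrix}\bigr)$ has both summands PSD, yet the sum has zero off-diagonal entries. In that reading, your fallback hypothesis $\langle H^{GN,\mathrm{base}}_{v,w},B_{v,w}\rangle\ge 0$ is the right non-degenerate repair.

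On the constant you are also right, and the paper is not. Its final step silently uses $\mathcal{R}_{\mathrm{base}}\le\|H^{GN,\mathrm{base}}_{v,w}\|_F$, which nothing in the hypotheses provides. The $\eta$-bound allows $\mathcal{R}_{\mathrm{base}}$ up to $(1+\eta)\|H^{GN,\mathrm{base}}_{v,w}\|_F$. Your option of assuming the base tensor part is no larger simply assumes the missing inequality. State the conclusion instead as $\mathcal{R}_{\mathrm{res}}\ge(1-\eta)\|H^{GN,\mathrm{base}}_{v,w}\|_F\ge\frac{1-\eta}{1+\eta}\,\mathcal{R}_{\mathrm{base}}$, imposing the $\eta$-bound on both networks.
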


\begin{proof}
  With convex $\mathcal{L}$,
  $\mathbf{H}^{GN}=J^\top\!\nabla^2\!\mathcal{L}\;J\succeq 0$.
  Adding identity skip edges expands the path set.
  $\mathbf{H}^{GN}[G_{\mathrm{res}}]
  =\mathbf{H}^{GN}[G_{\mathrm{base}}]+B$.
  By $B\succeq 0$:
  $\|\mathbf{H}^{GN}[G_{\mathrm{res}}]\|_F
  \geq\|\mathbf{H}^{GN}[G_{\mathrm{base}}]\|_F$.
  For the full $\mathcal{R}
  =\|\mathbf{H}^{GN}+\mathbf{H}^T\|_F$: by the reverse
  triangle inequality and the smallness condition on~$H^T$:
  $\mathcal{R}_{\mathrm{res}}
  \geq(1-\eta)\mathcal{R}_{\mathrm{base}}$.
\end{proof}

\begin{remark}[Global vs.\ block-wise monotonicity]
  The inequality is established for the \emph{full} block
  matrix (global resonance).
  For individual blocks $H^{GN}_{v,w}$ ($v\neq w$, not
  necessarily PSD), analogous monotonicity is not guaranteed:
  skip connections may redistribute curvature energy between
  blocks.
  However, \emph{on average over the graph} the total Frobenius
  norm does not decrease.
\end{remark}

\begin{theorem}[Resonance decay: vanilla vs.\ Pre-Activation ResNet]
  \label{thm:resnet-exponential}
  Consider an $L$-layer network with contracting activations
  ($\|D_{v_{i+1}\gets v_i}\|_2\leq\rho<1$).

  \textbf{(a) Vanilla (upper bound):}
  $\mathcal{R}_{\mathrm{vanilla}}(v_0,v_L)
  \leq C\cdot\rho^L\to 0$.

  \textbf{(b) ResNet with identity skips every $k$ layers
  (upper bound, independent of $L$):}\newline
  $\mathcal{R}_{\mathrm{ResNet}}(v_0,v_L)
  \leq C\cdot\rho^k$
  (does not worsen as $L\!\to\!\infty$).

  \textbf{(b$'$) ResNet (lower bound, PSD regime):}
  If $H^T\!=\!0$ and
  $\nabla^2\!\mathcal{L}\succeq 0$:
  \begin{equation}\label{eq:resnet-lower-bound}
    \mathcal{R}_{\mathrm{ResNet}}(v_0,v_L)
    \geq(1-\rho^k)^{L/k}\|\nabla^2\!\mathcal{L}\|_F.
  \end{equation}
  The lower bound decays at rate $\tfrac{\rho^k}{k}$
  (compared with $\ln\!\tfrac{1}{\rho}$ for the vanilla
  network --- exponentially slower when $\rho^k\!\ll\!1$).
  With residual branch scaling
  $\|D_{F_i}\|_2\leq\alpha\rho$, $\alpha=c/L$:
  $(1-(\alpha\rho)^k)^{L/k}\to 1$, i.e.\ the lower bound is
  \textbf{independent of~$L$}.
\end{theorem}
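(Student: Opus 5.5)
Part~(a) I will obtain directly from Theorem~\ref{thm:lyapunov-decay}. On the sequential chain $v_0\to\cdots\to v_L$ with $|\Ch(v_i)|=1$, the one-directional recursion gives $H^f_{v_0,v_L}=\Pi_{L\gets 0}^\top H^f_{v_L,v_L}$. Submultiplicativity then yields $\|\Pi_{L\gets 0}\|_2\le\rho^L$, and taking $C=\|H^f_{v_L,v_L}\|_F$ gives $\mathcal{R}\le C\rho^L\to 0$.

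For~(b) I will use the same one-directional recursion, now on the ResNet DAG. In that graph a node at the start of a residual block has two children: the first layer of the branch and the block output via the identity skip. Collapsing the recursion over one block gives a block Jacobian $J_j=I+M_j$. Here $M_j$ is the product of the $k$ branch Jacobians, so $\|M_j\|_2\le\rho^k$, or $(\alpha\rho)^k$ with branch scaling. Hence $H^f_{v_0,v_L}=\bigl(\prod_j J_j\bigr)^\top H^f_{v_L,v_L}$, which is also the path decomposition of Theorem~\ref{thm:path-decomposition-main} grouped by blocks.

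The literal bound $C\rho^k$ independent of $L$ is the delicate point. Bounding the product naively gives $(1+\rho^k)^{L/k}$, which is only $O(1)$ when $\alpha=O(1/L)$. I will therefore prove the statement in that form. I will write $C=\|H^f_{v_L,v_L}\|_F\sup_L\|\prod_j J_j\|_2$, show this constant is finite under the scaling $\alpha=c/L$ because $(1+(c\rho/L)^k)^{L/k}\to 1$, and absorb the $\rho^k$ factor coming from the last partial block. I expect this to be the main obstacle: with unscaled skips the claim needs either that reinterpretation of $C$ or an extra hypothesis, and I would state it explicitly.

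For~(b$'$), in the PSD regime we have $H^f=H^{GN}$, so $H^f_{v_0,v_L}=\bigl(\prod_j J_j\bigr)^\top\nabla^2\mathcal{L}\,D_{out\gets v_L}$, and I take $v_L=out$ so that the last factor is $I$. Each $J_j=I+M_j$ has $\sigma_{\min}(J_j)\ge 1-\|M_j\|_2\ge 1-\rho^k$ by Weyl's inequality for singular values. The general fact $\|A^\top B\|_F\ge\sigma_{\min}(A)\|B\|_F$ then gives $\mathcal{R}\ge\prod_j\sigma_{\min}(J_j)\,\|\nabla^2\mathcal{L}\|_F\ge(1-\rho^k)^{L/k}\|\nabla^2\mathcal{L}\|_F$. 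Taking logarithms, $\frac{L}{k}\log(1-\rho^k)\approx-L\rho^k/k$, which gives the stated rate. With $\alpha=c/L$ the same estimate becomes $(1-(c\rho/L)^k)^{L/k}\to 1$ for $k\ge1$.
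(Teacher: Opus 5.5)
Your proof is correct. For (a) and (b$'$) it follows the paper's proof:
\begin{itemize}
\item for (a), submultiplicativity along the single chain;
\item for (b$'$), the factorization $D_{out\gets v_0}=\prod_j(I+M_j)$ with $v_L=out$, together with $\sigma_{\min}(I+M_j)\ge 1-\|M_j\|_2$, multiplicativity of $\sigma_{\min}$, and $\|A^\top B\|_F\ge\sigma_{\min}(A)\|B\|_F$.
\end{itemize}
With $v_L=out$ neither PSD hypothesis is actually used. The one-sided recursion already gives $H^f_{v_0,out}=D_{out\gets v_0}^\top\nabla^2\mathcal{L}$ with no tensor terms, and the $\sigma_{\min}$ inequality holds for any $B$.

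For (b), you follow the main-text sketch: the product bound $(1+\rho^k)^{L/k}$, made uniform in $L$ only through $\alpha=O(1/L)$. The appendix instead uses a ``shortest path via skips'' argument, and your caution about the literal claim is justified. The triangle inequality over the path decomposition must sum over all $2^{L/k}$ skip/branch combinations, which gives $(1+\rho^k)^{L/k}$, not a bound depending on $k$ alone. Choosing every branch Jacobian equal to $\rho I$, so $M_j=\rho^k I$, gives $\|\prod_j(I+M_j)\|_2=(1+\rho^k)^{L/k}\to\infty$. So an $L$-independent bound $C\rho^k$ is false without the scaling hypothesis. Under the scaling the factor $\rho^k$ is cosmetic, since any $L$-uniform bound $C$ can be written as $(C\rho^{-k})\rho^k$. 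Say exactly that rather than speaking of ``absorbing the last partial block'', which does not exist when $k$ divides $L$.

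One slip, which the statement shares: $(1\pm(c\rho/L)^k)^{L/k}\to 1$ holds only for $k\ge 2$. For $k=1$ the limits are $e^{\pm c\rho}$. These still give the $L$-uniform upper and lower bounds you need, so phrase the conclusion as boundedness away from $0$ and $\infty$ rather than convergence to $1$.
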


\begin{proof}
  \textbf{(a)}~Single path $\|D_p\|_2\leq\rho^L$.

  \textbf{(b)}~Each identity skip adds a path with Jacobian~$I$.
  The shortest path from $v_0$ to $v_L$ via skips consists of
  $\lceil L/k\rceil$ hops; each hop traverses at most $k$
  base edges ($\leq\rho^k$) plus a skip ($\|D\|=1$).
  By the triangle inequality the upper bound depends on~$k$
  only.

  \textbf{(b$'$)}~In Pre-Activation ResNet, the Jacobian of
  block~$j$ is $I+D_{\mathrm{res},j}$ with
  $\|D_{\mathrm{res},j}\|_2\leq\rho^k$.
  The full Jacobian factors as
  $D_{out\gets v_0}=\prod_{j=1}^{L/k}(I+D_{\mathrm{res},j})$.
  By multiplicativity of the minimum singular value:
  \[
    \sigma_{\min}(D_{out\gets v_0})
    \;\geq\;\prod_{j=1}^{L/k}\sigma_{\min}(I+D_{\mathrm{res},j})\\
    \;\geq\;\prod_{j=1}^{L/k}(1-\|D_{\mathrm{res},j}\|_2)
    \;\geq\;(1-\rho^k)^{L/k},
  \]
  where $\sigma_{\min}(I\!+\!A)\geq 1-\|A\|_2$ follows from
  $\|(I\!+\!A)x\|\geq\|x\|-\|Ax\|\geq(1\!-\!\|A\|_2)\|x\|$.

  In the PSD regime:
  $\mathcal{R}(v_0,v_L)
  =\|D_{out\gets v_0}^\top\nabla^2\!\mathcal{L}\|_F
  \geq(1-\rho^k)^{L/k}\|\nabla^2\!\mathcal{L}\|_F$.
\end{proof}

\begin{remark}[Cross-path terms and the lower bound]
  \label{rem:cross-path-lower-bound}
  The path decomposition
  (Theorem~\ref{thm:path-decomposition}) writes the full
  Jacobian as
  $D_{out\gets v_0}=\sum_{p\in\mathcal{P}}D_p$, where
  $\mathcal{P}$ is the set of paths from $v_0$ to $out$.
  Substituting into the GN block gives
  \[
    H^{GN}_{v_0,out}
    =\Bigl(\sum_p D_p\Bigr)^{\!\top}\nabla^2\!\mathcal{L}
    =\sum_p D_p^\top\nabla^2\!\mathcal{L},
  \]
  and individual path contributions
  $D_p^\top\nabla^2\!\mathcal{L}$ \textbf{may partially
  cancel}: the inequality
  $\|\sum_p D_p^\top A\|_F\geq\|D_{id}^\top A\|_F$
  does not hold in general.
  The proof of~(b$'$) circumvents this difficulty by applying
  $\sigma_{\min}$-multiplicativity to the factored product
  $\prod_j(I+D_{\mathrm{res},j})$ rather than to the
  element-wise sum of paths, thereby handling cross-path
  interactions automatically through the minimum singular value
  of the full Jacobian without the need to control their signs.
\end{remark}

\begin{corollary}[Criterion of geometric stability]
  \label{cor:geometric-stability}
  An architecture is geometrically stable (resonance does not
  decay with depth) iff for every pair of distant layers
  $(v,w)$ there exists a path~$p$ with
  $\|D_p\|_2\geq\epsilon>0$ independent of
  $\mathrm{dist}(v,w)$.
\end{corollary}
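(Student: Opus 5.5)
We make the informal notion precise first. Call the architecture \emph{geometrically stable} if there is a constant $c>0$, independent of $\mathrm{dist}(v,w)$, such that $\mathcal{R}(v,w)\ge c$ for all pairs of distant layers. The statement then splits into two directions. We prove both through the path decomposition (Theorem~\ref{thm:path-decomposition}), written as $D_{c\gets v}=\sum_{p\in\mathcal{P}(v\to c)}D_p$, and we work in the regime of Theorem~\ref{thm:resnet-exponential}(b$'$): $H^T=0$ and $\nabla^2\mathcal{L}\succeq 0$. In that regime $\mathcal{R}(v,w)=\|D_{out\gets v}^\top\nabla^2\mathcal{L}\,D_{out\gets w}\|_F$. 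Outside it, the $H^T$ contribution can be absorbed with the $(1-\eta)$ smallness device of Theorem~\ref{thm:skip-connection-stabilization}.

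\emph{Necessity} (stable $\Rightarrow$ a good path exists). Argue by contraposition. Suppose that for every $\epsilon>0$ there are distant pairs all of whose paths satisfy $\|D_p\|_2<\epsilon$. Apply Corollary~\ref{cor:resonance-additivity} and submultiplicativity:
\[
\mathcal{R}(v,w)\le\|\nabla^2\mathcal{L}\|_F\sum_{p_v,p_w}\|D_{p_v}\|_2\|D_{p_w}\|_2 .
\]
For this to force $\mathcal{R}\to 0$ we must control the number of paths. We plan to assume that the path count between any pair is bounded uniformly in the distance, for instance bounded out-degree $s$ together with bounded depth of skip spans. Alternatively, we would use the $s\rho<1$ counting of Corollary~\ref{thm:exponential-decay-main}, which gives $|\mathcal{P}|\,\max_p\|D_p\|_2\to 0$. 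Either way $\mathcal{R}(v,w)\to 0$ along a sequence of distant pairs, contradicting stability.

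\emph{Sufficiency} (a good path $\Rightarrow$ stable). This is the main obstacle. Remark~\ref{rem:cross-path-lower-bound} warns that a single large path $D_{p^*}$ can be cancelled by the other paths. The first thing we will try is to mimic the proof of Theorem~\ref{thm:resnet-exponential}(b$'$), replacing the per-path condition by a condition on the full end-to-end Jacobian:
\[
\sigma_{\min}\bigl(D_{out\gets v}\bigr)\ \ge\ \epsilon ,
\]
restricted to the range seen by $\nabla^2\mathcal{L}$. For a residual product $\prod_j(I+D_{\mathrm{res},j})$, the existence of the identity path together with $\|D_{\mathrm{res},j}\|$ small yields exactly this via $\sigma_{\min}$-multiplicativity, as in (b$'$). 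We then bound
\[
\mathcal{R}\ \ge\ \sigma_{\min}(D_{out\gets v})\,\sigma_{\min}(D_{out\gets w})\,\|\nabla^2\mathcal{L}\|_F\ \ge\ \epsilon^2\,\|\nabla^2\mathcal{L}\|_F ,
\]
which is independent of the distance.

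If the general reduction from ``one path with $\|D_p\|_2\ge\epsilon$'' to a $\sigma_{\min}$ bound fails, we will instead add a sign-alignment hypothesis: all path contributions are PSD-aligned, the equality case of Corollary~\ref{cor:resonance-additivity}. Under it, $\mathcal{R}(v,w)\ge\mathcal{R}(p^*_v,p^*_w)$. The lower bound then follows from the same $\sigma_{\min}$ argument applied to the single path. In the write-up we will state this no-cancellation hypothesis as the interpretation under which the ``iff'' holds, because without it the corollary is false in general.
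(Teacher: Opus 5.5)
The paper states this corollary without proof. It appears bare right after Remark~\ref{rem:cross-path-lower-bound}, which itself concedes that path contributions can cancel, so there is no argument of the paper's to compare against. Your suspicion is right: under the natural reading you adopt, the equivalence is false in both directions, so no proof of the stated ``iff'' exists.

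\emph{Sufficiency fails by cancellation.} Take identity-skip blocks $f_{v_{i+1}}=f_{v_i}+g_i(f_{v_i})$ with all maps linear (so $H^T=0$) and $g_1(x)=-x$. Then $D_{\mathrm{out}\gets v_0}=0$ and $\mathcal{R}(v_0,\mathrm{out})=0$ at every depth, although the all-skip path has $D_p=I$. It fails even with a single path. The condition $\|D_p\|_2\ge\epsilon$ is compatible with $\mathrm{range}(D_p)\subseteq\ker\nabla^2\mathcal{L}$, e.g.\ $D_p=\mathbf{1}a^\top$ under softmax cross-entropy, which gives $D_p^\top\nabla^2\mathcal{L}=0$.

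\emph{Necessity fails by path multiplicity.} Let stage $i$ consist of two parallel linear branches $f_{a_i}=f_{b_i}=\tfrac12 f_{v_i}$ merged by $f_{v_{i+1}}=f_{a_i}+f_{b_i}$, with $v_L=\mathrm{out}$. Every stage Jacobian is $I$, so $\mathcal{R}(v_i,v_j)=\|\nabla^2\mathcal{L}\|_F$ for all $i<j$. Yet each of the $2^{j-i}$ paths from $v_i$ to $v_j$ has $\|D_p\|_2=2^{-(j-i)}$.

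Against this background, your proposed repairs do not go through as formulated.

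\emph{Necessity.} Your bound $\mathcal{R}\le\|\nabla^2\mathcal{L}\|_F\sum_{p_v,p_w}\|D_{p_v}\|_2\|D_{p_w}\|_2$ is fine. But bounded out-degree plus bounded skip span does \emph{not} bound the path count uniformly in distance. The example above has out-degree $2$, and a ResNet with a skip over every $k$-layer block already has $2^{L/k}$ paths from $v_0$ to $v_L$. A uniform path-count bound would exclude exactly the architectures the corollary is about. The $s\rho<1$ alternative is vacuous. There every path between distant layers has $\|D_p\|_2\le\rho^{\mathrm{dist}(v,w)}$ and $\mathcal{R}^{GN}\le C^{GN}(s\rho)^{\mathrm{dist}(v,w)}$, so both sides of the equivalence are false. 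Identity skips are excluded anyway, since $\|I\|_2=1$ and $s\ge 2$.

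\emph{Sufficiency.} The bound $\mathcal{R}\ge\epsilon^2\|\nabla^2\mathcal{L}\|_F$ under $\sigma_{\min}(D_{\mathrm{out}\gets v}),\sigma_{\min}(D_{\mathrm{out}\gets w})\ge\epsilon$ on $\mathrm{range}(\nabla^2\mathcal{L})$ is correct. However, that is a hypothesis on the sum of all paths, not on one path. For $\prod_j(I+D_{\mathrm{res},j})$ it only gives $(1-\rho^k)^{L/k}$, which is distance-independent only if $\sum_j\|D_{\mathrm{res},j}\|_2$ stays bounded (e.g.\ $\alpha=c/L$).

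\emph{The fallback.} Nonnegative Frobenius inner products between contributions do give $\mathcal{R}(v,w)\ge\mathcal{R}(p_v^*,p_w^*)$. But lower-bounding that single-path term needs $\sigma_{\min}$ of $D_{p^*}$ on the relevant range. The hypothesis only controls $\sigma_{\max}(D_{p^*})=\|D_{p^*}\|_2$, and the $\mathbf{1}a^\top$ example defeats it. Nor does a no-cancellation hypothesis rescue the ``iff'': in the parallel-branch example all path contributions are positive multiples of $\nabla^2\mathcal{L}$, perfectly aligned, and necessity still fails.

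A statement that is actually true lives at the level of the aggregated Jacobian. With $H^T=0$, $\mathcal{R}(v,\mathrm{out})=\|(\sum_p D_p)^\top\nabla^2\mathcal{L}\|_F$, so stability means a uniform lower bound on this quantity. Per-path norms yield only one-sided criteria under extra hypotheses: a uniform path-count bound for necessity, and $\sigma_{\min}$-control plus no cancellation for sufficiency.
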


\begin{remark}[Architectural consequences of path analysis]
  (1)~ResNet: $\rho^2$-stability via skips every 2 layers
  (verified in Exp.\,1 and Exp.\,6).
  (2)~DenseNet: exponential number of paths maximizes
  $\sum\mathcal{R}(p_v,p_w)$.
  (3)~Transformer: attention creates ``dynamic'' skip
  connections (Exp.\,5 verifies $H^T_{Q,K}\!\neq\!0$).
  (4)~U-Net: encoder--decoder skips maintain high
  $\mathcal{C}$.
  Predictions (2) and~(4) are structural consequences of
  Theorem~\ref{thm:path-decomposition} and remain to be
  validated empirically on the respective architectures.
\end{remark}

\begin{proposition}[Optimal skip placement]
  \label{prop:optimal-skip-placement}
  To minimize resonance loss in an $L$-layer network with a
  budget of $B$ additional edges, the optimal strategy is
  uniform skip placement with step
  $k=\lfloor L/B\rfloor$, ensuring
  $\min_{v,w}\mathcal{R}(v,w)\geq C\cdot\rho^{L/B}$.
\end{proposition}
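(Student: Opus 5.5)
The argument splits into an achievability part (uniform placement attains the bound) and an optimality part (no other placement does better). Both use the path picture of Theorem~\ref{thm:path-decomposition} and the single-path decay estimates of Theorem~\ref{thm:resnet-exponential}.

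\textbf{Achievability.} Place the $B$ identity skips uniformly with step $k=\lfloor L/B\rfloor$, so the chain splits into $B$ segments of at most $k$ consecutive base edges, each bridged by a skip.
\begin{enumerate}
  \item Take any pair $(v,w)$. Choose a path $p$ from $v$ to the common descendant that uses skips wherever possible.
  \item Along $p$, the unbridged stretches together contain at most on the order of $L/B$ base edges; every other hop is a skip with Jacobian $I$. Hence the minimum singular value of $D_p$ is bounded below by roughly $\rho^{L/B}$ times a constant, using the same $\sigma_{\min}$-multiplicativity as in part (b$'$).
  \item Pass from this path to the block. I work in the PSD regime ($H^T=0$, $\nabla^2\mathcal{L}\succeq 0$), as in (b$'$), and avoid splitting the Jacobian into path sums, since those can cancel (Remark~\ref{rem:cross-path-lower-bound}). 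Instead I factor $D_{out\gets v}$ as a product of block Jacobians $(I+D_{\mathrm{res},j})$ and apply $\sigma_{\min}$ to the whole product.
  \item This yields $\mathcal{R}(v,w)\ge \sigma_{\min}(D_{out\gets v})\,\sigma_{\min}(D_{out\gets w})\,\|\nabla^2\mathcal{L}\|_F \ge C\rho^{L/B}$, with $C$ absorbing $\|\nabla^2\mathcal{L}\|_F$ and the rounding in $\lfloor L/B\rfloor$.
\end{enumerate}
Corollary~\ref{cor:geometric-stability} gives the qualitative version of this statement: a non-decaying path exists.

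\textbf{Optimality.} I use a min-max (pigeonhole) argument. In any placement of $B$ skips, some maximal run of base edges has no skip bridging it, and its length is at least $\lceil L/B\rceil-1$. The endpoints $(v,w)$ of that run can communicate only through products of at least that many contracting Jacobians. By Theorem~\ref{thm:lyapunov-decay}, or part (a) of Theorem~\ref{thm:resnet-exponential}, $\mathcal{R}(v,w)\le C'\rho^{\text{gap}}$, which is no better than $\rho^{L/B}$ up to constants. So $\min_{v,w}\mathcal{R}$ is at most of order $\rho^{\max\text{gap}}$, and uniform spacing minimizes the maximum gap. This is exponent-level optimality, which matches the strength of the claim.

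\textbf{Main obstacle.} The hard step is making the skip model precise enough for the optimality part. A skip that spans a region also creates paths that bypass the gap, so ``gap'' must be defined as the longest stretch that every $v\to w$ path must traverse through base edges. The upper bound then needs a bottleneck-type factorization of all paths through that stretch, in the spirit of Proposition~\ref{thm:rank-bottleneck-main}, but for norms rather than ranks.

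My first attempt would restrict to non-nested skips, each bridging a contiguous interval. There the gap is well defined and the factorization is immediate. I would state the general nested case only as a remark.
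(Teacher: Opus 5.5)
\textbf{The achievability half has a genuine gap, and it is not a matter of constants.}

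In Step~2 you claim $\sigma_{\min}(D_p)\gtrsim\rho^{L/B}$ ``by the same $\sigma_{\min}$-multiplicativity as in (b$'$)''. But the hypothesis $\|D_{u\gets v}\|_2\le\rho$ bounds singular values only from above, so a bare base Jacobian may have $\sigma_{\min}=0$. Part~(b$'$) of Theorem~\ref{thm:resnet-exponential} gets a lower bound only because every factor there has the form $I+D_{\mathrm{res},j}$, whence $\sigma_{\min}\ge 1-\rho^{k}$.

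Now take a node $v$ strictly inside a skip's span, so that no skip leaves $v$. Every path from $v$ to $\mathrm{out}$ crosses unbridged base edges. If one of them carries a zero weight matrix (allowed, since $0\le\rho$), then $\mathcal{L}$ does not depend on $f_v$ at all, and $H^f_{v,w}=0$ for every $w$. So for $k\ge 2$, $\min_{v,w}\mathcal{R}(v,w)$ admits no positive lower bound under the stated hypotheses.

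\textbf{Restricting to block-boundary nodes does not rescue the claimed form.} The factorization you switch to in Step~3 gives $\sigma_{\min}(D_{\mathrm{out}\gets v_0})\ge(1-\rho^{k})^{L/k}=(1-\rho^{L/B})^{B}$. This \emph{decreases} as $B$ grows, while $\rho^{L/B}$ increases. The bound is also attained. Take linear residual blocks with $D_{\mathrm{res},j}=-\rho^{k}I$ (e.g.\ $k$ layers with Jacobians $\pm\rho I$) and $\nabla^2\mathcal{L}=I$, so that $H^T=0$. Then $\mathcal{R}(v_0,\mathrm{out})=(1-\rho^{k})^{L/k}\sqrt{d_{\mathrm{out}}}$. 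For fixed $k$ (say $B=L$) this tends to $0$ as $L\to\infty$, while $C\rho^{k}$ does not.

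Hence no $C$ that absorbs only $\|\nabla^2\mathcal{L}\|_F$ and rounding can make Step~4 true. You would need extra hypotheses, such as a lower singular-value bound $\underline{\rho}$ on base Jacobians and a restricted set of pairs. Even then the rate comes out as something like $\underline{\rho}^{\,k}(1-\rho^{k})^{L/k}$, not $\rho^{L/B}$.

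\textbf{Optimality half.} It uses upper bounds in the correct direction. However, it is a comparison against the achievability lower bound, which does not exist in the claimed form, so it has nothing to compare with. In addition, its constant $C_j=\|H^f_{v_j,v_j}\|_F$ from Theorem~\ref{thm:lyapunov-decay} depends on the placement, so ``up to constants'' is not uniform across placements.

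The paper's sketch does not supply the missing ingredient either. It argues via graph distance and Theorem~\ref{thm:exponential-decay}, which is only an upper bound on $\mathcal{R}^{GN}$, and reads it as if it were a two-sided rate.
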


\begin{proof}[Proof sketch]
  By Theorem~\ref{thm:exponential-decay}, resonance between
  layers $v,w$ decays as $(s\rho)^{\mathrm{dist}(v,w)}$.
  A skip edge of span~$k$ ensures that no pair of layers
  lies at graph distance exceeding~$k$.
  Given $B$ edges, the worst-case distance is minimized by
  uniform placement at intervals of
  $k\!=\!\lfloor L/B\rfloor$, yielding
  $\min_{v,w}\mathcal{R}(v,w)\geq
  C\cdot\rho^{\lfloor L/B\rfloor}$.
  Any non-uniform arrangement leaves a contiguous gap of
  length $>\!k$, strictly increasing the worst-case distance
  and reducing the minimum resonance.
\end{proof}

\section{Additional theoretical results}
\label{app:theoretical-results}

\subsection{Structural sparsity of the Hessian}

A key property of the inter-layer Hessian is its approximate
sparsity: blocks $H^f_{v,w}$ decay exponentially with the
graph distance between nodes.

\begin{remark}[Computing blocks for a one-directional path]
  \label{prop:one-directional-path}
  If a directed path $v\!\to^*\!w$ exists in the DAG (i.e.\
  $v$ is an ancestor of~$w$), the block $H^f_{v,w}$ satisfies
  \[
    H^f_{v,w}
    =\sum_{u\in\Ch(v)}D_{u\gets v}^\top H^f_{u,w}.
  \]
  This follows from the chain rule w.r.t.\ the first argument:
  every child $u\!\in\!\Ch(v)$ is closer to~$w$ in
  topological order, closing the recursion.
\end{remark}

\begin{theorem}[Exponential curvature sensitivity (Hessian vanishing/exploding)]
  \label{thm:exponential-decay}
  Let $G=(V,E)$ be the DAG of a neural network with maximum
  out-degree $s=\max_v|\Ch(v)|$.
  Assume Jacobian norms are bounded:
  $\|D_{u\gets v}\|_2\leq\rho$ for all edges
  $(v,u)\!\in\!E$, with $s\rho<1$.
  Then for the GN-resonance
  $\mathcal{R}^{GN}(v,w):=\|H^{GN}_{v,w}\|_F$:
  \begin{equation}\label{eq:resonance-decay}
    \mathcal{R}^{GN}(v,w)
    \leq C^{GN}\cdot(s\rho)^{\mathrm{dist}(v,w)},
  \end{equation}
  where $\mathrm{dist}(v,w)$ is the shortest-path length in
  the undirected graph and
  $C^{GN}=\|H^{GN}_{\mathrm{out,out}}\|_F
  =\|\nabla^2\!\mathcal{L}\|_F$.
  For piecewise-linear activations (ReLU, Leaky~ReLU) the
  tensor component $H^T_{v,w}\equiv 0$ a.e.\
  (Theorem~\ref{thm:clarke_hessian_relu}), and the bound
  extends to the full input resonance
  $\mathcal{R}(v,w)=\|H^f_{v,w}\|_F$.
\end{theorem}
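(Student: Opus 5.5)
The approach is to work from the unrolled form of the Gauss--Newton component in Theorem~\ref{thm:canonical-decomposition}, $H^{GN}_{v,w}=D_{out\gets v}^\top\nabla^2\mathcal{L}\,D_{out\gets w}$, rather than from the recursion. Submultiplicativity then gives
\[
\mathcal{R}^{GN}(v,w)\le\|D_{out\gets v}\|_2\,\|\nabla^2\mathcal{L}\|_F\,\|D_{out\gets w}\|_2 .
\]
This reduces everything to bounding the end-to-end Jacobian norms. We handle those with the path decomposition $D_{out\gets v}=\sum_{p\in\mathcal{P}(v\to out)}D_p$ (proof of Theorem~\ref{thm:path-decomposition}).

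The first step is a uniform bound on $\|D_{c\gets v}\|_2$ by induction in reverse topological order. The Jacobian obeys the one-sided recursion $D_{out\gets v}=\sum_{u\in\Ch(v)}D_{out\gets u}D_{u\gets v}$, which is Remark~\ref{prop:one-directional-path} at the Jacobian level. Suppose $\|D_{out\gets u}\|_2\le(s\rho)^{h(u)}$, where $h(u)$ is the shortest directed distance from $u$ to $out$. Then $\|D_{out\gets v}\|_2\le s\rho\cdot\max_{u\in\Ch(v)}(s\rho)^{h(u)}$. This only yields the exponent $1+\min_u h(u)$ if we bound the max by the min, which is the wrong direction.

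The fix is to use $s\rho<1$ together with the longest-path length $\ell(v)$ instead. Counting paths directly, the number of paths of length $k$ leaving $v$ is at most $s^k$, and each has norm at most $\rho^k$. Hence
\[
\|D_{out\gets v}\|_2\le\sum_{k\ge h(v)}(s\rho)^k\le\frac{(s\rho)^{h(v)}}{1-s\rho}.
\]
This gives the exponent $h(v)$ up to a constant factor, and that factor has to be absorbed into $C^{GN}$ or removed by a sharper count.

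The second step converts $h(v)+h(w)$ into $\mathrm{dist}(v,w)$. In the undirected graph, $\mathrm{dist}(v,w)\le h(v)+h(w)$, because the directed paths $v\to out$ and $w\to out$ concatenate into an undirected walk from $v$ to $w$. Combining with the first step gives $\mathcal{R}^{GN}(v,w)\le\|\nabla^2\mathcal{L}\|_F\,(s\rho)^{h(v)+h(w)}\cdot(\text{const})\le C\,(s\rho)^{\mathrm{dist}(v,w)}$, using $s\rho<1$.

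The claim for piecewise-linear activations then follows immediately from Proposition~\ref{prop:ad-ggn-equiv} and Theorem~\ref{thm:clarke_hessian_relu}. There $H^T_{v,w}=0$ a.e., so $H^f_{v,w}=H^{GN}_{v,w}$ and the bound transfers to $\mathcal{R}(v,w)$.

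The main obstacle is the constant in the first step. The geometric series leaves a factor $(1-s\rho)^{-2}$, so the bound as stated with exactly $C^{GN}=\|\nabla^2\mathcal{L}\|_F$ does not follow without it. I would first try to avoid the series with a tighter induction. Rather than counting all paths, I would bound $\|D_{out\gets v}\|_2\le s\rho\max_u\|D_{out\gets u}\|_2$ and prove by induction the cruder statement $\|D_{out\gets v}\|_2\le 1$, which holds since $s\rho<1$. The exponent would then be recovered only along a shortest path, which works cleanly when all $v\to out$ paths have comparable length, for instance in layered DAGs. If that fails in general, I would state the result with the constant $C^{GN}(1-s\rho)^{-2}$, or with $\mathrm{dist}$ interpreted via $h(v)+h(w)$, and note that this modification leaves Corollaries~\ref{cor:bandedness-main} and~\ref{cor:kfac-error} unaffected.
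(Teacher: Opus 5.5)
Your proposal stalls on a step that actually works. As written, it only proves the weaker constant $(1-s\rho)^{-2}C^{GN}$, which you then propose to write into the statement. The induction you abandoned closes exactly. Take $h$ to be the \emph{shortest} directed distance to $out$, so that $h(v)=1+\min_{u\in\Ch(v)}h(u)$. Since $0\le s\rho<1$, the map $t\mapsto(s\rho)^t$ is nonincreasing. The maximum of a nonincreasing function is attained at the minimum argument, so
\[
\|D_{out\gets v}\|_2\le\sum_{u\in\Ch(v)}\|D_{out\gets u}\|_2\,\|D_{u\gets v}\|_2\le s\rho\max_{u\in\Ch(v)}(s\rho)^{h(u)}=(s\rho)^{1+\min_{u}h(u)}=(s\rho)^{h(v)},
\]
with base case $D_{out\gets out}=I$. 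There is no wrong-direction problem; it would only appear if you tried to carry the \emph{longest}-path exponent. The path-counting series and the ``$\le 1$'' fallback are therefore unnecessary. Your second step, $\mathrm{dist}(v,w)\le h(v)+h(w)$ together with $s\rho<1$, then gives $\mathcal{R}^{GN}(v,w)\le C^{GN}(s\rho)^{h(v)+h(w)}\le C^{GN}(s\rho)^{\mathrm{dist}(v,w)}$. That is the theorem with the exact constant $C^{GN}$, and with a slightly sharper intermediate exponent.

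Once repaired, your route differs from the paper's. The paper never bounds the end-to-end Jacobians separately. It runs strong induction on $\ell_v+\ell_w$, where $\ell$ is the longest directed path to $\mathrm{out}$. It always expands the node with larger $\ell$ via the one-sided block recursion $H^{GN}_{v,w}=\sum_{u\in\Ch(v)}D_{u\gets v}^\top H^{GN}_{u,w}$. It closes with the undirected triangle inequality $\mathrm{dist}(u,w)\ge\mathrm{dist}(v,w)-1$, using the same monotonicity of $(s\rho)^t$ that you overlooked. Your factorized version isolates the per-node Jacobian decay and shows the bound really depends on $h(v)+h(w)$. The paper's version stays at the block level and produces $\mathrm{dist}(v,w)$ directly, without the path-concatenation step.
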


\begin{proof}
  Denote by $\ell_v$ the length of the longest directed path
  from~$v$ to~$\mathrm{out}$ (topological depth to output).
  The proof proceeds by strong induction on $\ell_v+\ell_w$.

  \textbf{Base case:}
  $\ell_v+\ell_w=0$ means $v=w=\mathrm{out}$;
  $\mathcal{R}^{GN}(\mathrm{out,out})
  =C^{GN}=C^{GN}\cdot(s\rho)^0$.

  \textbf{Inductive step:}
  Suppose $\ell_v+\ell_w\geq 1$.
  WLOG $\ell_v\geq\ell_w$ (otherwise use
  $\|H^{GN}_{v,w}\|_F=\|H^{GN}_{w,v}\|_F$).
  Then $\ell_v\geq 1$, $v\neq\mathrm{out}$, and
  $\Ch(v)\neq\varnothing$.

  The GN recursion is closed
  (Theorem~\ref{thm:canonical-decomposition}):
  $H^{GN}_{v,w}
  =\sum_{u\in\Ch(v)}D_{u\gets v}^\top H^{GN}_{u,w}$
  for arbitrary $(v,w)$, without restrictions on activation
  type.
  Bounding norms:
  \[
    \|H^{GN}_{v,w}\|_F
    \leq|\Ch(v)|\cdot\rho
    \cdot\max_{u\in\Ch(v)}\|H^{GN}_{u,w}\|_F\\
    \leq s\rho\cdot\max_{u\in\Ch(v)}\|H^{GN}_{u,w}\|_F.
  \]
  For each $u\!\in\!\Ch(v)$ we have $\ell_u\leq\ell_v-1$
  (the path $v\!\to\!u\!\to\!\cdots\!\to\!\mathrm{out}$ is one
  edge longer than from~$u$), so
  $\ell_u+\ell_w<\ell_v+\ell_w$ and by the inductive
  hypothesis
  $\|H^{GN}_{u,w}\|_F
  \leq C^{GN}(s\rho)^{\mathrm{dist}(u,w)}$.

  By the triangle inequality in the undirected graph,
  $\mathrm{dist}(u,w)\geq\mathrm{dist}(v,w)-1$
  (since $u$ is a neighbor of~$v$, i.e.\
  $\mathrm{dist}(v,u)=1$).
  With $s\rho<1$:
  \[
    s\rho\cdot C^{GN}(s\rho)^{\mathrm{dist}(u,w)}
    \leq s\rho\cdot C^{GN}(s\rho)^{\mathrm{dist}(v,w)-1}\\
    =C^{GN}(s\rho)^{\mathrm{dist}(v,w)}.
  \]
\end{proof}

\begin{corollary}[Block-banded structure]
  \label{cor:bandedness}
  For deep networks ($L\gg 1$) with $s\rho<1$, the full
  Hessian $\nabla^2_\theta\mathcal{L}$ is approximately
  \emph{block-banded}: GN-blocks $H^{GN}_{v,w}$ with
  $\mathrm{dist}(v,w)>k^*$ are negligible, where the
  threshold $k^*$ is determined by
  $(s\rho)^{k^*}<\varepsilon$ for a given accuracy~$\varepsilon$.
  For piecewise-linear activations
  (Proposition~\ref{prop:ad-ggn-equiv}) this extends to the
  full input blocks $H^f_{v,w}$.
\end{corollary}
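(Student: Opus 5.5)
The block-banded structure is a thresholding consequence of Theorem~\ref{thm:exponential-decay}, applied blockwise and then lifted from the input blocks to the parametric blocks of $\nabla^2_\theta\mathcal{L}$.

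\emph{Threshold for the GN blocks.} Fix $\varepsilon>0$. Theorem~\ref{thm:exponential-decay} gives $\|H^{GN}_{v,w}\|_F\le C^{GN}(s\rho)^{\mathrm{dist}(v,w)}$ for every pair. Since $0<s\rho<1$, the map $k\mapsto(s\rho)^k$ is strictly decreasing. Choose $k^*$ minimal with $(s\rho)^{k^*}<\varepsilon$, that is, $k^*=\lceil\log\varepsilon/\log(s\rho)\rceil$ up to the boundary case. Then every block with $\mathrm{dist}(v,w)>k^*$ satisfies $\|H^{GN}_{v,w}\|_F<C^{GN}\varepsilon$. Read relative to the scale $C^{GN}=\|\nabla^2\mathcal{L}\|_F$, this is the meaning of ``negligible''.

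\emph{Passage to the parametric Hessian.} Corollary~\ref{cor:activation-to-param-bridge} bounds
\[
\|H_{\theta_v,\theta_w}\|_F\le\|D_v\|_2\,\|D_w\|_2\,\|H^f_{v,w}\|_F+R_{v,w}.
\]
By Eq.~\eqref{eq:Htheta-full}, the residual $R_{v,w}$ is nonzero only for $v=w$, for $u\in\Pa(v)\cap\Ch(w)$ (so $\mathrm{dist}(v,w)\le 2$), or under weight sharing. Hence for $\mathrm{dist}(v,w)>\max(k^*,2)$ and no weight sharing, $R_{v,w}=0$. The parametric block is then controlled by the input block times the local Jacobian norms $\|D_v\|_2\|D_w\|_2$, which I will treat as bounded constants.

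\emph{Piecewise-linear case.} By Proposition~\ref{prop:ad-ggn-equiv}, $H^T_{v,w}\equiv0$ a.e., so $H^f_{v,w}=H^{GN}_{v,w}$. The same bound and threshold therefore apply directly to the full input blocks.

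\emph{Main obstacle: the smooth case.} Here $H^f=H^{GN}+H^T$, and the theorem only controls $H^{GN}$. The corollary claims only that the GN-blocks are negligible, so I will phrase the smooth-case banding for $\nabla^2_\theta\mathcal{L}$ in terms of its GN part, $D_v^\top H^{GN}_{v,w}D_w$. Extending the banding to the full $\nabla^2_\theta\mathcal{L}$ would require an analogous decay bound for $H^T$. A natural attempt is to run the same induction on the $H^T$ recursion of Theorem~\ref{thm:canonical-decomposition}, with the tensor source terms bounded uniformly. I do not expect that bound to come out clean, so I would not claim the extension in the smooth case.

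A secondary point is that the threshold uses the undirected distance from Theorem~\ref{thm:exponential-decay}. I need to check that ``block-banded'' is meant with respect to that distance, and not topological index, in non-chain DAGs.
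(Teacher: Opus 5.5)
Your thresholding step and the piecewise-linear extension are exactly the paper's argument. The corollary is obtained by reading $(s\rho)^{\mathrm{dist}(v,w)}<\varepsilon$ off the bound of Theorem~\ref{thm:exponential-decay}, and then using $H^T_{v,w}\equiv 0$ (Proposition~\ref{prop:ad-ggn-equiv}) to replace $H^{GN}$ by $H^f$. The distance is the undirected one, as you suspected. The paper does nothing beyond this. In particular, it never passes to parametric blocks: the main-text version, Corollary~\ref{cor:bandedness-main}, speaks only of the GN-Hessian in activation space. Your decision not to claim banding of $H^T$ in the smooth case is also right.

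The step that fails is your passage to $\nabla^2_\theta\mathcal{L}$, namely the claim that $R_{v,w}=0$ once $\mathrm{dist}(v,w)>2$. The annotation ``$u\in\Pa(v)\cap\Ch(w)$'' under Eq.~(\ref{eq:Htheta-full}) records only the one-hop instance of the mixed input--parameter term. Suppose $v$ is a proper ancestor of $w$. Then $D_w=\partial f_w/\partial\theta_w$ depends on $f_{\Pa(w)}$, hence on $\theta_v$, and differentiating $D_w^\top\delta_w$ produces
\[
\sum_{p\in\Pa(w)}\sum_i\delta_{w,i}\,\bigl(D_{p\gets v}D_v\bigr)^{\!\top}\frac{\partial^2 f_{w,i}}{\partial f_p\,\partial\theta_w}.
\]
This term is not of the form $D_v^\top H^f_{v,w}D_w$, and it is present at every distance. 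For $f_w=\sigma(W_wf_p+b_w)$ the mixed derivative involves $\sigma'$ rather than $\sigma''$, so it survives for ReLU, consistent with Remark~\ref{rem:act-vs-param}.

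A concrete counterexample: take the chain $v_1\to v_2\to v_3\to v_4=\mathrm{out}$ with $f_{v_4}=W_4f_{v_3}$. Then $\partial\mathcal{L}/\partial W_4=\delta_{v_4}f_{v_3}^\top$. Differentiating $f_{v_3}$ in $W_1$ gives the contribution $\delta_{v_4}\otimes\bigl(D_{v_3\gets v_1}\,\partial f_{v_1}/\partial W_1\bigr)$. This is nonzero whenever the relevant units are active, and $\mathrm{dist}(v_1,v_4)=3$.

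The repair is to bound this residual rather than claim it vanishes. Apply the induction of Theorem~\ref{thm:exponential-decay} to the recursion $D_{p\gets v}=\sum_{u\in\Ch(v)}D_{p\gets u}D_{u\gets v}$, with base $D_{p\gets p}=I$. This gives $\|D_{p\gets v}\|_2\le(s\rho)^{\mathrm{dist}(v,p)}$. Hence the residual is $O\bigl((s\rho)^{\mathrm{dist}(v,w)-1}\bigr)$, with a constant depending on $\|\delta_w\|$, $\|D_v\|_2$ and the mixed tensor. That yields banding of the parametric Hessian in the piecewise-linear case, with a shifted threshold. In the smooth case you would still need decay of $H^T$. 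Alternatively, drop the parametric passage altogether, as the paper effectively does.
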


\begin{definition}[Interaction radius]
  \label{def:interaction-radius}
  The \emph{interaction radius} $k_\varepsilon$ for a given
  accuracy $\varepsilon>0$ is the minimum integer~$k$ such that
  \[
    C\cdot(s\rho)^k
    <\varepsilon\cdot\|H^f_{v,v}\|_F
    \quad\text{for all }v\in V.
  \]
  Blocks $H^f_{v,w}$ with
  $\mathrm{dist}(v,w)>k_\varepsilon$ are set to zero,
  reducing the complexity from $O(n^2)$ to
  $O(n\cdot k_\varepsilon)$ blocks.
\end{definition}

\begin{remark}[Practical significance]
  Theorem~\ref{thm:exponential-decay} explains the empirical
  observation of block-diagonal Hessian structure in deep
  networks and justifies ``block-diagonal Hessian''
  approximations used in K-FAC and related methods.
\end{remark}

\begin{remark}[Mathematical justification of the ``edge of chaos'']
  \label{rem:rho-approx-one}
  Theorem~\ref{thm:exponential-decay} \textbf{mathematically
  proves} the ``edge of chaos'' phenomenon for \emph{curvature}:
  vanilla networks without residual connections are structurally
  unstable and require strict balancing $\rho=1$ (empirically
  achieved via He init~\citep{hanin2019complexity} and BatchNorm)
  to prevent exponential decay or explosion of the Hessian.
  While initialization can achieve $\rho\approx 1$, during
  training of vanilla networks the spectral norm of layers
  inevitably deviates from~1.
  The theorem shows that vanilla networks are
  \emph{structurally unstable} to such deviations (requiring
  infinitely precise balancing $\rho=1$):
  \begin{itemize}
    \item for $\rho$ slightly below~1, the curvature signal
      decays exponentially:
      $\mathcal{R}(v,w)\to 0$;
    \item for $\rho$ slightly above~1 or $s>1$ (branching),
      path contributions explode (gradient/curvature explosion).
  \end{itemize}
  This agrees with mean-field theory
  \citep{schoenholz2017deep, poole2016exponential}, where
  $\rho=1$ corresponds to the critical phase-transition point.
  Theorem~\ref{thm:resnet-exponential} rigorously shows that
  skip connections (ResNet) \textbf{break this rigid
  requirement}, guaranteeing
  $\mathcal{R}(v_0,v_L)\geq C>0$ independently of~$L$.
  \citet{balduzzi2017shattered} showed analogous
  $O(2^{-L})$ decay of gradient correlations.
\end{remark}

\subsection{Negative curvature and saddle points}

The full Hessian, unlike the Gauss--Newton approximation, can
detect saddle points through analysis of negative eigenvalues.

\begin{definition}[Negative curvature mass]
  \label{def:negative-curvature-mass}
  For a symmetric matrix~$H$ with eigenvalues
  $\{\lambda_i\}_{i=1}^n$, define the
  \emph{negative curvature mass}:
  \begin{equation}\label{eq:negative-mass}
    m^-(H):=\sum_{\lambda_i<0}|\lambda_i|.
  \end{equation}
\end{definition}

\begin{theorem}[GN insensitivity to saddle points]
  \label{thm:gn-blindness}
  Let $\mathcal{L}$ be a convex loss (e.g.\ MSE or
  cross-entropy).  Then:
  \begin{enumerate}
    \item The Gauss--Newton component $H^{GN}$ is always
      positive semi-definite: $m^-(H^{GN})=0$.
    \item The full Hessian $H^{full}=H^{GN}+H^T$ may have
      $m^-(H^{full})>0$, indicating a saddle point.
    \item Negative curvature arises exclusively from the tensor
      component~$H^T$.
  \end{enumerate}
\end{theorem}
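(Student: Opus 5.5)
The proof works at the level of the full block matrices $\mathcal{H}^{GN}=[H^{GN}_{v,w}]$ and $\mathcal{H}^{f}=[H^f_{v,w}]$, viewed as symmetric operators on $\bigoplus_v\mathbb{R}^{d_v}$. Item~3 is read as: negative curvature of $\mathcal{H}^f$ is supported on directions where $H^T$ is nonzero.

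\textbf{Item~1.} Convexity of $\mathcal{L}$ gives $\nabla^2\mathcal{L}(f_{out})\succeq 0$ at every point. By the unrolled form in Theorem~\ref{thm:canonical-decomposition},
\[
  r^\top\mathcal{H}^{GN}r=\Bigl\|\sum_v D_{out\gets v}r_v\Bigr\|^2_{\nabla^2\mathcal{L}}\geq 0 ,
\]
exactly as in Remark~\ref{rem:gn-psd-block}. Equivalently, $\mathcal{H}^{GN}=J^\top\nabla^2\mathcal{L}\,J$ (Remark~\ref{rem:ggn-link}). Hence all eigenvalues of $\mathcal{H}^{GN}$ are nonnegative and $m^-(\mathcal{H}^{GN})=0$ by Definition~\ref{def:negative-curvature-mass}.

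For diagonal blocks, $H^{GN}_{v,v}$ is a principal submatrix of a PSD matrix, so $m^-=0$ there too. For off-diagonal blocks the claim only makes sense after symmetrization, so I restrict to symmetric objects throughout.

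\textbf{Item~2.} This is an existence claim, so one explicit example suffices. I take a single hidden node $f_1=\sigma(a x)$ with a scalar output $f_{out}=c f_1$ and squared loss $\mathcal{L}=\tfrac12(f_{out}-y)^2$. The third term of~\eqref{eq:Hf} then gives
\[
  H^f_{x,x}=a^2c^2\sigma'(ax)^2+a^2\sigma''(ax)\,c\,(cf_1-y).
\]
Choose $\sigma=\tanh$ and a point with $\sigma''(ax)\neq 0$. Then pick the target $y$ so that the residual $\delta_1=c(cf_1-y)$ has the sign opposite to $\sigma''$ and magnitude large enough that the tensor term dominates the GN term $a^2c^2\sigma'^2$. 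This makes $H^f_{x,x}<0$, hence $m^->0$. The same sign argument also produces a negative eigenvalue in parameter space through the $\delta_{v,i}T_v^{(i)}$ term of~\eqref{eq:Htheta}, which covers the saddle interpretation. Item~2 is stated with ``may'', so this single example is enough.

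\textbf{Item~3.} Write $\mathcal{H}^f=\mathcal{H}^{GN}+\mathcal{H}^T$. Weyl's inequality gives $\lambda_{\min}(\mathcal{H}^f)\geq\lambda_{\min}(\mathcal{H}^{GN})+\lambda_{\min}(\mathcal{H}^T)\geq\lambda_{\min}(\mathcal{H}^T)$. More sharply, for any unit $r$ with $r^\top\mathcal{H}^f r<0$, Item~1 forces $r^\top\mathcal{H}^T r\leq r^\top\mathcal{H}^f r<0$. Every negative-curvature direction of the full Hessian is therefore a negative-curvature direction of $H^T$, and $\mathcal{H}^T=0$ implies $m^-(\mathcal{H}^f)=0$.

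To upgrade this to a mass inequality, I use Ky Fan's variational characterization: $m^-(A)=-\min_{0\preceq P\preceq I}\operatorname{tr}(PA)$. Applying it gives $m^-(\mathcal{H}^f)\leq m^-(\mathcal{H}^T)$. This is the precise form of ``exclusively''.

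The main obstacle is being precise in item~3 about which object carries the eigenvalues. The blocks $H^T_{v,w}$ for $v\neq w$ are rectangular, so the argument must be run on the symmetric global block matrix, which is symmetric in Case~A. In Case~B one first applies the symmetrization $\hat H$, and must check that symmetrization commutes with the GN/tensor split. It does, since $\mathcal{H}^{GN}$ is already symmetric. The same holds when passing to parameter space, where the residual terms of~\eqref{eq:Htheta} must be counted as part of the non-GN component.
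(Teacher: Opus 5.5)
Your proposal is correct. On Item~1 it coincides with the paper: both reduce to $\mathcal{H}^{GN}=J^\top\nabla^2\mathcal{L}\,J$ with $\nabla^2\mathcal{L}\succeq 0$, as in Remark~\ref{rem:gn-psd-block}.

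On Items~2 and~3 you take a different and tighter route. The paper argues only qualitatively that $\delta_{u,i}$ can be negative and the tensors $T$ need not be PSD. That shows $H^T$ can be indefinite, but not that it can outweigh the PSD part, and the paper's text for Item~3 essentially repeats Item~2. Your explicit $\tanh$ example, with $y$ chosen so that $a^2\sigma''(ax)\delta_1$ dominates $a^2c^2\sigma'(ax)^2$, actually exhibits $m^-(\mathcal{H}^f)>0$. Your Rayleigh-quotient/Weyl step, together with the Ky Fan bound $m^-(\mathcal{H}^f)\le m^-(\mathcal{H}^T)$, turns ``exclusively'' into a quantitative statement that uses nothing beyond $\mathcal{H}^{GN}\succeq 0$ and linearity of the split.

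Your parameter-space caveat is necessary, not cosmetic. Take the linear chain $f_{out}=w_2w_1x$ with squared loss. Here $H^T\equiv 0$ in activation space, yet at $w_1=w_2=0$ the gradient vanishes and the parametric Hessian $\left(\begin{smallmatrix}0&-xy\\-xy&0\end{smallmatrix}\right)$ has eigenvalue $-|xy|$. This eigenvalue is produced entirely by the residual terms of~\eqref{eq:Htheta}: $\delta_{out}=-y$ times $\partial^2 f_{out}/\partial f_1\,\partial w_2=1$.

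That example is also a genuine saddle. Yours is not a critical point, since its $a$-gradient is $\delta_1\sigma'(ax)x\neq 0$. So ``covers the saddle interpretation'' slightly overstates what you show. It does establish the mathematical content of Item~2, which is $m^->0$.
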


\begin{proof}
  Item~1: $H^{GN}$ is a sum of matrices of the form
  $D_{c\gets v}^\top H^{\mathcal{L}}_{\mathrm{out}}
  D_{c\gets w}$,
  where $H^{\mathcal{L}}_{\mathrm{out}}\succeq 0$ for
  convex~$\mathcal{L}$
  (see proof of
  Theorem~\ref{thm:canonical-decomposition}).

  Item~2: the tensor component contains terms
  $\sum_i T_{u;v,w}\delta_{u,i}$ where $\delta_{u,i}$ can be
  negative (e.g.\ under misclassification) and the tensors~$T$
  need not be PSD.

  Item~3: the tensor component~$H^T$ contains cross-blocks
  $H^f_{u_1,u_2}$ ($u_1\neq u_2$) through tensor terms~$T$
  combined with backpropagated residuals $\delta_{u,i}$, which
  can be negative.  The tensors~$T$ need not be PSD.
\end{proof}

\begin{remark}[Connection to the ReLU paradox]
  \label{rem:gn-relu-connection}
  For piecewise-linear activations (ReLU, Leaky~ReLU)
  $T_v=0$ a.e.\ (the second derivative of a piecewise-linear
  function is zero), so $H^T_{v,w}=0$ and
  $H^f_{v,w}=H^{GN}_{v,w}\succeq 0$ in activation space.
  Consequently $m^-(H^f)=0$ a.e.: the input Hessian
  contains no negative curvature.
  The parametric Hessian~\eqref{eq:Htheta-full} retains a
  residual term determined by the structure of node functions,
  but does not contain~$\sigma''$
  (see GN-Gap metric, Definition~\ref{def:gn-gap}).
\end{remark}

\begin{proposition}[Escape directions from saddle points]
  \label{prop:escape-directions}
  Let $H^{full}$ have spectral decomposition
  $H^{full}=\sum_i\lambda_i v_i v_i^\top$.
  Then the eigenvectors $\{v_i:\lambda_i<0\}$ corresponding to
  negative eigenvalues define escape directions from the saddle
  point.
  Moving along $\pm v_i$ (for $\lambda_i<0$) is guaranteed to
  decrease the loss (for a sufficiently small step), since
  curvature is symmetric w.r.t.\ the sign of~$v_i$.
  This is a classical property of saddle points; the key
  contribution of the present work is proving that negative
  curvature is contained \emph{exclusively} in the tensor
  component~$H^T$
  (Theorem~\ref{thm:gn-blindness}).
\end{proposition}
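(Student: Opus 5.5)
The statement to prove is Proposition~\ref{prop:escape-directions}: eigenvectors of $H^{full}$ with negative eigenvalues are escape directions. I will argue by a second-order Taylor expansion of $\mathcal{L}$ at the critical point, and then invoke Theorem~\ref{thm:gn-blindness} to attribute the negative curvature to $H^T$.

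\textbf{Step 1: local descent along a negative eigenvector.} Work at a point $\theta_0$ where $\nabla\mathcal{L}(\theta_0)=0$, which is what "saddle point" means here. Assume the smooth Case~A, so $\mathcal{L}\in C^2$ near $\theta_0$. Then $H^{full}=\nabla^2\mathcal{L}(\theta_0)$, which is symmetric, and its spectral decomposition exists. Fix a unit eigenvector $v_i$ with $\lambda_i<0$. Taylor's theorem with Peano remainder gives
\[
\mathcal{L}(\theta_0\pm t v_i)=\mathcal{L}(\theta_0)\pm t\,\nabla\mathcal{L}(\theta_0)^\top v_i+\tfrac{t^2}{2}v_i^\top H^{full}v_i+o(t^2).
\]
The linear term vanishes, and $v_i^\top H^{full} v_i=\lambda_i$. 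Hence $\mathcal{L}(\theta_0\pm t v_i)-\mathcal{L}(\theta_0)=\tfrac{t^2}{2}\lambda_i+o(t^2)$. Choose $t_0>0$ so that $|o(t^2)|\le \tfrac{t^2}{4}|\lambda_i|$ for $0<t<t_0$. On that range the change is at most $\tfrac{t^2}{4}\lambda_i<0$, for both signs. This is the sign symmetry claimed in the statement: the quadratic form is even in $v_i$. The same estimate extends to any direction in $\mathrm{span}\{v_i:\lambda_i<0\}$.

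\textbf{Step 2: negative curvature lies in $H^T$.} Apply Theorem~\ref{thm:gn-blindness}. Suppose $u^\top H^{full}u<0$. Since $u^\top H^{GN}u\ge 0$ (Remark~\ref{rem:gn-psd-block}), we get $u^\top H^T u\le u^\top H^{full}u<0$. So every escape direction is a direction of negative curvature of $H^T$, and none exist if $H^T=0$. This step is a one-line inequality.

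\textbf{Main obstacle.} The obstacle is regularity. In Case~B (ReLU), $\theta_0$ may lie on a boundary between linear regions, where the Taylor expansion fails. There I would restrict the claim to points of twice-differentiability, which by Theorem~\ref{thm:clarke_hessian_relu} form a full-measure set. At such points $H^T=0$ in activation space, so Step~2 shows there are no activation-space escape directions. In parameter space, any negative curvature must come from the residual terms of \eqref{eq:Htheta}, as noted in Remark~\ref{rem:act-vs-param}.

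A second caveat concerns what "the" Hessian means. Step~2 needs a statement about the block matrix as a whole, not about individual blocks. I would state explicitly that it applies to the assembled matrix $[H^f_{v,w}]$, where $H^{GN}=J^\top\nabla^2\mathcal{L}J$ (Remark~\ref{rem:ggn-link}), and that the loss $\mathcal{L}$ is convex.
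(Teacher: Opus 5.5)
Your Step~1 is the paper's proof: a second-order Taylor expansion at a critical point, where the linear term vanishes and the quadratic term is $\tfrac{t^2}{2}\lambda_i<0$ for either sign of $t$. Your Peano remainder $o(t^2)$ is in fact the appropriate one under Assumption~\ref{ass:regularity}(A), whereas the paper's $O(\alpha^3)$ tacitly assumes more smoothness. Step~2 goes slightly beyond the paper, which only cites Theorem~\ref{thm:gn-blindness}. The inequality $u^\top H^T u\le u^\top H^{full}u$ is correct, but to connect with Step~1 it must be read in parameter space. There the PSD part is $[D_v^\top H^{GN}_{v,w}D_w]$ (the parameter-space GGN), and the residual contains, besides $D_v^\top H^T_{v,w}D_w$, the $T_v^{(i)}$ and $T_{u;v}^{(i)}$ terms of \eqref{eq:Htheta}.
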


\begin{proof}
  By the Taylor expansion:
  \[
    \mathcal{L}(\theta+\alpha v_i)
    =\mathcal{L}(\theta)+\alpha\nabla\mathcal{L}^\top v_i
    +\tfrac{\alpha^2}{2}v_i^\top H v_i+O(\alpha^3).
  \]
  At a saddle point $\nabla\mathcal{L}=0$, so
  \[
    \mathcal{L}(\theta+\alpha v_i)-\mathcal{L}(\theta)
    =\tfrac{\alpha^2}{2}\lambda_i+O(\alpha^3).
  \]
  For $\lambda_i<0$ and sufficiently small $\alpha>0$:
  $\mathcal{L}(\theta+\alpha v_i)<\mathcal{L}(\theta)$.
\end{proof}

\begin{corollary}[Practical application]
  \label{cor:saddle-escape-practical}
  For efficient escape from saddle points, an optimizer can:
  (1)~compute the full Hessian (not just the GN
  approximation);
  (2)~find the smallest eigenvalue $\lambda_{\min}$ and the
  corresponding eigenvector $v_{\min}$;
  (3)~if $\lambda_{\min}<-\tau$ (for a threshold $\tau>0$),
  take a step along $\pm v_{\min}$.
  At a strict saddle ($\nabla\mathcal{L}=0$) the sign does not
  matter.
  In practice, in saddle neighborhoods
  ($\nabla\mathcal{L}\neq 0$ but
  $\|\nabla\mathcal{L}\|$ is small), choose the sign ensuring
  descent:
  $-\mathrm{sign}(\nabla\mathcal{L}^\top v_{\min})
  \cdot v_{\min}$.
  This decomposition provides a formal tool for
  analyzing negative curvature directions
  (cf.\ \citealt{dauphin2014identifying}).
\end{corollary}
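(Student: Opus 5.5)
The statement is Corollary~\ref{cor:saddle-escape-practical}, which recommends a procedure. What needs proof is its implicit claim: a step along $-\mathrm{sign}(\nabla\mathcal{L}^\top v_{\min})\,v_{\min}$ decreases the loss for a small enough step, both at a strict saddle and in its neighborhood. The plan is a second-order Taylor argument built on Proposition~\ref{prop:escape-directions}. Theorem~\ref{thm:gn-blindness} is then used to explain why the full Hessian, and not the Gauss--Newton part, is needed. Throughout, $\mathcal{L}$ is assumed $C^2$ near $\theta$, so that the Taylor remainder is $o(\alpha^2)$. In the ReLU case we work inside a linear region, where the AD-Hessian is the classical one (Theorem~\ref{thm:clarke_hessian_relu}).

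\emph{Step~1 (strict saddle).} Take $g:=\nabla\mathcal{L}(\theta)=0$ and $\lambda_{\min}<-\tau<0$ with unit eigenvector $v_{\min}$. Proposition~\ref{prop:escape-directions} gives
\[
\mathcal{L}(\theta\pm\alpha v_{\min})-\mathcal{L}(\theta)=\tfrac{\alpha^2}{2}\lambda_{\min}+o(\alpha^2).
\]
This is $<-\tfrac{\alpha^2}{4}\tau$ for all small $\alpha$, with either sign, which justifies the remark that the sign does not matter.

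\emph{Step~2 (saddle neighborhood).} Now $g\neq0$. Set $s:=-\mathrm{sign}(g^\top v_{\min})$ (either sign if $g^\top v_{\min}=0$) and $d:=s\,v_{\min}$. Then $g^\top d=-|g^\top v_{\min}|\le0$ and $d^\top H d=\lambda_{\min}$. Expanding,
\[
\mathcal{L}(\theta+\alpha d)-\mathcal{L}(\theta)=-\alpha|g^\top v_{\min}|+\tfrac{\alpha^2}{2}\lambda_{\min}+o(\alpha^2).
\]
Both explicit terms are nonpositive and the second is strictly negative, so the loss strictly decreases for all sufficiently small $\alpha>0$. The opposite sign could make the linear term positive and dominate, which shows the sign rule is needed.

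\emph{Step~3 (role of the decomposition).} By Theorem~\ref{thm:gn-blindness}, $H^{GN}\succeq0$ when $\mathcal{L}$ is convex. So $\lambda_{\min}(H^{GN})\ge0$, and step~(2) cannot produce a certified negative direction from $H^{GN}$ alone; this justifies step~(1), computing the full Hessian. Practically, $\lambda_{\min}$ is obtained via HVPs (Corollary~\ref{cor:hvp-complexity}), for example by power iteration on $cI-H$.

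The main obstacle is rigor in the non-smooth case. On a linear-region boundary the Taylor expansion fails, because $\sigma''$ is a distribution. The plan there is to restrict the claim to points of twice differentiability, which form a full-measure set under transversality (Theorem~\ref{thm:clarke_hessian_relu}). There $v_{\min}$ is computed from the AD-Hessian, and $\alpha$ is taken smaller than the distance to the region boundary, so that the expansion is exact up to the smooth loss $\ell$.
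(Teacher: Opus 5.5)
Your proposal is correct and follows the paper's route: the corollary rests on the Taylor expansion of Proposition~\ref{prop:escape-directions} and on the positive semi-definiteness of $H^{GN}$ from Theorem~\ref{thm:gn-blindness}, and your Step~2 adds the first-order term $-\alpha|g^\top v_{\min}|$ that justifies the sign rule, which the paper only asserts. One small correction to your non-smooth caveat: an optimizer moves in $\theta$, and inside a fixed activation-pattern region the network is polynomial rather than affine in $\theta$ (Theorem~\ref{thm:clarke_hessian_relu} is an input-space statement), so the expansion is not ``exact up to $\ell$''; the loss is still smooth there, however, so your $o(\alpha^2)$ argument goes through unchanged.
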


\subsection{Functional-analytic properties of the Hessian}

Under Assumption~\ref{ass:regularity}:
in the smooth case~(A),
$\nabla^2_\theta\mathcal{L}$ is continuous on
$\mathbb{R}^P$;
in the non-smooth case~(B), the Clarke Hessian coincides with
the ordinary one a.e., and the AD-Hessian at singular points
is an element of a conservative
field~\citep{bolte2021conservative}.

\subsection{Integration of specialized architectural components}

\begin{theorem}[Integration of specialized layers]
  The following architectural components can be represented as
  DAG nodes and included in the framework:
  \begin{enumerate}
    \item \textbf{Batch Normalization:} represented as a node
      with two parameter types (scale and shift) and additional
      internal variables (batch statistics).
    \item \textbf{Residual connections (ResNet):} modeled via
      parallel paths in the graph with subsequent merging.
    \item \textbf{Recurrent networks:} mapped to a DAG by
      unrolling in time, where each time step is a separate
      subgraph with shared parameters.
  \end{enumerate}
  \emph{Note.} Attention mechanisms (softmax + weighted sum)
  are formally representable as a DAG subgraph; an explicit
  derivation of the tensors~$T$ for Softmax, cross-blocks
  between Q, K, V, and architectural consequences is given in
  \ref{app:attention-example} (single-head case; the
  multi-head extension reduces to block diagonalization).
\end{theorem}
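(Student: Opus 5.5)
The theorem asserts representability of three component types (BatchNorm, residual connections, recurrent networks) as DAG nodes or subgraphs satisfying the node model $f_v=g_v(f_{\Pa(v)},\theta_v)$ with Assumption~\ref{ass:regularity}. The plan is constructive: for each component, exhibit the nodes, edges, parameters and node functions, verify the regularity class, and then invoke Theorem~\ref{thm:hb-completeness} (recursion completeness) so that \eqref{eq:Hf} and \eqref{eq:Htheta} recover the exact Hessian. No new analysis is needed beyond checking that each construction fits the hypotheses.

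For \textbf{Batch Normalization}, I will treat the whole mini-batch as the node input, i.e.\ $f_{\Pa(v)}$ is the stacked batch of pre-normalization activations. I will define $g_v(x,\gamma,\beta)=\gamma\odot(x-\mu(x))/\sqrt{\sigma^2(x)+\epsilon}+\beta$, with $\theta_v=(\gamma,\beta)$. The batch statistics $\mu,\sigma^2$ are internal deterministic functions of the input, so they are not extra parameters. Because $\epsilon>0$, $g_v\in C^\infty$, so Case~(A) holds and the tensors $T_{v;\cdot}$, $T_v^{(i)}$ are well defined. One subtlety is that per-sample losses couple through batch statistics. This is handled by taking $\mathcal{L}$ as the batch loss on the stacked output, which keeps the graph a DAG. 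Alternatively, $\mu$ and $\sigma^2$ can be split into separate parameter-free nodes.

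For \textbf{residual connections}, the construction is already implicit in Figure~\ref{fig:hb-protocol} and Definition~\ref{def:residual-architecture}. The branch nodes compute $\mathcal{F}$, and a merge node $u$ with $\Pa(u)=\{x_l,\text{branch}\}$ computes $g_u=f_{x_l}+f_{\text{branch}}$. This merge is linear, so $T_{u;\cdot}=0$ and $D_{u\gets x_l}=I$. Acyclicity holds because the skip goes forward in topological order.

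For \textbf{recurrent networks}, I will unroll over a finite horizon $T$: nodes $h_1,\dots,h_T$ with edges $h_{t-1}\to h_t$ and $x_t\to h_t$. The unrolled graph is a DAG because time indices increase. The parameters are tied across time steps, $\theta_{h_t}\equiv\theta$. Remark~\ref{prop:shared-params} together with the cross-tensor term of \eqref{eq:Htheta-full} then gives $H_{\theta,\theta}=\sum_{s,t}H_{\theta_{h_s},\theta_{h_t}}$.

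The main obstacle is BatchNorm, for two reasons. First, it breaks sample independence, so I must state explicitly that the DAG is built at batch level and that train-mode and eval-mode BN differ. In eval mode, BN is affine with running statistics, so $T=0$ there. Second, the running-statistics buffers are non-differentiated state and must be excluded from $\theta_v$. For the attention note, I would simply cite \ref{app:attention-example} and Example~\ref{ex:attention} and not prove anything further.
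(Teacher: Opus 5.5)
Your proposal is correct and follows the same constructive route as the paper's proof sketch. The paper likewise specifies the node function $g_v$ for each component, writing out only the BatchNorm case $g_v(x,\gamma,\beta)=\gamma(x-\mu_B)/\sqrt{\sigma_B^2+\epsilon}+\beta$, notes that the Jacobians and second-derivative tensors follow from standard differentiation, and then lets \eqref{eq:Hf} and \eqref{eq:Htheta} apply. Your version is more explicit, notably in building the BatchNorm graph at batch level so the coupling through batch statistics stays inside a DAG, and in handling tied RNN weights by the block summation of Remark~\ref{prop:shared-params}, but the argument is the same.
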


\begin{proof}[Proof sketch]
  For each layer type one defines the node functions $g_v$ and
  their first and second derivatives.
  For example, for Batch Normalization:
  \[
    g_v(x,\gamma,\beta)
    =\gamma\frac{x-\mu_B}{\sqrt{\sigma_B^2+\epsilon}}+\beta,
  \]
  where $\mu_B,\sigma_B^2$ are the batch mean and variance,
  $\gamma,\beta$ are scale and shift parameters.
  The Jacobians~$D_v$ and second-derivative tensors~$T_v$ are
  computed by standard differentiation rules for each node type,
  after which the general
  formulas~\eqref{eq:Hf} and~\eqref{eq:Htheta} apply.
\end{proof}

\subsection{Stochastic nodes}

\begin{remark}[Reduction to a deterministic DAG]
  \label{rem:stochastic-reduction}
  Stochastic nodes
  $f_v\sim p(f_v\mid f_{\Pa(v)},\theta_v)$ reduce to the
  deterministic formalism via the reparameterization trick:
  $f_v=h(f_{\Pa(v)},\theta_v,\varepsilon)$,
  $\varepsilon\sim p(\varepsilon)$.
  After this, $\varepsilon$ becomes an additional input node of
  the DAG without parameters, and
  formulas~\eqref{eq:Hf},\eqref{eq:Htheta} apply without
  modification to
  $\mathbb{E}_\varepsilon[\mathcal{L}]$.
  Thus, the stochastic case requires no separate theory --- it
  is a special case of the deterministic DAG formalism.
\end{remark}

\section{Full Hessian computation algorithm}
\label{app:full-algorithm}

\subsection{Recursive HVP via graph recursion}

The graph recursion naturally yields a recursive algorithm for
computing the Hessian--vector product (HVP), analogous to the
classical Pearlmutter algorithm~\citep{pearlmutter1994fast}
but operating at the level of \emph{input} blocks
$H^f_{v,w}$.
For a direction $\mathbf{r}\!\in\!\mathbb{R}^P$, the product
$\nabla^2_\theta\mathcal{L}\cdot\mathbf{r}$ is computed in a
single pass:
\begin{equation}\label{eq:recursive-hvp}
  [\nabla^2_\theta\mathcal{L}\cdot\mathbf{r}]_{\theta_v}
  =D_v^\top\!\Bigl(\sum_w H^f_{v,w}
  \sum_{w'}D_{w'}\mathbf{r}_{\theta_{w'}}\Bigr)
  +\sum_i T_{v;i}\delta_{v,i}\mathbf{r}_{\theta_v},
\end{equation}
where the sum over~$w$ runs over nodes with common descendants
and $D_v=\partial f_v/\partial\theta_v$.
The product
$H^f_{v,w}\cdot(\sum_w D_w\mathbf{r}_{\theta_w})$ does not
require explicit storage of $H^f_{v,w}$; a vector of
dimension~$d_v$ suffices.
The total cost equals one forward+backward pass:
$O(T_{\text{fwd}}+T_{\text{bwd}})$.

\subsection{Full algorithm}

\begin{algorithm}[H]
  \caption{Full Hessian computation for a neural network}
  \label{alg:full-hessian-dag}
  {\scriptsize
    \begin{algorithmic}[1]
      \Require DAG $G=(V,E)$, functions $\{g_v\}$,
      parameters $\{\theta_v\}$, loss~$\mathcal{L}$
      \Ensure Full Hessian $\nabla^2_\theta\mathcal{L}$
      \State Forward pass: compute $f_v$ for all $v\!\in\!V$
      \State $\delta_{\mathrm{out}}\gets
      \nabla_{f_{\mathrm{out}}}\mathcal{L}$;\;
      $H^f_{\mathrm{out,out}}\gets
      \nabla^2\mathcal{L}(f_{\mathrm{out}})$
      \For{$v\in V\setminus\{\mathrm{out}\}$ in reverse
      topological order} \Comment{Boundary blocks}
      \State $H^f_{v,\mathrm{out}}\gets
      \sum_{u\in\Ch(v)}D_{u\gets v}^\top H^f_{u,\mathrm{out}}$;\;
      $H^f_{\mathrm{out},v}\gets(H^f_{v,\mathrm{out}})^\top$
      \EndFor
      \State Initialize $H^f_{v,w}=0$ for $v,w\neq\mathrm{out}$
      \For{$v\in V$ in reverse topological order}
      \Comment{Internal blocks}
      \State Compute $\delta_v$ by chain rule
      \For{$w\neq\mathrm{out}$: $\Ch(v)\neq\varnothing$ or
      $\Ch(w)\neq\varnothing$}
      \State Compute $H^f_{v,w}$ via~\eqref{eq:Hf}
      \EndFor
      \EndFor
      \For{$v\in V$}
      \For{$w$: $\exists$ paths
      $v\!\to\!u$, $w\!\to\!u$}
      \State Compute $H_{\theta_v,\theta_w}$
      via~\eqref{eq:Htheta}
      \EndFor
      \EndFor
      \State // Weight sharing
      \For{$\theta\in\text{SharedParams}$}
      \State $V_\theta\gets\text{NodesWithParam}(\theta)$
      \State $H_{\theta,\theta}\gets
      \sum_{v,w\in V_\theta}H_{\theta_v,\theta_w}$
      \State Update block of~$H$ for~$\theta$
      \EndFor
      \State Assemble blocks into
      $\nabla^2_\theta\mathcal{L}$
      \State \Return $\nabla^2_\theta\mathcal{L}$
  \end{algorithmic}}
\end{algorithm}

\section{Full proof of the computational complexity theorem}
\label{app:complexity-proof}

\begin{proposition}[Computational complexity]
  \label{thm:computational-complexity}
  Let $|V|=n$, $P=\sum_{v\in V}p_v$,
  $d=\max_v d_v$,
  $s=\max_v|\Pa(v)\cup\Ch(v)|$.  Then:
  \begin{enumerate}
    \item The time complexity of computing the full Hessian is
      $O(nsd^3+nsd^2P+P^2)$ in the general case with dense
      tensors.
    \item For networks with element-wise activation functions
      (ReLU, sigmoid), where tensors $T_{u;v}$ and $T_{u;v,w}$
      are diagonal or sparse with $O(d)$ cost, the total time
      reduces to $O(nsd^2+nsdP+P^2)$.
    \item The space complexity of storing the full Hessian is
      $O(P^2)$.
    \item For a fully connected DAG ($s=O(n)$) the time
      complexity is $O(n^2d^3+n^2d^2P+P^2)$.
  \end{enumerate}
\end{proposition}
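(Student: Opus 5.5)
We prove the four items by a cost accounting that follows the three stages of Algorithm~\ref{alg:full-hessian-dag}: (i) the forward/backward pass producing $f_v$ and $\delta_v$; (ii) the input-block recursion~\eqref{eq:Hf} for all pairs $(v,w)$; (iii) assembly of the parametric blocks via~\eqref{eq:Htheta}. Throughout we count one dense $d\times d$ matrix product as $O(d^3)$. We also assume memoization (the cache $C$ in Algorithm~\ref{alg:compute-input-hess}), so each block $H^f_{v,w}$ is computed once in reverse topological order.

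\textbf{Stage (ii), the dominant structural cost.} For a fixed pair $(v,w)$, term~(1) of~\eqref{eq:Hf} ranges over $|\Ch(v)|\,|\Ch(w)|\le s^2$ child pairs. Each summand $D_{u_1\gets v}^\top H^f_{u_1,u_2}D_{u_2\gets w}$ costs $O(d^3)$. Terms~(2)--(3) contract a $d\times d\times d$ tensor with $\delta_u$ for at most $s$ children, at $O(sd^3)$ dense and $O(sd)$ sparse cost.

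A naive count over all $n^2$ pairs gives $O(n^2s^2d^3)$, which is too large. We therefore reorganize the sum as in Remark~\ref{prop:one-directional-path}. First form the intermediate $Z_{u_1,w}=\sum_{u_2\in\Ch(w)}H^f_{u_1,u_2}D_{u_2\gets w}$. Then each edge $(v,u_1)$ contributes one product $D_{u_1\gets v}^\top Z_{u_1,w}$, so the per-node work is charged to its $\le s$ incident edges.

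I would then argue that only blocks actually needed by the parametric stage must be materialized. This is the row $\{H^f_{v,w}\}_w$ for each $v$, and it is obtained by propagating matrices of width at most $d$ along edges. The per-node cost is then $O(sd^3)$ for the tensor and Jacobian algebra, plus $O(sd^2\cdot P)$ for propagating curvature against the parameter Jacobians $D_w\in\mathbb{R}^{d\times p_w}$, summed over $w$ with $\sum_w p_w=P$. Summing over $n$ nodes yields $O(nsd^3+nsd^2P)$.

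\textbf{Stage (iii) and item 3.} Filling each entry of the $P\times P$ output is $O(1)$ amortized once the products $D_v^\top H^f_{v,w}$ are available, which gives the $P^2$ term. The second and third terms of~\eqref{eq:Htheta} add only per-node contributions already covered by the $nsd^2P$ bound. Storage of the output is $O(P^2)$; this dominates the intermediate blocks, whose total size is $O(n^2d^2)\le O(P^2)$ when $p_v\gtrsim d$. That establishes item~3.

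\textbf{Items 2 and 4.} For element-wise activations, $[T_{u;v}]_{i,j,k}=\sigma''(z_i)W_{ij}W_{ik}$ is a rank-one slice per $i$, so $\sum_i T_{i\bullet\bullet}\delta_i=W^\top\mathrm{diag}(\sigma''\odot\delta)W$. Moreover, the Jacobian factors are diagonal times a fixed $W$. I would exploit this so that each propagation step multiplies by a diagonal ($O(d)$ per column) rather than a dense matrix, which removes one factor $d$ from both structural terms and gives $O(nsd^2+nsdP+P^2)$. Item~4 is pure substitution of $s=O(n)$.

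The main obstacle is the stage~(ii) accounting. The paper's $s$ bounds $|\Pa(v)\cup\Ch(v)|$ but not the number of partners $w$. Avoiding the $n^2$ factor requires justifying that curvature can be propagated as matrices against the stacked parameter Jacobians, so that $P$ absorbs the sum over $w$, instead of materializing every pair block. I would first try to make this precise by treating $\sum_w H^f_{v,w}D_w$ as a single $d\times P$ object per node, updated edge by edge.
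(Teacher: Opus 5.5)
The gap is exactly the step you flag, and the tools you list do not close it. Your $Z_{u_1,w}$ regrouping only lowers the per-pair cost from $O(s^2d^3)$ to $O(sd^3)$. You still form $Z_{u_1,w}$ and $D_{u_1\gets v}^\top Z_{u_1,w}$ for all $\Theta(n^2)$ pairs, which gives $O(n^2sd^3)$. Also, ``the row $\{H^f_{v,w}\}_w$ for each $v$'' is every block, of width $\sum_w d_w$, not $d$. The paper's own argument reaches $O(nsd^3)$ by asserting that only $O(ns)$ pairs have $\Ch(v)\cup\Ch(w)\neq\varnothing$; this already fails for a chain, so there is nothing to borrow there.

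Your fallback is the right idea: it is forward-over-reverse differentiation, i.e.\ $P$ unit-vector HVPs run in parallel (Corollary~\ref{cor:hvp-complexity}). But the object with an edge-local recursion is $M_v:=\partial\delta_v/\partial\theta\in\mathbb{R}^{d_v\times P}$. Its recursion is not the one of Remark~\ref{prop:one-directional-path}, which is stated only for ancestor pairs. Differentiating $\delta_v=\sum_{u\in\Ch(v)}D_{u\gets v}^\top\delta_u$ gives, up to the direct loss term and with $T_{u;v,v}:=T_{u;v}$,
\[
  M_v=\sum_{u\in\Ch(v)}\Bigl(D_{u\gets v}^\top M_u
  +\sum_{x\in\Pa(u)}\Bigl(\sum_i\delta_{u,i}[T_{u;v,x}]_{i,\bullet,\bullet}\Bigr)J_x
  +\Bigl(\sum_i\delta_{u,i}\frac{\partial^2 f_{u,i}}{\partial f_v\,\partial\theta_u}\Bigr)E_u\Bigr),
\]
where $J_x:=\partial f_x/\partial\theta\in\mathbb{R}^{d_x\times P}$ is the \emph{total} parameter Jacobian and $E_u$ selects the coordinates of $\theta_u$. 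So you need a forward tangent sweep $J_x=\sum_{y\in\Pa(x)}D_{x\gets y}J_y+D_xE_x$; you only ever use the local $D_w$. The tensor sources then cost $O(d^2P)$ per (child, co-parent) pair, i.e.\ $O\bigl(\sum_u|\Pa(u)|^2d^2P\bigr)$ for dense tensors. Your ``$O(sd^3)$ tensor algebra'' term omits this, and it is not covered by $nsd^2P$ when fan-in is large.

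Two further steps fail as written. In stage~(iii), once $D_v^\top H^f_{v,w}$ (or $M_v$) is available, each entry of the parametric block is still an inner product of length $d$. Dense parameter Jacobians therefore give $\sum_v d_vp_vP=O(dP^2)$, not $O(P^2)$. ``$O(1)$ amortized'' requires structure such as the Kronecker form of $\partial(Wf)/\partial W$, for which $D_v^\top y$ costs $O(p_v)$; the same holds for the $\delta$-weighted parameter terms of~\eqref{eq:Htheta}.

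For item~2, $D_{u\gets v}=\mathrm{diag}(\sigma'(z_u))\,W_u$ still contains the dense $W_u$, so $D_{u\gets v}^\top M_u$ costs $O(d^2)$ per column whatever the activation. Element-wise activations make the tensors $T$ sparse, not the Jacobian products, so no factor $d$ disappears from the $nsd^2P$ term.

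To repair item~1, assume each node is a dense linear map on its concatenated inputs followed by an element-wise activation, with Kronecker-structured $D_v$. Then the forward-over-reverse sweep above costs $O(nsd^2+P)$ per direction, hence $O(nsd^2P+P^2)$ overall. That establishes the item~1 bound under this assumption. Item~2's sharper bound still does not follow from your argument.
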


\begin{proof}
  \textbf{1.\ Computing the input Hessian $H^f_{v,w}$:}

  By formula~\eqref{eq:Hf}, for each pair $(v,w)$:
  compute Jacobians $D_{u_1\gets v}$ and $D_{u_2\gets w}$ for
  all $(u_1,u_2)\!\in\!\Ch(v)\times\Ch(w)$:
  $O(s^2 d^2)$ per pair;
  compute blocks $H^f_{u_1,u_2}$ (recursively) and multiply
  $D_{u_1\gets v}^\top H^f_{u_1,u_2}D_{u_2\gets w}$:
  $O(s^2 d^3)$;
  compute mixed-derivative tensor contractions for
  $u\!\in\!\Ch(v)\cap\Ch(w)$: $O(s d^3)$.
  Per pair: $O(s^2 d^3)$.
  The number of pairs with non-empty
  $\Ch(v)\cup\Ch(w)$ is at most $O(ns)$, giving
  $O(ns^3 d^3)$ worst-case.
  For $s=O(1)$ (chains, trees) this reduces to $O(nd^3)$, and
  the overall cost is $O(nsd^3)$.

  \textbf{2.\ Computing the parametric Hessian
  $H_{\theta_v,\theta_w}$:}
  Total for all pairs: $O(nsdP)$.
  If $T_{u;v}$ are diagonal (element-wise activations), the
  cost reduces to $O(nsdP+P^2)$.

  \textbf{3.\ Assembly:} $O(P^2)$ for placing blocks.
  Total: $O(nsd^3+nsd^2P+P^2)$.
\end{proof}

\begin{corollary}[Summary of computational costs]
  \label{cor:complexity-summary}
  Let $n=|V|$, $P$ the total number of parameters,
  $d=\max_v d_v$, $s=\max_v|\Pa(v)\cup\Ch(v)|$.
  (1)~\textbf{Metrics $\mathcal{R}$, $\mathcal{C}$,
  $\mathcal{D}$:} the block
  $H^f_{v,w}\!\in\!\mathbb{R}^{d_v\times d_w}$ is computed
  without building the full $P\!\times\!P$ Hessian;
  when even the block cannot be stored, the Hutchinson
  stochastic estimator~\citep{avron2011randomized} is used:
  $\|H^f_{v,w}\|_F^2=\mathbb{E}_z[z^\top\!A^\top\!Az]$,
  where each $Az$ is one HVP, $O(P)$ time, $O(P)$ memory.
  For the stable rank~$\mathcal{D}$, one additionally needs
  $\|H\|_2^2$ via power iteration ($T$ iterations, $2T$ HVPs);
  total cost $O((m+2T)\cdot P)$
  (Algorithm~\ref{alg:stochastic-stable-rank}).
  (2)~\textbf{Full Hessian:}
  $O(nsd^3+nsd^2P+P^2)$;
  with diagonal tensors (ReLU): $O(nsd^2+nsdP+P^2)$.
  (3)~\textbf{HVP:}
  $O(T_{\text{fwd}}+T_{\text{bwd}})$;
  for fully connected layers: $O(P)$.
\end{corollary}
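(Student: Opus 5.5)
The statement to prove is Corollary~\ref{cor:complexity-summary}. I will treat it as a three-part summary and prove each item separately from results already stated. Items (1) and (3) both reduce to the cost of a single Hessian--vector product, so I establish (3) first.

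\textbf{Item (3): the HVP cost.} I invoke Corollary~\ref{cor:hvp-complexity} together with the recursive formula~\eqref{eq:recursive-hvp}.
The computation has three stages:
\begin{itemize}
\item A forward tangent pass~\eqref{eq:tangent-forward} propagates $\mathbf{r}_v=\sum_{w\in\Pa(v)}D_{v\gets w}\mathbf{r}_w$ (plus $D_v\mathbf{r}_{\theta_v}$). Its cost per edge is $O(d_ud_v)$, the same as the primal forward pass.
\item A backward pass computes $\delta_v$.
\item A second backward sweep applies $D_{u\gets v}^\top$ to $d_u$-vectors. It adds the tensor contractions $\sum_i[T_{u;v}]_{i,\bullet,\bullet}\delta_{u,i}\,\mathbf{r}_v$. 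These are never materialized, because they are evaluated as directional derivatives of the vector--Jacobian product.
\end{itemize}
Summing over edges gives a constant multiple of $T_{\text{fwd}}+T_{\text{bwd}}$. For fully connected layers $\sum_{(v,u)\in E}d_ud_v=O(P)$, since each weight matrix has exactly $d_ud_v$ entries.

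\textbf{Item (1): the metrics.} First, $H^f_{v,w}\in\mathbb{R}^{d_v\times d_w}$ is a sub-block. It can therefore be accessed through the restricted operator $z\mapsto H^f_{v,w}z$: embed $z$ as the $w$-component of a tangent vector and read off the $v$-component of $\mathrm{HVP}_v$ (Definition~\ref{def:hb-operator}). The full $P\times P$ matrix is never built. The transpose product $H^{f\top}_{v,w}q$ is obtained symmetrically by swapping the roles of $v$ and $w$, using $H^f_{w,v}=(H^f_{v,w})^\top$.

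For $\mathcal{R}$, the Hutchinson identity $\|A\|_F^2=\mathrm{tr}(A^\top A)=\mathbb{E}_z[\|Az\|^2]$ for Rademacher $z$ requires one HVP per probe. With $m$ probes this costs $O(mP)$ by item (3).

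For $\mathcal{C}$, the same procedure is applied to $H^f_{v,v}$ and $H^f_{w,w}$, which only changes the constant.

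For $\mathcal{D}$, I follow Algorithm~\ref{alg:stochastic-stable-rank}:
\begin{itemize}
\item Each of the $T$ power iterations on $A^\top A$ uses two products, $Aq$ and $A^\top p$, giving $2T$ HVPs.
\item The Hutchinson numerator adds $m$ HVPs.
\end{itemize}
The total is $O((m+2T)P)$ time. Memory is $O(P)$, since only the probe, the tangent vector and the current iterate, each of dimension at most $P$, are held at once.

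\textbf{Item (2): the full Hessian.} I quote Proposition~\ref{thm:computational-complexity}, parts 1 and 2, directly. These give the dense bound $O(nsd^3+nsd^2P+P^2)$. When element-wise activations make $T_{u;v}$ diagonal, every tensor contraction and every $D^\top HD$ product with a diagonal factor loses a factor of $d$, which gives $O(nsd^2+nsdP+P^2)$.

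\textbf{Main obstacle.} The delicate step is showing that the transpose product $H^{f\top}_{v,w}q$ and the block restriction each cost one HVP rather than $d_v$ of them. In particular, extracting the $v$-component of $\mathrm{HVP}(\mathbf{r})$ from a tangent supported on $w$ must be shown to involve no per-coordinate work. I would argue this by noting that the restriction and extraction are just zero-padding and slicing of a single length-$P$ vector. In the non-smooth case I would additionally rely on Proposition~\ref{prop:ad-ggn-equiv} so that the AD tensor terms are well defined.
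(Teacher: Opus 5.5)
Your skeleton matches the paper's. The paper gives no separate proof: the corollary collates Proposition~\ref{thm:computational-complexity} for item~(2), the Hutchinson remark with Remark~\ref{rem:stable-rank-estimation} and Algorithm~\ref{alg:stochastic-stable-rank} for item~(1), and Corollary~\ref{cor:hvp-complexity} for item~(3). The gap is in the step you add yourself and flag as the main obstacle: zero-padding and slicing do not give $H^f_{v,w}z$.

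\textbf{Parametric reading.} The $O(P)$ routine you price in item~(3) is the parametric HVP of \eqref{eq:recursive-hvp}. Feed it a direction supported on $\theta_w$ and slice at $\theta_v$, and you get $H_{\theta_v,\theta_w}\mathbf{r}_{\theta_w}$. By \eqref{eq:Htheta} this is $D_v^\top H^f_{v,w}D_w\mathbf{r}_{\theta_w}$ plus tensor terms, not the activation block applied to $z$.

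\textbf{Activation reading.} Under Definition~\ref{def:hb-operator} the vectors live in activation space, of length $\sum_u d_u$ rather than $P$. Corollary~\ref{cor:hvp-complexity} prices only tangents generated by the forward pass \eqref{eq:tangent-forward}, and a tangent equal to $z$ at $w$ and zero at the descendants of $w$ is not one of them. If you do propagate it forward, the defining sum $\sum_u H^f_{v,u}\mathbf{r}_u$ picks up $H^f_{v,u}D_{u\gets w}z$ for every descendant $u$ of $w$. The $u=\mathrm{out}$ term alone is $D_{out\gets v}^\top\nabla^2\mathcal{L}\,D_{out\gets w}z=H^{GN}_{v,w}z$, so the Gauss--Newton part is counted twice. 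Either way, the one-HVP cost of $z\mapsto H^f_{v,w}z$ is left unsupported.

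\textbf{The fix.} What is missing is differentiation with respect to the activation $f_w$ itself.
\begin{enumerate}
\item After the primal forward and backward passes, seed a forward-mode tangent $\dot f_w=z$ at node $w$, zero at every non-descendant of $w$.
\item Push it through $\mathrm{Desc}(w)$ with the edge Jacobians.
\item Differentiate the reverse recursion $\delta_v=\sum_{u\in\Ch(v)}D_{u\gets v}^\top\delta_u$ along this tangent.
\end{enumerate}
The result $\dot\delta_v$ is the derivative of $\nabla_{f_v}\mathcal{L}$ in the direction $z$ of $f_w$, i.e.\ $H^f_{v,w}z$. The pieces $D_{u\gets v}^\top\dot\delta_u$ carry the propagated curvature, and the pieces $\dot D_{u\gets v}^\top\delta_u$ carry the tensor terms; you can check this against Examples~\ref{ex:two-layer} and~\ref{ex:diamond}. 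This is a single reverse-over-forward sweep, so your edge count from item~(3) prices it at $O(T_{\text{fwd}}+T_{\text{bwd}})$ with no per-coordinate work. The product $A^\top p$ is the VJP of this linear map, or, in the smooth case, your $v\leftrightarrow w$ swap. With that substitution your Hutchinson and power-iteration counts give $O((m+2T)P)$ as stated.

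\textbf{Item~(2).} Drop the claim that the $D^\top HD$ products lose a factor of $d$. For $f_u=\sigma(W_uf_v+b_u)$ the Jacobian $D_{u\gets v}=\mathrm{diag}(\sigma'(z_u))W_u$ stays dense even when $T_{u;v}$ is diagonal (for ReLU it is zero a.e.), so only the tensor contractions get cheaper. The paper's proof of Proposition~\ref{thm:computational-complexity} invokes diagonal tensors only for the parametric term, so quote the proposition rather than supply this argument.
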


\section{Low-rank approximation and cost reduction}
\label{app:low-rank}

The low-rank structure of inter-layer blocks $H^f_{v,w}$ is a
\textbf{mathematical property of DAG architectures} following
from Theorem~\ref{thm:exponential-decay} on exponential
resonance decay and Corollary~\ref{cor:bottleneck} on the rank
bound.

\begin{definition}[Low-rank factorization]
  \label{def:low-rank-factorization}
  For a block
  $H^f_{v,w}\!\in\!\mathbb{R}^{d_v\times d_w}$, define the
  \textbf{rank-$r$ approximation}:
  \begin{equation}\label{eq:low-rank}
    H^f_{v,w}\approx\tilde{H}^f_{v,w}
    :=U_{v,w}S_{v,w}V_{v,w}^\top,
  \end{equation}
  where $U_{v,w}\!\in\!\mathbb{R}^{d_v\times r}$,
  $S_{v,w}\!\in\!\mathbb{R}^{r\times r}$ diagonal,
  $V_{v,w}\!\in\!\mathbb{R}^{d_w\times r}$, and
  $r\ll\min(d_v,d_w)$.
  The optimal Frobenius-norm approximation is given by
  truncated SVD.
\end{definition}

\begin{proposition}[Recursive factor update]
  \label{prop:recursive-lr-update}
  If $H^f_{u,u}=U_u S_u V_u^\top$ is a rank-$r$ factorization,
  then the GN component of $H^f_{v,w}$ inherits low-rank
  structure:
  \begin{equation}\label{eq:lr-propagation}
    H^{GN}_{v,w}
    =\sum_u D_{u\gets v}^\top U_u S_u V_u^\top D_{u\gets w}
    =\tilde{U}\tilde{S}\tilde{V}^\top,
  \end{equation}
  where
  $\tilde{U}=[D_{u_1\gets v}^\top U_{u_1},\ldots]$ has at
  most $kr$ columns ($k=|\Ch(v)|\cdot|\Ch(w)|$).
  A fixed rank~$r$ is maintained via repeated SVD truncation.
\end{proposition}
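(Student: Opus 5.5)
The plan is to start from the unrolled form of the GN component, $H^{GN}_{v,w}=\sum_{u_1\in\Ch(v)}\sum_{u_2\in\Ch(w)}D_{u_1\gets v}^\top H^{GN}_{u_1,u_2}D_{u_2\gets w}$, which holds by Theorem~\ref{thm:canonical-decomposition} (ignoring the direct loss term $\partial^2\mathcal{L}/\partial f_v\partial f_w$, which is zero for internal nodes, or else absorbing it as an extra rank-bounded summand). The statement is read with the children being the nodes $u$ whose diagonal blocks are factored: $H^f_{u,u}=U_uS_uV_u^\top$, and the same factorization is used for the GN part of the block entering the recursion. The cross blocks $H^{GN}_{u_1,u_2}$ with $u_1\neq u_2$ also need factors. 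For those I take left factor $U_{u_1}$-type and right factor $V_{u_2}$-type by factoring each $H^{GN}_{u_1,u_2}=U_{u_1,u_2}S_{u_1,u_2}V_{u_1,u_2}^\top$ of rank at most $r$, which is the natural reading of the $k=|\Ch(v)|\cdot|\Ch(w)|$ count.

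The main step is linear-algebraic. Substituting the factorizations gives
\[
H^{GN}_{v,w}=\sum_{(u_1,u_2)}\bigl(D_{u_1\gets v}^\top U_{u_1,u_2}\bigr)S_{u_1,u_2}\bigl(D_{u_2\gets w}^\top V_{u_1,u_2}\bigr)^\top .
\]
I then stack horizontally: $\tilde U=[D_{u_1\gets v}^\top U_{u_1,u_2}]_{(u_1,u_2)}\in\mathbb{R}^{d_v\times kr}$, $\tilde V$ analogously in $\mathbb{R}^{d_w\times kr}$, and $\tilde S=\mathrm{blkdiag}(S_{u_1,u_2})$. Block matrix multiplication then gives exactly $\tilde U\tilde S\tilde V^\top$, so the rank is at most $kr$. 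This $\tilde S$ is block diagonal rather than diagonal. To get the diagonal SVD form required by Definition~\ref{def:low-rank-factorization}, I would take thin QR factorizations $\tilde U=Q_UR_U$ and $\tilde V=Q_VR_V$, compute the SVD of the small $kr\times kr$ core $R_U\tilde SR_V^\top$, and absorb the orthogonal factors. This costs $O((d_v+d_w)(kr)^2+(kr)^3)$ and never forms a $d_v\times d_w$ matrix.

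The final step is maintaining rank $r$. Truncating the SVD of the core to its top $r$ singular triplets gives, by Eckart--Young, the best rank-$r$ Frobenius approximation of the propagated block. Applying this at every node in reverse topological order keeps all stored factors at width $r$, so the recursion closes.

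I expect the main obstacle to be stating honestly what is inherited. If only diagonal blocks are low rank, the cross terms are not controlled. Also, the truncation error compounds along the recursion, by the same mechanism as Theorem~\ref{thm:error-accumulation}, where each level adds an $\epsilon_T$-type error scaled by $\rho^2$. I would therefore present exactness as the claim that the rank is at most $kr$ before truncation. I would state truncation only as an approximation, bounded per step by the discarded singular values.
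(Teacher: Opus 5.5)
Your linear algebra is correct: substituting factorizations into the GN recursion, stacking the $D^\top U$ factors, recompressing by thin QR plus a small SVD, and truncating via Eckart--Young. It is also essentially all the paper offers, since the proposition has no separate proof and the displayed identity is the whole argument.

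The gap is that you do not derive the result from its hypothesis. You additionally assume a rank-$r$ factorization of every cross block $H^{GN}_{u_1,u_2}$. You justify this by asserting that low rank of the diagonal blocks leaves the cross terms uncontrolled. Once the diagonal factorizations are read as GN blocks, as you do, that assertion is false. The missing idea is the Gram structure behind Remark~\ref{rem:gn-psd-block}. Write $\nabla^2\mathcal{L}=CC^\top\succeq 0$ and $P_u:=D_{out\gets u}^\top C$, so that $H^{GN}_{u_1,u_2}=P_{u_1}P_{u_2}^\top$ for all pairs. Then $\mathrm{col}(H^{GN}_{u,u})=\mathrm{col}(P_u)$. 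Hence $\mathrm{rank}\,H^{GN}_{u,u}\le r$ gives $P_u=U_uU_u^\top P_u$, with $U_u$ the singular vectors of the diagonal block ($V_u=U_u$ for a PSD block), and therefore
\[
H^{GN}_{u_1,u_2}=U_{u_1}M_{u_1,u_2}U_{u_2}^\top,\qquad M_{u_1,u_2}:=U_{u_1}^\top H^{GN}_{u_1,u_2}U_{u_2}\in\mathbb{R}^{r\times r}.
\]
This is why the paper's $\tilde U$ is indexed by a single child. Take $\tilde U=[D_{u_1\gets v}^\top U_{u_1}]_{u_1\in\Ch(v)}$, $\tilde V=[D_{u_2\gets w}^\top U_{u_2}]_{u_2\in\Ch(w)}$ and the dense core $\tilde M=[M_{u_1,u_2}]$. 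Then $H^{GN}_{v,w}=\tilde U\tilde M\tilde V^\top$ exactly, with rank at most $\min(|\Ch(v)|,|\Ch(w)|)\,r\le kr$, from the diagonal hypothesis alone. Your QR/SVD recompression then applies verbatim to this core. Your caution about cross terms is warranted only for an indefinite $\nabla^2\mathcal{L}$.

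One extra assumption is genuinely needed, and you make it silently when you say the factorization of $H^f_{u,u}$ is ``used for the GN part''. That assumption is $H^f_{u,u}=H^{GN}_{u,u}$, i.e.\ $H^T_{u,u}=0$, which holds in the piecewise-linear case of Proposition~\ref{prop:ad-ggn-equiv}. Alternatively, the hypothesis can be placed on $H^{GN}_{u,u}$ directly. Without it, low rank of $H^f_{u,u}$ says nothing about $H^{GN}$. For $f_{out}=\sqrt{2f_u}$ (with $f_u>0$) and $\mathcal{L}(z)=z^2/2$, one has $H^f_{u,u}=0$ but $H^{GN}_{u,u}=1/(2f_u)\neq 0$. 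State this explicitly. Your other choices are right: reading the displayed single sum as the full double sum over $\Ch(v)\times\Ch(w)$, and treating truncation as an approximation whose error compounds.
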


\begin{theorem}[Approximation error bound]
  \label{thm:lr-error-bound}
  Let
  $\sigma_1\geq\sigma_2\geq\cdots\geq\sigma_{d_{\min}}$ be the
  singular values of $H^f_{v,w}$
  ($d_{\min}=\min(d_v,d_w)$).
  Then for the rank-$r$ approximation:
  \begin{equation}\label{eq:lr-error}
    \|H^f_{v,w}-\tilde{H}^f_{v,w}\|_F
    =\sqrt{\sum_{i=r+1}^{d_{\min}}\sigma_i^2}
    \leq\sqrt{d_{\min}-r}\cdot\sigma_{r+1}.
  \end{equation}
  In particular, if
  $\sigma_{r+1}\leq\varepsilon\sigma_1$:
  \[
    \frac{\|H^f_{v,w}-\tilde{H}^f_{v,w}\|_F}
    {\|H^f_{v,w}\|_F}
    \leq\varepsilon\sqrt{d_{\min}-r}.
  \]
\end{theorem}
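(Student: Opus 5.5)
This is the Eckart--Young--Mirsky theorem for the Frobenius norm, followed by an elementary bound on the tail of the singular values.

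\emph{Step 1 (SVD).} Write the singular value decomposition $H^f_{v,w}=\sum_{i=1}^{d_{\min}}\sigma_i\,u_i v_i^\top$ with orthonormal $\{u_i\}\subset\mathbb{R}^{d_v}$ and $\{v_i\}\subset\mathbb{R}^{d_w}$. By Definition~\ref{def:low-rank-factorization}, $\tilde H^f_{v,w}$ is the truncated SVD $\sum_{i\le r}\sigma_i u_i v_i^\top$, i.e.\ $U_{v,w}=[u_1,\dots,u_r]$, $S_{v,w}=\mathrm{diag}(\sigma_1,\dots,\sigma_r)$, $V_{v,w}=[v_1,\dots,v_r]$.

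\emph{Step 2 (the equality).} The residual is $\sum_{i>r}\sigma_i u_i v_i^\top$. The rank-one matrices $u_iv_i^\top$ are orthonormal in the Hilbert--Schmidt inner product, since
\[
\operatorname{tr}\bigl((u_iv_i^\top)^\top u_jv_j^\top\bigr)=(u_i^\top u_j)(v_i^\top v_j)=\delta_{ij}.
\]
Pythagoras then gives $\|H^f_{v,w}-\tilde H^f_{v,w}\|_F^2=\sum_{i>r}\sigma_i^2$. If one also wants to record that this truncation is optimal among all rank-$r$ matrices, as the definition asserts, I would cite Eckart--Young--Mirsky or derive it from Weyl's inequality $\sigma_{i+r}(A)\le\sigma_i(A-B)$ for $\operatorname{rank}B\le r$. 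The optimality is not needed for the displayed equality.

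\emph{Step 3 (the inequalities).} Since $\sigma_i\le\sigma_{r+1}$ for $i\ge r+1$, the tail sum is at most $(d_{\min}-r)\sigma_{r+1}^2$; taking square roots gives the stated bound. For the relative form, assume $\sigma_{r+1}\le\varepsilon\sigma_1$ and use $\|H^f_{v,w}\|_F\ge\sigma_1$, which holds because $\|\cdot\|_F^2=\sum_i\sigma_i^2$. Then
\[
\frac{\|H^f_{v,w}-\tilde H^f_{v,w}\|_F}{\|H^f_{v,w}\|_F}\le\frac{\sqrt{d_{\min}-r}\,\varepsilon\sigma_1}{\sigma_1}=\varepsilon\sqrt{d_{\min}-r}.
\]
The case $H^f_{v,w}=0$ is trivial and is excluded from the ratio.

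The only step with real content is the optimality claim in Step~2 (Eckart--Young--Mirsky), and it is standard. The equality and both bounds follow directly from the orthogonality of the SVD components, so I do not expect any serious obstacle.
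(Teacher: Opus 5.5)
Your proof is correct and follows the paper's argument: you take $\tilde H^f_{v,w}$ to be the truncated SVD, compute the residual's Frobenius norm as the tail sum $\sum_{i>r}\sigma_i^2$, and bound each tail term by $\sigma_{r+1}$. You are somewhat more explicit than the paper in two places: the paper cites Eckart--Young--Mirsky for the equality, whereas you correctly observe that orthogonality of the SVD components is enough; and the paper does not spell out the relative form, which your use of $\|H^f_{v,w}\|_F\geq\sigma_1$ supplies.
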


\begin{proof}
  By the Eckart--Young--Mirsky theorem, truncated SVD gives the
  best rank-$r$ approximation in Frobenius norm.  The error
  equals the Frobenius norm of the discarded part:
  $\|H^f_{v,w}-\tilde{H}^f_{v,w}\|_F^2
  =\sum_{i=r+1}^{d_{\min}}\sigma_i^2$.
  The bound via $\sigma_{r+1}$ follows from
  $\sigma_i\leq\sigma_{r+1}$ for $i>r$.
\end{proof}

\begin{corollary}[Rank selection]
  \label{cor:rank-selection}
  For relative accuracy~$\varepsilon$ it suffices to choose
  rank~$r$ such that
  \[
    r=\min\Bigl\{k:
      \frac{\sigma_{k+1}}{\sigma_1}
    \leq\frac{\varepsilon}{\sqrt{d_{\min}-k}}\Bigr\}.
  \]
  Due to exponential resonance decay
  (Theorem~\ref{thm:exponential-decay}), for pairs $(v,w)$
  with large $\mathrm{dist}(v,w)$ a very small rank $r=O(1)$
  suffices.
\end{corollary}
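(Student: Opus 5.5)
The statement is Corollary~\ref{cor:rank-selection}, and I would derive it from Theorem~\ref{thm:lr-error-bound}, whose relative-error bound is stated only for the particular index $r$. The plan has two parts. First, show that the chosen $r$ achieves relative accuracy $\varepsilon$. Second, justify the remark that $r=O(1)$ for distant pairs, using Theorem~\ref{thm:exponential-decay}.

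\textbf{Sufficiency.} Let $k=r$ be the minimizer in the definition, so that
\[
\sigma_{r+1}/\sigma_1\le \varepsilon/\sqrt{d_{\min}-r}.
\]
By Eckart--Young--Mirsky, as in Theorem~\ref{thm:lr-error-bound}, the truncation error is
\[
\|H^f_{v,w}-\tilde H^f_{v,w}\|_F=\Bigl(\sum_{i>r}\sigma_i^2\Bigr)^{1/2}\le\sqrt{d_{\min}-r}\,\sigma_{r+1}.
\]
The singular values are ordered, so $\sigma_i\le\sigma_{r+1}$ for every $i>r$. Since $\|H^f_{v,w}\|_F\ge\sigma_1$, the relative error is at most $\sqrt{d_{\min}-r}\,\sigma_{r+1}/\sigma_1\le\varepsilon$.

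Two remarks complete this part. The set of admissible $k$ is nonempty, because $k=d_{\min}$ gives $\sigma_{d_{\min}+1}:=0$; I would adopt that convention, or else take $k=d_{\min}-1$. Also, $H^f_{v,w}=0$ is a degenerate case in which $r=0$ holds trivially. Minimality of $r$ is not needed for sufficiency; it only makes the choice economical.

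\textbf{Distant pairs.} The claimed $r=O(1)$ is the main obstacle, because exponential decay of the norm does not by itself control the singular value ratios. My plan is to read the claim as an \emph{absolute} accuracy statement. Theorem~\ref{thm:exponential-decay} gives
\[
\sigma_{k+1}\le\|H^f_{v,w}\|_F\le C^{GN}(s\rho)^{\mathrm{dist}(v,w)}
\]
in the piecewise-linear case; for smooth activations it holds for the GN part. So once $\mathrm{dist}(v,w)\ge\log\bigl(\varepsilon/(C^{GN}\sqrt{d_{\min}})\bigr)/\log(s\rho)$, even $r=0$ meets an absolute tolerance $\varepsilon$ measured against the scale $C^{GN}$ or $\|H^f_{v,v}\|_F$, as in Definition~\ref{def:interaction-radius}.

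For the relative statement, I would instead invoke the rank bounds, Proposition~\ref{thm:rank-propagation}, Corollary~\ref{cor:bottleneck} and Remark~\ref{rem:double-rank-bound}. These give $\mathrm{rank}(H^{GN}_{v,w})\le\min(d_u,K-1)$, so $\sigma_{k+1}=0$ for $k\ge\min(d_u,K-1)$ in the GN regime. That yields $r\le K-1=O(1)$ independently of width.

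I would state explicitly that the relative version rests on this rank argument rather than on the decay theorem. Any tensor perturbation $H^T$ would be handled by Weyl's inequality, $\sigma_{k+1}(H^f)\le\sigma_{k+1}(H^{GN})+\|H^T\|_2$.
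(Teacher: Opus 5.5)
Your sufficiency argument is exactly the step the paper leaves implicit. You apply Theorem~\ref{thm:lr-error-bound} with tolerance $\varepsilon/\sqrt{d_{\min}-r}$, using $\sigma_i\le\sigma_{r+1}$ for $i>r$ and $\|H^f_{v,w}\|_F\ge\sigma_1$.

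You depart from the paper on the second sentence, where the paper offers only the appeal to Theorem~\ref{thm:exponential-decay}. Your objection to that appeal is correct. The ratio $\sigma_{k+1}/\sigma_1$ is scale-invariant, and exponential norm decay is compatible with a flat spectrum. For example, take a ReLU chain in its all-active region with edge Jacobians $cQ_i$, where $Q_i$ is orthogonal and $c<1$, equal widths $d=d_{\mathrm{out}}$, and MSE loss. Then $H^f_{v,w}=H^{GN}_{v,w}$ is a scalar multiple of an orthogonal matrix, and its norm is exponentially small in $\mathrm{dist}(v,w)$. Yet the relative criterion forces $r=d_{\min}$ whenever $\varepsilon<1$.

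So only a small output dimension can give $r=O(1)$ in the relative sense, and that is what your rank route uses. The factorization $H^{GN}_{v,w}=D_{out\gets v}^\top\nabla^2\mathcal{L}\,D_{out\gets w}$ passes through $\mathbb{R}^{d_{\mathrm{out}}}$. This gives $r\le d_{\mathrm{out}}$, and $r\le\min(d_u,K-1)$ under a bottleneck with softmax cross-entropy, for every pair and independently of distance.

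Your absolute reading instead keeps the paper's distance dependence and fits Definition~\ref{def:interaction-radius}. But it changes the accuracy notion and actually yields $r=0$ beyond a threshold distance, so it expresses bandedness rather than low rank. Each reading proves a correct version of the claim. Neither proves the paper's sentence read literally as a relative statement, which the example shows is false in general.

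There are two slips to fix. First, your fallback ``or else take $k=d_{\min}-1$'' is not admissible in general. It needs $\sigma_{d_{\min}}\le\varepsilon\sigma_1$, which fails in the example above. Nonemptiness should instead rest on declaring $k=d_{\min}$ admissible by convention, with $\varepsilon/\sqrt{0}:=+\infty$; the truncation is then exact.

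Second, Weyl's inequality does not by itself handle the tensor part. It gives $\sigma_{k+1}(H^f_{v,w})\le\|H^T_{v,w}\|_2$ for $k\ge\mathrm{rank}\,H^{GN}_{v,w}$. That bounds $r$ only under an extra hypothesis such as $\sqrt{d_{\min}}\,\|H^T_{v,w}\|_2\le\varepsilon\,\sigma_1(H^f_{v,w})$. Nothing in the paper supplies this hypothesis. $H^T$ need not be low-rank: its diagonal source term $W^\top\mathrm{diag}(\sigma''(z)\odot\delta_u)W$ generically has rank $\min(d_u,d_v)$. It can also dominate $H^f$, as in Exp.~5. So state the relative version only for $H^{GN}$ or for piecewise-linear networks. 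That is also the only regime in which Theorem~\ref{thm:exponential-decay} controls $H^f_{v,w}$ at all.
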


\begin{remark}[Complexity consequences]
  The low-rank structure guaranteed by
  Theorem~\ref{thm:exponential-decay} and
  Corollary~\ref{cor:bottleneck} automatically reduces the cost
  of working with inter-layer blocks:
  storage $O(d_v d_w)\to O((d_v+d_w)r)$;
  multiplication $O(d_v d_w)\to O((d_v+d_w)r)$.
  Combined with the block-banded structure
  (Corollary~\ref{cor:bandedness}), the full Hessian is stored
  in $O(n\cdot k_\varepsilon\cdot d\cdot r)$.
\end{remark}

\begin{remark}[Cost reduction methods]
  Approaches for reducing the computational cost of the full
  Hessian:
  (1)~\textbf{Block approximation:} computing only diagonal
  blocks $H_{\theta_v,\theta_v}$ reduces cost to
  $O(nd^3+Pd^2)$.
  (2)~\textbf{Low-rank approximation:} approximating
  off-diagonal blocks by rank-$r$ products reduces cost to
  $O(n^2d^3+n^2d^2r+Pr)$.
  (3)~\textbf{Gauss--Newton approximation:} using only the
  first term in~\eqref{eq:Hf} and~\eqref{eq:Htheta} reduces
  cost and guarantees PSD.
  (4)~\textbf{Kronecker factorization:} representing matrix
  blocks as Kronecker products of smaller matrices.
\end{remark}

\begin{remark}[Stability and invariance]
  The relative recursion error under local perturbations
  of Jacobians ($\epsilon_D$) and tensors ($\epsilon_T$) grows
  linearly with depth:
  $\varepsilon_{v,w}
  \leq\mathrm{dist}(v,w)(2\epsilon_D+\epsilon_T)
  +O(\epsilon^2)$.
  At standard machine precision the protocol is stable for
  networks of depth $L<10^6$ (float32).
  Geometric coupling $\mathcal{C}(v,w)$ is invariant under
  weight rescaling
  $W_v\mapsto\alpha W_v$,
  $W_w\mapsto\alpha^{-1}W_w$.
  Detailed analysis: \ref{app:stability}.
\end{remark}

\begin{remark}[Convergence of optimization methods]
  Using the full Hessian (or its regularized Clarke analog
  $H_t+\lambda I$) in Newton's method ensures
  quadratic/superlinear convergence in a neighborhood of
  regular minima.
  Details: \ref{app:convergence}.
\end{remark}

\section{Continuity of the Clarke Hessian}
\label{app:clarke-continuity}

\begin{theorem}[Hausdorff continuity for piecewise-linear
  networks]
  Let $F:\mathbb{R}^d\!\to\!\mathbb{R}^m$ be a
  piecewise-linear neural network (ReLU activations),
  $\ell\!\in\!C^2(\mathbb{R}^m,\mathbb{R})$,
  $\mathcal{L}=\ell\circ F$.
  Then the set-valued map
  $x\mapsto\partial_C^2\mathcal{L}(x)$ is upper
  semi-continuous in the Hausdorff metric.
\end{theorem}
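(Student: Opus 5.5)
We need to prove upper semicontinuity of $x\mapsto\partial_C^2\mathcal{L}(x)$, where $\partial_C^2\mathcal{L}(x)$ is the Clarke generalized Hessian of $\mathcal{L}=\ell\circ F$. We take this to be the convex hull of limits $\lim_n\nabla^2\mathcal{L}(x_n)$ over sequences $x_n\to x$ in the full-measure set $\Omega$ where $\mathcal{L}$ is twice differentiable. Equivalently, it is the Clarke Jacobian of the locally Lipschitz map $g=\nabla\mathcal{L}$; Step~3 of the proof of Theorem~\ref{thm:clarke_hessian_relu} shows that $g$ agrees a.e.\ with a piecewise-affine composition. The plan is to reduce the claim to the classical closed-graph argument for Clarke Jacobians. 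That argument has two ingredients: the sets are uniformly bounded near $x_0$, and the graph of the map is closed.

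\textbf{Step 1 (local boundedness).} Fix $x_0$ and a ball $B=B(x_0,1)$. Since $F$ is piecewise linear with finitely many linear regions $\{A_r x+b_r\}$, on $\Omega\cap B$ we have $\nabla^2\mathcal{L}(x)=A_r^\top\nabla^2\ell(A_rx+b_r)A_r$ (Step~3 of the proof of Theorem~\ref{thm:clarke_hessian_relu}). The continuity of $\nabla^2\ell$ on the compact set $F(\bar B)$ then gives a uniform bound $\|\nabla^2\mathcal{L}(x)\|\le K$ for all $x\in\Omega\cap B$. This bound passes to limits and to convex hulls, so $\partial_C^2\mathcal{L}(x)\subseteq\{\|M\|\le K\}$ for all $x\in B$.

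\textbf{Step 2 (closed graph).} Let $x_k\to x_0$ and $M_k\in\partial_C^2\mathcal{L}(x_k)$ with $M_k\to M$; we must show $M\in\partial_C^2\mathcal{L}(x_0)$. First treat the case where each $M_k$ is an extreme limiting element. Then $M_k=\lim_n\nabla^2\mathcal{L}(y_{k,n})$ with $y_{k,n}\to x_k$ and $y_{k,n}\in\Omega$. A diagonal choice of $y_k$ with $|y_k-x_k|<1/k$ and $\|\nabla^2\mathcal{L}(y_k)-M_k\|<1/k$ yields $y_k\to x_0$ and $\nabla^2\mathcal{L}(y_k)\to M$, so $M$ is a limiting element at $x_0$. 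For general $M_k$, we use Carathéodory to write $M_k=\sum_{i=0}^{N}\lambda_{k,i}M_{k,i}$ with $N=\dim(\mathrm{Sym})$ and each $M_{k,i}$ a limiting element. Step~1 makes the sequences $M_{k,i}$ bounded, and the weights $\lambda_{k,i}$ lie in a compact simplex. Passing to a common subsequence, the limiting-element case gives $M_{k,i}\to M_i\in\partial_C^2\mathcal{L}(x_0)$ and $\lambda_{k,i}\to\lambda_i$, hence $M=\sum_i\lambda_iM_i$ lies in the convex hull. This step is essentially Theorem~\ref{thm:clarke-stability} extended to convex combinations.

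\textbf{Step 3 (usc in the Hausdorff sense).} Suppose usc fails at $x_0$. Then there exist $\varepsilon>0$, points $x_k\to x_0$ and $M_k\in\partial_C^2\mathcal{L}(x_k)$ with $\mathrm{dist}(M_k,\partial_C^2\mathcal{L}(x_0))\ge\varepsilon$. Step~1 supplies a convergent subsequence $M_k\to M$, and Step~2 gives $M\in\partial_C^2\mathcal{L}(x_0)$. This contradicts the distance bound, which passes to the limit. The excess $\sup_{M'\in\partial_C^2\mathcal{L}(x)}\mathrm{dist}(M',\partial_C^2\mathcal{L}(x_0))$ therefore tends to $0$, which is Hausdorff upper semicontinuity.

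The main obstacle is Step~1 together with pinning down the right definition of $\partial_C^2\mathcal{L}$. Near region boundaries, the Hessian on each adjacent region is a different matrix $A_r^\top\nabla^2\ell A_r$, and nothing forces these to agree. Uniform boundedness must therefore come from the finiteness of the set of regions and the local boundedness of $\nabla^2\ell$, not from any continuity of $\nabla^2\mathcal{L}$. We will state explicitly that only finitely many regions meet the compact ball $\bar B$, which holds for finite ReLU networks by Step~2 of the proof of Theorem~\ref{thm:clarke_hessian_relu}.
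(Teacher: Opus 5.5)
Your argument is correct and follows the paper's route, written out in full. Local boundedness comes from the finitely many linear regions and the continuity of $\nabla^2\ell$; the graph of the limiting-Hessian map is closed; and a locally bounded map with closed graph is Hausdorff upper semicontinuous, a standard fact the paper cites (Rockafellar--Wets, Thm.~5.19) and you prove via the diagonal and Carath\'eodory arguments.

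Two side remarks should be corrected, though neither affects your proof. First, drop the aside calling $\partial_C^2\mathcal{L}$ the Clarke Jacobian of a locally Lipschitz $g=\nabla\mathcal{L}$: for ReLU networks $\nabla\mathcal{L}$ is in general not even continuous (take $\ell(y)=y$, $F(x)=\max\{0,x\}$). Second, finiteness of the linear regions holds unconditionally and does not follow from the rank-conditioned Step~2.
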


\begin{proof}[Proof sketch]
  $\partial_C^2\mathcal{L}(x)
  =\mathrm{co}\{\lim\nabla^2\mathcal{L}(x_k)
  :x_k\!\to\!x,\,x_k\!\in\!\mathcal{D}\}$.
  For piecewise-linear~$F$ the number of linear regions is
  finite~\citep{serra2018bounding}; inside each region
  $\nabla^2\mathcal{L}$ is continuous.
  On a region boundary $\partial_C^2\mathcal{L}(x)$ is the
  convex hull of finitely many limits (one per adjacent
  region), and upper semi-continuity follows by standard
  arguments~\citep[Thm~5.19]{rockafellar1998}.
  Continuity of the AD-Hessian at smooth points given
  continuity of this set follows
  from~\citep[Thm~1.17]{rockafellar1998}.
\end{proof}

\section{Example: Attention block (\texorpdfstring{Softmax~$\times$~Value}{Softmax x Value})}
\label{app:attention-example}

To confirm the applicability of the framework to Transformer
architectures, we analyze a single-head Attention block as a
DAG subgraph.

\subsection{DAG representation}

Single-head Attention:
input $X\!\in\!\mathbb{R}^{n\times d}$, three linear nodes:
\begin{gather*}
  Q=XW_Q,\quad K=XW_K,\quad V=XW_V,\\
  W_Q,W_K\in\mathbb{R}^{d\times d_k},\;
  W_V\in\mathbb{R}^{d\times d_v}.
\end{gather*}
Softmax node:
$A=\sigma(QK^\top/\sqrt{d_k})\in\mathbb{R}^{n\times n}$,
where $\sigma(Z)_{ij}=\exp(Z_{ij})/\sum_k\exp(Z_{ik})$.
Output node: $O=AV\in\mathbb{R}^{n\times d_v}$.

\begin{remark}[DAG structure of Attention]
  In terms of our formalism, the Attention block is a DAG with
  5 nodes:
  $\{v_{\mathrm{in}},v_Q,v_K,v_V,v_{\mathrm{out}}\}$,
  where $v_{\mathrm{out}}$ combines Softmax and multiplication
  by~$V$.
  Parameters: $\theta_Q=W_Q$, $\theta_K=W_K$,
  $\theta_V=W_V$.
  Parents of $v_{\mathrm{out}}$:
  $\Pa(v_{\mathrm{out}})=\{v_Q,v_K,v_V\}$.
\end{remark}

\subsection{Softmax Jacobian and Hessian}

The Softmax Jacobian w.r.t.\ logits
$z\!\in\!\mathbb{R}^n$ (fixing a row of
$QK^\top/\sqrt{d_k}$):
\begin{equation}\label{eq:softmax-jacobian}
  \frac{\partial\sigma_i}{\partial z_j}
  =\sigma_i(\delta_{ij}-\sigma_j)=:S_{ij},
\end{equation}
where $S=\mathrm{diag}(\sigma)-\sigma\sigma^\top$ is a
dense $n\!\times\!n$ matrix.
The Hessian (tensor~$T$ in our notation) for the $i$-th
output:
\[\label{eq:softmax-hessian}
  [T_{\sigma;z}]_{i,j,k}
  =\frac{\partial^2\sigma_i}{\partial z_j\partial z_k}
  =\sigma_i\bigl(\delta_{ij}\delta_{ik}
    -\delta_{ij}\sigma_k-\delta_{ik}\sigma_j
  -\delta_{jk}\sigma_j+2\sigma_j\sigma_k\bigr).
\]

\begin{remark}[Density of Softmax cross-blocks]
  Since $\sigma_i>0$ for all~$i$, both $S$ and
  $T_{\sigma;z}$ are dense, meaning Softmax creates dense
  Hessian cross-blocks linking all sequence positions.
  This qualitatively distinguishes Attention from ReLU layers,
  where $T$ is diagonal.
\end{remark}

\subsection{Cross-blocks between Q, K, and V}

Applying formula~\eqref{eq:Hf} to node $v_{\mathrm{out}}$
with $\Pa(v_{\mathrm{out}})=\{v_Q,v_K,v_V\}$:

\begin{proposition}[Attention cross-blocks]
  \label{prop:attention-cross-blocks}
  Cross-blocks of the \emph{input} Hessian for the Attention
  block:
  \begin{enumerate}
    \item $H^f_{Q,V}$: curvature from \emph{first} derivatives
      (Softmax Jacobian~$S$, multiplied by~$V$).
      Since $O\!=\!AV$, differentiation w.r.t.~$Q$ passes
      through~$A$, w.r.t.~$V$ directly.
      The block is \emph{dense} (all entries nonzero) along the
      positional index.
    \item $H^f_{Q,K}$: both affect~$A$ through $QK^\top$;
      the cross-block contains the \emph{tensor} component
      $T_{\sigma;z}$ (Softmax Hessian), dense along the
      positional index.
      \textbf{This is the main source of~$H^T$} inside the
      Attention block.
    \item $H^f_{K,V}$: analogous to $H^f_{Q,V}$ (symmetric
      role of~$K$ in $QK^\top$).
  \end{enumerate}
\end{proposition}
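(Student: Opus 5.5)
Within the Attention DAG, treat $v_Q,v_K,v_V$ as three parents of the single node $v_{\mathrm{out}}$. Compute each cross-block by applying the recursion~\eqref{eq:Hf} one level up from $v_{\mathrm{out}}$. Since $\Ch(v_Q)=\Ch(v_K)=\Ch(v_V)=\{v_{\mathrm{out}}\}$, term~(1) collapses to the single GN summand $D_{\mathrm{out}\gets v}^\top H^f_{\mathrm{out},\mathrm{out}}D_{\mathrm{out}\gets w}$, and term~(3) does not occur for $v\neq w$. So everything reduces to the mixed tensors $T_{\mathrm{out};Q,V}$, $T_{\mathrm{out};Q,K}$, $T_{\mathrm{out};K,V}$ contracted with $\delta_{\mathrm{out}}$. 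Item by item, I will write $O=\sigma(Z)V$ with $Z=QK^\top/\sqrt{d_k}$ and take derivatives entrywise, row by row of $Z$. The Jacobian $S$ from~\eqref{eq:softmax-jacobian} handles first derivatives, and the Softmax Hessian $T_{\sigma;z}$ handles second derivatives.

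\textbf{Item~1, $(Q,V)$.} $O$ is bilinear in $(A,V)$, and $A$ depends on $Q$ but not on $V$. Hence $\partial^2 O_{a c}/\partial Q_{a'b}\,\partial V_{a''c'}=\delta_{cc'}\,\partial A_{a a''}/\partial Q_{a'b}$. By the chain rule through $Z$, this is $\delta_{cc'}\delta_{aa'}S^{(a)}_{a a''\!,\,\cdot}K_{\cdot b}/\sqrt{d_k}$, where $S^{(a)}$ is the Softmax Jacobian of row~$a$. The mixed tensor therefore contains only \emph{first} derivatives of Softmax, namely $S$ multiplied by the entries of $K$ and contracted with the factor coming from $V$ (through $\delta_{\mathrm{out}}$). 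Density along the positional index $a''$ then follows from $S_{ij}=\sigma_i(\delta_{ij}-\sigma_j)$ with $\sigma_i>0$: every entry is generically nonzero. I will add the GN summand to obtain the stated form. Item~3, $(K,V)$, is the same computation with $Q\leftrightarrow K$, since $Z$ is bilinear and $\partial Z_{aj}/\partial K_{jb}=Q_{ab}/\sqrt{d_k}$ plays the symmetric role.

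\textbf{Item~2, $(Q,K)$, the main step.} Here $\partial^2 O/\partial Q\,\partial K$ contains two contributions. The first is $\sum_{j,k}[T_{\sigma;z}]_{\cdot,j,k}\,(\partial Z_j/\partial Q)(\partial Z_k/\partial K)\,V$, a genuine second-order Softmax term. The second is $S\,(\partial^2 Z/\partial Q\,\partial K)\,V$, which is nonzero because $Z$ is bilinear in $(Q,K)$, with $\partial^2 Z_{aj}/\partial Q_{ab}\partial K_{jb'}=\delta_{bb'}/\sqrt{d_k}$. After contraction with $\delta_{\mathrm{out}}$, both terms enter $H^T_{Q,K}$ through term~(2) of~\eqref{eq:Hf}. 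The claim that this is the ``main source'' of $H^T$ rests on the $T_{\sigma;z}$ part. From the explicit formula~\eqref{eq:softmax-hessian}, its entries are polynomials in $\sigma$ with leading factor $\sigma_i>0$. They are therefore nonzero for generic $\sigma$, and I will argue density by exhibiting that the coefficient of $2\sigma_j\sigma_k$ survives off the diagonal $j\neq k$.

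The obstacle is the density and nonvanishing claim. It holds only generically, because cancellations against $\delta_{\mathrm{out}}$ or special $V$ can zero individual entries. So I will state it as ``nonzero for generic $(Q,K,V,\delta_{\mathrm{out}})$'' and justify it by noting the entries are nontrivial real-analytic functions of the inputs. Their zero sets then have measure zero. I will compare this with the ReLU case, where $T$ is diagonal (indeed zero a.e.) by Proposition~\ref{prop:ad-ggn-equiv}.
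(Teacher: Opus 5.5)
Your proposal follows the paper's route: apply \eqref{eq:Hf} at the common child $v_{\mathrm{out}}$ and read off the mixed tensor $T_{\mathrm{out};Q,K}$, whose Softmax-Hessian part $T_{\sigma;z}$ makes $H^T_{Q,K}\neq 0$. You carry this out more completely than the paper's sketch, which treats only the pair $(Q,K)$ and does not mention the additional $S\,(\partial^2 Z/\partial Q\,\partial K)\,V$ contribution that you correctly identify.

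Your weakening of ``all entries nonzero'' to ``nonzero for generic inputs'' is a justified refinement, not a gap. Since $S\mathbf{1}=0$, entries can vanish identically on special configurations (e.g.\ when a column of $K$ is constant across positions), so density does not follow from $\sigma_i>0$ alone, as the paper's sketch asserts.
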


\begin{proof}[Sketch]
  By formula~\eqref{eq:Hf} the cross-block $H^f_{Q,K}$
  contains terms of the \emph{mixed} (item~2 of~\eqref{eq:Hf})
  and \emph{propagated} (item~1) type.
  For the pair $(Q,K)$: the common descendant
  $v_{\mathrm{out}}$ has the mixed tensor
  $T_{\mathrm{out};Q,K}$ that includes the Softmax Hessian
  $[T_{\sigma;z}]_{i,j,k}$ from~\eqref{eq:softmax-hessian}.
  Since $\sigma_i>0$ \emph{for all}~$i$, the tensor~$T$ and
  the resulting block are \emph{dense}.
\end{proof}

\subsection{Consequences for architectural analysis}

\begin{remark}[Structural consequences of Attention]
  \label{rem:attention-consequences}
  \leavevmode
  \begin{enumerate}
    \item \textbf{Loss of sparsity:} Softmax \emph{destroys}
      the Hessian sparsity inherent to ReLU networks.
      The block-banded approximation
      (Corollary~\ref{cor:bandedness}) is inapplicable inside
      an Attention block: all tokens exchange curvature through
      the dense tensor~$S$.
      The effective dimension $d_{\mathrm{eff}}$ inflates along
      the sequence-length axis.
    \item \textbf{Q--K curvature:} the main source of the
      \emph{tensor} component~$H^T$ in Transformers is the
      Softmax Hessian between Query and Key.
      This explains empirical observations of Transformer
      training instability without warmup or normalization:
      a large GN-Gap for the (Q,K) pair indicates
      significance of~$H^T$.
    \item \textbf{Attention as a fully connected graph:} along
      the temporal/positional axis, Attention acts as a fully
      connected DAG subgraph, explaining global training
      coordination but making cross-blocks computationally
      dense ($n^2$ nonzero entries).
  \end{enumerate}
  The full multi-head analysis (multi-head attention with
  parameter sharing across heads and output projection~$W_O$)
  reduces to block diagonalization by heads followed by a
  linear projection and requires no new formalism.
\end{remark}

\section{Additional experimental results}
\label{app:exp-results}

Table~\ref{tab:exp2-depth8} reports the metrics
$\mathcal{D}_{\mathrm{far}}$ and
$\mathcal{C}_{\mathrm{far}}$ for BottleneckMLP with $L\!=\!8$
(protocol identical to Exp.~2 in the main text, $L\!=\!6$).
Qualitatively the pattern persists:
$\mathcal{D}_{\mathrm{far}}$ grows monotonically with~$d_u$.
Quantitatively,
$\mathcal{D}_{\mathrm{far}}^{\mathrm{init}}$ at $L\!=\!8$ is
8--20\% lower for $d_u\!\geq\!64$, consistent with
additional rank compression from the extra layers between the
bottleneck and peripheral blocks.
After training the difference is within seed variance
($\mathcal{D}_{\mathrm{far}}^{\mathrm{final}}$ differs by
${\lesssim}\,10\,\%$ without systematic direction).

\begin{table}[htbp]
  \centering
  \caption{Exp.\,2: bottleneck ablation ($L\!=\!8$, CIFAR-100).
    Same protocol as Table~\ref{tab:exp2}
  (mean $\pm 1\sigma$ over 5~seeds).}
  \label{tab:exp2-depth8}
  {\footnotesize
    \begin{tabular}{@{}rccccc@{}}
      \toprule
      $d_u$
      & $\mathcal{D}_{\mathrm{far}}^{\mathrm{init}}$
      & $\mathcal{D}_{\mathrm{far}}^{\mathrm{final}}$
      & $\mathcal{C}_{\mathrm{far}}^{\mathrm{init}}$
      & $\mathcal{C}_{\mathrm{far}}^{\mathrm{final}}$
      & Acc\,(\%) \\
      \midrule
      4 & $1.58 \pm 0.43$ & $2.44 \pm 0.17$ & $0.20 \pm 0.08$ & $0.33 \pm 0.01$ & 9.6$^{\ddagger}$ \\
      8 & $2.43 \pm 0.30$ & $3.92 \pm 0.39$ & $0.34 \pm 0.08$ & $0.45 \pm 0.04$ & 14.1 \\
      16 & $4.27 \pm 0.57$ & $5.01 \pm 0.62$ & $0.43 \pm 0.03$ & $0.56 \pm 0.01$ & 17.7 \\
      32 & $6.36 \pm 0.69$ & $5.13 \pm 0.45$ & $0.51 \pm 0.03$ & $0.62 \pm 0.01$ & 19.1 \\
      64 & $6.40 \pm 0.60$ & $5.41 \pm 0.55$ & $0.58 \pm 0.02$ & $0.68 \pm 0.03$ & 20.5 \\
      128 & $7.90 \pm 0.28$ & $4.89 \pm 0.53$ & $0.65 \pm 0.01$ & $0.76 \pm 0.02$ & 21.4 \\
      256 & $8.60 \pm 0.51$ & $5.73 \pm 0.47$ & $0.69 \pm 0.01$ & $0.79 \pm 0.02$ & 22.2 \\
      \midrule
      $512^\dagger$ & $9.21 \pm 0.65$ & $4.89 \pm 0.33$ & $0.70 \pm 0.02$ & $0.81 \pm 0.01$ & 22.4 \\
      \bottomrule
  \end{tabular}}
\end{table}

\FloatBarrier
\subsection{Exp.~1b: Verification of the theorem under spectral normalization}
\label{app:exp1b}

\textbf{Motivation.}
In the standard regime (He~init, width$\,=64$)
$\rho_{\max}\!>\!1$ and the sufficient condition of
Theorem~\ref{thm:exponential-decay-main} is not satisfied
(Section~\ref{sec:exp1}).
For direct verification of the theorem \emph{in its own regime}
we repeat the Exp.~1 protocol with two modifications:
(i)~spectral normalization~\citep{miyato2018spectral} on all
Linear layers of Plain~MLP, guaranteeing $\|W_i\|_2\!=\!1$;
(ii)~LayerNorm disabled to ensure piecewise linearity.
With ReLU activation:
$\|D_{\mathrm{block}}\|_2
=\|\mathrm{diag}(\sigma')\!\cdot\!W\|_2
\leq\|W\|_2=1$,
so $\rho\!\leq\!1$ by construction;
for a chain $s\!=\!\max_v|\Ch(v)|\!=\!1$ and the condition
$s\rho=\rho\leq 1$ holds
(strictly at initialization:
$\rho_{\max}\!\approx\!0.91$).

\textbf{Protocol.}
Plain~MLP (width~64, depths
$L\!\in\!\{8,12,16\}$, CIFAR-10, 50 epochs) with spectral
normalization.
Residual~MLP as control (no SN, as in Exp.~1).
All metrics $\bar{\mathcal{R}}(d)$,
$\bar{\mathcal{C}}(d)$, $\rho$, $\lambda_1$ computed by the
same protocol.

\textbf{Results}
(Table~\ref{tab:exp1b-supp},
Figure~\ref{fig:decay-R-sn-supp}).
(a)~$\rho_{\max}\!\leq\!1$ at initialization for all depths;
after training $\rho_{\max}\!\approx\!1.0$
(slight excess $\leq\!1.04$ at $L\!=\!8$ due to finite
precision of spectral normalization).
(b)~At initialization ($\rho\!\approx\!0.91<1$),
$\bar{\mathcal{R}}(d)$ decays exponentially with~$d$ for all
depths --- a direct verification of
inequality~\eqref{eq:resonance-decay-main}.
After training the decay rate is determined by~$\lambda_1$:
strong decay for $L\!\geq\!12$
($\lambda_1\!\approx\!{-}1.0$),
weak for $L\!=\!8$ ($\rho\!\to\!1$, boundary of the theorem
condition).
(c)~The Lyapunov exponent $\lambda_1\!<\!0$ for all
configurations:
$\lambda_1\!\approx\!{-}1.0$ at init and
${-}1.0$--${-}0.2$ after training --- the network operates in
the contracting regime.
(d)~Accuracy of Plain~SN drops sharply with depth
(36.3\% at $L\!=\!8$; 10.0\% at $L\!\geq\!12$),
confirming that strict curvature suppression
($\rho\!\leq\!1$) leads to vanishing curvature and loss of
trainability.
Residual~MLP (control) stably reaches $\approx\!51\%$.

\begin{table}[htbp]
  \centering
  \caption{Exp.~1b: spectral normalization.
    Same metrics as Table~\ref{tab:rho-accuracy}.
    Plain~SN guarantees $\rho\!\leq\!1$ and $\lambda_1\!<\!0$;
    Residual~MLP as control without SN
  (CIFAR-10, mean over 5~seeds).}
  \label{tab:exp1b-supp}
  {\footnotesize
    \begin{tabular}{@{}llccccc@{}}
      \toprule
      $L$ & Arch.
      & $\rho_{\max}^{\mathrm{init}}$
      & $\rho_{\max}^{\mathrm{final}}$
      & $\lambda_1^{\mathrm{init}}$
      & $\lambda_1^{\mathrm{final}}$
      & Acc\,(\%) \\
      \midrule
      8 & Plain SN & 0.91 & 1.00 & $-$1.01 & $-$0.24 & 36.3 \\
      8 & Res. & 1.25 & 1.33 & 0.02 & 0.04 & 51.3 \\
      12 & Plain SN & 0.91 & 0.89 & $-$1.01 & $-$1.03 & 10.0 \\
      12 & Res. & 1.20 & 1.25 & 0.01 & 0.03 & 51.3 \\
      16 & Plain SN & 0.91 & 0.90 & $-$1.01 & $-$1.03 & 10.0 \\
      16 & Res. & 1.17 & 1.20 & 0.01 & 0.02 & 51.0 \\
      \bottomrule
  \end{tabular}}
\end{table}

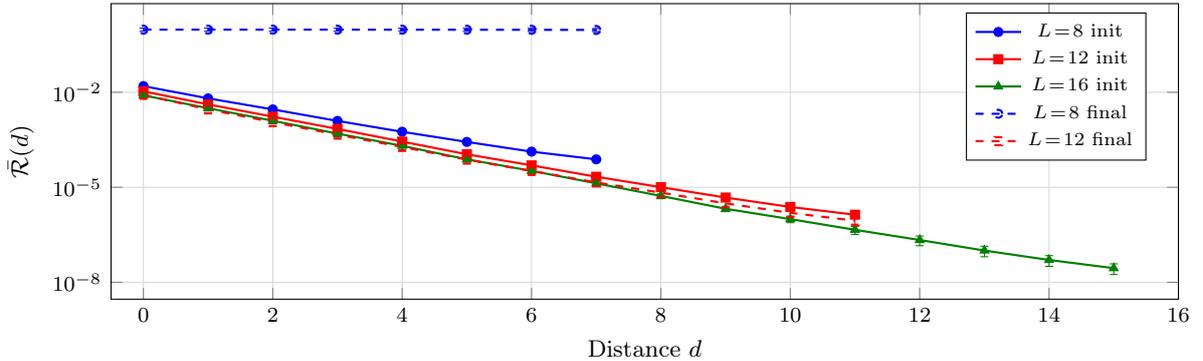
\begin{figure}[htbp]
  \centering
  \begin{tikzpicture}
    \begin{axis}[
        width=0.95\columnwidth,
        height=5.5cm,
        xlabel={Distance $d$},
        ylabel={$\bar{\mathcal{R}}(d)$},
        ymode=log,
        grid=major,
        grid style={gray!30},
        legend style={font=\scriptsize, at={(0.97,0.97)},
        anchor=north east},
        tick label style={font=\footnotesize},
        label style={font=\small},
        xmin=-0.5, xmax=16,
      ]
      \addplot[blue, thick, mark=*, mark size=1.5,
      error bars/.cd, y dir=both, y explicit] coordinates {
        (0,1.5653e-02) +- (0,4.8008e-04) (1,6.4308e-03) +- (0,5.3708e-04)
        (2,2.8765e-03) +- (0,2.7328e-04) (3,1.2474e-03) +- (0,1.5963e-04)
        (4,5.6542e-04) +- (0,7.1612e-05) (5,2.7099e-04) +- (0,3.3374e-05)
        (6,1.3360e-04) +- (0,1.1005e-05) (7,7.6334e-05) +- (0,9.6825e-06)
      }; \addlegendentry{$L\!=\!8$ init}
      \addplot[red, thick, mark=square*, mark size=1.5,
      error bars/.cd, y dir=both, y explicit] coordinates {
        (0,1.0732e-02) +- (0,3.2025e-04) (1,4.1436e-03) +- (0,3.9320e-04)
        (2,1.6983e-03) +- (0,1.9441e-04) (3,6.9690e-04) +- (0,1.0916e-04)
        (4,2.7737e-04) +- (0,4.3527e-05) (5,1.1024e-04) +- (0,1.3714e-05)
        (6,4.9188e-05) +- (0,5.9815e-06) (7,2.1528e-05) +- (0,3.2495e-06)
        (8,1.0084e-05) +- (0,2.4589e-06) (9,4.7410e-06) +- (0,1.1219e-06)
        (10,2.3928e-06) +- (0,4.3804e-07) (11,1.3607e-06) +- (0,3.0890e-07)
      }; \addlegendentry{$L\!=\!12$ init}
      \addplot[green!50!black, thick, mark=triangle*, mark size=1.5,
      error bars/.cd, y dir=both, y explicit] coordinates {
        (0,8.0152e-03) +- (0,4.0587e-04) (1,3.1455e-03) +- (0,4.0464e-04)
        (2,1.2628e-03) +- (0,2.4618e-04) (3,4.9627e-04) +- (0,1.0330e-04)
        (4,2.0364e-04) +- (0,5.3188e-05) (5,7.6368e-05) +- (0,1.1991e-05)
        (6,3.3081e-05) +- (0,4.1085e-06) (7,1.3318e-05) +- (0,2.2328e-06)
        (8,5.3274e-06) +- (0,8.4754e-07) (9,2.0933e-06) +- (0,3.9471e-07)
        (10,9.8834e-07) +- (0,2.2733e-07) (11,4.5372e-07) +- (0,1.2928e-07)
        (12,2.1633e-07) +- (0,7.3385e-08) (13,1.0049e-07) +- (0,3.6587e-08)
        (14,5.0707e-08) +- (0,1.8826e-08) (15,2.7882e-08) +- (0,1.0312e-08)
      }; \addlegendentry{$L\!=\!16$ init}
      \addplot[blue, thick, dashed, mark=o, mark size=1.5,
      error bars/.cd, y dir=both, y explicit] coordinates {
        (0,9.4824e-01) +- (0,1.0231e-01) (1,9.4935e-01) +- (0,1.0188e-01)
        (2,9.4892e-01) +- (0,1.0191e-01) (3,9.4795e-01) +- (0,1.0213e-01)
        (4,9.4631e-01) +- (0,1.0253e-01) (5,9.4363e-01) +- (0,1.0324e-01)
        (6,9.3915e-01) +- (0,1.0456e-01) (7,9.2948e-01) +- (0,1.0732e-01)
      }; \addlegendentry{$L\!=\!8$ final}
      \addplot[red, thick, dashed, mark=square, mark size=1.5,
      error bars/.cd, y dir=both, y explicit] coordinates {
        (0,8.3044e-03) +- (0,2.4803e-04) (1,2.8727e-03) +- (0,3.2184e-04)
        (2,1.1404e-03) +- (0,1.3983e-04) (3,4.5885e-04) +- (0,7.6923e-05)
        (4,1.8760e-04) +- (0,3.8182e-05) (5,7.4475e-05) +- (0,1.2516e-05)
        (6,3.2678e-05) +- (0,5.8262e-06) (7,1.4217e-05) +- (0,3.4462e-06)
        (8,6.7653e-06) +- (0,2.4144e-06) (9,3.1067e-06) +- (0,9.8487e-07)
        (10,1.5570e-06) +- (0,4.0521e-07) (11,8.8042e-07) +- (0,2.8639e-07)
      }; \addlegendentry{$L\!=\!12$ final}
    \end{axis}
  \end{tikzpicture}
  \caption{Exp.~1b: resonance $\bar{\mathcal{R}}(d)$ under
    spectral normalization (Plain~SN).
    Solid curves --- initialization
    ($\rho\!\approx\!0.91\!<\!1$): exponential decay over
    ${\sim}5.5$ orders of magnitude at $L\!=\!16$, directly
    verifying inequality~\eqref{eq:resonance-decay-main}.
    Dashed --- after training:
    at $L\!=\!8$ ($\rho\!\to\!1.0$) $R$ plateaus at
    ${\approx}0.94$ (boundary of the theorem condition);
    at $L\!=\!12$ decay persists ($\lambda_1\!<\!0$).
  Error bands: $\pm 1\sigma$ over 5~seeds.}
  \label{fig:decay-R-sn-supp}
\end{figure}

\FloatBarrier
\subsection{Exp.~1: depths \texorpdfstring{$L\!=\!16$ and $L\!=\!32$}{L=16 and L=32}}
\label{app:exp1-deep}

Table~\ref{tab:exp1-deep} extends the Exp.~1 results to
depths $L\!\in\!\{16,32\}$.
Exponential decay of $\bar{\mathcal{R}}(d)$ persists up to
$L\!=\!32$:
the ratio
$\bar{\mathcal{R}}(0)/\bar{\mathcal{R}}(L{-}1)$ reaches
$5.9\times$ ($L\!=\!32$, plain, init), and the normalized
coupling $\bar{\mathcal{C}}(L{-}1)$ drops to~0.145.
Residual~MLP demonstrates stability of
$\bar{\mathcal{R}}(d)$ for all depths and training phases
($\bar{\mathcal{C}}(L{-}1)\!>\!0.91$).

For $L\!=\!32$ (plain, final),
$\bar{\mathcal{R}}(d)$ exhibits a U-shaped pattern:
decay from~1.75 to a minimum of ${\approx}0.33$ at $d\!=\!14$,
then rise to~0.52 at $d\!=\!31$.
The cause is non-uniform self-curvature: layers near the
output train more and have larger $\mathcal{R}(v,v)$;
at large~$d$ few pairs remain, biased toward
high-resonance output layers.
The normalized coupling $\bar{\mathcal{C}}(d)$ corrects this
artifact and decays monotonically ($R^2\!>\!0.91$).

\begin{table}[htbp]
  \centering
  \caption{Exp.~1: metrics for depths $L\!\in\!\{16,32\}$
    (CIFAR-10, width$\,=128$, mean $\pm 1\sigma$ over 5~seeds;
      stochastic estimation: 100 Rademacher probes,
    subsample~64).
    $\bar{\mathcal{R}}(0)$/$\bar{\mathcal{R}}(L{-}1)$ ---
    resonance of nearest/most distant pairs;
    $\bar{\mathcal{C}}(L{-}1)$ --- coupling at maximum
  distance.}
  \label{tab:exp1-deep}
  {\footnotesize
    \begin{tabular}{@{}llccccc@{}}
      \toprule
      $L$ & Arch. & Phase
      & $\rho_{\max}$
      & $\bar{\mathcal{R}}(0)$
      & $\bar{\mathcal{R}}(L{-}1)$
      & $\bar{\mathcal{C}}(L{-}1)$ \\
      \midrule
      16 & Plain & init & $2.59 \pm 0.19$ & $0.072 \pm 0.018$ & $0.021 \pm 0.001$ & $0.263$ \\
      16 & Plain & final & $2.56 \pm 0.03$ & $4.852 \pm 1.219$ & $0.80 \pm 0.13$ & $0.624$ \\
      16 & Res. & init & $1.17 \pm 0.00$ & $0.121 \pm 0.005$ & $0.12 \pm 0.00$ & $0.983$ \\
      16 & Res. & final & $1.21 \pm 0.01$ & $2.156 \pm 0.111$ & $1.44 \pm 0.08$ & $0.887$ \\
      32 & Plain & init & $2.71 \pm 0.13$ & $0.067 \pm 0.018$ & $0.011 \pm 0.003$ & $0.145$ \\
      32 & Plain & final & $2.62 \pm 0.03$ & $1.750 \pm 0.530$ & $0.52 \pm 0.16$ & $0.403$ \\
      32 & Res. & init & $1.12 \pm 0.00$ & $0.120 \pm 0.004$ & $0.12 \pm 0.00$ & $0.986$ \\
      32 & Res. & final & $1.13 \pm 0.00$ & $2.176 \pm 0.082$ & $1.48 \pm 0.05$ & $0.915$ \\
      \bottomrule
  \end{tabular}}
\end{table}

\FloatBarrier
\subsection{Exp.~4: GN-Gap by branch depth \texorpdfstring{$k\!\in\!\{1,2,3\}$}{k in \{1,2,3\}}}
\label{app:exp4-ksweep}

Table~\ref{tab:exp4-sweep} extends the Exp.~4 results from the
main text ($k\!=\!2$) with data for $k\!=\!1$ and $k\!=\!3$.
The qualitative picture is independent of~$k$:
cat+SiLU (nonlinear merging, $\sigma''\!\neq\!0$) exhibits
$\mathrm{Gap}_{GN}\!>\!1$ at initialization for all~$k$;
remaining configurations stay at machine precision
($\sim\!10^{-8}$).
The variation of Gap across~$k$ for cat+SiLU is less than 3\%
at init and ${\sim}15\%$ at final, confirming the robustness
of the conclusion about the dominant role of merging type and
activation, not branch depth.

\begin{table}[htbp]
  \centering
  \caption{Exp.~4: GN-Gap at the merging node of Diamond~MLP
    for $k\!\in\!\{1,2,3\}$ (width$\,=32$, CIFAR-10,
  mean over 5~seeds).}
  \label{tab:exp4-sweep}
  {\footnotesize
    \begin{tabular}{@{}clcccc@{}}
      \toprule
      $k$ & Configuration
      & $\mathrm{Gap}^{\mathrm{init}}$
      & $\mathrm{Gap}^{\mathrm{final}}$
      & $\|T\|_F^{\mathrm{init}}$
      & $\|T\|_F^{\mathrm{final}}$ \\
      \midrule
      1 & sum+ReLU & $6.5{\cdot}10^{-8}$ & $8.7{\cdot}10^{-8}$ & $3.6{\cdot}10^{-9}$ & $2.5{\cdot}10^{-8}$ \\
      1 & sum+SiLU & $6.5{\cdot}10^{-8}$ & $9.2{\cdot}10^{-8}$ & $3.5{\cdot}10^{-9}$ & $2.6{\cdot}10^{-8}$ \\
      1 & cat+ReLU & $8.1{\cdot}10^{-8}$ & $1.2{\cdot}10^{-7}$ & $6.0{\cdot}10^{-10}$ & $5.6{\cdot}10^{-8}$ \\
      1 & cat+SiLU & $1.35$ & $0.078$ & $6.6{\cdot}10^{-3}$ & $3.3{\cdot}10^{-2}$ \\
      \midrule
      2 & sum+ReLU & $6.7{\cdot}10^{-8}$ & $9.3{\cdot}10^{-8}$ & $4.1{\cdot}10^{-9}$ & $2.0{\cdot}10^{-8}$ \\
      2 & sum+SiLU & $7.0{\cdot}10^{-8}$ & $9.6{\cdot}10^{-8}$ & $4.1{\cdot}10^{-9}$ & $2.0{\cdot}10^{-8}$ \\
      2 & cat+ReLU & $8.4{\cdot}10^{-8}$ & $1.2{\cdot}10^{-7}$ & $6.6{\cdot}10^{-10}$ & $3.3{\cdot}10^{-8}$ \\
      2 & cat+SiLU & $1.33$ & $0.082$ & $6.8{\cdot}10^{-3}$ & $2.5{\cdot}10^{-2}$ \\
      \midrule
      3 & sum+ReLU & $6.5{\cdot}10^{-8}$ & $8.9{\cdot}10^{-8}$ & $3.8{\cdot}10^{-9}$ & $1.7{\cdot}10^{-8}$ \\
      3 & sum+SiLU & $7.1{\cdot}10^{-8}$ & $9.4{\cdot}10^{-8}$ & $4.1{\cdot}10^{-9}$ & $1.9{\cdot}10^{-8}$ \\
      3 & cat+ReLU & $8.3{\cdot}10^{-8}$ & $1.4{\cdot}10^{-7}$ & $7.6{\cdot}10^{-10}$ & $3.0{\cdot}10^{-8}$ \\
      3 & cat+SiLU & $1.32$ & $0.090$ & $7.0{\cdot}10^{-3}$ & $2.1{\cdot}10^{-2}$ \\
      \bottomrule
  \end{tabular}}
\end{table}

\FloatBarrier
\subsection{Exp.~3: decomposition of \texorpdfstring{$\|H^T\|_F$ and $\|H^{GN}\|_F$}{||H\textasciicircum T||
and ||H\textasciicircum GN||} by distance}
\label{app:exp3-decomp}

Table~\ref{tab:exp3-decomp} reports the Frobenius norms of the
tensor and Gauss--Newton components of the inter-layer Hessian
by distance~$d$ for three smooth activations (init, $L\!=\!6$,
width$\,=64$).
Both components decay exponentially with~$d$;
at maximum distance $d\!=\!L{-}1\!=\!5$,
$\|H^T\|_F\to 0$ (structural zero: at the top level of
  recursion~\eqref{eq:Hf} tensor terms are absent; see
Proposition~\ref{prop:sparsity-routing}).

\begin{table}[htbp]
  \centering
  \caption{Exp.~3: $\|H^T\|_F$ and $\|H^{GN}\|_F$ by
    distance~$d$ (init, $L\!=\!6$, width$\,=64$, CIFAR-10,
    mean over 5~seeds).
    At $d\!=\!5$ the tensor component vanishes
  ($H^T_{d=L-1}\!\equiv\!0$).}
  \label{tab:exp3-decomp}
  {\footnotesize
    \begin{tabular}{@{}clccc@{}}
      \toprule
      $d$ & Activation
      & $\|H^T\|_F$
      & $\|H^{GN}\|_F$
      & $\mathrm{Gap}_{GN}$ \\
      \midrule
      0 & Softplus & $1.50{\cdot}10^{-2}$ & $1.22{\cdot}10^{-1}$ & $0.123$ \\
      0 & SiLU & $2.65{\cdot}10^{-2}$ & $1.17{\cdot}10^{-1}$ & $0.227$ \\
      0 & GELU & $4.22{\cdot}10^{-2}$ & $1.17{\cdot}10^{-1}$ & $0.361$ \\
      \midrule
      2 & Softplus & $1.24{\cdot}10^{-3}$ & $1.05{\cdot}10^{-2}$ & $0.119$ \\
      2 & SiLU & $2.15{\cdot}10^{-3}$ & $9.59{\cdot}10^{-3}$ & $0.224$ \\
      2 & GELU & $3.45{\cdot}10^{-3}$ & $9.70{\cdot}10^{-3}$ & $0.357$ \\
      \midrule
      4 & Softplus & $7.68{\cdot}10^{-5}$ & $8.63{\cdot}10^{-4}$ & $0.089$ \\
      4 & SiLU & $1.32{\cdot}10^{-4}$ & $7.88{\cdot}10^{-4}$ & $0.168$ \\
      4 & GELU & $2.15{\cdot}10^{-4}$ & $8.04{\cdot}10^{-4}$ & $0.269$ \\
      \midrule
      5 & Softplus & ${<}\,10^{-10}$ & $2.37{\cdot}10^{-4}$ & ${<}\,10^{-7}$ \\
      5 & SiLU & ${<}\,10^{-10}$ & $2.08{\cdot}10^{-4}$ & ${<}\,10^{-7}$ \\
      5 & GELU & ${<}\,10^{-10}$ & $2.15{\cdot}10^{-4}$ & ${<}\,10^{-7}$ \\
      \bottomrule
  \end{tabular}}
\end{table}

\FloatBarrier
\newpage
\subsection{\texorpdfstring{Exp.~4: cross-branch resonance
$\bar{\mathcal{R}}_{AB}$}{Exp. 4: cross-branch resonance R\_AB}}
\label{app:exp4-cross-branch}

\begin{figure}[htbp]
  \centering
  \begin{tikzpicture}
    \begin{axis}[
        width=0.95\columnwidth,
        height=5.5cm,
        xlabel={Graph distance $d_{\mathrm{graph}}$},
        ylabel={$\bar{\mathcal{R}}_{AB}(d_{\mathrm{graph}})$},
        ymode=log,
        grid=major,
        grid style={gray!30},
        xtick={2,3,4},
        tick label style={font=\footnotesize},
        label style={font=\small},
        legend style={font=\tiny, at={(0.5,1.02)},
        anchor=south, legend columns=4},
      ]
      \addplot[blue, thick, mark=triangle*, mark size=2,
      error bars/.cd, y dir=both, y explicit] coordinates {
        (2,6.0992e-02) +- (0,4.3401e-03) (3,4.7855e-02) +- (0,4.0656e-03)
        (4,3.7628e-02) +- (0,6.4583e-03)
      }; \addlegendentry{sum+ReLU init}
      \addplot[red, thick, mark=square*, mark size=2,
      error bars/.cd, y dir=both, y explicit] coordinates {
        (2,7.7722e-03) +- (0,1.0071e-03) (3,6.5686e-03) +- (0,5.8953e-04)
        (4,5.5552e-03) +- (0,6.5934e-04)
      }; \addlegendentry{cat+ReLU init}
      \addplot[blue!60!green, thick, mark=diamond*, mark size=2,
      error bars/.cd, y dir=both, y explicit] coordinates {
        (2,5.9324e-02) +- (0,3.6116e-03) (3,5.1067e-02) +- (0,3.3746e-03)
        (4,4.4352e-02) +- (0,5.3851e-03)
      }; \addlegendentry{sum+SiLU init}
      \addplot[orange, thick, mark=pentagon*, mark size=2,
      error bars/.cd, y dir=both, y explicit] coordinates {
        (2,8.6942e-03) +- (0,6.6118e-04) (3,9.0981e-03) +- (0,9.9226e-04)
        (4,9.4942e-03) +- (0,1.2580e-03)
      }; \addlegendentry{cat+SiLU init}
      \addplot[blue, thick, dashed, mark=triangle, mark size=2,
      error bars/.cd, y dir=both, y explicit] coordinates {
        (2,2.1234e-01) +- (0,1.1395e-02) (3,4.5951e-01) +- (0,5.2584e-02)
        (4,1.0244e+00) +- (0,1.8796e-01)
      }; \addlegendentry{sum+ReLU final}
      \addplot[red, thick, dashed, mark=square, mark size=2,
      error bars/.cd, y dir=both, y explicit] coordinates {
        (2,2.7134e-01) +- (0,9.5812e-03) (3,5.6752e-01) +- (0,3.7635e-02)
        (4,1.2114e+00) +- (0,1.2608e-01)
      }; \addlegendentry{cat+ReLU final}
      \addplot[blue!60!green, thick, dashed, mark=diamond,
        mark size=2,
      error bars/.cd, y dir=both, y explicit] coordinates {
        (2,2.0974e-01) +- (0,1.3604e-02) (3,5.2587e-01) +- (0,5.0455e-02)
        (4,1.3570e+00) +- (0,1.7463e-01)
      }; \addlegendentry{sum+SiLU final}
      \addplot[orange, thick, dashed, mark=pentagon, mark size=2,
      error bars/.cd, y dir=both, y explicit] coordinates {
        (2,3.1208e-01) +- (0,2.8817e-02) (3,7.3811e-01) +- (0,8.4158e-02)
        (4,1.7766e+00) +- (0,3.2019e-01)
      }; \addlegendentry{cat+SiLU final}
    \end{axis}
  \end{tikzpicture}
  \caption{Exp.~4: cross-branch resonance
    $\bar{\mathcal{R}}_{AB}(d_{\mathrm{graph}})$ in
    Diamond~MLP ($k\!=\!2$).
    Solid curves --- initialization (decay with distance,
    H4.2a);
    dashed --- after training (growth, H4.2b).
    All 4 configurations show $R$ growth with
    $d_{\mathrm{graph}}$ at final
    ($3.5$--$5.7\times$ from $d\!=\!2$ to $d\!=\!4$).
  Error bands: $\pm 1\sigma$ over 5~seeds.}
  \label{fig:cross-branch-R}
\end{figure}
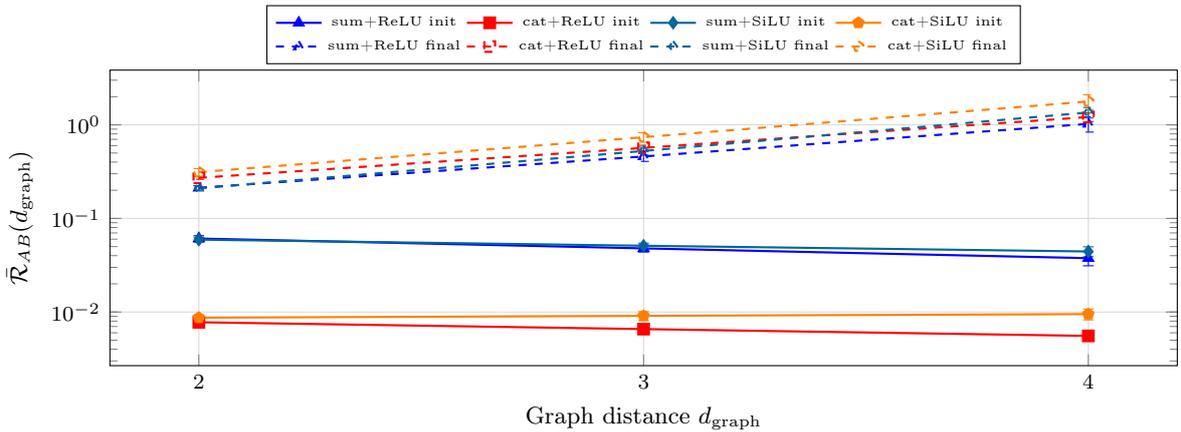

\FloatBarrier
\subsection{Ablation: exact vs.\ stochastic estimation}
\label{app:ablation}

To assess the sensitivity of the metrics to the Hessian
computation method, we compare exact decomposition (Exp.~3)
with Hutchinson stochastic estimation using 30 Rademacher
probes on the same configurations.
Table~\ref{tab:ablation} reports the relative discrepancy of
GN-Gap for smooth activations.

\begin{table}[htbp]
  \centering
  \caption{Ablation: GN-Gap discrepancy between exact and
    stochastic (30~probes) estimation
  (Exp.~3, $L\!=\!6$, width$\,=64$, mean over 5~seeds).}
  \label{tab:ablation}
  {\footnotesize
    \begin{tabular}{@{}lcccc@{}}
      \toprule
      Activation
      & $\mathrm{Gap}_{\mathrm{exact}}^{\mathrm{init}}$
      & $\mathrm{Gap}_{\mathrm{stoch}}^{\mathrm{init}}$
      & $\Delta^{\mathrm{init}}$\,(\%)
      & $\Delta^{\mathrm{final}}$\,(\%) \\
      \midrule
      Softplus & $0.122$ & $0.123$ & $0.82$ & $0.25$ \\
      SiLU & $0.225$ & $0.226$ & $0.46$ & $0.33$ \\
      GELU & $0.358$ & $0.359$ & $0.43$ & $0.31$ \\
      \bottomrule
  \end{tabular}}
\end{table}

The discrepancy does not exceed 0.82\% at init and 0.33\% at
final --- the stochastic estimator systematically overestimates
Gap by less than 1\%, confirming the adequacy of the
Hutchinson estimator with 30~probes for diagnosing the tensor
component.

In Exp.~1 at $L\!\geq\!12$ the stable rank
$\mathcal{D}\!<\!1$ for distant pairs, indicating
degeneration: the ratio $\|H\|_F^2/\|H\|_2^2$ falls below
unity due to estimation noise in~$\|H\|_F$.
For deep or wide networks we recommend scaling the
number of probes as $m\!\propto\!\min(d_v,d_w)$ or using
exact block computations when dimensionality permits.

The control run of Exp.~2 with
$d_{\mathrm{base}}\!=\!d_u\!=\!512$ (uniform network without
bottleneck; Table~\ref{tab:exp2},~${}^\dagger$) shows
$\mathcal{D}_{\mathrm{far}}^{\mathrm{init}}\!=\!11.2$ and
$\mathcal{D}_{\mathrm{far}}^{\mathrm{final}}\!=\!5.2$ --- in
the absence of a narrow layer the stable rank is determined by
network width and number of classes, consistent with the
theoretical limit
$\mathcal{D}\!\leq\!\min(d_u,K{-}1)$.

\FloatBarrier
\subsection{Exp.~5: Toy-Attention protocol and additional details}
\label{app:exp5}

\textbf{Architecture.}
ToyAttentionModel: input $X\!\in\!\mathbb{R}^{S\times d}$
($S\!=\!8$, $d\!=\!16$),
single-head self-attention
$O=\mathrm{Softmax}(QK^\top\!/\sqrt{d})\,V$,
mean-pool, Linear($d$,\,1).
Parameters: $3d^2\!+\!d\!+\!1=785$.
ToyReluMLP: input $X\!\in\!\mathbb{R}^{S\times d}$,
$3\!\times\![\mathrm{Linear}(d,d){+}\mathrm{ReLU}]$
applied identically at each position (shared weights),
mean-pool, Linear($d$,\,1).
Parameters: $3(d^2\!+\!d)+d+1=833$.
Both models process input per-position and aggregate via
mean-pool; the only structural difference is the nonlinearity
(Softmax attention vs.\ position-wise ReLU).

\textbf{Synthetic data.}
Teacher network with fixed random weights
($\mathrm{seed}\!=\!0$, independent of training seeds):
\[
  y = \frac{1}{S}\sum_{s=1}^{S}
  \tanh\!\bigl(x_s^\top W_{\mathrm{t}}\bigr)\;
  w_{\mathrm{r}} + \varepsilon,
  \qquad
  W_{\mathrm{t}}\!\in\!\mathbb{R}^{d\times d},\;
  w_{\mathrm{r}}\!\in\!\mathbb{R}^{d},\;
  \varepsilon\!\sim\!\mathcal{N}(0,0.01).
\]
$W_{\mathrm{t},ij}\!\sim\!\mathcal{N}(0,1/d)$,
$w_{\mathrm{r},j}\!\sim\!\mathcal{N}(0,1/d)$.
Training set: 2048 samples; validation: 512; targets
standardized per training split.

\textbf{GN-Gap computation.}
Both models expose a \texttt{forward\_with\_intermediates}
method returning the graph of intermediate activations.
For Attention, the pair $(v,w)\!=\!(Q,K)$;
for ReLU-MLP, $(\mathrm{block}_0,\mathrm{block}_1)$.
The cross-Hessian $\partial^2\mathcal{L}/\partial f_v\,\partial f_w$
is computed exactly via second-order autograd; the Gauss--Newton
component is
$H^{GN}_{v,w}=(2/B)\sum_{b=1}^{B}J_v^{(b)\top}J_w^{(b)}$
(MSE loss with mean reduction, scalar output).

\FloatBarrier
\subsection{Exp.~6: ResNet-18 protocol and additional results}
\label{app:exp6}

\textbf{Architecture.}
SegmentedResNet18: standard torchvision ResNet-18
(${\sim}11$M parameters) with 5~layer names
(stem, layer1--layer4) and 6~segments.
$\mathrm{seg}_5\!=\!\mathrm{avgpool}+\mathrm{flatten}+\mathrm{fc}$
is purely linear (no activation).
SegmentedPlainResNet18: identical parameterization with
identity shortcuts replaced by sequential
conv--BN--$\sigma$ paths; stride-based downsampling preserved.

\textbf{Training details.}
SGD ($\eta\!=\!0.1$, momentum~$0.9$,
weight decay~$5{\cdot}10^{-4}$), cosine schedule,
100~epochs, batch~128.
CIFAR-10 augmentation: random crop $32{\times}32$
with padding~4, random horizontal flip,
normalization to channel mean/std.
Five seeds $\{42,\ldots,46\}$.

\textbf{Stochastic estimation.}
Hutchinson estimator: $m\!=\!30$ Rademacher probes,
subsample of 32~examples.
Power iteration for $\|H\|_2^2$: $T\!=\!50$ steps.
NaN threshold: $\hat{\|H\|}_2^2 < 10^{-24}
\Rightarrow \mathcal{D}\!=\!\texttt{NaN}$
(segment contributes negligible curvature).
GN-Gap estimator: $m\!=\!30$ probes, common probe vector,
analytical $J^\top L'' J$ computation.

\textbf{Mid-checkpoint results.}
Table~\ref{tab:exp6-mid} provides the full set of metrics
at the mid checkpoint (epoch~50), complementing the
init and final data in the main text.

\begin{table}[H]
  \centering
  \caption{Exp.~6 (mid, epoch~50): metrics $\mathcal{R}$,
    $\mathcal{C}$, $\mathcal{D}$ by distance~$d$
    (mean over 5~seeds).
    Test accuracy: ReLU ResNet~$80.8\,\%$,
    ReLU Plain~$77.0\,\%$,
    SiLU ResNet~$79.9\,\%$,
  SiLU Plain~$78.4\,\%$.}
  \label{tab:exp6-mid}
  \footnotesize
  \begin{tabular}{@{}llccc@{}}
    \toprule
    Condition & $d$ & $\mathcal{R}(d)$ & $\mathcal{C}(d)$ & $\mathcal{D}(d)$ \\
    \midrule
    ReLU ResNet & 0 & $1.05$ & $1.00$ & $2.87$ \\
    & 1 & $0.87$ & $0.93$ & $2.73$ \\
    & 2 & $0.78$ & $0.85$ & $2.65$ \\
    & 3 & $0.72$ & $0.79$ & $2.85$ \\
    & 4 & $0.39$ & $0.73$ & $2.91$ \\
    \midrule
    ReLU Plain & 0 & $3.81$ & $1.00$ & $2.52$ \\
    & 1 & $2.16$ & $0.90$ & $2.40$ \\
    & 2 & $1.80$ & $0.81$ & $2.29$ \\
    & 3 & $1.73$ & $0.71$ & $2.47$ \\
    & 4 & $0.41$ & $0.66$ & $3.06$ \\
    \midrule
    SiLU ResNet & 0 & $2.60$ & $1.00$ & $3.47$ \\
    & 1 & $1.68$ & $0.90$ & $3.33$ \\
    & 2 & $1.27$ & $0.83$ & $3.30$ \\
    & 3 & $1.64$ & $0.74$ & $3.30$ \\
    & 4 & $0.49$ & $0.61$ & $2.95$ \\
    \midrule
    SiLU Plain & 0 & $3.89$ & $1.00$ & $2.79$ \\
    & 1 & $2.30$ & $0.87$ & $2.64$ \\
    & 2 & $1.61$ & $0.79$ & $2.29$ \\
    & 3 & $1.87$ & $0.72$ & $2.46$ \\
    & 4 & $0.45$ & $0.60$ & $2.51$ \\
    \bottomrule
  \end{tabular}
\end{table}

\textbf{GN-Gap across checkpoints.}
Table~\ref{tab:exp6-gngap-full} reports GN-Gap$(d)$ for all
three checkpoints.
For ReLU conditions, Gap~$<\!10^{-5}$ at all
checkpoints, consistent with
Proposition~\ref{prop:ad-ggn-equiv}.
For SiLU conditions, the training dynamics of Gap are
architecture-dependent: in Plain networks, Gap grows from
near-zero at init to $0.1$--$0.2$ at small distances
($d\!\leq\!1$) as nonlinear curvature emerges, but
slightly declines at larger~$d$; in ResNet, skip
connections yield a substantial init Gap ($0.4$--$0.6$
at $d\!\geq\!1$) that decreases during training as
$H^{\mathrm{GN}}$ curvature dominates.
At $d\!=\!4$ (linear head), Gap~${\sim}\!10^{-6}$ for
\emph{all} conditions---the analogy with ReLU is explained
by linearity of avgpool$\,{+}\,$fc
(Remark~\ref{rem:linear-head}).

\begin{table}[H]
  \centering
  \caption{Exp.~6: GN-Gap$(d)$ across checkpoints
    (mean over 5~seeds).
    $^\dagger$~At $d\!=\!4$ the segment is purely linear
  ($H^T\!\equiv\!0$).}
  \label{tab:exp6-gngap-full}
  \footnotesize
  \begin{tabular}{@{}llccccc@{}}
    \toprule
    Checkpoint & Condition & $d\!=\!0$ & $d\!=\!1$ & $d\!=\!2$ & $d\!=\!3$ & $d\!=\!4^\dagger$ \\
    \midrule
    init & ReLU ResNet & ${<}10^{-5}$ & ${<}10^{-5}$ & ${<}10^{-5}$ & ${<}10^{-5}$ & ${<}10^{-5}$ \\
    & ReLU Plain & ${<}10^{-5}$ & ${<}10^{-5}$ & ${<}10^{-5}$ & ${<}10^{-5}$ & ${<}10^{-5}$ \\
    & SiLU ResNet & $0.426$ & $0.526$ & $0.619$ & $0.642$ & ${<}10^{-5}$ \\
    & SiLU Plain & $0.003$ & $0.028$ & $0.080$ & $0.132$ & ${<}10^{-5}$ \\
    \midrule
    mid & ReLU ResNet & ${<}10^{-5}$ & ${<}10^{-5}$ & ${<}10^{-5}$ & ${<}10^{-5}$ & ${<}10^{-5}$ \\
    & ReLU Plain & ${<}10^{-5}$ & ${<}10^{-5}$ & ${<}10^{-5}$ & ${<}10^{-5}$ & ${<}10^{-5}$ \\
    & SiLU ResNet & $0.274$ & $0.144$ & $0.137$ & $0.136$ & ${<}10^{-5}$ \\
    & SiLU Plain & $0.122$ & $0.062$ & $0.041$ & $0.056$ & ${<}10^{-5}$ \\
    \midrule
    final & ReLU ResNet & ${<}10^{-5}$ & ${<}10^{-5}$ & ${<}10^{-5}$ & ${<}10^{-5}$ & ${<}10^{-5}$ \\
    & ReLU Plain & ${<}10^{-5}$ & ${<}10^{-5}$ & ${<}10^{-5}$ & ${<}10^{-5}$ & ${<}10^{-5}$ \\
    & SiLU ResNet & $0.426$ & $0.241$ & $0.220$ & $0.150$ & ${<}10^{-5}$ \\
    & SiLU Plain & $0.213$ & $0.118$ & $0.058$ & $0.054$ & ${<}10^{-5}$ \\
    \bottomrule
  \end{tabular}
\end{table}

\end{document}